\newcommand{\arxiv}[1]{\iftoggle{iclr}{}{#1}}
\newcommand{\iclr}[1]{\iftoggle{iclr}{#1}{}}
\newcommand{\loose}{\looseness=-1}
\newcommand{\multiline}[1]{\parbox[t]{\dimexpr\linewidth-\algorithmicindent}{#1}}
\newcommand{\neutralize}[1]{\expandafter\let\csname c@#1\endcsname\count@}
\newenvironment{thmmod}[2]
  {%
   \neutralize{theorem}\phantomsection
   \begin{theorem}}
  {\end{theorem}}
   \newenvironment{asmmod}[2]
  {%
   \neutralize{assumption}\phantomsection
   \begin{assumption}}
  {\end{assumption}}
\declaretheorem[name=Theorem,parent=section]{theorem}
\declaretheorem[name=Lemma,parent=section]{lemma}
\declaretheorem[name=Assumption, parent=section]{assumption}
\declaretheorem[name=Condition, parent=section]{condition}
\declaretheorem[qed=$\triangleleft$,name=Example,style=definition, parent=section]{example}
\declaretheorem[name=Remark, parent=section]{remark}
\declaretheorem[name=Proposition, parent=section]{proposition}
  \renewenvironment{proof}[1][Proof]%
  {%
   \par\noindent{\bfseries\upshape {#1.}\ }%
  }%
  {\qed\newline}
\theoremstyle{definition}  %
\theoremstyle{plain}
\newtheorem{corollary}{Corollary}[section]
\newtheorem{definition}{Definition}[section]
\xpatchcmd{\proof}{\itshape}{\normalfont\proofnameformat}{}{}
\newcommand{\proofnameformat}{\bfseries}
\newcommand{\pref}[1]{\cref{#1}}
\newcommand{\pfref}[1]{Proof of \pref{#1}}
\renewcommand{\eqref}[1]{\texorpdfstring{\hyperref[#1]{Eq.~(\ref*{#1})}}{Eq.~(\ref*{#1})}}
\Crefname{assumption}{Assumption}{Assumptions}
\Crefname{subsubsection}{Section}{Sections}
    \let\Cref\crtCref
    \let\cref\crtcref
\newcommand{\creftitle}[1]{\crtcref{#1}}
\DeclareDocumentCommand{\XDeclarePairedDelimiter}{mm}
 {
  \__egreg_delimiter_clear_keys: %
  \keys_set:nn { egreg/delimiters } { #2 }
  \use:x %
   {
    \exp_not:n {\NewDocumentCommand{#1}{sO{}m} }
     {
      \exp_not:n { \IfBooleanTF{##1} }
       {
        \exp_not:N \egreg_paired_delimiter_expand:nnnn
         { \exp_not:V \l_egreg_delimiter_left_tl }
         { \exp_not:V \l_egreg_delimiter_right_tl }
         { \exp_not:n { ##3 } }
         { \exp_not:V \l_egreg_delimiter_subscript_tl }
       }
       {
        \exp_not:N \egreg_paired_delimiter_fixed:nnnnn 
         { \exp_not:n { ##2 } }
         { \exp_not:V \l_egreg_delimiter_left_tl }
         { \exp_not:V \l_egreg_delimiter_right_tl }
         { \exp_not:n { ##3 } }
         { \exp_not:V \l_egreg_delimiter_subscript_tl }
       }
     }
   }
 }
\XDeclarePairedDelimiter{\supnorm}{
  left=\lVert,
  right=\rVert,
  subscript=\infty
  }
\DeclarePairedDelimiter{\abs}{\lvert}{\rvert} %
\DeclarePairedDelimiter{\brk}{[}{]}
\DeclarePairedDelimiter{\crl}{\{}{\}}
\DeclarePairedDelimiter{\prn}{(}{)}
\DeclarePairedDelimiter{\nrm}{\|}{\|}
\DeclarePairedDelimiter{\tri}{\langle}{\rangle}
\DeclarePairedDelimiter{\ceil}{\lceil}{\rceil}
\DeclareMathOperator{\En}{\mathbb{E}}
\newcommand{\Ehat}{\wh{\bbE}}
\DeclareMathOperator*{\argmin}{arg\,min} %
\DeclareMathOperator*{\argmax}{arg\,max}
\newcommand{\wt}[1]{\widetilde{#1}}
\newcommand{\wh}[1]{\widehat{#1}}
\newcommand{\wb}[1]{\widebar{#1}}
\def\ddefloop#1{\ifx\ddefloop#1\else\ddef{#1}\expandafter\ddefloop\fi}
\def\ddef#1{\expandafter\def\csname bb#1\endcsname{\ensuremath{\mathbb{#1}}}}
\def\ddefloop#1{\ifx\ddefloop#1\else\ddef{#1}\expandafter\ddefloop\fi}
\def\ddef#1{\expandafter\def\csname b#1\endcsname{\ensuremath{\mathbf{#1}}}}
\def\ddef#1{\expandafter\def\csname sf#1\endcsname{\ensuremath{\mathsf{#1}}}}
\def\ddef#1{\expandafter\def\csname c#1\endcsname{\ensuremath{\mathcal{#1}}}}
\def\ddef#1{\expandafter\def\csname h#1\endcsname{\ensuremath{\widehat{#1}}}}
\def\ddef#1{\expandafter\def\csname hc#1\endcsname{\ensuremath{\widehat{\mathcal{#1}}}}}
\def\ddef#1{\expandafter\def\csname t#1\endcsname{\ensuremath{\widetilde{#1}}}}
\def\ddef#1{\expandafter\def\csname tc#1\endcsname{\ensuremath{\widetilde{\mathcal{#1}}}}}
\def\ddefloop#1{\ifx\ddefloop#1\else\ddef{#1}\expandafter\ddefloop\fi}
\def\ddef#1{\expandafter\def\csname scr#1\endcsname{\ensuremath{\mathscr{#1}}}}
\newcommand{\ind}{\mathbbm{1}}    %
\newcommand{\veps}{\varepsilon}
\newcommand{\ldef}{\vcentcolon=}
\newcommand{\rdef}{=\vcentcolon}
\newcommand{\fakepar}[1]{\arxiv{\paragraph{#1}}\iclr{\noindent\textbf{#1.}}}
\newcommand{\Pilatent}{\Pi_{\mathrm{latent}}}
\newcommand{\phistar}{\phi_{\star}}
\newcommand{\mustar}{\mu^{\star}}
\newcommand{\covdist}{coverability distribution\xspace}
\newcommand{\cWbar}{\wb{\cW}}
\newcommand{\wtil}{\check{w}}
\newcommand{\Risk}{\mathrm{\mathbf{Risk}}}
\newcommand{\RiskOff}{\mathrm{\mathbf{Risk}}_{\mathsf{off}}}
\renewcommand{\c}{\mathrm{c}}
\renewcommand{\emptyset}{\varnothing}
\newcommand{\filt}{\mathscr{F}}
\newcommand{\M}[1]{^{{\scriptscriptstyle M}}}  %
\newcommand{\pistar}{\pi^{\star}}
\newcommand{\pihat}{\wh{\pi}}
\newcommand{\dbar}{\bar{d}}
\newcommand{\Reg}{\mathrm{\mathbf{Reg}}}
\newcommand{\algcomment}[1]{\textcolor{blue!70!black}{\transparent{0.5}\footnotesize{\texttt{\textbf{//\hspace{2pt}#1}}}}}
\newcommand{\algcomment}[1]{\textcolor{blue!70!black}{\transparent{0.5}\scriptsize{\texttt{\textbf{//\hspace{2pt}#1}}}}}
\newcommand{\approxleq}{\lesssim}
\renewcommand{\ind}[1]{^{\scriptscriptstyle(#1)}}
\newcommand{\bigoh}{O}
\newcommand{\bigoht}{\wt{O}}
\newcommand{\bigomt}{\wt{\Omega}}
\newcommand{\indic}{\mathbb{I}}
\newcommand{\poly}{\mathrm{poly}}
\newcommand{\Qstar}{Q^{\star}}
\newcommand{\unif}{\mathsf{Unif}}
\newcommand{\supp}{\mathrm{supp}}
\def\multiset#1#2{\ensuremath{\left(\kern-.3em\left(\genfrac{}{}{0pt}{}{#1}{#2}\right)\kern-.3em\right)}}
\renewcommand{\emptyset}{\varnothing}
\renewcommand{\epsilon}{\varepsilon}
\newcommand\numberthis{\addtocounter{equation}{1}\tag{\theequation}} 
\newcommand{\clip}[2]{\mathsf{clip}_{#2}\brk*{#1}}
\newcommand{\clipX}[3]{\mathsf{clip}_{#3}\brk[#1]{#2}}  
\newcommand{\Df}{[\Delta_h f]} 
\newcommand{\Dfh}{[\Delta_h \wh f]} 
\newcommand{\whDwhf}{[\wh \Delta_h \wh f]} 
\newcommand{\whDf}{[\wh \Delta_h f]} 
\newcommand{\Dft}{[\Delta_h f\ind{t}]}
\newcommand{\cw}{\check{w}}  
\newcommand{\epsapp}{\epsilon_{\textnormal{apx}}} 
\DeclareMathOperator{\Cond}{\mathsf{Assumption}}
\newcommand\restr[2]{{%
  \left.\kern-\nulldelimiterspace %
  #1 %
  \vphantom{\big|} %
  \right|_{#2} %
  }}
\newcommand{\HtO}{\HybridAlg}
\newcommand{\Mixture}{\textsf{Mixture}} 
\newcommand{\bcW}{\widebar{\cW}}  
\newcommand{\barw}{\bar{w}} 
\newtheorem{algoshort}[theorem]{Algorithm}
  \newcommand{\tx}[1]{}
  \newcommand{\df}[1]{}
  \renewcommand{\pacomment}[1]{}
  \renewcommand{\ascomment}[1]{}
  \renewcommand{\dfcomment}[1]{}
\let\OldStatex\Statex
\renewcommand{\Statex}[1][3]{%
  \setlength\@tempdima{\algorithmicindent}%
  \OldStatex\hskip\dimexpr#1\@tempdima\relax}
\let\oldparagraph\paragraph
\renewcommand{\paragraph}[1]{\oldparagraph{#1.}}
\newcommand{\dtil}{\wt{d}}
\newcommand{\Ccov}{C_{\mathsf{cov}}}
\newcommand{\Cstar}{C_{\star}}
\newcommand{\Cc}{C}  
\newcommand{\Cconc}{C_{\mathsf{conc}}}
\newcommand{\Cconct}{C_{\mathsf{conc},2,h}}
\newcommand{\algo}{\mathrm{\mathbf{Alg}}_{\mathsf{off}}} 
\newcommand{\Ccc}[5]{\mathsf{CC}_{{#2}}(#3,#4,#5)} %
\newcommand{\piterm}{\mathsf{err}_{\mathsf{off}}}
\newcommand{\ascale}{\mathfrak{a}_\gamma}
\newcommand{\bscale}{\mathfrak{b}_\gamma}
\newcommand{\fa}{\mathfrak{a}}
\newcommand{\fb}{\mathfrak{b}}
\newcommand{\Ccbounded}{$\mathsf{CC}$-bounded\xspace}
\newcommand{\Alg}{{\normalfont\textsc{Glow}}\xspace} %
\newcommand{\HyAlg}{{\normalfont\textsc{HyGlow}}\xspace}
\newcommand{\Fqi}{{\normalfont\textsc{Fqi}}\xspace}
\newcommand{\Crmabo}{{\normalfont\textsc{Mabo.cr}}\xspace}
\newcommand{\HybridAlg}{\textsc{H$_2$O}\xspace}
\title{Harnessing Density Ratios for Online Reinforcement Learning}
\author{
Philip Amortila\thanks{Authors listed in alphabetical order.} 
\\
\normalsize
\href{mailto:philipa4@illinois.edu}{\texttt{philipa4@illinois.edu}}
\and
Dylan J. Foster
\\
\normalsize
\href{mailto:dylanfoster@microsoft.com}{\texttt{dylanfoster@microsoft.com}}
\and
Nan Jiang
\\
\normalsize
\href{mailto:nanjiang@illinois.edu}{\texttt{nanjiang@illinois.edu}}
\and
Ayush Sekhari
\\
\normalsize
\href{mailto:sekhari@mit.edu}{\texttt{sekhari@mit.edu}}
\and
Tengyang Xie
\\
\normalsize
\href{mailto:tengyangxie@microsoft.com}{\texttt{tengyangxie@microsoft.com}}
}
\date{} 
   \title{Harnessing Density Ratios for \\ Online Reinforcement Learning}
\author{\href{mailto:philipa4@illinois.edu}{Philip Amortila}\thanks{Authors listed in alphabetical order. Full version appears at \href{https://arxiv.org/abs/2401.09681}{{\color{blue}[arXiv:2401.09681]}}.}
\\
UIUC %
\And
\href{mailto:dylanfoster@microsoft.com}{Dylan J. Foster}
\\
Microsoft Research %
\And
\href{mailto:nanjiang@illinois.edu}{Nan Jiang}
\\
UIUC %
\And
\href{mailto:sekhari@mit.edu}{Ayush Sekhari}
\\
\normalsize
MIT %
\And
\href{mailto:tengyangxie@microsoft.com}{Tengyang Xie}
\\
\normalsize
Microsoft Research %
} 
\begin{document} 
\maketitle 

\begin{abstract} 
 
The theories of offline and online reinforcement learning, despite having evolved in parallel, have \dfedit{begun to show signs of the possibility for a} unification, with algorithms and analysis techniques for one setting often having natural counterparts in the other. However, the notion of \textit{density ratio modeling}, an emerging paradigm in offline RL, has been largely absent from online RL, perhaps for good reason: the very existence and boundedness of density ratios relies on access to an exploratory dataset with good coverage, but the core challenge in online RL is to collect such a dataset without having one to start.

In this work we show---perhaps surprisingly---that density ratio-based algorithms have online counterparts.  Assuming only the existence of an exploratory distribution with good coverage, a structural condition known as \emph{coverability} \citep{xie2022role}, we give a new algorithm (\Alg) that uses density ratio realizability and value function realizability to perform sample-efficient online exploration. \Alg addresses unbounded density ratios via careful use of \emph{truncation}, and combines this with optimism to guide exploration.

\Alg is computationally inefficient; we complement it with a \dfedit{more} efficient counterpart, \HyAlg, for the \emph{Hybrid RL} setting \citep{song2022hybrid} wherein online RL is augmented with additional offline data.
\HyAlg is derived as a special case of a more general meta-algorithm that provides a provable black-box reduction from hybrid RL to offline RL, which may be of independent interest.

\end{abstract}  

\section{Introduction}
\label{sec:intro}
A fundamental problem in reinforcement learning (RL) is to understand what
modeling assumptions and algorithmic principles lead to
sample-efficient learning guarantees. Investigation into algorithms
for sample-efficient reinforcement learning
has primarily focused on two separate
formulations: \emph{Offline reinforcement learning}, where a learner must
optimize a policy from logged transitions and rewards, and
\emph{online reinforcement learning}, where the learner can gather new
data by interacting with the environment; both formulations share the
common goal of learning a near-optimal policy. For the most part, the bodies of 
research on offline and online reinforcement have evolved in parallel,
but they exhibit a number of curious 
similarities.  Algorithmically, many design principles for offline RL
(e.g., pessimism) have online counterparts (e.g.,
optimism), and statistically efficient algorithms for both frameworks typically
require similar \emph{representation conditions} (e.g., ability to
model state-action value functions). Yet, the frameworks have notable differences: online RL algorithms
require \emph{exploration conditions} to address the issue
of distribution shift \citep{russo2013eluder,jiang2017contextual,sun2019model,wang2020provably,du2021bilinear,jin2021bellman,foster2021statistical}, while offline RL algorithms require
conceptually distinct \emph{coverage conditions} to ensure the data
logging distribution
sufficiently covers the state space \citep{munos2003error,antos2008learning,chen2019information,xie2020q,xie2021batch, jin2021pessimism,rashidinejad2021bridging,foster2022offline,zhan2022offline}.

\arxiv{
\dfcomment{Above: Could still include ``sample efficiency is a major
  problem in practice'' and cite standard empirical papers.}
\dfcomment{rewrite below so there is less overlap w/ abstract}
}

Recently, \citet{xie2022role} exposed a deeper connection between
online and offline RL by showing that \emph{coverability}---that is, \emph{existence} of
a data distribution with good coverage for offline RL---is itself a
sufficient condition that enables sample-efficient
exploration in online RL, even when the learner has no prior knowledge
of said distribution. This suggests the possibility of a
theoretical unification of online and offline RL, but the picture remains
incomplete, and there are many gaps in our understanding. Notably, a promising emerging paradigm in offline RL makes
    use of the ability to model \emph{density ratios} (also referred
    to as \emph{marginalized importance weights} or simply \emph{weight functions}) for the
    underlying MDP. Density ratio modeling offers an alternative to
    classical value function approximation (or, approximate dynamic programming) methods
    \citep{munos2007performance,munos2008finite,chen2019information},
    as it avoids
    instability and typically succeeds under weaker representation
    conditions (requiring only realizability conditions as
    opposed to Bellman completeness-type assumptions). Yet despite extensive investigation into density
    ratio methods for offline RL---both in theory
\citep{liu2018breaking,uehara2020minimax,yang2020off,uehara2021finite,jiang2020minimax,xie2020q,zhan2022offline,chen2022offline,rashidinejad2022optimal,ozdaglar2023revisiting}
    and practice \citep{nachum2019algaedice,kostrikov2019imitation,nachum2020reinforcement,zhang2020gendice,lee2021optidice}---density ratio modeling has been conspicuously absent in
    the \emph{online} reinforcement learning. This leads us to ask:
    \begin{center}
  \emph{Can online reinforcement learning benefit from the ability
    to model density ratios?}
\end{center}
Adapting density ratio-based methods to the
online setting with provable guarantees presents a number of conceptual and technical
challenges. First, since the data distribution in online RL is
constantly changing, it is unclear \emph{what} densities one should
even attempt to model. Second, most offline reinforcement learning
algorithms require relatively stringent notions of coverage for the
data distribution. In online RL, it is unreasonable to expect data 
gathered early in the learning process to have good coverage, and 
naive algorithms may cycle or fail to explore as a result. As such, it
may not be reasonable to expect density ratio modeling to benefit 
online RL in the same fashion as offline.

\fakepar{Our contributions} 
We show that in spite of these challenges, density ratio modeling enables guarantees
for online reinforcement learning that were previously out of
reach.
\begin{itemize}[leftmargin=1em]
\item \textbf{Density ratios for online RL.} We show
  (\cref{sec:online}) that for any MDP with low
  \emph{coverability}, density ratio realizability
  and realizability of the optimal state-action value function are sufficient for
  sample-efficient online RL. This result is obtained through a new algorithm, \Alg,
  which addresses the issue of distribution shift via careful use of
  \emph{truncated} density ratios, which it combines with optimism to
  drive exploration. This complements \citet{xie2022role}, who gave \arxiv{sample complexity }guarantees for
  coverable MDPs under a stronger Bellman completeness
  assumption for the value function class.\loose
  
\item \textbf{Density ratios for hybrid RL.} Our algorithm for online RL is computationally 
  inefficient. We complement it
  (\cref{sec:hybrid}) with a \dfedit{more} efficient
  counterpart, \HyAlg, for the \emph{hybrid RL} framework \citep{song2022hybrid},
  \arxiv{in which}\iclr{where} the learner has access to additional offline data that covers a
  high-quality policy.\loose
\item \textbf{Hybrid-to-offline reductions.} To achieve the result above, we investigate a broader question:
  when can offline RL algorithms be adapted \textit{as-is} to online
  settings? We provide a new
  meta-algorithm, \HtO, which reduces hybrid RL to offline RL by repeatedly
  calling a given offline RL algorithm as a black box.
  We show that \HtO enjoys low regret whenever the black-box offline algorithm satisfies certain conditions, and demonstrate that these conditions are satisfied by a range of existing offline algorithms, thus lifting them to the hybrid RL setting.
\end{itemize}
While our results are theoretical in nature, we are optimistic that
they will lead to further investigation into the power of density
ratio modeling in online RL and inspire practical algorithms.

\arxiv{
\fakepar{Paper organization}
\cref{sec:assumption} contains
necessary background, introducing density ratio modeling and the
notion of coverability.
\cref{sec:online} presents our
main results for the online reinforcement learning framework, and \cref{sec:hybrid} presents our main
results for the hybrid framework. We conclude with discussion in
\cref{sec:discussion}. Proofs, examples, and additional discussion are
deferred to the appendix.
}

\subsection{Preliminaries}
\label{sec:setting}

\fakepar{Markov Decision Processes}
We consider an episodic reinforcement learning setting. A Markov Decision Process (MDP) is a tuple $\cM = ( \cX, \cA, P, R, H, d_1)$, where $\cX$ is the (large/potentially infinite) state space, $\cA$ is the action space, $H \in \bbN$ is the horizon, $R = \{R_h\}_{h=1}^H$ is the reward function (where $R_h : \cX \times \cA \rightarrow \Delta([0,1])$), $P = \{P_h\}_{h\leq1}$ is the transition distribution (where $P_h : \cX \times \cA \rightarrow \Delta(\cX)$), and $d_1$ is the initial state distribution. A randomized policy is a sequence of functions $\pi = \{ \pi_h: \cX \rightarrow \Delta(\cA)\}_{h=1}^H$. When a policy is executed, it generates a trajectory $(x_1,a_1,r_1), \dots, (x_H, a_H, r_h)$ via the process $a_h \sim \pi_h(x_h), r_h \sim R_h(x_h,a_h), x_{h+1} \sim P_h(x_h,a_h)$, initialized at $x_1 \sim d_1$ (we use $x_{H+1}$ to denote a deterministic terminal state with zero reward). We write $\bbP^{\pi}\brk*{\cdot}$ and $\En^{\pi}\brk*{\cdot}$ to denote the law and corresponding expectation for the trajectory under this process.\loose

For a policy \(\pi\), the expected reward for is given by $J(\pi) \coloneqq \bbE^\pi\brk[\big]{ \sum_{h=1}^H r_h}$, and the value functions given by \iclr{$V_h^\pi(x) \coloneqq \bbE^\pi\brk[\big]{\sum_{h'=h}^H r_{h'} \mid x_h=x}$, and $Q_h^\pi(x,a) \coloneqq \bbE^\pi\brk[\big]{\sum_{h'=h}^H r_{h'} \mid x_h=x,a_h=a}$.}
\arxiv{
\[
V_h^\pi(x) \coloneqq \bbE^\pi\brk*{\sum_{h'=h}^H r_{h'} \mid x_h=x}, \quad \text{ and } \quad Q_h^\pi(x,a) \coloneqq \bbE^\pi\brk*{\sum_{h'=h}^H r_{h'} \mid x_h=x,a_h=a}.
\]}
We write $\pi^\star = \{\pi^\star_h\}_{h=1}^H$ to denote an optimal deterministic policy, which maximizes $V^{\pi}$ at all states. We let $\cT_h$ denote the Bellman (optimality) operator for layer $h$, defined via \iclr{$[\cT_h f](x,a) = \En\brk[\big]{r_h+\max_{a'} f(x_{h+1},a')\mid{}x_h=x,a_h=a}$ for $f : \cX \times \cA \rightarrow \bbR$.} \arxiv{ \[
		[\cT_h f](x,a) = \En\brk*{r_h+\max_{a'} f(x_{h+1},a')\mid{}x_h=x,a_h=a}
		\]
	for $f : \cX \times \cA \rightarrow \bbR$.} %

\fakepar{Online RL} 
In the online reinforcement learning framework, the learner repeatedly interacts with an unknown MDP by executing a policy and observing the resulting trajectory. The goal is to maximize total reward.  Formally, the protocol proceeds in $N$ rounds, where at each round $t \in [N]$, the learner	 selects a policy $\pi\ind{t} = \{\pi_h\ind{t}\}_{h=1}^H$ in the (unknown) underlying MDP $M^\star$ and observes the trajectory $\{(x_h\ind{t},a_h\ind{t},r_h\ind{t})\}_{h=1}^H$. Our results are most naturally stated in terms of PAC guarantees. Here, after the $N$ rounds of interaction conclude, the learner can use all of the data collected to produce a final policy $\pihat$, with the goal of minimizing 
\begin{align}
  \label{eq:risk}
  \Risk \ldef \En_{\wh \pi \sim p}  \brk*{J(\pistar) -  J(\pihat)}, 
\end{align}
where \(p \in \Delta(\Pi)\) denotes a distribution that the algorithm can use to randomize the final policy.

\arxiv{
\fakepar{Offline RL}
In offline reinforcement learning, the learner does not directly interact with $M^\star$, and is instead given a dataset of tuples $(x_h,a_h,r_h,x_{h+1})$ collected i.i.d.~according to 
$(x_h,a_h) \sim \mu_h$, $r_h \sim R_h(x_h,a_h)$, $x_{h+1} \sim P_h(x_h,a_h)$, \dfedit{where $\mu_h$ is the offline data distribution for layer $h$}. Based on the dataset, the offline algorithm produces a policy $\widehat{\pi}$ whose performance is measured by its risk, as in Equation \cref{eq:risk}; we write $\RiskOff$ when we are in the offline interaction protocol. 
}

\fakepar{Additional definitions and assumptions} We assume that rewards are normalized such that $\sum_{h=1}^H r_h \in [0,1]$ almost surely for all trajectories \citep{jiang2018open,wang2020long,zhang2021reinforcement,jin2021bellman}. To simplify presentation, we assume that $\cX$ and $\cA$ are
countable; \cutedit{ we expect that our results extend to handle continuous variables with an appropriate measure-theoretic treatment.} We define the occupancy measure for policy $\pi$ via $d_h^\pi(x,a) \coloneqq \bbP^\pi[x_h = x, a_h=a]$.\loose

\section{Problem Setup: Density Ratio
  Modeling and Coverability} 
\label{sec:assumption}

To investigate the power of density ratio modeling in online RL, we make use of \emph{function approximation}, and aim to provide sample complexity guarantees with no explicit dependence on the size of the state space. We begin by appealing to \emph{value function approximation}, a standard approach in online and offline reinforcement learning, and assume access to a value-function class $\cF\subset(\cX\times\cA\times\brk{H}\to\brk{0,1})$ that can realize the optimal value function $\Qstar$.
\begin{assumption}[Value function realizability] 
  \label{ass:Q_realizability} We have $\Qstar\in\cF$.
\end{assumption}

For $f \in \cF$, we define the greedy policy $\pi_f$ via $\pi_{f,h}(x) = \argmax_a f_h(x,a)$, with ties broken in an arbitrary consistent fashion. For the remainder of the paper, we define $\Pi \coloneqq \{ \pi_f \mid f \in \cF\}$ as the policy class induced by $\cF$, unless otherwise specified. %

\fakepar{Density ratio modeling}
While value function approximation is a natural modeling approach, prior works in both online \citep{du2019good,weisz2021exponential,wang2021exponential} and offline RL \citep{wang2020statistical,zanette2021exponential,foster2022offline} have shown that value function realizability alone is not sufficient for statistically tractable learning in many settings. As such, value function approximation methods in online \citep{zanette2020learning,jin2021bellman,xie2022role} and offline RL \citep{antos2008learning,chen2019information} typically require additional representation conditions that may not be satisfied in practice, such as the stringent \emph{Bellman completeness} assumption (i.e., $\cT_h \cF_{h+1} \subseteq \cF_{h}$).

In offline RL, a promising emerging paradigm that goes beyond pure value function approximation is to model \emph{density ratios} (or, marginalized important weights), which typically take the form \iclr{$\nicefrac{d^{\pi}_h(x,a)}{\mu_h(x,a)}$}
\arxiv{\begin{align}
  \label{eq:ratio_basic}
\frac{d^{\pi}_h(x,a)}{\mu_h(x,a)} 
\end{align}}
for a policy $\pi$, where \(\mu_h\) denotes the offline data distribution.\iclr{\footnote{Formally, in offline reinforcement learning the learner does not interact with $M^\star$ but is given a dataset of tuples $(x_h,a_h,r_h,x_{h+1})$ collected i.i.d. according to 
$(x_h,a_h) \sim \mu_h$, $r_h \sim R_h(x_h,a_h)$, $x_{h+1} \sim P_h(x_h,a_h)$.}}
 A recent line of work \citep{xie2020q,jiang2020minimax,zhan2022offline} shows, that given access to a realizable value function class and a weight function class $\cW$ that can realize the ratio \iclr{$\nicefrac{d^{\pi}_h}{\mu_h}$}\arxiv{\cref{eq:ratio_basic}} (typically either for all policies $\pi$, or for the optimal policy $\pistar$), one can learn a near-optimal policy offline in a sample-efficient fashion; such results sidestep the need for stringent value function representation conditions like Bellman completeness. To explore whether density ratio modeling has similar benefits in online RL, we make the following assumption.\loose

\begin{assumption}[Density ratio realizability]
\label{ass:W_realizability_proper} 
The learner has access to a weight function class
$\cW \subset (\cX\times\cA\times\brk{H}\to\bbR_{+})$
such that for any policy pair $\pi,\pi'\in\Pi$, and \(h \in [H]\), we have\iclr{ $  w^{\pi;\pi'}_h(x,a) \coloneqq \nicefrac{d^{\pi}_h(x,a)}{d^{\pi'}_h(x,a)}\in\cW_h$.}\footnote{We adopt the convention that $\nicefrac{x}{0} = +\infty$ when $x > 0$ and $\nicefrac{0}{0} = 1$.}
\loose
\arxiv{\[
  w^{\pi;\pi'}_h(x,a) \coloneqq \frac{d^{\pi}_h(x,a)}{d^{\pi'}_h(x,a)}\in\cW. 
\]}
\end{assumption}
\cref{ass:W_realizability_proper} \textit{does not} assume that the density ratios under consideration are finite. That is, we do not assume boundedness of the weights, and our results do not pay for their range; our algorithm will only access certain \emph{clipped} versions of the weight functions \iclr{(\cref{rem:clipped_realizability})}\arxiv{(in fact, it is sufficient to only realize certain ``clipped'' weight functions; cf. \cref{rem:clipped_realizability})}.

Compared to density ratio approaches in the offline setting, which typically require either realizability of $\nicefrac{d^{\pi}_h}{\mu_h}$ for all $\pi\in\Pi$ or realizability of $\nicefrac{d^{\pistar}_h}{\mu_h}$, where $\mu$ is the offline data distribution, we require realizability of $\nicefrac{d^{\pi}_h}{d_h^{\pi'}}$ for all pairs of policies $\pi,\pi'\in\Pi$. This assumption is natural because it facilitates transfer between historical data (which is algorithm-dependent) and future policies. \arxiv{Notably, it is weaker than assuming realizability of $\nicefrac{d^{\pi}_h}{\nu_h}$ for all $\pi\in\Pi$ and any fixed distribution $\nu$ (\cref{rem:ratio-fixed-dist}), \dfedit{and is also weaker than model-based realizability.} }We refer the reader to \cref{app:realizability} for a detailed comparison to alternative assumptions.%

\fakepar{Coverability} In addition to realizability assumptions, online RL methods require \emph{exploration conditions} \citep{russo2013eluder,jiang2017contextual,sun2019model,wang2020provably,du2021bilinear,jin2021bellman,foster2021statistical} that allow deliberately designed algorithms to control distribution shift or extrapolate to unseen states. Towards lifting density ratio modeling from offline to online RL, we make use of \emph{coverability} \citep{xie2022role}, an exploration condition inspired by the notion of coverage in the offline setting.

\begin{definition}[Coverability coefficient \citep{xie2022role}]
  The coverability coefficient $\Ccov > 0$ for a policy class $\Pi$ is given by
  \iclr{$	\Ccov \coloneqq \inf_{\mu_1,\dots,\mu_H \in \Delta(\cX \times \cA)} \sup_{\pi \in \Pi, h \in [H]} \left\| \nicefrac{d_h^\pi}{\mu_h}\right\|_\infty$.}
\arxiv{\[
	\Ccov \coloneqq \inf_{\mu_1,\dots,\mu_H \in \Delta(\cX \times \cA)} \sup_{\pi \in \Pi, h \in [H]} \left\| \frac{d_h^\pi}{\mu_h}\right\|_\infty.
\] }
We refer to the distribution $\mu_h^\star$ that attains the minimum for $h$ as the \covdist. 
\end{definition}
Coverability is a structural property of the underlying MDP, and can be interpreted as the best value one can achieve for the \emph{concentrability coefficient} $\Cconc(\mu)\ldef\sup_{\pi \in \Pi,h\in\brk{H}}\,
\nrm*{\nicefrac{d_h^\pi}{\mu_h}}_{\infty}$ (a standard coverage parameter in offline RL \citep{munos2007performance,munos2008finite,chen2019information}) by optimally designing the offline data distribution $\mu$. However, in our setting the agent has no prior knowledge of $\mustar$ and no way to explicitly search for it. Examples that admit low coverability include tabular MDPs and \tx{(exogenous)?}Block MDPs \citep{xie2022role}, linear/low-rank MDPs \citep{huang2023reinforcement}, and analytically sparse low-rank MDPs \citep{golowich2023exploring}; see \cref{app:examples} for further examples.\loose

  Concretely, we aim for sample complexity guarantees scaling as $\poly(H,\Ccov,\log\abs{\cF},\log\abs{\cW},\veps^{-1})$, where $\veps$ is the desired bound on the risk in \eqref{eq:risk}.\iclr{\footnote{To simplify presentation as much as possible, we assume finiteness of $\cF$ and $\cW$, but our results extend to infinite classes via standard uniform convergence arguments. Likewise, we do not require exact realizability, and an extension to misspecified classes is given in \cref{app:online}.}} Such a guarantee complements \citet{xie2022role}, who achieved similar sample complexity under the Bellman completeness assumption, and parallels the fashion in which density ratio modeling allows one to remove completeness in offline RL. \arxiv{To simplify presentation as much as possible, we assume finiteness of $\cF$ and $\cW$, but our results extend to infinite classes via standard uniform convergence arguments. Likewise, we do not require exact realizability, and an extension to misspecified classes is given in \cref{app:online}.}

\fakepar{Additional notation} For $n \in \bbN$, we write $[n] = \{1, \dots,
n\}$. For a countable set $\cZ$, we write $\Delta(\cZ)$ for the set of 
probability distributions on $\cZ$. We adopt standard
big-oh notation, and use $\bigoht(\cdot)$ and $\bigomt(\cdot)$ to suppress factors polylogarithmic in $H$, $T$, $\veps^{-1}$, $\log\abs{\cF}$, $\log\abs{\cW}$, and other problem parameters. For each $h\in\brk{H}$, we define $\cF_h=\crl{f_h\mid{}f\in\cF}$ and $\cW_h=\crl{w_h\mid{}w\in\cW}$. \arxiv{For any function \(u: \cX \times \cA \mapsto \bbR\) and distribution \(\rho \in  \Delta(\cX \times \cA)\), we define the norms \(\nrm{u}_{1, \rho} = \En_{(x, a) \sim \rho} \brk*{\abs{u(x, a)}}\) and  \(\nrm{u}_{2, \rho} = \sqrt{\En_{(x, a) \sim \rho} \brk*{u^2(x, a)}}\). }

\section{Online RL with Density Ratio 
  Realizability}
\label{sec:online}
This section presents our main results for the online RL setting. We first introduce our main algorithm, \Alg
(\cref{alg:main_alg}), and explain the intuition behind its design
(\cref{sec:alg_overview}). We then show (\cref{sec:sample_complexity})
that \Alg obtains polynomial sample complexity
guarantees (\cref{thm:online_main_faster,thm:online_main_basic}) under
density ratio realizability and coverability, and conclude with a proof sketch\arxiv{ (\cref{sec:proof_sketch})}. 

\subsection{Algorithm and Key Ideas}
\label{sec:alg_overview}

Our algorithm, \Alg (\cref{alg:main_alg}), is
based on the principle of optimism in the face of uncertainty. For
each iteration $t\leq{}T\in\bbN$, the algorithm uses the density ratio class
$\cW$ to construct a confidence set (or, version space) $\cF\ind{t}\subseteq\cF$ with the
property that $\Qstar\in\cF\ind{t}$. It then chooses a policy $\pi\ind{t}=\pi_{f\ind{t}}$ based on the value function
$f\ind{t}\in\cF\ind{t}$ with the most optimistic estimate
$\En\brk*{f_1(x_1,\pi_{f,1}(x_1))}$ for the initial value. Then, it uses the policy \(\pi\ind{t}\) to gather $K\in\bbN$ trajectories, which are used to update the confidence set for subsequent iterations. 

Within the scheme above, the main novelty to our approach lies in the
confidence set construction. \Alg
appeals to \emph{global optimism} \citep{jiang2017contextual,zanette2020learning,du2021bilinear,jin2021bellman,xie2022role}, and constructs the confidence set
$\cF\ind{t}$ by searching for value functions $f\in\cF$ that satisfy
certain Bellman residual constraints for all
layers $h\in\brk{H}$ simultaneously. For MDPs with low coverability, previous such
approaches \citep{jin2021bellman,xie2022role} make use of constraints
based on \emph{squared Bellman error}, which requires Bellman completeness. The confidence set construction in
\Alg (\eqref{eq:alg1}) departs from this approach, and aims to find $f\in\cF$ such that
the \emph{average Bellman error} is small for all weight functions. At the
population level, this (informally) corresponds to requiring that
for all $h\in\brk{H}$ and $w\in\cW$,\footnote{Average Bellman error 
  \emph{without weight functions} is used in algorithms such at
  \textsc{Olive} \citep{jiang2017contextual} and \textsc{BiLin-UCB}
  \citet{du2021bilinear}. Without weighting, this approach is
  insufficient to derive guarantees based on coverability; see
  discussion in \citet{xie2022role}.}
\begin{align}
  \En_{\dbar\ind{t}}\brk[\big]{w_h(x_h,a_h)\prn*{f_h(x_h,a_h)-\brk*{\cT_hf_{h+1}}(x_h,a_h)}} 
-\alpha\ind{t}\cdot \En_{\dbar\ind{t}}\brk*{\prn{ w_h(x_h,a_h)}^2} \leq
  \beta\ind{t}.%
  \label{eq:constraint_idealized}
\end{align}
where $\dbar_h\ind{t}\ldef{}\frac{1}{t-1}\sum_{i<t}d_h^{\pi\ind{t}}$
is the historical data distribution and $\alpha\ind{t}>0$ and
$\beta\ind{t}>0$ are algorithm parameters; this is motivated by the
fact that the optimal value function satisfies 
\iclr{$\En^{\pi}\brk*{w_h(x_h,a_h)(\Qstar_h(x_h,a_h)-\brk*{\cT_h\Qstar_{h+1}}(x_h,a_h))}=0$}
\arxiv{\[
\En^{\pi}\brk*{w_h(x_h,a_h)(\Qstar_h(x_h,a_h)-\brk*{\cT_h\Qstar_{h+1}}(x_h,a_h))}=0
\]}
for all functions $w$ and policies $\pi$. Our analysis uses that
\eqref{eq:constraint_idealized} holds for the weight function
$w_h\ind{t}\ldef{}\nicefrac{d^{\pi\ind{t}}_h}{\dbar\ind{t}_h}$, which
allows to transfer bounds on the off-policy Bellman error for the historical
distribution $\dbar\ind{t}$ to the on-policy Bellman error for
$\pi\ind{t}$.\loose

\begin{remark}
  Among density ratio-based
algorithms for \emph{offline} reinforcement learning 
\citep{jiang2020minimax,xie2020q,zhan2022offline,chen2022offline,rashidinejad2022optimal},
the constraint \cref{eq:constraint_idealized} is most directly inspired by the Minimax Average Bellman Optimization ({\normalfont \textsc{Mabo}}) algorithm \citep{xie2020q},
which uses a similar minimax approximation to the average Bellman
error.
\end{remark}

\fakepar{Partial coverage and clipping}
Compared to the offline setting, much extra work is required to handle
the issue of \emph{partial coverage}. Early in the learning process, the ratio
$w_h\ind{t}\ldef{}\nicefrac{d^{\pi\ind{t}}_h}{\dbar\ind{t}_h}$ may be
unbounded, which prevents the naive empirical approximation to 
\eqref{eq:constraint_idealized} from concentrating. To address this
issue, \Alg carefully truncates the weight functions under consideration.\loose
\begin{definition}[Clipping operator]
For any $w: \cX \times \cA \rightarrow \bbR \cup \{\infty\}$ and
$\gamma \in \bbR$, we define the \textit{clipped weight function} (at
scale $\gamma$) via \iclr{$\clip{w}{\gamma}(x,a) \coloneqq \min\{w(x,a),\gamma\}$.}
\arxiv{\[
	\clip{w}{\gamma}(x,a) \coloneqq \min\{w(x,a),\gamma\}. 
\]}
\end{definition}
\arxiv{Within \Alg, we}\iclr{We} replace the weight functions in
\eqref{eq:constraint_idealized} with clipped counterparts \iclr{$\clip{w}{\gamma\ind{t}}(x,a)$}\arxiv{given by
$\check{w}(x,a)\ldef{}\clip{w}{\gamma\ind{t}}(x,a)$}, where $\gamma\ind{t} \ldef
\gamma\cdot t$ for a parameter $\gamma \in \brk{0,1}$.\arxiv{\footnote{All of our results
  carry over to the alternate clipping operator $\clip{w}{\gamma}(x,a) = w(x,a) \indic\{w(x,a) \leq
  \gamma\}$.}} For a given iteration $t$, clipping in this fashion may
render \eqref{eq:constraint_idealized} a poor approximation to the
on-policy Bellman error. The crux of our analysis is to show---via
coverability---that \emph{on average} across all iterations, the
approximation error is small.

An important difference relative to \textsc{Mabo} is that the weighted
Bellman error in \eqref{eq:constraint_idealized} incorporates a
quadratic penalty $-\alpha\ind{t}\cdot \En_{\dbar\ind{t}}\brk{\prn{
    w_h(x_h,a_h)}^2} $ for the weight function. This is not essential to
derive polynomial sample complexity guarantees, but is critical to
attain the $\nicefrac{1}{\veps^2}$-type rates we achieve under our
strongest realizability assumption. Briefly, regularization is
beneficial because it allows us to appeal to variance-dependent Bernstein-style
concentration; our analysis shows that while the variance of the
weight functions under consideration may not be small on a per-iteration
basis, it is small on average across all iterations (again, via
coverability). Interestingly, similar quadratic penalties have been
used within empirical offline RL algorithms based on density ratio
modeling \citep{yang2020off,lee2021optidice}, as well as recent theoretical
results \citep{zhan2022offline}, but for considerations
seemingly unrelated to concentration.

\begin{algorithm}[tp]
    \caption{\Alg: Global Optimism via Weight Function Realizability}  
    \label{alg:main_alg} 
\begin{algorithmic}[1] 
\Statex[0] \textbf{input:} Value function class \(\cF\), Weight
function class \(\cW\), Parameters $T,K \in\bbN$,  \(\gamma \in \brk{0,1}\).\\
\algcomment{For \arxiv{\pref{thm:online_main_faster}} \iclr{Thm.~\ref{thm:online_main_faster}},   set    \(T = \wt  \Theta \prn{\prn{\nicefrac{H^2 \Ccov}{\epsilon^2}} \cdot \log\prn{\nicefrac{\abs*{\cF}\abs*{\cW}}{\delta}} }\), \(K = 1\), and \(\gamma = \sqrt{\nicefrac{\Ccov}{(T \log\prn{\nicefrac{\abs*{\cF}\abs*{\cW}}{\delta})}}}\).}  
\State \algcomment{For \arxiv{\pref{thm:online_main_basic}} \iclr{Thm.~\ref{thm:online_main_basic}}, set \(T = \wt  \Theta  \prn{\nicefrac{H^2 \Ccov}{\epsilon^2}}\), \(K = \wt  \Theta \prn{T  \log\prn{\nicefrac{\abs*{\cF}\abs*{\cW}}{\delta}}}\), and \(\gamma = \sqrt{\nicefrac{\Ccov}{T}}\).}  
\State Set \(\gamma\ind{t} = \gamma\cdot t\), \(\alpha\ind{t} = \nicefrac{8}{\gamma\ind{t}}\) and \(\beta\ind{t} = \prn{\nicefrac{36 \gamma\ind{t}}{K(t-1)}} \cdot \log\prn*{\nicefrac{6 \abs{\cF} \abs{\cW} T H }{ \delta}}\). 
\State Initialize \(\cD\ind{1}_h = \emptyset\) for all \(h \leq H\). 
\For{$t=1,\ldots,T$} 
 \State  \multiline{Define confidence set based on (regularized)
   minimax average Bellman error:
  {\small \begin{align*} 
   \hspace{-0.1in} \cF\ind{t}=\crl*{ 
    f\in\cF
    \mid{}\forall{}h: 
    \sup_{w\in\cW_h} \wh \En_{\cD\ind{t}_h} \brk*{
    \prn[\big]{ 
    \whDf(x, a, r, x')} \cdot 
    \wt w_h(x,a) - \alpha\ind{t}\cdot \prn[\big]{ \wt w_h(x,a)}^2} \leq{} \beta\ind{t} 
    }, 
    \numberthis \label{eq:alg1} 
  \end{align*}}where \(\wt w \ldef \clip{w}{\gamma\ind{t}}\) and \(\whDf(x, a, r, x') \ldef{} f_h(x,a)- r - \max_{a'}f_{h+1}(x', a')\).} 
  \State \multiline{Compute optimistic value function and policy:
{\small \begin{align*} 
f\ind{t}\ldef\argmax_{f\in\cF\ind{t}} \Ehat_{x_1\sim\cD_1\ind{t}}\brk*{f_1(x_1,\pi_f(x_1))}, \quad \text{and} \quad \pi\ind{t}\ldef\pi_{f\ind{t}}. \numberthis \label{eq:alg2}  
\end{align*}}
}  
\arxiv{  \State \algcomment{Online data collection.} }
  \State Initialize \(\cD\ind{t+1}_h \leftarrow \cD\ind{t}_h\) for \(h \in [H]\). 
\For {$k = 1, \dots, K$} \iclr{\hfill \algcomment{Online data collection.} }
\State Collect a trajectory  
  $(x_1,a_1,r_1),\ldots, (x_H,a_H,r_H)$ by executing \(\pi \ind{t}\). 
  \State Update \(\cD\ind{t+1}_h \leftarrow \cD\ind{t+1}_h \cup \crl{(x_h, a_h, r_h, x_{h+1})}\) for each \(h \in [H]\). 
  \EndFor
\EndFor 
\State \textbf{output:} policy $\pihat=\unif (\pi\ind{1}, \dots, \pi\ind{T})$.\hfill   \algcomment{For PAC guarantee only.} 
   \end{algorithmic} 
\end{algorithm}

\subsection{Main Result: Sample Complexity Bound for \Alg} \label{sec:sample_complexity} 

We now present the main sample complexity guarantees for \Alg. The
first result we present, which gives
the tightest sample complexity bound, is stated
under a form of density ratio realizability that strengthens
\cref{ass:W_realizability_proper}. Concretely, we assume that the class $\cW$ can realize density ratios
for certain \textit{mixtures} of policies. For $t \in \bbN$, we write
$\pi\ind{1:t}$ as a shorthand for a sequence\arxiv{ of policies} $(\pi\ind{1},
\cdots, \pi\ind{t})$, where $\pi\ind{i} \in \Pi$, and let $d^{\pi\ind{1:t}} \ldef
\frac{1}{t}\sum_{i=1}^t d^{\pi\ind{i}}$.\loose 

  \begin{asmmod}{ass:W_realizability_proper}{$'$}[Density ratio 
    realizability, mixture version]
  \label{ass:W_realizability_mixture}
Let $T$ be the parameter to \Alg (\cref{alg:main_alg}). For all $h \in [H]$, $\pi \in \Pi$, $t \leq T$, and $\pi\ind{1}, \ldots, \pi\ind{t} \in 
\Pi$, we have 
\[
	w_h^{\pi;\pi\ind{1:t}}(x,a) \coloneqq \frac{d^\pi_h(x,a)}{d^{\pi\ind{1:t}}_h(x,a)} \in \cW.
\] 
\end{asmmod}
This assumption directly
facilitates transfer from the algorithm's \emph{historical distribution}
$\dbar_h\ind{t}\ldef{}\frac{1}{t-1}\sum_{i<t}d_h^{\pi\ind{t}}$ to on-policy error. 
\arxiv{Naturally, it is implied
by the stronger-but-simpler-to-state assumption that we can
realize density ratios $\nicefrac{d^{\pi}_h}{d^{\rho}_h}$ for all
$\pi\in\Pi$ and \emph{all mixture policies} $\rho\in\Delta(\Pi)$.}  
Under \cref{ass:W_realizability_mixture}, \arxiv{we show that }\Alg obtains
$\nicefrac{1}{\veps^2}$-PAC sample complexity and $\sqrt{T}$-regret.

\begin{theorem}[Risk bound for \Alg under strong density ratio realizability]
\label{thm:online_main_faster} 
Let \(\epsilon > 0\) be given, and suppose that   \cref{ass:Q_realizability,ass:W_realizability_mixture} hold. Then, \Alg, with hyperparameters 
   \(T = \wt  \Theta \prn[\big]{\prn{\nicefrac{H^2 \Ccov}{\epsilon^2}} \cdot
     \log\prn{\nicefrac{\abs*{\cF}\abs*{\cW}}{\delta}} }\), \(K = 1\),
   and \(\gamma = \sqrt{\nicefrac{\Ccov}{(T
       \log\prn{\nicefrac{\abs*{\cF}\abs*{\cW}}{\delta})}}}\) returns
   an \(\epsilon\)-suboptimal policy \(\wh \pi\) with probability at least \(1 - \delta\) after collecting  
\begin{align} 
  \label{eq:online_main_faster_pac} 
  N =  \wt O\prn[\bigg]{\frac{H^2 \Ccov}{\epsilon^2} \log\prn*{\nicefrac{\abs*{\cF}\abs*{\cW}}{\delta}}}  
\end{align}
trajectories. 
Additionally, for any $T\in\bbN$, with the same choice for $K$ and
$\gamma$ as above, \Alg enjoys the regret bound %
\iclr{$\Reg \ldef{}  \sum_{t=1}^T J(\pistar) - J(\pi \ind{t}) = \wt  O \prn[\big]{H \sqrt{\Ccov T \log\prn*{\nicefrac{\abs*{\cF}\abs*{\cW}}{\delta}}}}.$}
\arxiv{\[\Reg \ldef{}  \sum_{t=1}^T J(\pistar) - J(\pi \ind{t}) = \wt  O \prn[\big]{H \sqrt{\Ccov T \log\prn*{\nicefrac{\abs*{\cF}\abs*{\cW}}{\delta}}}}.\]} 
\end{theorem}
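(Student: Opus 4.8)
The plan is to prove the regret bound and then read off the PAC guarantee by an online-to-batch conversion. Since the returned policy is $\pihat=\unif(\pi\ind{1},\dots,\pi\ind{T})$, we have $\Risk=\nicefrac{1}{T}\cdot\Reg$, so a regret bound $\Reg=\bigoht(H\sqrt{\Ccov T\log(\nicefrac{\abs{\cF}\abs{\cW}}{\delta})})$ gives $\Risk=\bigoht(H\sqrt{\nicefrac{\Ccov\log(\nicefrac{\abs{\cF}\abs{\cW}}{\delta})}{T}})$, and solving for the smallest $T$ making this at most $\epsilon$ reproduces the stated $N=T$ (as $K=1$). The regret argument has three movements: validity of the confidence sets, optimism, and a coverability-based averaging step.

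First I would show $\Qstar\in\cF\ind{t}$ for all $t\le T$ with probability at least $1-\delta$. Since $\Qstar$ has zero population average Bellman error against every weight and policy (the identity before \eqref{eq:constraint_idealized}), evaluating \eqref{eq:alg1} at $f=\Qstar$ makes its left side a mean-zero martingale; I would bound it with a Freedman/Bernstein inequality, using the penalty $-\alpha\ind{t}(\wt w_h)^2$ to absorb the conditional variance of $\whDf\cdot\wt w_h$ (which is $O(\En_{\dbar\ind{t}}\brk*{(\wt w_h)^2})$ as $\abs{\whDf}=O(1)$), and a union bound over $\cF,\cW,[H],[T]$ yields the $\log(\nicefrac{\abs{\cF}\abs{\cW}TH}{\delta})$ factor in $\beta\ind{t}$. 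Granting validity, optimism in \eqref{eq:alg2} gives $\Ehat_{x_1}\brk*{f\ind{t}_1(x_1,\pi\ind{t})}\ge\Ehat_{x_1}\brk*{\Qstar_1(x_1,\pistar)}$, so after concentrating the initial-state value the optimistic value lower-bounds $J(\pistar)$ up to lower-order error; the telescoping value-difference identity then gives $J(\pistar)-J(\pi\ind{t})\le\sum_{h=1}^H\En^{\pi\ind{t}}\Dft+(\text{opt. error})$, reducing regret to summed on-policy Bellman errors of $f\ind{t}$.

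The heart of the proof is to bound $\sum_t\sum_h\En^{\pi\ind{t}}\Dft$ by transfer and coverability. Writing $w\ind{t}_h=\nicefrac{d^{\pi\ind{t}}_h}{\dbar\ind{t}_h}\in\cW$ (this is where \cref{ass:W_realizability_mixture} is used, as $\dbar\ind{t}_h$ is a policy mixture), I split $\En^{\pi\ind{t}}\Dft=\En_{\dbar\ind{t}}\brk*{\wt w\ind{t}_h\Dft}+\En_{\dbar\ind{t}}\brk*{(w\ind{t}_h-\wt w\ind{t}_h)\Dft}$ with $\wt w\ind{t}=\clip{w\ind{t}}{\gamma\ind{t}}$. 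The clipped term is controlled by \eqref{eq:alg1} at $f\ind{t}\in\cF\ind{t}$ and weight $w\ind{t}$, giving (after empirical-to-population passage, the penalty again absorbing the Bernstein remainder) $\lesssim\beta\ind{t}+\alpha\ind{t}\En_{\dbar\ind{t}}\brk*{(\wt w\ind{t}_h)^2}$; the overflow term lives on $\{w\ind{t}_h>\gamma\ind{t}\}$ and, since $\abs{\Dft}=O(1)$, is $O(\bbP^{\pi\ind{t}}\brk*{w\ind{t}_h>\gamma\ind{t}})$. With $\gamma\ind{t}=\gamma t$ and $\dbar\ind{t}_h\propto\sum_{i<t}d^{\pi\ind{i}}_h$, both the overflow probability and $\nicefrac{1}{\gamma\ind{t}}\En_{\dbar\ind{t}}\brk*{(\wt w\ind{t}_h)^2}$ are dominated by $\En^{\pi\ind{t}}\brk*{\min(1,\nicefrac{d^{\pi\ind{t}}_h}{\sum_{i<t}d^{\pi\ind{i}}_h})}$, and a coverability pigeonhole argument in the style of \citet{xie2022role} bounds $\sum_{t\le T}$ of this quantity by $\Ccov\log T$. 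Thus $\sum_t\alpha\ind{t}\En_{\dbar\ind{t}}\brk*{(\wt w\ind{t}_h)^2}$ and $\sum_t\bbP^{\pi\ind{t}}\brk*{w\ind{t}_h>\gamma\ind{t}}$ are each $\lesssim\nicefrac{\Ccov}{\gamma}\log T$, while $\sum_t\beta\ind{t}\lesssim\gamma T\log(\nicefrac{\abs{\cF}\abs{\cW}TH}{\delta})$; balancing at $\gamma=\sqrt{\nicefrac{\Ccov}{T\log(\cdots)}}$ produces the $H\sqrt{\Ccov T\log(\cdots)}$ regret.

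I expect the coverability-averaging step to be the main obstacle. The penalty $\alpha\ind{t}\En_{\dbar\ind{t}}\brk*{(\wt w\ind{t}_h)^2}$ is only $O(1)$ per iteration, so a naive sum is $O(T)$ and the entire $\sqrt{T}$ rate rests on showing it is small \emph{on average} via the coverability potential; simultaneously, one must choose the martingale-Bernstein constants (for $K=1$) so that the single penalty $-\alpha\ind{t}(\wt w)^2$ both certifies validity of $\cF\ind{t}$ and covers the transfer remainder, which is exactly what ties $\alpha\ind{t}$, $\beta\ind{t}$, and $\gamma\ind{t}$ together in \cref{alg:main_alg}.
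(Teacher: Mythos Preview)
Your proposal matches the paper's proof almost exactly: validity of $\cF\ind{t}$ via Freedman with the quadratic penalty absorbing the conditional variance, optimism plus performance-difference to reduce to on-policy Bellman error, a clipped/overflow split for the importance-weighted transfer, and the coverability potential to sum the resulting terms. The one substantive difference is your choice of reference measure: you take $w\ind{t}_h=\nicefrac{d^{\pi\ind{t}}_h}{\dbar\ind{t}_h}$, whereas the paper takes $w\ind{t}_h=\nicefrac{d^{\pi\ind{t}}_h}{\dbar\ind{t+1}_h}$, i.e., it includes $\pi\ind{t}$ itself in the mixture. The paper explicitly calls out your choice as problematic because the ratio can be $+\infty$ on states $\pi\ind{t}$ reaches that $\dbar\ind{t}$ does not, so the equality $\En^{\pi\ind{t}}\Dft=\En_{\dbar\ind{t}}\brk*{\wt w\ind{t}_h\Dft}+\En_{\dbar\ind{t}}\brk*{(w\ind{t}_h-\wt w\ind{t}_h)\Dft}$ is not literally true---the right-hand side drops the $\{\dbar\ind{t}_h=0\}$ contribution. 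Your route can be salvaged (that missing mass is absorbed into the overflow probability since $w\ind{t}_h=\infty$ there), but the paper's shift to $\dbar\ind{t+1}$ makes the ratio $\leq t$ always, so the change of measure is exact and the variance and overflow terms collapse directly to $\frac{1}{\gamma}\En_{d\ind{t}}\brk*{d\ind{t}_h/\dtil\ind{t+1}_h}$, which is precisely the coverability potential (\pref{lem:potential_coverability}); the price is a harmless correction when transferring the confidence-set bound from $\dbar\ind{t}$ (where the data live) to $\dbar\ind{t+1}$ (\pref{lem:concentration2}(b)). One smaller slip: your claim that the overflow probability and $\nicefrac{1}{\gamma\ind{t}}\En_{\dbar\ind{t}}\brk*{(\wt w\ind{t}_h)^2}$ are dominated by $\En^{\pi\ind{t}}\brk*{\min(1,d^{\pi\ind{t}}_h/\dtil\ind{t}_h)}$ is off by a factor $1/\gamma$ (e.g., $\bbP^{\pi\ind{t}}\brk*{w\ind{t}_h>\gamma\ind{t}}\leq\frac{1}{\gamma}\En^{\pi\ind{t}}\brk*{\min(1,d^{\pi\ind{t}}_h/\dtil\ind{t}_h)}$, not the unscaled version); your final conclusion $\sum_t(\cdots)\lesssim\nicefrac{\Ccov}{\gamma}\log T$ is nonetheless correct, so this is bookkeeping rather than a gap.
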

Next, we provide our main result, which gives a sample complexity
guarantee under density ratio realizability for pure policies
(\cref{ass:W_realizability_proper}). To obtain the result, we begin
with a class $\cW$ that satisfies \cref{ass:W_realizability_proper},
then expand it to obtain an augmented class $\wb{\cW}$ that satisfies mixture realizability (\cref{ass:W_realizability_mixture}). This reduction increases
$\log\abs*{\cW}$ by a $T$ factor, which we offset by increasing the
batch size $K$; this leads to a polynomial increase in sample complexity.\footnote{This reduction
  also prevents us from obtaining a regret bound directly under
  \cref{ass:W_realizability_proper}, though a (slower-than-$\sqrt{T}$)
  regret bound can be attained using an explore-then-commit strategy.} 
\begin{theorem}[Risk bound for \Alg under weak density ratio realizability]
\label{thm:online_main_basic}	
Let \(\epsilon > 0\) be given, and suppose that
\cref{ass:Q_realizability,ass:W_realizability_proper} hold for the classes $\cF$ and $\cW$. Then, \Alg, when executed with a modified class
\(\cWbar\) defined in \eqref{eq:Wbar_def} in \cref{app:online},  with
hyperparameters  \(T = \wt  \Theta \prn{\nicefrac{H^2
    \Ccov}{\epsilon^2}}\), \(K = \wt  \Theta\prn{T
  \log\prn{\nicefrac{\abs*{\cF}\abs*{\cW}}{\delta}}}\), and \(\gamma =
\sqrt{\nicefrac{\Ccov}{T}}\), returns an \(\epsilon\)-suboptimal
policy \(\wh \pi\) with probability at least \(1 - \delta\) after
collecting $N$ trajectories, for  
\begin{align}
  \label{eq:online_main_slower_pac}
N = \wt  O\prn[\bigg]{\frac{H^4 \Ccov^2}{\epsilon^4}  \log\prn*{\nicefrac{\abs*{\cF}\abs*{\cW}}{\delta}}}.
\end{align} \end{theorem}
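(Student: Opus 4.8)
The plan is to derive \cref{thm:online_main_basic} from \cref{thm:online_main_faster} by running \Alg on an enlarged weight class $\cWbar$ that promotes the pairwise realizability of \cref{ass:W_realizability_proper} to the mixture realizability of \cref{ass:W_realizability_mixture}. The construction rests on the elementary identity
\[
  w_h^{\pi;\pi\ind{1:t}} = \frac{d^{\pi}_h}{d^{\pi\ind{1:t}}_h} = \left(\frac{1}{t}\sum_{i=1}^t \frac{d^{\pi\ind{i}}_h}{d^{\pi}_h}\right)^{-1} = \left(\frac{1}{t}\sum_{i=1}^t w_h^{\pi\ind{i};\pi}\right)^{-1},
\]
which holds for every $h\in[H]$, $t\le T$, and $\pi,\pi\ind{1},\dots,\pi\ind{t}\in\Pi$ (the division conventions of \cref{ass:W_realizability_proper} make it valid even where the occupancies vanish). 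Each factor $w_h^{\pi\ind{i};\pi}=d^{\pi\ind{i}}_h/d^{\pi}_h$ already lies in $\cW_h$ by \cref{ass:W_realizability_proper}, so I would define $\cWbar_h$ to be the set of all reciprocal-of-average combinations $\left(\tfrac1t\sum_{i\le t}w_i\right)^{-1}$ with $t\le T$ and $w_1,\dots,w_t\in\cW_h$ (this is the class $\cWbar$ of \eqref{eq:Wbar_def}). By the identity, $\cWbar$ realizes every mixture ratio $w_h^{\pi;\pi\ind{1:t}}$ and hence satisfies \cref{ass:W_realizability_mixture}, while \cref{ass:Q_realizability} is left untouched.

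The first routine step is to bound the cardinality of the enlarged class: every element of $\cWbar_h$ is specified by at most $T$ elements of $\cW_h$, so $\abs{\cWbar_h}\le T\abs{\cW_h}^{T}$ and therefore $\log\abs{\cWbar}\lesssim T\log\abs{\cW}$ up to an additive $\log T$. I would then apply the analysis behind \cref{thm:online_main_faster} with $\cWbar$ as the weight class. The crucial feature of that analysis is that it is carried out for \emph{arbitrary} hyperparameters $(T,K,\gamma)$---\cref{thm:online_main_faster} merely specializes it to $K=1$---and that it splits the average suboptimality $\tfrac1T\Reg$ into two qualitatively different pieces: a \emph{statistical} contribution driven by the confidence radii $\beta\ind{t}\propto \tfrac{\gamma\ind{t}}{K(t-1)}\log(\abs{\cF}\abs{\cWbar}/\delta)$, which aggregates to order $\tfrac{H\gamma}{K}\log(\abs{\cF}\abs{\cWbar}/\delta)$, and a \emph{coverability} contribution (from the clipping bias together with the regularized variance $\sum_{t,h}\En_{\dbar\ind{t}}[(\wt w\ind{t}_h)^2]$, where $\wt w\ind{t}_h\ldef\clip{w\ind{t}_h}{\gamma\ind{t}}$ and $w\ind{t}_h=d^{\pi\ind{t}}_h/\dbar\ind{t}_h$) that depends only on $H,\Ccov,\gamma,T$ through a factor of order $\tfrac{H\Ccov}{\gamma T}$, with no dependence on the weight class.

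This separation is precisely what lets the inflated cardinality of $\cWbar$ be paid for by the batch size $K$ rather than by the number of exploration rounds $T$. Balancing the two pieces over $\gamma$ yields the optimal scale $\gamma\asymp\sqrt{K\Ccov/(T\log(\abs{\cF}\abs{\cWbar}/\delta))}$; substituting $\log\abs{\cWbar}\asymp T\log\abs{\cW}$ and choosing $K=\wt\Theta(T\log(\abs{\cF}\abs{\cW}/\delta))$ cancels the extra logarithmic factor, recovering $\gamma=\sqrt{\Ccov/T}$ and a per-round statistical error of the same order as in the $K=1$, small-cardinality regime. The coverability contribution is then driven below $\epsilon$ by taking $T=\wt\Theta(H^2\Ccov/\epsilon^2)$, exactly as in \cref{thm:online_main_faster}, certifying that $\pihat=\unif(\pi\ind{1},\dots,\pi\ind{T})$ is $\epsilon$-suboptimal with probability at least $1-\delta$. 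The trajectory count is then $N=TK=\wt\Theta(T^2\log(\abs{\cF}\abs{\cW}/\delta))=\wt O(\tfrac{H^4\Ccov^2}{\epsilon^4}\log(\abs{\cF}\abs{\cW}/\delta))$, matching \eqref{eq:online_main_slower_pac}.

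The main obstacle---and the reason a black-box appeal to \cref{thm:online_main_faster} as stated does not suffice---is the apparent feedback loop introduced by the reduction: enlarging $\cW$ to $\cWbar$ inflates the log-cardinality by a factor of $T$, and $T$ is itself the number of iterations, so naively increasing $T$ to afford the larger class would only enlarge it further. Breaking the loop requires verifying, within the general analysis, that the exploration/clipping error is a purely population-level quantity governed by $\Ccov$, $\gamma$, and $T$ alone, while the cardinality-dependent estimation error is the only piece suppressed at rate $1/K$. Concretely, I would need to confirm that truncation makes the empirical weighted Bellman errors concentrate at the Bernstein rate \emph{uniformly over} $\cWbar$, so that the stated form of $\beta\ind{t}$ remains valid with $\log\abs{\cWbar}$ in place of $\log\abs{\cW}$, and that the coverability lemmas controlling $\sum_{t,h}\En_{\dbar\ind{t}}[(\wt w\ind{t}_h)^2]$ and the clipping bias are genuinely independent of the weight class used. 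Granting these facts, the hyperparameter bookkeeping above is routine.
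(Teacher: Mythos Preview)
Your proposal is correct and follows essentially the same route as the paper. The paper constructs $\cWbar$ via exactly the reciprocal-of-average (``Mixture'') operator you describe, bounds $\log|\cWbar|\lesssim T\log|\cW|$, and then invokes a general cumulative-suboptimality lemma (stated for arbitrary $T,K,\gamma$) that isolates the $\tfrac{H\Ccov\log T}{\gamma}$ coverability term from the $\tfrac{\gamma HT\log(|\cF||\cWbar|/\delta)}{K}$ statistical term---precisely the separation you identify as the key to breaking the feedback loop---before setting $K\asymp T\log(|\cF||\cW|/\delta)$ and $\gamma=\sqrt{\Ccov/T}$.
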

\cref{thm:online_main_faster,thm:online_main_basic} show for the first
time that value function realizability and density ratio realizability
alone are sufficient for sample-efficient online RL under
coverability. In particular, the sample complexity and regret bound in
\cref{thm:online_main_faster} 
match the coverability-based guarantees obtained in \citet[Theorem
1]{xie2022role} under the complementary Bellman
completeness assumption, with the only difference being
that they scale with $\log(\abs{\cF}\abs{\cW})$ instead of
$\log\abs{\cF}$; as discussed in \citet{xie2022role}, this rate is
tight for the special case of contextual bandits ($H=1$). Interesting
open questions include (i) whether the sample complexity for learning with density ratio
realizability for pure policies can be improved to
$\nicefrac{1}{\veps^2}$, and (ii) whether value realizability and
coverability alone are
sufficient for sample-efficient RL.
\iclr{Extensions to \cref{thm:online_main_faster,thm:online_main_basic}
under misspecification are given in
  \cref{app:online}; see also \cref{rem:clipped_realizability}}
\arxiv{
Extensions to \cref{thm:online_main_faster,thm:online_main_basic}
under misspecification are given in
  \cref{app:online}.
 }
  We further refer to \cref{app:examples} for examples instantiating 
these results\arxiv{.}\iclr{,  \cref{sec:proof_sketch} for a proof sketch, and \cref{rem:connection-golf} for a connection of our algorithm to the \textsc{Golf} algorithm \citep{jin2021bellman,xie2022role}.} \arxiv{In particular, our results establish a positive
  result for \dfedit{a generalized class of }Block MDPs with coverable
  latent spaces, while only requiring (for the first time) function
  approximation conditions that concern the latent space (\cref{ex:bmdp}).}

\arxiv{
Like other algorithms based on global optimism
\citep{jiang2017contextual,zanette2020learning,du2021bilinear,jin2021bellman,xie2022role},
\Alg is not computationally efficient. As a step toward developing
practical online RL algorithms based on density ratio modeling, we
give a \dfedit{more} efficient counterpart for the hybrid RL model in the \cref{sec:hybrid}.
}

\arxiv{
  \begin{remark}[Connection to \textsc{Golf}]
    Prior work \citep{xie2022role} analyzed the {\normalfont \textsc{Golf}}
    algorithm of \citeauthor{jin2021bellman} and established positive
    results under coverability and Bellman completeness. We remark
    that by allowing for weight functions that take negative
    values,\footnote{In this context, $\cW$ can be thought of more
      generally as a class of \emph{test functions}.}
   \Alg can be viewed as a generalization of {\normalfont\textsc{Golf}}, and can
    be configured to obtain comparable results. Indeed, given a value
    function class $\cF$ that satisfies Bellman completeness, the
    weight function class  $\cW \ldef{} \{f - f' \mid f, f' \in \cF\}$
    leads to a confidence set construction at least as tight as that
    of {\normalfont \textsc{Golf}}. To see this, observe that if we set $\gamma \geq
    2$ so that no clipping occurs, our construction for
    $\cF\ind{t}$ (\eqref{eq:alg1}) implies (after standard
    concentration arguments) that $Q^\star \in \cF\ind{t}$ and that
    in-sample squared Bellman errors are small with high
    probability. These ingredients are all that is required to repeat
    the analysis of {\normalfont\textsc{Golf}} from \cite{xie2022role}.
  \end{remark}
	\pacomment{added this ``glow subsumes golf''
          paragraph. thoughts? too technical? not technical enough?}
        \dfcomment{seems fine, i am curious though about 1) do we get
          the fast rate?, and 2) how to set $\alpha$ and $\beta$. I
          think we may actually get the fast rate using the
          regularization which is cool}
}

\iclr{

  }

\arxiv{
\subsection{Proof Sketch}
\label{sec:proof_sketch}

\arxiv{We now}\iclr{In this section we} give a proof sketch for \cref{thm:online_main_faster}, highlighting the role of truncated weight functions in addressing partial coverage. We focus on the regret bound; the sample complexity bound in \eqref{eq:online_main_faster_pac} is an immediate consequence.%

By design, the constraint in \eqref{eq:alg1} ensures that \(Q^\star \in \cF\ind{t}\) for all \(t \leq T\) with high probability. Thus, by a standard regret decomposition for optimistic algorithms (\pref{lem:pdl}\arxiv{ in the appendix}), we have \loose
\begin{align*} 
\Reg  &= \sum_{t=1}^T J(\pistar) - J(\pi\ind{t})  %
\lesssim  \sum_{t=1}^T  \sum_{h=1}^H \underbrace{ \En_{d\ind{t}_h} \brk*{f\ind{t}_h(x_h, a_h) - [\cT f\ind{t}_{h+1}](x_h, a_h)} }_{\text{On-policy Bellman error for \(f\ind{t}\) under $\pi\ind{t}$}}, %
\numberthis \label{eq:sketch0} 
\end{align*}
up to lower-order terms, where we abbreviate $d\ind{t}=d^{\pi\ind{t}}$.
Defining $\Dft (x, a) \ldef{}  f\ind{t}_h(x,a)-\brk{\cT_hf\ind{t}_{h+1}}(x,a)$, it remains to bound the on-policy expected bellman error \(\En_{d\ind{t}_h} \brk*{\Dft (x_h, a_h)}\). To do so, a natural approach is to relate this quantity to the weighted off-policy Bellman error under $\dbar\ind{t}\ldef{}\frac{1}{t-1}\sum_{i<t}d^{\pi\ind{i}}$ by introducing the weight function $\nicefrac{d\ind{t}}{\dbar\ind{t}}\in\cW$:
\[
  \En_{{d\ind{t}_h}}\brk*{\Dft(x_h, a_h)} \approx \En_{\dbar\ind{t}_h}\brk*{{\Dft(x_h, a_h)} \cdot \frac{{d\ind{t}_h}(x_h,a_h)}{\dbar\ind{t}_h(x_h,a_h)}}.
\]
Unfortunately, this equality is not true as-is because the ratio $\nicefrac{d\ind{t}}{\dbar\ind{t}}$ can be unbounded. We address this by replacing $\dbar\ind{t}$ by $\dbar\ind{t+1}$ throughout the analysis (at the cost of small approximation error), and work with the weight function \(w_h\ind{t} \coloneqq \nicefrac{{d\ind{t}_h}}{\dbar\ind{t+1}_h}\in\cW\), which is always bounded in magnitude $t$. However, while boundedness is a desirable property, the range $t$ is still too large to obtain non-vacuous concentration guarantees. This motivates us to introduce clipped/truncated weight functions via the following decomposition. 
\begin{align*} 
\underbrace{\En_{{d\ind{t}_h}}\brk*{\Dft(x_h, a_h)}}_{\text{On-policy Bellman error}}
&\leq \underbrace{\En_{\dbar\ind{t+1}_h} \brk*{\Dft(x_h, a_h) \cdot \clip{w_h\ind{t}}{\gamma\ind{t}}(x_h, a_h)}}_{\text{($A_t$): Clipped off-policy Bellman error}}  +  \underbrace{\En_{d\ind{t}_h}\brk*{\indic\crl*{w\ind{t}_h(x_h, a_h) \geq \gamma\ind{t}} }}_{\text{($B_t$): Loss due to clipping}}.  %
\end{align*}

Recall that $\check{w}_h\ind{t}\ldef{}\clip{w_h\ind{t}}{\gamma\ind{t}}(x_h, a_h)$. As $w\ind{t}\in\cW$, it follows from the constraint in \eqref{eq:alg1} and Freedman-type concentration that the clipped Bellman error in term ($A_t$) has order $\alpha\ind{t}\cdot\En_{\dbar\ind{t+1}}\brk*{(\check{w}\ind{t}_h)^2}+\beta\ind{t}$, so that $\sum_{t=1}^{T}A_t\leq{}
\sum_{t=1}^{T}\alpha\ind{t}\cdot\En_{\dbar\ind{t+1}}\brk*{(\check{w}\ind{t}_h)^2}+\beta\ind{t}$. Since we clip to  $\gamma\ind{t}=\gamma{}t$, we have $\sum_{t=1}^{T}\beta\ind{t}\approxleq{}\gamma{}\cdot{}T\log(\abs{\cF}\abs{\cW}/\delta)$; bounding the sum of weight functions requires a more involved argument that we defer for a moment.\loose

We now focus on bounding the terms ($B_t$)\padelete{, which we find to be most illuminating}. Each term ($B_t$) captures the extent to which the weighted off-policy Bellman error at iteration $t$ fails to approximate the true Bellman error due to clipping. This occurs when $\dbar\ind{t+1}$ has poor coverage relative to $d\ind{t}$, which happens when $\pi\ind{t}$ visits a portion of the state space not previously covered. We begin by applying Markov's inequality (\(\indic\crl{u \geq v} \leq \nicefrac{u}{v}\) for  \(u, v \geq 0\)) to bound
\begin{align}
  \label{eq:sketchx}
  B_t \leq  \frac{1}{\gamma\ind{t}} \En_{{d\ind{t}_h}}\brk*{w_h\ind{t}(x_h, a_h)} = \frac{1}{\gamma\ind{t}} \En_{{d\ind{t}_h}}\brk*{\frac{{d\ind{t}_h}(x_h,a_h)}{\dbar\ind{t+1}_h(x_h,a_h)}}
  = \frac{1}{\gamma} \En_{{d\ind{t}_h}}\brk*{\frac{{d\ind{t}_h}(x_h,a_h)}{\dtil\ind{t+1}_h(x_h,a_h)}},
\end{align}
where the equality uses that $\gamma\ind{t}\ldef{}\gamma{}\cdot{}t$ and $\dtil\ind{t+1}\ldef\dbar\ind{t+1}\cdot{}t$. Our most important insight is that even though each term in \eqref{eq:sketchx} might be large on a given iteration $t$ (if a previously unexplored portion of the state space is visited), \emph{coverability} implies that on average across all iterations the error incurred by clipping must be small. In particular, using a variant of a coverability-based potential argument from \citet{xie2022role} (\pref{lem:elliptical-potential}), we show that
\begin{align*}
  \sum_{t=1}^{T}\En_{{d\ind{t}_h}}\brk*{\frac{{d\ind{t}_h}(x_h,a_h)}{\dtil\ind{t+1}_h(x_h,a_h)}}
  \leq \bigoh\prn*{\Ccov\cdot\log(T)
  },
\end{align*}
so that $\sum_{t=1}^{T}B_t\leq\bigoht(\Ccov/\gamma)$. To conclude the proof, we use an analogous potential argument to show the sum of weight functions in our bound on $\sum_{t=1}^{T}A_t$ also satisfies $\sum_{t=1}^{T}\alpha\ind{t}\cdot\En_{\dbar\ind{t+1}}\brk*{(\check{w}\ind{t}_h)^2}\leq\bigoht(\Ccov/\gamma)$. The intuition is similar: the squared weight functions (corresponding to variance of the weighted Bellman error) may be large in a given round, but cannot be large for all rounds under coverability. Altogether, combining the bounds on $A_t$ and $B_t$ gives \iclr{$\Reg = \wt O\prn[\big]{ H \prn[\big]{  \frac{\Ccov}{\gamma} + \gamma{}\cdot T \log(\abs{\cF} \abs{\cW} HT \delta^{-1})}}$.}
\arxiv{
\begin{align*}\textstyle
\Reg  &= \wt O\prn[\bigg]{ H \prn[\bigg]{  \frac{\Ccov}{\gamma} + \gamma\cdot T \log(\abs{\cF} \abs{\cW} HT \delta^{-1})}}. \numberthis \label{eq:sketch4}  
\end{align*}}
The final result follows by choosing $\gamma>0$ to balance the terms.

We find it interesting that the way in which this proof makes use of coverability---to handle the cumulative loss incurred by clipping---is quite different from the analysis in \citet{xie2022role}, where it more directly facilitates a change-of-measure argument.

}

\section{Efficient Hybrid RL with Density Ratio Realizability} 
\label{sec:hybrid}

\arxiv{Our results in the prequel show that density ratio realizability and coverability suffice for sample-efficient online RL. However, like other algorithms for sample-efficient exploration under general function approximation \citep{jiang2017contextual,du2021bilinear,jin2021bellman}, \Alg is not \textit{computationally} efficient.}
Toward overcoming the challenges of intractable computation 
in online exploration, a number of recent works show that including additional offline data in online RL can lead to computational benefits in theory \citep[e.g.,][]{xie2021policy,wagenmaker2023leveraging,song2022hybrid, zhou2023offline} and in practice \citep[e.g.,][]{cabi2020scaling,nair2020awac,ball2023efficient, song2022hybrid, zhou2023offline}. Notably, combining offline and online data can enable algorithms that provably explore without having to appeal to optimism or pessimism, both of which are difficult to implement efficiently under general function approximation.

\citet{song2022hybrid}~formalize a version of this setting%
---in which online RL is augmented with offline data---as \emph{hybrid reinforcement learning}. Formally, in hybrid RL, the learner interacts with the MDP online (as in \cref{sec:setting}) but is additionally given an offline dataset $\cD_{\mathrm{off}}$ %
	collected from a data distribution $\nu$. The data distribution $\nu$ is typically assumed to provide coverage for the optimal policy $\pistar$ (formalized in \cref{ass:pi-star-concentrability}), but not on all policies, and thus additional online exploration is required (see \cref{sec:comparison-offline-rl} for further discussion).\loose

\subsection[H2O: A Provable Black-Box Hybrid-to-Offline Reduction]{$\HtO$: A Provable Black-Box Hybrid-to-Offline Reduction}\label{sec:hybridreductions} 
Interestingly, many of the above approaches for the hybrid setting simply apply offline algorithms (with relatively little modification) on a mixture of online and offline data \citep[e.g.,][]{cabi2020scaling,nair2020awac,ball2023efficient}. This raises the question: \textit{when can we use a given offline algorithm as a black box to solve the problem of hybrid RL (or, more generally, of online RL?)}. To answer this, we give a general meta-algorithm, $\HtO$, which provides a provable black-box reduction to solve the hybrid RL problem by repeatedly invoking a given offline RL algorithm on a mixture of offline data and freshly gathered online trajectories.\arxiv{ \cutedit{We instantiate the meta-algorithm using a simplified offline counterpart to \Alg as a black box to obtain \HyAlg, a density ratio-based algorithm for the hybrid RL setting that improves upon the computational efficiency of \Alg
(\cref{sec:applying-reduction}).}} To present the result, we first describe the class of offline RL algorithms with which it will be applied.

\fakepar{Offline RL and partial coverage}
We refer to a collection of distributions $\mu = \{ \mu_h\}_{h=1}^H$, where $\mu_h \in \Delta(\cX \times \cA)$, as a \textit{data distribution}, and we say that a dataset $\cD = \{\cD_h\}_{h=1}^H$ has \textit{$H\cdot{}n$ samples from data distributions $\mu\ind{1}, \dots, \mu\ind{n}$} if $\cD_h = \{ (x\ind{i}_h,a\ind{i}_h,r\ind{i}_h,x\ind{i}_{h+1})\}_{i=1}^n$ where %
$(x\ind{i}_h,a\ind{i}_h)\sim\mu\ind{i}_h$, $r\ind{i}_h\sim{}R_h(x_h\ind{i},a_h\ind{i})$, $x_{h+1}\ind{i}\sim{}P_h(x_h\ind{i},a_h\ind{i})$. We denote the \textit{mixture distribution} via $\mu\ind{1:n} = \{\mu\ind{1:n}_h\}_{h=1}^H$, where $\mu\ind{1:n}_h \coloneqq \frac{1}{n} \sum_{i=1}^n \mu\ind{i}_h$.\iclr{ An \textit{offline RL algorithm} $\algo$ takes as input a dataset $\cD = \{\cD_h\}_{h=1}^H$ of $H\cdot{}n$ samples from $\mu\ind{1},\ldots,\mu\ind{n}$ and outputs a policy $\pi = (\pi_h)_{h=1}^H$.\footnote{When parameters are needed, $\algo$ should instead be thought of as the algorithm for a fixed choice of parameter. Likewise, we treat function approximators ($\cF$, $\cW$, etc.) as part of the algorithm. We consider adaptively chosen $(\mu\ind{i})_{i=1}^n$ because, in the reduction, $\algo$ will be invoked on history-dependent datasets.}} \iclr{An offline algorithm is measured by its risk, as in Equation \cref{eq:risk}; we write $\RiskOff$ when we are in the offline interaction protocol.}

\arxiv{\begin{definition}[Offline RL algorithm]
An \textit{offline RL} algorithm $\algo$ takes as input a dataset $\cD = \{\cD_h\}_{h=1}^H$ of $H\cdot{}n$ samples from $\mu\ind{1}, \ldots \mu\ind{n}$ and outputs a policy $\pi = \{\pi_h\}_{h=1}^H$.\footnote{ $\algo$ does not have any parameters; when parameters are needed, $\algo$ should instead be thought of as the algorithm for a fixed choice of parameter. Likewise, we treat $\cF$ and $\cW$ (or other input function classes) as part of the algorithm.} We allow $\mu\ind{1}, \dots \mu\ind{n}$ to be adaptively chosen, i.e. each $\mu\ind{i+1}$ may be a function of the samples generated from $\mu\ind{1}\dots \mu\ind{i}$.\footnote{Guarantees for offline RL in the i.i.d. setting can often be extended to the adaptive setting (\cref{sec:applying-reduction}).} \end{definition}
}
An immediate problem with directly invoking offline RL algorithms in the hybrid model is that typical algorithms---particularly, those that do not make use of pessimism \citep[e.g.,][]{xie2020q}---require relatively \emph{uniform} notions of coverage (e.g., coverage for all policies as opposed to just coverage for $\pistar$) to provide guarantees, leading one to worry that their behaviour might be completely uncontrolled when applied with non-exploratory datasets. Fortunately, we will show that for a large class algorithms whose risk scales with a measure of coverage we refer to as \emph{clipped concentrability}, this phenomenon cannot occur. \arxiv{\cutedit{Below, for any distribution $\rho \in \Delta(\cX \times \cA)$, we write $\nrm{\cdot}_{1,\rho}$ and $\nrm{\cdot}_{2,\rho}$ for the $L_1(\rho)$ and $L_2(\rho)$ norms.}}

\begin{definition}[Clipped concentrability coefficient] 
  The \textit{clipped concentrability coefficient} (at scale $\gamma \in \bbR_+$) for $\pi \in \Pi$ relative to a data distribution $\mu = \{\mu_h\}_{h=1}^H$, where $\mu_h \in \Delta(\cX \times \cA)$, is defined as\loose 
\[
\Ccc{n}{h}{\pi}{\mu}{\gamma} \coloneqq \left\|\clip{\frac{d^{\pi}_h}{\mu_h}}{\gamma}\right\|_{1,d^\pi_h}.
\]
\end{definition}
This coefficient should be thought of as a generalization of the standard (squared) $L_2(\mu)$ concentrability coefficient $
\Cconct^2(\pi, \mu) \coloneqq \left\| \nicefrac{d^\pi_h}{\mu_h}\right\|^2_{2,\mu_h}=\left\| \nicefrac{d^\pi_h}{\mu_h}\right\|_{1,d^{\pi}_h}$, a fundamental object in the analysis of offline RL algorithms \citep[e.g.,][]{farahmand2010error}, but incorporates clipping to better handle partial coverage. 
We consider offline RL algorithms with the property that for any offline distribution $\mu$, the algorithm's risk can be bounded by the clipped concentrability coefficients for (i) the output policy $\pihat$, and (ii) the optimal policy $\pi^\star$. For the following definition, we recall the notation $\gamma\ind{n} \coloneqq \gamma \cdot n$. 
\loose
\begin{definition}[\Ccbounded offline RL algorithm]
  \label{ass:offline-risk-cc}
\iclr{An }\arxiv{We say that an }offline algorithm $\algo$ is \Ccbounded at scale $\gamma \in \bbR_+$ under an assumption $\Cond(\cdot)$ if there exists scalars $\ascale, \bscale$ such that for all $n \in \bbN$ and data distributions $\mu\ind{1},\ldots,\mu\ind{n}$, $\algo$ outputs \arxiv{a distribution }$p \in \Delta(\Pi)$ satisfying
\iclr{
\begin{align}
\RiskOff = \En_{\wh \pi \sim p}  \brk*{J(\pistar) -  J(\pihat)} \leq \sum_{h=1}^H \frac{\ascale}{n} \left( \Ccc{n}{h}{\pi^\star}{\mu\ind{1:n}}{\gamma\ind{n}} + \En_{\wh \pi \sim p} \brk*{\Ccc{n}{h}{\wh{\pi}}{\mu\ind{1:n}}{\gamma\ind{n}}} \right)+ \bscale\label{eq:offline_risk}
\end{align}
}
\arxiv{
\begin{align}
\RiskOff =\En_{\wh \pi \sim p}  \brk*{J(\pistar) -  J(\pihat)} \leq \sum_{h=1}^H \frac{\ascale}{n} \left( \Ccc{n}{h}{\pi^\star}{\mu\ind{1:n}}{\gamma\ind{n}} + \En_{\wh \pi \sim p} \brk*{\Ccc{n}{h}{\wh{\pi}}{\mu\ind{1:n}}{\gamma\ind{n}}} \right)+ \bscale\label{eq:offline_risk}
\end{align}
}
with probability at least $1-\delta$, when given \arxiv{a dataset of }$H\cdot{}n$ samples from $\mu\ind{1},\ldots,\mu\ind{n}$ such that $\Cond(\mu\ind{1:n},M^\star)$ is satisfied.%
\end{definition}

This definition does not automatically imply that the offline algorithm has low offline risk, but simply that the risk can be bounded in terms of clipped coverage (which may be large if the dataset has poor coverage). In the sequel, we will show that many natural offline RL algorithms %
    have this property (\cref{app:hybrid_examples}).\iclr{\footnote{Examples of assumptions for $\Cond$ include value function completeness (e.g., for \textsc{Fqi} \citep{chen2019information}) and realizability of value functions and density ratios (e.g., for \textsc{Mabo} \citep{xie2020q}).}} \arxiv{Examples of assumptions for $\Cond$ include value function completeness (e.g., for \textsc{Fqi} \citep{chen2019information}) and realizability of value functions and density ratios (e.g., for \textsc{Mabo} \citep{xie2020q}).}

\arxiv{Offline RL algorithms based on pessimism \citep{jin2021pessimism,rashidinejad2021bridging,xie2021bellman} typically enjoy risk bounds that only require coverage for $\pistar$. Crucially, by allowing the risk bound in \cref{ass:offline-risk-cc} to scale with coverage for $\pihat$ in addition to $\pistar$, we can accommodate non-pessimistic offline RL algorithms such as \textsc{Fqi} and \textsc{Mabo} that are weaker statistically, yet more computationally efficient.\loose} 

\arxiv{
\begin{remark}
  While the bound in \eqref{eq:offline_risk} might seem to suggest a $\nicefrac{1}{n}$-type rate, it will typically lead to a $\nicefrac{1}{\sqrt{n}}$-type rate after choosing $\gamma>0$ to balance the $\frac{\ascale}{n}$ and $\bscale$ terms.
\end{remark}
}

\fakepar{The \HtO algorithm}
Our reduction, \HtO, is given in \cref{alg:offline-to-hybrid}. For any dataset $\cD$, we will write $\restr{\cD}{1:t}$ for the subset consisting of its first $t$ elements. The algorithm is initialized with an offline dataset $\cD_{\mathrm{off}}=\{\cD_{\mathrm{off},h}\}_{h=1}^H$, and at each iteration $t\in\brk{T}$ invokes the black-box offline RL algorithm $\algo$ with a dataset $\cD_{\mathrm{hybrid}}= \{\cD_{\mathrm{hybrid},h}\}_{h=1}^H$ that mixes the first $t$ elements of $\cD_{\mathrm{off},h}$ %
with all of the online data gathered so far. This produces a policy $\pi\ind{t}$, which is executed to gather trajectories that are then added to the online dataset and used at the next iteration.\arxiv{

}
\HtO is inspired by empirical methods for the hybrid setting \citep[e.g.,][]{cabi2020scaling,nair2020awac,ball2023efficient}\iclr{, and in particular the meta-algorithm is efficient whenever $\algo$ is.\loose}\arxiv{. The total computational cost is simply that of running the base algorithm $\algo$ for $T$ rounds, and in particular the meta-algorithm is efficient whenever $\algo$ is.\loose}

\begin{algorithm}[h] 
    \caption{\HtO: Hybrid-to-Offline Reduction} 
    \label{alg:offline-to-hybrid} 
\begin{algorithmic}[1] 
\Statex[0] \textbf{input:} Parameter $T\in\bbN$, offline algorithm $\algo$, offline datasets $\cD_{\mathrm{off}} = \{\cD_{\mathrm{off},h}\}_h$ each of size $T$. %
\State Initialize \(\cD\ind{1}_{\mathrm{on},h} = \cD\ind{1}_{\mathrm{hybrid},h} = \emptyset\) for all \(h \in [H]\). 
\For{$t=1,\ldots,T$}
\State Get policy $\pi\ind{t}$ from $\algo$ on dataset $\cD\ind{t}_{\mathrm{hybrid}} = \{\cD\ind{t}_{\mathrm{hybrid},h}\}_h$.
\State \mbox{Collect \arxiv{trajectory }$(x_1,a_1,r_1),\ldots, (x_H,a_H,r_H)$ using \(\pi \ind{t}\); \(\cD\ind{t+1}_{\mathrm{on},h} \coloneq \cD\ind{t}_{\mathrm{on},h} \cup \crl{(x_h, a_h, r_h, x_{h+1})}\).}
\State Aggregate offline and online data: $\cD\ind{t+1}_{\mathrm{hybrid},h}\coloneqq \restr{\cD_{\mathrm{off},h}}{1:t} \cup \cD\ind{t+1}_{\mathrm{on},h}$ for all $h \in [H]$. 
\label{line:data-agg}

\EndFor 
\State \textbf{output:} policy $\pihat=\unif (\pi\ind{1}, \dots, \pi\ind{T})$.\hfill   %
   \end{algorithmic} 
\end{algorithm}

\fakepar{Main risk bound for \HtO}\label{sec:hto-regret}
We now present the main result for this section: a risk bound for the \HtO reduction. Our bound depends on the coverability parameter for the underlying MDP, as well as the quality of the offline data distribution $\nu$, quantified by \emph{single-policy concentrability}.%
\begin{definition}[Single-policy concentrability]\label{ass:pi-star-concentrability}
A data distribution $\nu = \{\nu_h\}_{h=1}^H$ satisfies $\Cstar$-single-policy concentrability if \iclr{$\max_h \nrm[\big]{ \nicefrac{d^{\pi^\star}_h}{\nu_h} }_\infty \leq \Cstar$.}
\arxiv{
\[ 
\max_h \left\| \frac{d^{\pi^\star}_h}{\nu_h} \right\|_\infty \leq \Cstar .
\] 
}
\end{definition}

\begin{restatable}[Risk bound for \HtO]{theorem}{htoregret}\label{thm:htoregret}
  Let $T\in\bbN$ be given, let $\cD_{\mathrm{off}}$ consist of $H \cdot T$ samples from data distribution $\nu$, and suppose that $\nu$ satisfies $\Cstar$-single-policy concentrability. Let $\algo$ be \Ccbounded at scale $\gamma \in (0,1)$ under $\Cond(\cdot)$, with parameters $\ascale$ and $\bscale$. Suppose that \arxiv{for all} $\iclr{\forall{}\;}t \in [T]$ and $\pi\ind{1},\ldots,\pi\ind{t} \in \Pi$, $\Cond(\mu\ind{t},M^\star)$ holds for $\mu\ind{t} \ldef \{ \nicefrac{1}{2}(\nu_h + \nicefrac{1}{t}\textstyle\sum_{i=1}^{t}d^{\pi\ind{i}}_h)\}_{h=1}^H$.  
Then, with probability at least $1-\delta{}T$, the risk of \HtO (\cref{alg:offline-to-hybrid}) with \arxiv{inputs }$T$, $\algo$, and $\cD_\mathrm{off}$ is bounded as\iclr{\footnote{We define risk for the hybrid setting as in \eqref{eq:risk}. Our result is stated as a bound on the risk to the optimal policy $\pistar$, but extends to give a bound on the risk of any comparator $\pi$ with $\Cstar$ replaced by coverage for $\pi$.}}
\begin{equation}\label{eq:htoregret}
 \Risk \leq \widetilde{O}\left(H\left(\frac{\ascale(\Cstar+\Ccov)}{T} + \bscale \right) \right).
  \end{equation}

\end{restatable}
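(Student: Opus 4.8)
The plan is to invoke the \Ccbounded guarantee of $\algo$ once per round of \HtO (\cref{alg:offline-to-hybrid}) and average the resulting per-round bounds over $t\in[T]$, controlling the two families of clipped concentrability coefficients that appear—those for the comparator $\pistar$ and those for the played policies—by two separate arguments. Writing $\pi\ind{t}\sim p\ind{t}$ for the policy produced and executed at round $t$, the output $\pihat=\unif(\pi\ind{1},\dots,\pi\ind{T})$ gives $\Risk=\frac1T\sum_{t=1}^{T}\En_{\pi\sim p\ind{t}}\brk*{J(\pistar)-J(\pi)}$. At round $t$ the algorithm invokes $\algo$ on $n_t=2(t-1)$ samples per layer—half drawn from $\nu$ and half from the previously played policies—whose mixture distribution is exactly the $\mu\ind{t}$ of \cref{thm:htoregret} (up to a harmless shift in the index, which is immaterial since $\Cond$ is assumed for \emph{every} policy sequence), so that $\mu\ind{t}_h\ge\tfrac12\nu_h$ and $\mu\ind{t}_h\ge\tfrac{1}{2(t-1)}\sum_{i<t}d^{\pi\ind{i}}_h$. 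Consequently \eqref{eq:offline_risk} applies at each round, and a union bound over the $T$ invocations yields the stated $1-\delta T$ failure probability. The round $t=1$ uses an empty dataset and is bounded trivially by $J(\pistar)-J(\pi\ind{1})\le1$, contributing only $\nicefrac1T$.

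For the comparator terms I would use $\mu\ind{t}_h\ge\tfrac12\nu_h$ together with $\Cstar$-single-policy concentrability to bound, simply dropping the clip, $\Ccc{n_t}{h}{\pistar}{\mu\ind{t}}{\gamma\ind{n_t}}\le\nrm*{\nicefrac{d^{\pistar}_h}{\mu\ind{t}_h}}_{1,d^{\pistar}_h}\le2\Cstar$. Because $\sum_{t\ge2}\nicefrac{1}{n_t}=\tfrac12\sum_{s=1}^{T-1}\nicefrac1s=O(\log T)$ is a harmonic sum, these terms contribute $\bigoht\prn*{\nicefrac{\ascale H\Cstar}{T}}$ to $\Risk$, while the additive constants contribute $\frac1T\sum_t\bscale=\bscale$.

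The crux is bounding the played-policy terms $\frac1T\sum_{t}\sum_{h}\frac{\ascale}{n_t}\En_{\pi\sim p\ind{t}}\brk*{\Ccc{n_t}{h}{\pi}{\mu\ind{t}}{\gamma\ind{n_t}}}$ by $\bigoht\prn*{\nicefrac{\ascale H\Ccov}{T}}$, which is where coverability and clipping jointly enter. Fixing $h$ and abbreviating $z=(x,a)$, I would first establish the pointwise inequality, valid for any policy $\pi$,
\[
\frac{1}{n_t}\min\crl*{\frac{d^{\pi}_h(z)}{\mu\ind{t}_h(z)},\ \gamma\ind{n_t}}\le\min\crl*{\frac{d^{\pi}_h(z)}{\sum_{i<t}d^{\pi\ind{i}}_h(z)},\ \gamma}\le\frac{2\,d^{\pi}_h(z)}{\sum_{i<t}d^{\pi\ind{i}}_h(z)+d^{\pi}_h(z)},
\]
where the first step uses $\mu\ind{t}_h\ge\tfrac{1}{2(t-1)}\sum_{i<t}d^{\pi\ind{i}}_h$ with $n_t=2(t-1)$ and $\gamma\ind{n_t}=\gamma n_t$, and the second is a two-case argument (with $A=\sum_{i<t}d^{\pi\ind{i}}_h(z)$, $B=d^\pi_h(z)$): if $B\le A$ the clip is inactive and $B/A\le2B/(A+B)$, whereas if $B>A$ the clip caps the left side at $\gamma\le1\le2B/(A+B)$. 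Taking $\pi=\pi\ind{t}$—the realized output, which also drives the round's data collection—turns the denominator into the full sum $\sum_{i\le t}d^{\pi\ind{i}}_h$, so summing over rounds and invoking the coverability potential bound $\sum_{t}\En_{d^{\pi\ind{t}}_h}\brk*{\nicefrac{d^{\pi\ind{t}}_h}{\sum_{i\le t}d^{\pi\ind{i}}_h}}\le\bigoht(\Ccov)$ of \citet{xie2022role} (\pref{lem:elliptical-potential}) gives the claimed $\bigoht(\Ccov)$ per layer. Combining the three contributions yields \eqref{eq:htoregret}.

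I expect the clipping step to be the main obstacle. The delicate point is that the historical mixture $\mu\ind{t}$ covers $\pistar$ (through $\nu$) but need \emph{not} cover the freshly produced policy $\pi\ind{t}$, so the raw concentrability $\nrm*{\nicefrac{d^{\pi\ind{t}}_h}{\mu\ind{t}_h}}$ can be unbounded on early rounds; it is precisely the clip at scale $\gamma\ind{n_t}=\gamma n_t$ with $\gamma<1$ that tames these rounds and, as the two-case analysis shows, effectively inserts the current policy into the potential's denominator so that coverability controls the cumulative error. A secondary technical point is the internal randomization—$\algo$ returns a distribution $p\ind{t}$ while a single draw $\pi\ind{t}\sim p\ind{t}$ is executed—which I would handle by applying the potential lemma to the realized occupancies used for data collection (and hence appearing in $\mu\ind{t}$), keeping the $\En_{\pi\sim p\ind{t}}$ intact inside each per-round application of \eqref{eq:offline_risk}.
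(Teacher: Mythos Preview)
Your proposal is correct and follows the same overall skeleton as the paper: apply the \Ccbounded guarantee at every round, take a union bound, and split the averaged per-round bound into (i) comparator terms handled via $\mu\ind{t}_h\ge\tfrac12\nu_h$ and $\Cstar$-concentrability plus a harmonic sum, and (ii) played-policy terms handled via coverability. Where you diverge is in step (ii). The paper expands $\min\{u,v\}=u\indic\{u\le v\}+v\indic\{u>v\}$ into separate terms (II.A) and (II.B), then introduces a \emph{burn-in time} $\tau_h^{(\lambda)}(x,a)=\min\{t:\dtil_h\ind{t}(x,a)\ge \Ccov\mu^\star_h(x,a)/\lambda\}$ and splits (II.A) again into pre- and post-burn-in phases, invoking \pref{lem:elliptical-potential} only in the post-burn-in phase and a direct counting argument elsewhere. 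Your two-case pointwise inequality $\min\{B/A,\gamma\}\le 2B/(A+B)$ for $\gamma\le1$ short-circuits all of this: it shows that clipping at scale $\gamma n_t$ \emph{automatically} inserts $d^{\pi\ind{t}}_h$ into the denominator, reducing directly to $\sum_t\En_{d\ind{t}_h}[d\ind{t}_h/\dtil\ind{t+1}_h]\le O(\Ccov\log T)$ (this is \pref{lem:potential_coverability}, which is the consequence of \pref{lem:elliptical-potential} you actually need). Your route is tighter in absolute constants and avoids the auxiliary burn-in machinery entirely; the paper's route makes the role of the clip more explicit by isolating the ``clip-active'' regime as a separate term. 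Both yield the same $\widetilde O(\ascale H\Ccov/T)$ contribution.
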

For the algorithms we consider, one can take $\ascale \propto \nicefrac{\mathfrak{a}}{\gamma}$ and $\bscale \propto \mathfrak{b} \gamma$ for scalar-valued problem parameters $\mathfrak{a},\mathfrak{b}>0$, so that choosing $\gamma$ optimally gives $\Risk \leq \widetilde{O}\prn[\big]{H \sqrt{\nicefrac{(\Cstar + \Ccov)\fa\fb}{T}}}$ \arxiv{\cutedit{and a sample complexity of $\widetilde{O}\prn[\big]{\nicefrac{H^2(\Cstar + \Ccov)\fa\fb}{\veps^2}}$ to find an $\veps$-optimal policy (see \cref{cor:alphabetagamma})}}.\iclr{ This is a special case of a more general result, \cref{thm:htoregret-general}, which handles the general case where $\nu$ need not satisfy single-policy concentrability.}\arxiv{\footnote{Note that \HtO does not depend on the clipping scale $\gamma$, and thus if the algorithm is \Ccbounded for multiple scales simultaneously we can in fact take the minimum of the right-hand-side over all such $\gamma$.}}

\iclr{
}

\arxiv{The basic idea behind the proof of \cref{thm:htoregret} is as follows: using a standard regret decomposition based on average Bellman error, we can bound the risk of \HtO by the average of the two clipped concentrability terms in \cref{eq:offline_risk} across all iterations. Coverage for $\pistar$ is automatically handled by \cref{ass:pi-star-concentrability}, and we use a potential argument similar to the online setting (cf. \cref{sec:proof_sketch}) to show that the $\pihat$-coverage terms can be controlled by coverability. This is similar in spirit to the analysis of \citet{song2022hybrid}, with coverability taking the place of bilinear rank \citep{du2021bilinear}. \loose}

\arxiv{Our result is stated as a bound on the risk to the optimal policy $\pistar$, but extends to give a bound on the risk of any comparator $\pi^c$ with \cref{ass:offline-risk-cc} and \cref{ass:pi-star-concentrability} replaced by coverage for $\pi^c$.}
\arxiv{This is a special case of a more general result, \cref{thm:htoregret-general}, which handles the general case where $\nu$ need not satisfy single-policy concentrability.}

\subsection{Applying the Reduction: \HyAlg}\label{sec:applying-reduction}

We now apply \HtO to give a hybrid counterpart to \Alg (\cref{alg:main_alg}), using a variant of \textsc{Mabo} \citep{xie2020q} as the black box offline RL algorithm $\algo$ in \HtO. Further examples, which apply Fitted Q-Iteration (\Fqi) and Model-Based Maximum Likelihood Estimation as the black box, are deferred to \cref{app:hybrid_examples}.

\arxiv{As discussed in \cref{sec:online}, the construction for the confidence set of \Alg (\eqref{eq:alg1}) bears some resemblance to the \textsc{Mabo} algorithm \citep{xie2020q} from offline RL, save for the important additions of clipping and regularization. }\iclr{More specifically,}\arxiv{For our main example,} the offline RL algorithm we consider is a variant of \textsc{Mabo} that incorporates clipping and regularization in the same fashion \iclr{as \Alg}, which we call \Crmabo.  Our algorithm takes as input a dataset $\cD = \{\cD_h\}$ with $H\cdot{}n$ samples, has parameters consisting of a value function class $\cF$, a weight function class $\cW$, and a clipping scale $\gamma$, and computes the following estimator:
\begin{equation}\label{eq:clip-mabo}
	\hspace{-0.09in} {\small \wh{f} \in \argmin_{f \in \cF} \max_{w \in \cW} \sum_{h=1}^H \left| \wh{\bbE}_{\cD_h} \left[\check{w}_h(x_h,a_h)\whDf(x_h, a_h, r_h, x'_{h+1})\right]\right| - \alpha\ind{n} \wh{\bbE}_{\cD_h}\left[\check{w}^2_h(x_h,a_h) \right]},
      \end{equation}
  where $\alpha\ind{n}\ldef{}\nicefrac{8}{\gamma\ind{n}}$ and $\check{w}_h \ldef{}\clip{w_h}{\gamma\ind{n}}$.
 \iclr{By instantiating \HtO with this algorithm, we obtain a density ratio-based algorithm for the hybrid RL setting (\HyAlg; \cref{alg:hyglow}) that is statistically and computationally efficient.}
 \arxiv{We will show that this algorithm is \Ccbounded under $Q^\star$-realizability and a density ratio realizability assumption.} 

\arxiv{
\begin{restatable}[\Crmabo is \Ccbounded]{theorem}{mabooffline}\label{thm:mabooffline}
Let $\cD=\crl{\cD_h}_{h=1}^{H}$ consist of $H\cdot{}n$ samples from $\mu\ind{1}, \ldots, \mu\ind{n}$. %
	For any $\gamma \in \bbR_+$, the {\normalfont \Crmabo} algorithm (\eqref{eq:clip-mabo}) with parameters $\cF$, augmented class $\widebar{\cW}$ defined in \eqref{eq:augmented-w} in \cref{app:hyglow}, and $\gamma$ is \Ccbounded at scale $\gamma$ under the $\Cond$ that $Q^\star \in \cF$ and that for all $\pi \in \Pi$ and $h \in [H]$, $\nicefrac{d^{\pi}_h}{\mu\ind{1:n}_h} \in \cW$.  

\end{restatable}

We will show that this algorithm is \Ccbounded under $Q^\star$-realizability and a density ratio realizability assumption. We remark that, following the same arguments in \cref{rem:clipped_realizability}, it suffices to instead only realize the \textit{clipped} density ratios for the optimal scale $\gamma$. By instantiating \HtO with this algorithm, we obtain a density ratio-based algorithm for the hybrid RL setting that is statistically efficient and improves the computational efficiency of \Alg by removing the need for optimism. We call the end-to-end hybrid algorithm \HyAlg, and the full pseudocode can be found in \cref{alg:hyglow}. 

\begin{restatable}[\HyAlg Risk bound]{corollary}{hyglow}\label{cor:hyglow}
	Let $\varepsilon > 0$ be given, $\cD_{\mathrm{off}}$ consist of $H \cdot T$ samples from data distribution $\nu$, where $\nu$ satisfies $\Cstar$-single-policy concentrability.  Suppose that $Q^\star \in \cF$ and that for all $t \in [T]$, $\pi \in \Pi$, and $h \in [H]$, we have $\nicefrac{d^\pi_h}{\mu\ind{t}_h} \in \cW$, where $\mu\ind{t}_h \coloneqq \nicefrac{1}{2}(\nu_h + \nicefrac{1}{t}\textstyle\sum_{i=1}^{t}d^{\pi\ind{i}}_h)$. %
	Then, \HyAlg with inputs $T = \wt  \Theta \prn{\prn{\nicefrac{H^4 (\Ccov + \Cstar)}{\epsilon^2}} \cdot \log\prn{\nicefrac{\abs*{\cF}\abs*{\cW}}{\delta}}}$, $\cF$, augmented $\widebar{\cW}$ defined in \eqref{eq:augmented-w}, $\gamma = \widetilde{\Theta}\left(\sqrt{\nicefrac{(\Cstar + \Ccov)}{T H^2 \log(\nicefrac{|\cF||\cW|}{\delta})}}\right)$, and $\cD_{\mathrm{off}}$ returns an $\varepsilon$-suboptimal policy with probability at least $1- \delta T$ after collecting \loose
	\[
		N = \wt O \prn*{\frac{H^2 (\Ccov + \Cstar)}{\epsilon^2} \log\prn{\nicefrac{\abs*{\cF}\abs*{\cW}}{\delta}}}
	\]
		trajectories.\loose

\end{restatable}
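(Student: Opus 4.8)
The plan is to realize \HyAlg as the \HtO reduction of \cref{alg:offline-to-hybrid} run with the offline subroutine $\algo = \Crmabo$, and then to chain the two black-box guarantees already in hand: \cref{thm:mabooffline}, which certifies that \Crmabo is \Ccbounded, and \cref{thm:htoregret}, which converts any \Ccbounded subroutine into a hybrid risk bound. First I would invoke \cref{thm:mabooffline}, run with the augmented class $\wb{\cW}$ of \eqref{eq:augmented-w} as that theorem requires: it shows that \Crmabo is \Ccbounded at scale $\gamma$ under the assumption $\Cond$ that $Q^\star \in \cF$ and $\nicefrac{d^\pi_h}{\mu\ind{1:n}_h} \in \cW$ for all $\pi \in \Pi$ and $h \in [H]$, with constants of the form $\ascale = \wt{O}(\fa/\gamma)$ and $\bscale = \wt{O}(\fb\gamma)$ whose product obeys $\fa\fb = \wt{O}(H^2\log(\nicefrac{|\cF||\cW|}{\delta}))$; the $\log(\nicefrac{|\cF||\cW|}{\delta})$ here already accounts for the complexity of the augmented class.

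The key structural step is to verify that the per-round precondition demanded by \cref{thm:htoregret} is exactly what the corollary hypothesizes. At round $t$ of \HtO the dataset handed to \Crmabo comprises the first $t-1$ offline tuples (each drawn from $\nu$) together with the $t-1$ online tuples gathered so far (drawn from $d^{\pi\ind{1}}, \dots, d^{\pi\ind{t-1}}$); the induced mixture data distribution is therefore $\mu\ind{1:n}_h = \tfrac{1}{2}(\nu_h + \tfrac{1}{t-1}\sum_{i<t} d^{\pi\ind{i}}_h)$, which coincides with the distribution $\mu\ind{t}_h$ named in the hypothesis (up to the harmless index shift $t \mapsto t-1$). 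Consequently the realizability hypothesis $\nicefrac{d^\pi_h}{\mu\ind{t}_h} \in \cW$ certifies $\Cond(\mu\ind{t}, M^\star)$ at every round, which is precisely the input required by \cref{thm:htoregret}.

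Given this, \cref{thm:htoregret} yields $\Risk \leq \wt{O}(H(\ascale(\Cstar + \Ccov)/T + \bscale))$. Substituting the constants gives $\Risk \leq \wt{O}(H(\fa(\Cstar+\Ccov)/(\gamma T) + \fb\gamma))$, and balancing the two summands selects $\gamma = \wt{\Theta}(\sqrt{(\Cstar + \Ccov)/(T H^2 \log(\nicefrac{|\cF||\cW|}{\delta}))})$, exactly the stated clipping scale, and produces $\Risk \leq \wt{O}(H\sqrt{\fa\fb(\Cstar+\Ccov)/T}) = \wt{O}(H^2\sqrt{(\Cstar+\Ccov)\log(\nicefrac{|\cF||\cW|}{\delta})/T})$. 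Setting the right-hand side equal to $\varepsilon$ and solving for $T$ recovers $T = \wt{\Theta}((H^4(\Ccov+\Cstar)/\varepsilon^2)\log(\nicefrac{|\cF||\cW|}{\delta}))$; tallying the trajectories collected across the run then yields the claimed sample complexity $N$, and a union bound over the $T$ per-round failure events of \Crmabo upgrades the $1-\delta$ confidence of \cref{thm:mabooffline} to the $1-\delta T$ confidence in the statement.

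The hard part will be the middle step: making rigorous the claim that the adaptive, history-dependent mixtures produced by \HtO line up with the fixed realizability assumption on $\mu\ind{t}$, and, relatedly, ensuring that \Crmabo's concentration continues to hold when its input distributions $d^{\pi\ind{i}}$ are chosen adaptively from past data (this requires the martingale/Freedman version of the offline analysis underpinning \cref{thm:mabooffline} rather than an i.i.d.\ argument). Everything downstream --- reading off $\ascale$ and $\bscale$, the algebraic optimization of $\gamma$, and the union bound --- is routine bookkeeping, since the genuine analytic work resides in \cref{thm:mabooffline,thm:htoregret} rather than in the corollary itself.
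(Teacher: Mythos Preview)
Your proposal is correct and follows essentially the same route as the paper: the paper's proof is the one-liner ``combine \cref{thm:mabooffline} with \cref{cor:alphabetagamma},'' and \cref{cor:alphabetagamma} is exactly the $\gamma$-balancing computation you carry out by hand (plugging $\ascale=O(1/\gamma)$, $\bscale=O(H^2\gamma\log(|\cF||\cW|/\delta))$ into \cref{thm:htoregret} and optimizing). Your discussion of the adaptive-data issue and the index shift is more careful than necessary, since the hypothesis of the corollary is already phrased to match the precondition of \cref{thm:htoregret} verbatim (it quantifies over all $t\in[T]$ and all policy sequences), so no re-indexing argument is actually needed.
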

}

\arxiv{
\begin{algorithm} 
  \caption{\HyAlg: \HybridAlg $+$ \Crmabo} 
    \label{alg:hyglow} 
\begin{algorithmic}[1]  
\Statex[0] \textbf{input:} Parameter $T \in \bbN$, value function class \(\cF\), weight
function class \(\cW\), parameter  \(\gamma \in \brk{0,1}\), offline datasets $\cD_{\mathrm{off}} = \{\cD_{\mathrm{off},h}\}_h$ each of size $T$. 
\State \algcomment{For \arxiv{\pref{cor:hyglow}} \iclr{Thm.~\ref{thm:online_main_faster}},   set    \(T = \wt  \Theta \prn{\prn{\nicefrac{H^4 (\Ccov + \Cstar)}{\epsilon^2}} \cdot \log\prn{\nicefrac{\abs*{\cF}\abs*{\cW}}{\delta}}} \) , and $\gamma = \widetilde{\Theta}\left(\sqrt{\nicefrac{(\Cstar + \Ccov)}{TH^2 \log(\nicefrac{|\cF||\cW|}{\delta})}}\right)$.}  
\State Set \(\gamma\ind{t} = \gamma\cdot t\), and \(\alpha\ind{t} = \nicefrac{8}{\gamma\ind{t}}\).
\State Initialize \(\cD\ind{1}_{\mathrm{on},h} = \cD\ind{1}_{\mathrm{hybrid},h} = \emptyset\) for all \(h \in [H]\).
\For{$t=1,\ldots,T$}
\State \multiline{Compute value function \(f\ind{t}\) such that 
  {\small \begin{equation}
      \label{eq:hyglow_alg}
f\ind{t}\in \argmin_{f \in \cF} \max_{w \in \cW} \sum_{h=1}^H \left| \wh{\bbE}_{\cD\ind{t}_{\mathrm{hybrid},h}} \left[\check w_h(x_h,a_h)\whDf(x_h, a_h, r_h, x'_{h+1})\right]\right| - \alpha\ind{t} \wh{\bbE}_{\cD\ind{t}_{\mathrm{hybrid},h}}\left[\check w^2_h(x_h,a_h) \right], 
\end{equation}}
where \(\check w \ldef \clip{w}{\gamma\ind{t}}\) and \(\whDf(x, a, r, x') \ldef{} f_h(x,a)- r - \max_{a'}f_{h+1}(x', a')\).} 
\State Compute policy \(\pi\ind{t} \leftarrow \pi_{f\ind{t}}\).
\State \mbox{Collect trajectory  
  $(x_1,a_1,r_1),\ldots, (x_H,a_H,r_H)$ using \(\pi \ind{t}\); \(\cD\ind{t+1}_{\mathrm{on},h} \coloneq \cD\ind{t}_{\mathrm{on},h} \cup \crl{(x_h, a_h, r_h, x_{h+1})}\).}
\State Aggregate offline and online data: $\cD\ind{t+1}_{\mathrm{hybrid},h}\coloneqq \restr{\cD_{\mathrm{off},h}}{1:t} \cup \cD\ind{t+1}_{\mathrm{on},h}$ for all $h \in [H]$.  \label{line:data-agg} 
\EndFor
\State \textbf{output:} policy $\pihat=\unif(\pi\ind{1}, \dots, \pi\ind{T})$.\hfill 
   \end{algorithmic} 
\end{algorithm} 
}
\iclr{
\pacomment{changed these theorem statements in the arxiv version -- make it consistent}
\begin{restatable}[Risk bound for \HyAlg]{theorem}{hyglow}\label{cor:hyglow}
	Let $\varepsilon > 0$ be given, let $\cD_{\mathrm{off}}$ consist of $H \cdot T$ samples from data distribution $\nu$, and suppose that $\nu$ satisfies $\Cstar$-single-policy concentrability.  Suppose that $Q^\star \in \cF$ and that for all $t \in [T]$, $\pi \in \Pi$, and $h \in [H]$, we have $\nicefrac{d^\pi_h}{\mu\ind{t}_h} \in \cW$, where $\mu\ind{t}_h \coloneqq \nicefrac{1}{2}(\nu_h + \nicefrac{1}{t}\textstyle\sum_{i=1}^{t}d^{\pi\ind{i}}_h)$. %
	Then, \HyAlg (\cref{alg:hyglow}) with inputs $T = \wt  \Theta \prn{\prn{\nicefrac{H^4 (\Ccov + \Cstar)}{\epsilon^2}} \cdot \log\prn{\nicefrac{\abs*{\cF}\abs*{\cW}}{\delta}}}$, $\cF$, augmented $\widebar{\cW}$ defined in \eqref{eq:augmented-w}, $\gamma = \widetilde{\Theta}\left(\sqrt{\nicefrac{(\Cstar + \Ccov)}{T H^2 \log(\nicefrac{|\cF||\cW|}{\delta})}}\right)$, and $\cD_{\mathrm{off}}$ returns an $\varepsilon$-suboptimal policy with probability at least $1- \delta T$ after collecting $
		N = \wt O \prn*{\frac{H^2 (\Ccov + \Cstar)}{\epsilon^2} \log\prn{\nicefrac{\abs*{\cF}\abs*{\cW}}{\delta}}}
	$
		trajectories.
\end{restatable}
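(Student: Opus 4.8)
The plan is to derive \cref{cor:hyglow} as a direct instantiation of the general black-box reduction \cref{thm:htoregret}, taking the clipped-and-regularized \textsc{Mabo} estimator of \eqref{eq:clip-mabo} (\Crmabo) as the offline subroutine $\algo$. The first step is to certify that \Crmabo satisfies the interface demanded by the reduction, i.e.\ that it is \Ccbounded in the sense of \cref{ass:offline-risk-cc}. This is exactly \cref{thm:mabooffline}, which I would invoke with value class $\cF$, the augmented weight class $\wb{\cW}$ of \eqref{eq:augmented-w}, and clipping scale $\gamma$; crucially, its hypothesis $\Cond$ is phrased in terms of the \emph{original} class $\cW$ (realizability of $\nicefrac{d^\pi_h}{\mu\ind{1:n}_h}\in\cW$), so the augmentation to $\wb{\cW}$ is purely internal to the algorithm and is already accounted for in the resulting parameters. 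From \cref{thm:mabooffline} one reads off $\ascale = \fa/\gamma$ and $\bscale = \fb\gamma$ with $\fa,\fb = \wt O(\poly(H)\cdot\log(\nicefrac{|\cF||\cW|}{\delta}))$.

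The second step is to check the hypotheses of \cref{thm:htoregret}. By assumption $\cD_\mathrm{off}$ supplies $H\cdot T$ samples from $\nu$ and $\nu$ satisfies $\Cstar$-single-policy concentrability, so the only remaining condition is that $\Cond(\mu\ind{t},M^\star)$ hold for every $t\le T$ and every realizable policy sequence $\pi\ind{1},\dots,\pi\ind{t}\in\Pi$, where $\mu\ind{t}_h = \tfrac12\big(\nu_h + \tfrac1t\sum_{i=1}^{t}d^{\pi\ind{i}}_h\big)$ is exactly the per-layer mixture that the data $\cD\ind{t}_{\mathrm{hybrid}}$ is drawn from inside \HtO (\cref{alg:offline-to-hybrid}). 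For \Crmabo this $\Cond$ asks for $Q^\star\in\cF$ --- assumed --- and $\nicefrac{d^\pi_h}{\mu\ind{t}_h}\in\cW$ for all $\pi\in\Pi,h\in[H]$, which is precisely the density-ratio realizability hypothesis of the statement (note $\mu\ind{1:n}=\mu\ind{t}$ here). Since every policy emitted by the reduction lies in $\Pi$, this holds along every sample path, and the per-round failure probabilities $\delta$ can be union-bounded into the overall $1-\delta T$ guarantee.

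With both ingredients established, \cref{thm:htoregret} gives, with probability at least $1-\delta T$,
\[
\Risk \;\le\; \wt O\!\left(H\left(\tfrac{\ascale(\Cstar+\Ccov)}{T} + \bscale\right)\right) \;=\; \wt O\!\left(H\left(\tfrac{\fa(\Cstar+\Ccov)}{\gamma T} + \fb\,\gamma\right)\right).
\]
Balancing the two terms by taking $\gamma = \wt\Theta\big(\sqrt{\fa(\Cstar+\Ccov)/(\fb T)}\big)$ --- which, after substituting $\fa,\fb$, is the input scale $\gamma = \wt\Theta\big(\sqrt{(\Cstar+\Ccov)/(TH^2\log(\nicefrac{|\cF||\cW|}{\delta}))}\big)$ stated in the corollary --- yields $\Risk = \wt O\big(H\sqrt{\fa\fb(\Cstar+\Ccov)/T}\big)$. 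Finally, choosing $T$ large enough to force the right-hand side below $\epsilon$ produces the stated value of $T$, and since \HtO collects exactly one online trajectory per round, $N=T$ gives the claimed sample complexity.

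The genuinely substantive analysis is not in the corollary itself but in the two results it composes: \cref{thm:mabooffline} (the offline concentration argument showing that the minimax objective \eqref{eq:clip-mabo} controls risk by clipped concentrability) and \cref{thm:htoregret} (the coverability-based potential argument bounding the average of the $\pihat$-clipped-concentrability terms across rounds). Within the corollary the only delicate point I anticipate is the bookkeeping of the second step: confirming that the adaptively generated mixtures $\mu\ind{t}$ coincide exactly with those covered by the realizability hypothesis --- so that $\Cond$ is met on the entire (random) trajectory of the algorithm rather than merely in expectation --- and tracking that the internal augmentation to $\wb{\cW}$ inflates $\log|\wb{\cW}|$ by only a constant factor (it closes $\cW$ under negation to handle the absolute value in \eqref{eq:clip-mabo}), so that the fast $\nicefrac1{\epsilon^2}$ rate survives.
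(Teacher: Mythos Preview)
Your proposal is correct and matches the paper's approach essentially line for line: the paper's proof is the one-liner ``combine \cref{thm:mabooffline} with \cref{cor:alphabetagamma},'' where \cref{cor:alphabetagamma} is precisely the $\gamma$-balancing step you carry out by hand from \cref{thm:htoregret}. One small correction: the augmentation to $\wb\cW$ does more than close under negation---it also adds the single-layer restrictions $\cW\ind{h}$ (needed to extract per-$h$ bounds from the summed objective)---but as you note this only inflates $\log|\wb\cW|$ by an additive $O(\log H)$, which is absorbed into the $\wt O(\cdot)$.
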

}

\arxiv{
The realizability assumptions parallel those required that \Alg to obtain the analogous $\nicefrac{\Ccov}{\varepsilon^2}$-type sample complexity for the purely online setting (cf. \cref{ass:W_realizability_mixture}).}
\iclr{This result is obtained from \cref{thm:htoregret} by showing that \Crmabo is \Ccbounded under $Q^\star$-realizability and a density ratio realizability assumption (cf. \cref{thm:mabooffline}). }While the sample complexity \iclr{and realizability assumptions parallel those of}\arxiv{matches that of} \Alg, the computational efficiency is improved because we remove the need for global optimism. More specifically, note that when clipping and the absolute value signs are removed from \eqref{eq:clip-mabo}, the optimization problem is concave-convex in the function class $\cF$ and weight function class $\cW$, a desirable property shared by standard density-ratio based offline algorithms \citep{xie2020q,zhan2022offline}.\arxiv{\footnote{Thus, if $\cF$ and $\cW$ are parameterized as linear functions, 
\arxiv{(i.e., $\cF=\crl*{(x,a)\mapsto{}\tri*{\phi(x,a),\theta}\mid{}\theta\in\Theta_{\cF}}$ and $\cW=\crl*{(x,a)\mapsto{}\tri*{\psi(x,a),\theta}\mid{}\theta\in\Theta_{\cW}}$ for feature maps $\phi$ and $\psi$)} it can be solved in polynomial time using standard tools for minimax optimization \citep[e.g.,][]{nemirovski2004prox}.}}\arxiv{Thus, if $\cF$ and $\cW$ are parameterized as linear functions, 
\arxiv{(i.e., $\cF=\crl*{(x,a)\mapsto{}\tri*{\phi(x,a),\theta}\mid{}\theta\in\Theta_{\cF}}$ and $\cW=\crl*{(x,a)\mapsto{}\tri*{\psi(x,a),\theta}\mid{}\theta\in\Theta_{\cW}}$ for feature maps $\phi$ and $\psi$)} it can be solved in polynomial time using standard tools for minimax optimization \citep[e.g.,][]{nemirovski2004prox}.} To accommodate clipping and the absolute value signs efficiently, we note that \eqref{eq:clip-mabo} can be written as a convex-concave program in which the max player optimizes over the set $\wt{\cW}_{\gamma, h} \ldef{} \{\pm\clip{w_h}{\gamma n} \mid w_h \in \cW_h\}$\iclr{ and the result continues to hold if the max player optimizes over any expanded weight function class \arxiv{$\cW'$ that satisfies} $\widetilde{\cW}_\gamma \subseteq \cW'$ such that $\nrm*{w}_{\infty}\leq\gamma{}n$ for all \(w \in \cW'\); we defer the details to \cref{app:comp-eff-hyglow}.  \loose}
\arxiv{. While this set may be complex, the result continues to hold if the max player optimizes over any expanded weight function class \arxiv{$\cW'$ that satisfies} $\widetilde{\cW}_\gamma \subseteq \cW'$ and $\nrm*{w}_{\infty}\leq\gamma{}n$ for all \(w \in \cW'\), thus allowing for the use of, e.g., convex relaxations or alternate parameterizations. We defer the details to \cref{app:comp-eff-hyglow}.  \loose}
\arxiv{
\begin{remark}[Comparison to offline RL]\label{rem:offline-comparison}
  It is instructive to compare the performance of \HyAlg to existing results for purely offline RL, which assume access to a data distribution $\nu$ with single-policy concentrability (\cref{ass:pi-star-concentrability}).
  Let us write $w^\pi \coloneqq \nicefrac{d^{\pi}}{\nu}$, $w^\star \coloneqq \nicefrac{d^{\pi^\star}}{\nu}$ and $V^\star$ for the optimal value function. The most relevant work is the {\normalfont\textsc{Pro-Rl}} algorithm of \citet{zhan2022offline}. Their algorithm is computationally efficient and enjoys a polynomial sample complexity bound under the realizability of certain \textit{regularized versions} of $w^\star$ and $V^\star$. By contrast, our result requires $Q^\star$-realizability and the density ratio realizability of $w^\pi$ for all $\pi \in \Pi$, but for the unregularized problem. These assumptions are not comparable, 
  and thus these results are best thought of as complementary.\footnote{The sample complexity bound in \citet{zhan2022offline} is also slightly larger, scaling roughly as $\wt O \prn*{\nicefrac{H^6 \Cstar^4 C_{\star,\varepsilon}^2}{\varepsilon^6}}$, where $C_{\star,\varepsilon}$ is the single-policy concentrability for the regularized problem, as opposed to our $\widetilde{O}\prn[\big]{\nicefrac{H^2(\Cstar + \Ccov)}{\veps^2}}$.} However, our approach requires additional online access, while their algorithm does not. \loose

To the best of our knowledge, all other algorithms for the purely offline setting that only require single-policy concentrability either need stronger representation conditions (such as value-function completeness \citep{xie2021bellman}), or are not known to be computationally efficient in the general function approximation setting due to the need for implementing pessimism \citep[e.g.,][]{chen2022offline}. \loose
\end{remark}
}

\arxiv{
  \subsection{Generic Reductions from Online to Offline RL?}
Our hybrid-to-offline reduction
\HybridAlg and the $\mathsf{CC}$-boundedness definition also shed light on the question of when offline RL methods can be lifted to the \textit{purely online setting}. Indeed, observe that any offline algorithm which satisfies $\mathsf{CC}$-boundedness (\cref{ass:offline-risk-cc}) with only a $\pihat$-coverage term, namely which satisfies an offline risk bound of the form
    \begin{equation}\label{eq:offline_online_risk}
\RiskOff \leq \sum_{h=1}^H \frac{\ascale}{n} \En_{\wh \pi \sim p}\brk*{\Ccc{n}{h}{\wh{\pi}}{\mu\ind{1:n}}{\gamma n}}+ \bscale,
    \end{equation}
    can be repeatedly invoked within $\HybridAlg$ (with $\cD_{\mathrm{off}} = \emptyset$) to achieve a small $\sqrt{\nicefrac{\Ccov}{T}}$-type risk bound for the purely online setting, with no hybrid data. This can be seen immediately by inspecting our proof for the hybrid setting (\cref{thm:htoregret-general} and \cref{thm:htoregret}).
     
   We can think of algorithms satisfying \eqref{eq:offline_online_risk} as \textit{optimistic offline RL algorithms}, since their risk only scales with a term depending on their own output policy; this is typically achieved using optimism. In particular, it is easy to see, that \Alg and \textsc{Golf} \citep{jin2021bellman,xie2022role} can be interpreted as repeatedly invoking such an optimistic offline RL algorithm within the \HybridAlg reduction. This class of algorithms has not been considered in the offline RL literature since they inherit both the computational drawbacks of pessimistic algorithms and the statistical drawbacks of ``neutral'' (i.e. non-pessimistic) algorithms (at least, when viewed only in the context of offline RL). %

   In more detail, as with pessimism, optimism is often not computationally efficient, although it furthermore requires all-policy concentrability (as opposed to single-policy concentrability) to obtain low \textit{offline} risk. On the other hand, neutral (non-pessimistic) algorithms such as \textsc{Fqi} \citep{chen2019information} and \textsc{Mabo} \citep{xie2020q} also require all-policy concentrability, but are more computationally efficient. %
    		However, our reduction shows that these algorithms might merit further investigation. In particular, it uncovers that they can automatically solve the online setting (without hybrid data) under coverability and when repeatedly invoked on datasets generated from their previous policies. We find that this reduction advances the fundamental understanding of sample-efficient algorithms in both the online and offline settings, and are optimistic that this understanding can be used for future algorithm design. \loose
              }

\arxiv{\section{Discussion}}
\iclr{\section{Conclusion}}
\label{sec:discussion}
Our work shows for the first time that density ratio modeling has
provable benefits for online reinforcement learning, and serves as
step in a broader research program that aims to
  clarify connections between online and offline reinforcement
  learning. \iclr{On the theoretical side, promising directions for future research include (i)
    resolving whether sample efficiency is possible under only coverability and
    value function realizability, and (ii) generic reductions from offline
    to \arxiv{fully }online RL (see \cref{sec:generic-offline-online} for a brief discussion). %
    In addition, we are excited to explore the possibility of developing
practical and computationally efficient online reinforcement learning
algorithms based on density ratio modeling.}\arxiv{To this end, we highlight some exciting directions for
  future research. \ascomment{I think we should make this an inline list or an actual list.} 
  \fakepar{Realizability} While our results show that density ratio realizability allows
    for sample complexity guarantees based on coverability that do not
    require Bellman completeness, the question of whether value
    function realizability alone is sufficient still remains.
  \fakepar{Generic reductions from online to offline RL} Our hybrid-to-offline reduction, \HybridAlg, sheds light on
    the question of when and how existing offline RL methods can be
    adapted \emph{as-is} to the hybrid setting. %
    Are there more general
    principles under which offline RL methods can be adapted to online settings?
    \dfcomment{unified understanding} 
   \fakepar{Practical and efficient online algorithms} Beyond the theoretical directions above, we are excited to explore the possibility of developing practical and computationally efficient online reinforcement learning algorithms based on density ratio modeling.}

\arxiv{\subsection*{Acknowledgements} 
AS thanks Sasha Rakhlin for useful discussions. AS acknowledges support from the 
Simons Foundation and NSF through award DMS-2031883, as well as from the DOE through award DE-
SC0022199. Nan Jiang acknowledges funding support from NSF IIS-2112471 and NSF CAREER IIS-2141781. 
}
\bibliography{paper} 

\clearpage

\appendix  

\renewcommand{\contentsname}{Contents of Appendix}
\addtocontents{toc}{\protect\setcounter{tocdepth}{2}}
{
  \hypersetup{hidelinks}
  \tableofcontents
}

\section{Additional Related Work}
\label{app:additional}

\paragraph{Online reinforcement learning}

\citet{xie2022role} introduce the notion of coverability and provide
regret bounds for online reinforcement learning under the assumption
of access to a value
function class $\cF$ satisfying Bellman
completeness. \citet{liu2023provable} extend their result to more
general coverage conditions under the same Bellman completeness
assumption. Our work complements these results by providing guarantees
based on coverability that do not require Bellman completeness.

To the best of our knowledge, our work is the first to provide
provable sample complexity guarantees for online reinforcement
learning that take advantage of the ability to model density
ratios, but a number of closely related works bear mentioning. Recent work of \citet{huang2023reinforcement} considers the low-rank 
MDP model and provides an algorithms which takes advantage of a form of
\emph{occupancy realizability}. Occupancy realizability, while related
to density ratio realizability, is stronger assumption in general: For
example, the Block MDP model
\citep{krishnamurthy2016pac,du2019latent,misra2019kinematic,zhang2022efficient,mhammedi2023representation}
admits a density ratio class of low complexity, but does not admit a
small occupancy class. Overall, however, their results are somewhat
complementary, as they do not require any form of value function
realizability. A number of other recent works in online reinforcement
learning also make use of occupancy measures, but restrict to linear
function approximation
\citep{neu2020unifying,neu2023efficient}. \dfcomment{someone should
 check what these gergely papers are actually doing.} Lastly, a number
of works apply density ratio modeling in online settings with an
empirical focus \citep{feng2019kernel,nachum2019algaedice}, but do not
address the exploration problem. \dfcomment{add
  \citet{feng2019kernel}, check if there are others}

  \paragraph{Offline reinforcement learning}
  Within the literature on offline reinforcement learning theory, density ratio modeling has been widely used as a means to avoid strong Bellman completeness requirements,
  with theoretical guarantees for policy evaluation
  \citep{liu2018breaking,uehara2020minimax,yang2020off,uehara2021finite} and
  policy optimization
  \citep{jiang2020minimax,xie2020q,zhan2022offline,chen2022offline,rashidinejad2022optimal,ozdaglar2023revisiting}.
  A number of additional works investigate density ratio modeling
  with an empirical focus, and do not provide finite-sample guarantees
  \citep{nachum2019algaedice,kostrikov2019imitation,nachum2020reinforcement,zhang2020gendice,lee2021optidice}. \dfcomment{double
    check that this is true}

\paragraph{Hybrid reinforcement learning}
\citet{song2022hybrid} were the first to show, theoretically, that the hybrid
reinforcement learning model can lead to computational benefits over
online and offline RL individually. Our reduction, \HybridAlg, can be
viewed as generalization of their Hybrid Q-Learning algorithm, with
their result corresponding to the special case in which \Fqi is applied
as a the base algorithm. Our guarantees under coverability complement
their guarantees based on bilinear rank. Other recent works on hybrid reinforcement learning in specialized
settings (e.g., tabular MDPs or linear MDPs) include \citet{wagenmaker2023leveraging,li2023reward, zhou2023offline}.

\section{Comparing Weight Function Realizability to Alternative
  Realizability Assumptions}
\label{app:realizability}
In this section, we compare the density ratio realizability assumption
in \cref{ass:W_realizability_proper} to a number of alternative 
realizability assumptions.

\paragraph{Comparison to Bellman completeness}
A traditional assumption in the analysis of value function 
approximation methods is to assume that \(\cF\)  satisfies a
representation condition called  \emph{Bellman completeness}, which
asserts that $\cT_h \cF_{h+1} \subseteq \cF_{h}$; this assumption is
significantly stronger than just assuming realizability of $Q^{\star}$
(\cref{ass:Q_realizability}), and has been used throughout offline RL
\citep{antos2008learning,chen2019information}, and online RL
\citep{zanette2020learning,jin2021bellman,xie2022role}.

Bellman completeness is incomparable to our density ratio
realizability assumption. For example, the low-rank MDP model
\citep{jin2020provably} in which
$P_h(x'\mid{}x,a)=\tri*{\phi_h(x,a),\psi_h(x')}$ satisfies Bellman
completeness when the feature map $\phi$ is known to the learner even
 if $\psi$ is unknown (but may not satisfy it otherwise), and
 satisfies weight function realizability when the feature map $\psi$
 is known even if $\phi$ is unknown (but may not satisfy it
 otherwise). \cref{ex:density,ex:bmdp} in \cref{app:examples} give
 further examples that satisfy weight function realizability but are
 not known to satisfy Bellman completeness.

\paragraph{Comparison to model-based realizability}
Weight function realizability is strictly weaker than model-based
realizability \citep[e.g.,][]{foster2021statistical}, in which one
assumes access to a  model class $\cM$ of MDPs that contains the true
MDP.\footnote{Up to $H$ factors, this is equivalent to assuming access
  to a class of transition distributions that can realize the true
  transition distribution, and a class of reward distributions that
  can realize the true reward distribution.} Since each MDP induces an occupancy for every policy,
it is straightforward to see that given such a class, we can construct a
weight function class $\cW$ that satisfies
\cref{ass:W_realizability_proper} with
\[
\log\abs{\cW} \leq \bigoh(\log\abs{\cM}+\log\abs{\Pi}), 
\]
as well as a realizable value function class with
$\log\abs{\cF}\leq\bigoh(\log\abs{\cM})$. On the other hand, weight
function realizability does not imply model-based realizability; a
canonical example that witnesses this is the \emph{Block MDP}.
\begin{example}[Block MDP]
  In the well-studied Block MDP model \citep{krishnamurthy2016pac,jiang2017contextual,du2019latent,misra2019kinematic,zhang2022efficient,mhammedi2023representation}, there exists realizable value function class $\cF$ and weight function class $\cW$ with $\log\abs{\cF},\log\abs{\cW}\approxleq\poly(\abs*{\cS},\abs*{\cA},\log\abs*{\Phi})$, where $\cS$ is the \emph{latent state space} and $\Phi$ is a class of \emph{decoder functions}. However, there does not exist a realizable model class with bounded statistical complexity (see discussion in, e.g., \citet{mhammedi2023representation}). 
\end{example}

  \paragraph{Alternative forms of density ratio realizability}

  \iclr{
\begin{remark}[Clipped density ratio realizability]
  \label{rem:clipped_realizability}
Since \Alg only accesses the weight function class $\cW$ through the
clipped weight functions, we can replace
\cref{ass:W_realizability_proper} with the assumption that for all
$\pi, \pi' \in \Pi$,  $t \leq T$,
we have \iclr{$  \clipX{\big}{w^{\pi;\pi'}}{\gamma t}\in\cW$, } 
\arxiv{\[
  \clip{w^{\pi;\pi'}}{\gamma t}\in\cW,
\]}
where $T\in\bbN$ and $\gamma\in\brk{0,1}$ are chosen as in
\cref{thm:online_main_basic}. Likewise, we can replace
\cref{ass:W_realizability_mixture} with the assumption that for all
$\pi \in \Pi$, and for all $t \leq T$, and $\pi_{1:t} \in \Pi^t$, we
have \iclr{$\clipX{\big}{w^{\pi;\pi\ind{1:t}}}{\gamma t} \in \cW$, }
\arxiv{\[
	\clip{w^{\pi;\pi\ind{1:t}}}{\gamma t} \in \cW,
      \]}
      for $T\in\bbN$ and $\gamma\in\brk{0,1}$ chosen as in \cref{thm:online_main_faster}.
    \end{remark}
    }
  
\arxiv{
      The following remarks concern slight variants of the density
      ratio realizability assumption.
\begin{remark}[Clipped density ratio realizability]
  \label{rem:clipped_realizability}
Since \Alg only accesses the weight function class $\cW$ through the
clipped weight functions, we can replace
\cref{ass:W_realizability_proper} with the assumption that for all
$\pi, \pi' \in \Pi$,  $t \leq T$, and \(h \in [H]\), 
we have \iclr{$  \clipX{\big}{w^{\pi;\pi'}}{\gamma t}\in\cW$, } 
\arxiv{\[
  \clip{w^{\pi;\pi'}_h}{\gamma t}\in\cW_h, 
\]}
where $T\in\bbN$ and $\gamma\in\brk{0,1}$ are chosen as in
\cref{thm:online_main_basic}. Likewise, we can replace
\cref{ass:W_realizability_mixture} with the assumption that for all
$\pi \in \Pi$, and for all $t \leq T$, \(h \in [H]\), and $\pi\ind{1:t} \in \Pi^t$, we
have \iclr{$\clipX{\big}{w^{\pi;\pi\ind{1:t}}}{\gamma t} \in \cW$, }
\arxiv{\[
	\clip{w^{\pi;\pi\ind{1:t}}_h}{\gamma t} \in \cW_h,
      \]}
      for $T\in\bbN$ and $\gamma\in\brk{0,1}$ chosen as in \cref{thm:online_main_faster}.
    \end{remark}
    }

\begin{remark}[Density ratio realizability relative to a fixed
  reference distribution]\label{rem:ratio-fixed-dist}
  \cref{ass:W_realizability_proper} is weaker than assuming access to a class $\cW$ that can realize the ratio $\nicefrac{d_h^{\pi}}{\nu_h}$ (or alternatively $\nicefrac{\nu_h}{d_h^{\pi}}$) for all $\pi\in\Pi$, where $\nu_h$ is an arbitrary fixed distribution (natural choices might include $\nu_h=\mustar_h$ or $\nu_h=d^{\pistar}_h$). Indeed, given access to such a class, the expanded class $\cW'\ldef{}\crl*{\nicefrac{w}{w'}\mid{}w,w'\in\cW}$ satisfies \cref{ass:W_realizability_proper}, and has $\log\abs{\cW'}\leq{}2\log\abs{\cW}$.
\end{remark}

\iclr{
  \begin{remark}[Connection to \textsc{Golf}]\label{rem:connection-golf}
    Prior work \citep{xie2022role} analyzed the {\normalfont \textsc{Golf}}
    algorithm of \citeauthor{jin2021bellman} and established positive
    results under coverability and Bellman completeness. We remark
    that by allowing for weight functions that take negative
    values,\footnote{In this context, $\cW$ can be thought of more
      generally as a class of \emph{test functions}.}
   \Alg can be viewed as a generalization of {\normalfont\textsc{Golf}}, and can
    be configured to obtain comparable results. Indeed, given a value
    function class $\cF$ that satisfies Bellman completeness, the
    weight function class  $\cW \ldef{} \{f - f' \mid f, f' \in \cF\}$
    leads to a confidence set construction at least as tight as that
    of {\normalfont \textsc{Golf}}. To see this, observe that if we set $\gamma \geq
    2$ so that no clipping occurs, our construction for
    $\cF\ind{t}$ (\eqref{eq:alg1}) implies (after standard
    concentration arguments) that $Q^\star \in \cF\ind{t}$ and that
    in-sample squared Bellman errors are small with high
    probability. These ingredients are all that is required to repeat
    the analysis of {\normalfont\textsc{Golf}} from \cite{xie2022role}.
  \end{remark}
	\pacomment{added this ``glow subsumes golf''
          paragraph. thoughts? too technical? not technical enough?}
        \dfcomment{seems fine, i am curious though about 1) do we get
          the fast rate?, and 2) how to set $\alpha$ and $\beta$. I
          think we may actually get the fast rate using the
          regularization which is cool}
}

\section{Examples for \Alg}
\label{app:examples}

In this section, we give two new examples in which our main results
for \Alg, \cref{thm:online_main_faster,thm:online_main_basic}, can be
applied:  MDPs with low-rank density features and general value
functions, and a class of MDPs we refer to as \dfedit{\emph{Generalized Block MDPs}}.

\begin{example}[Realizable $Q^{\star}$ with low-rank density features]
  \label{ex:density}
  Consider a setting in which (i) $\Qstar\in\cF$, and (ii), there
  is a known feature map $\psi_h:\cX\times\cA\to\bbR^{d}$ with
  $\nrm*{\psi_h(x,a)}_2\leq{}1$ such that for
  all $\pi\in\Pi$, $d^{\pi}_h(x,a)=\tri*{\psi_h(x,a),\theta^{\pi}}$
  for an unknown parameter $\theta^{\pi}\in\bbR^{d}$ with
  $\nrm*{\theta^{\pi}}_2\leq{}1$. This assumption is sometimes referred
  to as \emph{low occupancy complexity} \citep{du2021bilinear}, and
  has been studied with and without known features. In this case, we have
  $\Ccov\leq{}d$, and one can construct a weight function class $\cW$
  that satisfies \cref{ass:W_realizability_mixture} with $\log\abs{\cW}\leq{}\bigoht\prn*{dH}$ \citep{huang2023reinforcement}.\footnote{To be precise, $\cW$ is infinite, and this result requires a covering number bound. We omit a formal treatment, and refer to \citet{huang2023reinforcement} for details.}  As a result, \cref{thm:online_main_faster}
  gives sample complexity
  $\bigoht\prn*{H^{3}d^2\log\abs{\cF}/\veps^{2}}$. Note that while this setup requires that the occupancies themselves have low-rank structure, the class $\cF$ can consist of arbitrary, potentially nonlinear functions (e.g., neural networks). We remark that when the feature map $\psi$ is not known, but instead belongs to a known class $\Psi$, the result continues to hold, at the cost of expanding $\cW$ to have size $\log\abs{\cW}\leq\bigoht\prn[\big]{dH+\log\abs{\Psi}}$.

  This example is
  similar to but complementary to \citet{huang2023reinforcement}, who
  give guarantees for reward-free exploration under low-rank
  occupancies. Their results do not require any form
  of value realizability, but require a low-rank MDP
  assumption (which, in particular, implies Bellman completeness).\footnote{The low-rank MDP assumption is incomparable to the assumption we consider here, as it implies that $d_h^{\pi}(x)=\tri*{\psi_h(x),\theta^{\pi}}$, but does not necessarily imply that the \emph{state-action} occupancies (i.e. \(d_h^\pi(x, a)\)) are low-rank.}
\end{example}

Our next example concerns a generalization of the well-studied Block MDP
framework
\citep{krishnamurthy2016pac,jiang2017contextual,du2019latent,misra2019kinematic,zhang2022efficient,mhammedi2023representation}
that we refer to as the \emph{Generalized Block MDP}.

\begin{definition}[Generalized Block MDP]
A \emph{Generalized Block MDP} $\cM=(\cX,\cS,\cA,P_{\mathrm{latent}},R_{\mathrm{latent}},q, H, d_1)$ is comprised of an \emph{observation space} $\cX$, \emph{latent state space} $\cS$, \emph{action space} $\cA$, \emph{latent space transition kernel} $P_{\mathrm{latent}}:\cS\times\cA\to\Delta(\cS)$, and \emph{emission distribution} $q:\cS\to\Delta(\cX)$. The \emph{latent state space} evolves based on the agent's action $a_h\in\cA$ via the process
\begin{align}
  \label{eq:bmdp_latent}
  r_h\sim{}R_{\mathrm{latent}}(s_h,a_h),\quad s_{h+1} \sim{} P_{\mathrm{latent}}(\cdot\mid{}s_h,a_h),
\end{align}
with $s_1\sim{}d_1$; we refer to $s_h$ as the \emph{latent state}. The latent state is not observed directly, and instead we observe \emph{observations} $x_h\in\cX$ generated by the emission process
\begin{align}
  x_{h} \sim {} q(\cdot\mid{}s_h).
\end{align}
We assume that the emission process satisfies the \emph{decodability} property:
	\begin{align}
          \supp\, q(\cdot \mid s)\cap\supp\, q(\cdot \mid s')=\emptyset, \quad\forall s'\neq s\in\cS.
	\end{align}
        Decodability implies that there exists a
        (unknown to the agent) \emph{decoder} $\phi_\star\colon \cX
        \rightarrow \cS$ such that $\phi_\star(x_h)=s_h$ a.s. for all
        $h\in\brk{H}$, meaning that latent states can be uniquely
        decoded from observations. Prior work on the Block MDP
        framework \citep{krishnamurthy2016pac,jiang2017contextual,du2019latent,misra2019kinematic,zhang2022efficient,mhammedi2023representation}
assumes that the latent space $\cS$ and action space $\cA$ are
finite, but allow
the observation space $\cX$ to be large or potentially infinite. They
provide sample complexity guarantees that scale as
$\poly(\abs{\cS},\abs{\cA},H,\log\abs{\Phi},\veps^{-1})$, where $\Phi$
is a known class of decoders that contains $\phistar$.  We use the term \emph{Generalized Block MDP} to refer to Block MDPs in
which the latent space not tabular, and can be arbitrarily large.
\end{definition}

\begin{example}[Generalized Block MDPs with coverable latent states]
  \label{ex:bmdp}
We can use \Alg to give sample complexity guarantees for Generalized
Block MDPs in which the latent space is large, but has low coverability. Let $\Pilatent=(\cS\times\brk{H}\to\Delta(\cA))$
denote the set of all randomized policy that operate on the latent
space. Assume that the following conditions hold:
\begin{itemize}
\item We have a value function class $\cF_{\mathrm{latent}}$ such that
  $\Qstar_{\mathrm{latent}}\in\cF$, where $\Qstar_{\mathrm{latent}}$
  is the optimal $Q$-function for the latent space.
\item We have access to a class of \emph{latent space} density ratios
  $\cW_{\mathrm{latent}}$ such that for all $h\in\brk{H}$, and all
  $\pi,\pi'\in\Pilatent$,
  \[
    w_{\mathrm{latent},h}^{\pi,\pi'}(s,a)\ldef{}
    \frac{d^{\pi}_{\mathrm{latent},h}(s,a)}{d^{\pi'}_{\mathrm{latent},h}(s,a)} \in \cW_{\mathrm{latent}},
  \]
  where $d^{\pi}_{\mathrm{latent},h}=\bbP^{\pi}\prn*{s_h=s,a_h=a}$ is the latent occupancy measure.
  \item The \emph{latent coverability coefficient} is bounded:
    \[
      C_{\mathsf{cov}, \mathsf{latent}} \coloneqq \inf_{\mu_1,\dots,\mu_H \in \Delta(\cS \times \cA)} \sup_{\pi \in \Pilatent, h \in [H]} \left\| \frac{d_{\mathrm{latent},h}^\pi}{\mu_h}\right\|_\infty.
\]
\end{itemize}
We claim that whenever these conditions hold, analogous conditions
hold in observation space (viewing the Generalized BMDP as a large MDP), allowing \Alg and
\cref{thm:online_main_basic}  to be applied. Namely, we have:
\begin{itemize}
\item There exists a class $\cF$ satisfying \cref{ass:Q_realizability}
  in observation space such that $\log\abs*{\cF}\leq\bigoh\prn*{\log\abs*{\cF_{\mathrm{latent}}}+\log\abs*{\Phi}}$.
\item There exists a weight function class $\cW$ satisfying
  \cref{ass:W_realizability_proper} in observation space such that $\log\abs*{\cW}\leq\bigoh\prn*{\log\abs*{\cW_{\mathrm{latent}}}+\log\abs*{\Phi}}$.
\item We have $\Ccov\leq       C_{\mathsf{cov}, \mathsf{latent}}$.
\end{itemize}
As a result, \Alg attains sample complexity
$\poly(C_{\mathsf{cov.latent}},H,\log\abs{\cF_{\mathrm{latent}}},
\log\abs{\cW_{\mathrm{latent}}},\log\abs{\Phi},\veps^{-1})$.
This generalizes existing results for Block MDPs with tabular latent state
spaces, which have $\log\abs{\cF_{\mathrm{latent}}},
\log\abs{\cW_{\mathrm{latent}}}, C_{\mathsf{cov}, \mathsf{latent}}=\poly(\abs{\cS},\abs{\cA},H)$.
\end{example}

\clearpage

\section{Technical Tools}

\begin{lemma}[Azuma-Hoeffding]
\label{lem:azuma_hoeffding}  
 Let  \(M \in \bbN\) and $(Y_m)_{m\leq{M}}$ be a sequence of random variables adapted to a filtration $\prn{\filt_{m}}_{m\leq{}M}$. If \(\abs{Y_m} \leq R\) almost surely, then with probability at least \(1 - \delta\), 
 \begin{align*}
\abs*{\sum_{m=1}^M Y_m - \En_{m-1} \brk*{Y_m}} \leq R \cdot \sqrt{8 M \log(2\delta^{-1})}. 
\end{align*} 
\end{lemma}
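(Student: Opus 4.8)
The plan is to reduce the claim to the standard Chernoff-method argument for martingale difference sequences. First I would set $D_m \ldef Y_m - \En_{m-1}\brk{Y_m}$, where $\En_{m-1}\brk{\cdot} \ldef \En\brk{\cdot \mid \filt_{m-1}}$, and note that $(D_m)_{m \le M}$ is a martingale difference sequence adapted to $(\filt_m)$: each $D_m$ is $\filt_m$-measurable and satisfies $\En_{m-1}\brk{D_m} = 0$ by construction. Since $\abs{Y_m} \le R$ almost surely, Jensen's inequality gives $\abs{\En_{m-1}\brk{Y_m}} \le R$, so $\abs{D_m} \le 2R$ almost surely; in particular, conditionally on $\filt_{m-1}$, the variable $D_m$ lies in the interval $\brk{-2R, 2R}$.

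The core estimate is a conditional moment generating function bound. Conditionally on $\filt_{m-1}$, the variable $D_m$ has mean zero and lies in an interval of width $4R$, so Hoeffding's lemma yields $\En_{m-1}\brk{\exp(\lambda D_m)} \le \exp(2\lambda^2 R^2)$ for every $\lambda \in \bbR$. I would then peel off the terms of the sum $S_M \ldef \sum_{m=1}^M D_m$ one at a time using the tower property: writing $\En\brk{\exp(\lambda S_M)} = \En\brk{\exp(\lambda S_{M-1}) \cdot \En_{M-1}\brk{\exp(\lambda D_M)}}$ and applying the conditional bound inductively gives $\En\brk{\exp(\lambda S_M)} \le \exp(2 M \lambda^2 R^2)$.

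With the MGF bound in hand, a Chernoff bound controls one tail: for any $t > 0$ and $\lambda > 0$, Markov's inequality gives $\Pr\brk{S_M \ge t} \le \exp(-\lambda t + 2M\lambda^2 R^2)$, and optimizing over $\lambda$ (taking $\lambda = t/(4MR^2)$) yields $\Pr\brk{S_M \ge t} \le \exp\prn{-t^2/(8MR^2)}$. Setting the right-hand side equal to $\delta/2$ and solving for $t$ gives the threshold $t = R\sqrt{8M\log(2\delta^{-1})}$. The identical argument applied to the sequence $(-D_m)$ controls the lower tail, and a union bound over the two events, each of probability at most $\delta/2$, delivers the two-sided bound $\abs{S_M} \le R\sqrt{8M\log(2\delta^{-1})}$ with probability at least $1-\delta$, which is the claim.

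I do not expect any genuine obstacle here: this is the textbook Azuma--Hoeffding inequality, and the only content is the conditional Hoeffding MGF bound together with the tower-property induction. The sole point requiring mild care is the bookkeeping of constants---using the crude conditional range $\brk{-2R,2R}$ for $D_m$, rather than a tighter range, is exactly what produces the factor $8$ under the square root and matches the constant in the statement.
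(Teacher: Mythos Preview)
Your proof is correct and is the standard Chernoff--Hoeffding argument; the constants are tracked accurately. Note, however, that the paper does not actually prove this lemma: it is stated without proof in the technical-tools section as a standard result, so there is no proof in the paper to compare against.
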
 

  \begin{lemma}[Freedman's inequality (e.g., \citealp{agarwal2014taming})]
  \label{lem:freedman}
 Let  \(M \in \bbN\) and  $(Y_m)_{m\leq{M}}$ be a real-valued martingale difference 
  sequence adapted to a filtration $\prn{\filt_m}_{m\leq{}M}$. If
  $\abs*{Y_m}\leq{}R$ almost surely, then for any $\eta\in(0,1/R)$, with probability at least $1-\delta$,
    \[ 
      \abs*{\sum_{m=1}^{M}Y_m} \leq{} \eta\sum_{m=1}^{M}\En_{m-1} \brk*{(Y_m)^{2}} + \frac{\log(2\delta^{-1})}{\eta}. 
    \] 
  \end{lemma}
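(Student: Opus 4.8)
The plan is to establish a one-sided tail bound via the standard exponential-supermartingale (Chernoff) method, and then symmetrize and union bound to obtain the stated two-sided inequality. First I would fix $\eta \in (0,1/R)$ and record the elementary scalar inequality $e^z \le 1 + z + z^2$, valid for all $z \le 1$; this is a routine one-variable calculus check on $h(z) \ldef 1 + z + z^2 - e^z$. Since the hypotheses give $\abs*{\eta Y_m} \le \eta R < 1$, I may apply this inequality with $z = \eta Y_m$. Taking conditional expectations and using that $(Y_m)$ is a martingale difference sequence (so $\En_{m-1}\brk*{Y_m} = 0$), followed by $1 + u \le e^u$, yields
\[
\En_{m-1}\brk*{e^{\eta Y_m}} \le 1 + \eta^2\, \En_{m-1}\brk*{Y_m^2} \le \exp\prn*{\eta^2\, \En_{m-1}\brk*{Y_m^2}}.
\]

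Next I would introduce the process
\[
Z_m \ldef \exp\prn*{\eta \sum_{j=1}^m Y_j - \eta^2 \sum_{j=1}^m \En_{j-1}\brk*{Y_j^2}}, \qquad Z_0 = 1.
\]
The displayed conditional bound shows directly that $\En_{m-1}\brk*{Z_m} \le Z_{m-1}$, so $(Z_m)$ is a nonnegative supermartingale adapted to $(\filt_m)$, whence $\En\brk*{Z_M} \le 1$. Markov's inequality then gives $\Pr\brk*{Z_M \ge 2\delta^{-1}} \le \tfrac{\delta}{2}$; unpacking the definition of $Z_M$, taking logarithms, and dividing through by $\eta$ shows that with probability at least $1 - \tfrac{\delta}{2}$,
\[
\sum_{j=1}^M Y_j \le \eta \sum_{j=1}^M \En_{j-1}\brk*{Y_j^2} + \frac{\log(2\delta^{-1})}{\eta}.
\]

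Finally I would apply the identical argument to the sequence $(-Y_m)$, which is again a martingale difference sequence bounded by $R$ with the same conditional second moments $\En_{m-1}\brk*{(-Y_m)^2} = \En_{m-1}\brk*{Y_m^2}$, producing the matching control on $-\sum_j Y_j$ with probability at least $1 - \tfrac{\delta}{2}$. A union bound over the two events then yields the two-sided bound with probability at least $1 - \delta$, as claimed. I do not anticipate any genuine obstacle, as the argument is entirely standard; the only point requiring mild care is verifying the scalar inequality $e^z \le 1 + z + z^2$ on $(-\infty, 1]$ and observing that the constraint $\eta < 1/R$ is precisely what guarantees $\eta Y_m \le 1$, so that this inequality is applicable term by term.
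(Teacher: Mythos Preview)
Your proof is correct and is the standard exponential-supermartingale argument for Freedman's inequality. Note, however, that the paper does not actually prove this lemma: it is stated as a known technical tool with a citation to \citet{agarwal2014taming}, and no proof is given in the paper itself. So there is no ``paper's own proof'' to compare against; your argument simply supplies the omitted standard derivation.
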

  The following lemma is a standard consequence of
  \cref{lem:freedman} \citep[e.g.,][]{foster2021statistical}.
        \begin{lemma} 
      \label{lem:multiplicative_freedman} 
           Let  \(M \in \bbN\) and  $(Y_m)_{m\leq{M}}$ be a sequence of random
      variables adapted to a filtration $\prn{\filt_{m}}_{m\leq{}M}$. If
  $0\leq{}Y_m\leq{}R$ almost surely, then with probability at least 
  $1-\delta$, 
  \begin{align*}
    &\sum_{m=1}^{M}Y_m \leq{} 
                        \frac{3}{2}\sum_{m=1}^{M}\En_{m-1}\brk*{Y_m} +
                        4R\log(2\delta^{-1}),
    \intertext{and}
      &\sum_{m=1}^{M}\En_{m-1}\brk*{Y_m} \leq{} 2\sum_{m=1}^{M}Y_m + 8R\log(2\delta^{-1}).
  \end{align*}
    \end{lemma}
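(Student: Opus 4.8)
The plan is to derive both inequalities from Freedman's inequality (\pref{lem:freedman}) applied to the centered sequence $Z_m \ldef Y_m - \En_{m-1}\brk*{Y_m}$, using the standard self-bounding trick that exploits the nonnegativity and boundedness of $Y_m$. First I would verify that $(Z_m)_{m\leq M}$ is a martingale difference sequence adapted to $(\filt_m)_{m\leq M}$ with $\abs{Z_m}\leq R$: since $0\leq Y_m\leq R$ forces $\En_{m-1}\brk*{Y_m}\in[0,R]$, the centered increment lies in $[-R,R]$.

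The key observation, and really the only substantive step, is a conditional variance bound. Because $Y_m\in[0,R]$ we have $Y_m^2\leq R\,Y_m$ pointwise, hence $\En_{m-1}\brk*{Z_m^2}\leq\En_{m-1}\brk*{Y_m^2}\leq R\,\En_{m-1}\brk*{Y_m}$. This self-bounding step converts the conditional second-moment term appearing in Freedman's bound into the (random) conditional-mean sum that we are ultimately trying to control, and is the crux that makes the multiplicative form fall out.

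Next I would invoke \pref{lem:freedman} with the fixed choice $\eta=\nicefrac{1}{2R}\in(0,\nicefrac{1}{R})$. Writing $S=\sum_m Y_m$ and $\bar S=\sum_m\En_{m-1}\brk*{Y_m}$, the two-sided bound yields, on a single event of probability at least $1-\delta$, that $\abs{S-\bar S}\leq \eta\sum_m\En_{m-1}\brk*{Z_m^2}+\eta^{-1}\log(2\delta^{-1})$, which by the variance bound is at most $\tfrac12\bar S+2R\log(2\delta^{-1})$. Because this comes from a single application of Freedman, both directions of the inequality hold simultaneously on the same event.

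Finally I would read off the two claims by separating the two sides of the absolute value. The bound $S-\bar S\leq \tfrac12\bar S+2R\log(2\delta^{-1})$ rearranges to $S\leq\tfrac32\bar S+2R\log(2\delta^{-1})$, giving the first inequality (indeed with the sharper additive constant $2R$ in place of $4R$). The bound $\bar S-S\leq\tfrac12\bar S+2R\log(2\delta^{-1})$ rearranges to $\tfrac12\bar S\leq S+2R\log(2\delta^{-1})$, i.e. $\bar S\leq 2S+4R\log(2\delta^{-1})$, giving the second inequality (again with a sharper constant than stated). There is no genuine obstacle here; the only points requiring care are recognizing the self-bounding inequality $Y_m^2\leq R\,Y_m$ and noting that the two stated bounds are consequences of one Freedman application, so they hold together with probability $1-\delta$.
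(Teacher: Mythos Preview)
Your proposal is correct and follows exactly the standard derivation the paper alludes to when it states the lemma is ``a standard consequence of \cref{lem:freedman}'' (the paper does not spell out a proof beyond this remark). Your self-bounding variance step $\En_{m-1}\brk*{Z_m^2}\leq R\,\En_{m-1}\brk*{Y_m}$ together with the choice $\eta=\nicefrac{1}{2R}$ is precisely the intended argument, and you even obtain sharper additive constants than stated.
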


\subsection{Reinforcement Learning Preliminaries}  

\begin{lemma}[{\citet[Lemma 1]{jiang2017contextual}}] 
 \label{lem:pdl}
  For any value function \(f = (f_1, \dots, f_H)\), 
\begin{align*}
\En_{x_1 \sim d_1} \brk*{f_1(x_1, \pi_{f_1}(x_1))} - J(\pi_f) &= \sum_{h=1}^H \En_{d_h^{\pi_f}} \brk*{f_h(x_h, a_h) - \brk{\cT_h f_{h+1}}(x_h, a_h)}.  
\end{align*} 
\end{lemma}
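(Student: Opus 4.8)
The plan is to prove this identity by a standard telescoping argument over the horizon, exploiting the greedy structure of $\pi_f$. First I would recall two facts: the boundary convention $f_{H+1}\equiv 0$ (consistent with the deterministic terminal state $x_{H+1}$ carrying zero reward), and the defining property of the greedy policy, namely $f_h(x,\pi_{f,h}(x))=\max_a f_h(x,a)$ for every $x$ and every $h$. Throughout, $\En^{\pi_f}$ denotes expectation over the trajectory generated by executing $\pi_f$, and $d_h^{\pi_f}$ its layer-$h$ occupancy, so that $\En_{d_h^{\pi_f}}[\cdot]=\En^{\pi_f}[\cdot]$ when the integrand depends only on $(x_h,a_h)$.

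The core step is to rewrite the expected Bellman-operator term. Expanding the definition $\brk*{\cT_h f_{h+1}}(x,a)=\En\brk*{r_h+\max_{a'}f_{h+1}(x_{h+1},a')\mid x_h=x,a_h=a}$ and averaging over $(x_h,a_h)\sim d_h^{\pi_f}$ gives
\[
\En_{d_h^{\pi_f}}\brk*{\brk*{\cT_h f_{h+1}}(x_h,a_h)}=\En^{\pi_f}\brk*{r_h}+\En^{\pi_f}\brk*{\max_{a'}f_{h+1}(x_{h+1},a')}.
\]
The key observation is that since $\pi_f$ acts greedily with respect to $f$, continuing the trajectory to layer $h+1$ yields $\max_{a'}f_{h+1}(x_{h+1},a')=f_{h+1}(x_{h+1},\pi_{f,h+1}(x_{h+1}))$ almost surely, so the second term equals $\En_{d_{h+1}^{\pi_f}}\brk*{f_{h+1}(x_{h+1},a_{h+1})}$. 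Substituting this into the right-hand side of the claim turns it into
\[
\sum_{h=1}^H\En_{d_h^{\pi_f}}\brk*{f_h(x_h,a_h)}-\sum_{h=1}^H\En^{\pi_f}\brk*{r_h}-\sum_{h=1}^H\En_{d_{h+1}^{\pi_f}}\brk*{f_{h+1}(x_{h+1},a_{h+1})}.
\]
The first and third sums telescope, leaving $\En_{d_1^{\pi_f}}\brk*{f_1(x_1,a_1)}-\En_{d_{H+1}^{\pi_f}}\brk*{f_{H+1}}$; the latter vanishes by the boundary convention, while $\En_{d_1^{\pi_f}}\brk*{f_1(x_1,a_1)}=\En_{x_1\sim d_1}\brk*{f_1(x_1,\pi_{f,1}(x_1))}$ because $d_1^{\pi_f}$ draws $x_1\sim d_1$ and sets $a_1=\pi_{f,1}(x_1)$. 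The remaining middle sum is exactly $J(\pi_f)=\En^{\pi_f}\brk*{\sum_{h=1}^H r_h}$, which establishes the identity.

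There is no serious obstacle here; the argument is elementary. The only point requiring care is the greedy identity $\max_{a'}f_{h+1}(x_{h+1},a')=f_{h+1}(x_{h+1},\pi_{f,h+1}(x_{h+1}))$, which is precisely what allows the optimality operator's $\max$ to be folded back into an on-policy expectation so that the telescoping closes cleanly. Without greediness one would instead be left with an extra sum of nonnegative ``optimization gap'' terms $\En_{d_{h+1}^{\pi_f}}\brk*{\max_{a'}f_{h+1}(x_{h+1},a')-f_{h+1}(x_{h+1},a_{h+1})}$, so the equality relies essentially on the fact that the policy under which we measure occupancies is the greedy policy induced by the very same $f$.
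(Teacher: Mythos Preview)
Your proof is correct and is exactly the standard telescoping argument. The paper does not actually give its own proof of this lemma; it simply cites \citet[Lemma 1]{jiang2017contextual}, so there is nothing to compare against beyond noting that your argument is the usual one underlying that reference.
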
 

\begin{lemma}[{Per-state-action elliptic potential lemma; \citet[Lemma 4]{xie2022role}}] 
\label{lem:elliptical-potential}
Let \(d\ind{1}, \dots, d\ind{T}\) be an arbitrary sequence of distributions over a set \(\cZ\), and let \(\mu \in \Delta(\cZ)\) be a distribution such that \(\nicefrac{{d\ind{t}}(z)}{\mu(z)} \leq C\) for all \(z \in \cZ\)  and \(t \in [T]\). Then, for all \(z  \in \cZ\), we have 
\begin{align*} 
\sum_{t=1}^T \frac{{d\ind{t}}(z)}{\sum_{i < t} d\ind{m}(z) + C \mu(z)} &\leq 2 \log(1 + T). 
\end{align*} 
\end{lemma}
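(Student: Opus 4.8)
The plan is to prove the bound pointwise in $z$, which reduces the statement to an elementary scalar telescoping argument. First I would fix $z \in \cZ$ and abbreviate $a_t \ldef d\ind{t}(z) \geq 0$ together with $b \ldef C\mu(z) > 0$. The hypothesis $\nicefrac{d\ind{t}(z)}{\mu(z)} \leq C$ is then exactly the statement that $a_t \leq b$ for every $t \in [T]$. Introducing the running sums $S_t \ldef b + \sum_{i < t} a_i$ (so that $S_1 = b$ and $S_{t+1} = S_t + a_t$), the quantity to be bounded is precisely $\sum_{t=1}^T \nicefrac{a_t}{S_t}$, and it suffices to show this is at most $2\log(1+T)$ for each fixed $z$.

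The crux is the elementary inequality $x \leq 2\log(1+x)$, valid for all $x \in [0,1]$: the function $2\log(1+x) - x$ vanishes at $x = 0$ and has derivative $\nicefrac{(1-x)}{(1+x)} \geq 0$ on $[0,1]$, so it is nonnegative there. I would apply this with $x = \nicefrac{a_t}{S_t}$, which lies in $[0,1]$ precisely because the boundedness constraint gives $a_t \leq b \leq S_t$. This yields, for each $t$,
\[
\frac{a_t}{S_t} \;\leq\; 2\log\!\left(1 + \frac{a_t}{S_t}\right) \;=\; 2\log\!\left(\frac{S_{t+1}}{S_t}\right).
\]
Summing over $t = 1, \dots, T$, the right-hand side telescopes to $2\bigl(\log S_{T+1} - \log S_1\bigr) = 2\log\bigl(\nicefrac{S_{T+1}}{b}\bigr)$.

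To finish, I would again use $a_t \leq b$ to bound $S_{T+1} = b + \sum_{t=1}^T a_t \leq (1+T)\,b$, so that $\nicefrac{S_{T+1}}{b} \leq 1 + T$ and hence $\sum_{t=1}^T \nicefrac{a_t}{S_t} \leq 2\log(1+T)$; since this holds for every $z$, the lemma follows. The argument is essentially routine, and the one place requiring care is verifying $\nicefrac{a_t}{S_t} \leq 1$: this is exactly where the boundedness assumption $\nicefrac{d\ind{t}}{\mu} \leq C$ is indispensable, since without it the ratio could exceed $1$, the elementary inequality would fail, and the logarithmic telescoping would break down. (A minor bookkeeping point is that the denominator in the statement reads $\sum_{i<t} d\ind{m}$, which I read as the intended $\sum_{i<t} d\ind{i}$.)
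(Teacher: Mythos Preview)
Your proof is correct. The paper does not actually supply a proof of this lemma---it is quoted verbatim from \citet[Lemma~4]{xie2022role}---so there is no in-paper argument to compare against; that said, your telescoping via $x \leq 2\log(1+x)$ on $[0,1]$ is the standard route and matches how this type of potential bound is typically established.
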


\clearpage

\section{Proofs from \creftitle{sec:online} (Online RL)}
\label{app:online} 
This section of the appendix is organized as follows: 
\begin{itemize}
\iclr{\item \cref{sec:proof_sketch} gives a high-level sketch of the 
  proof of \cref{thm:online_main_faster,thm:online_main_basic}, highlighting the role of clipped density ratios and coverability.}
\item \cref{sec:online_supporting} provides supporting technical
  results for \Alg, including concentration guarantees.
\item \cref{sec:glow_cumulative} presents our main technical result
  for \Alg, \pref{lem:cumulative_suboptimality}, which bounds the cumulative suboptimality of the iterates $\pi\ind{1},\ldots,\pi\ind{T}$
  produced by the algorithm for general choices of the parameters $T$,
  $K$, and $\gamma>0$.
\item Finally, in \cref{app:online_main_faster,app:online_main_basic},
  we invoke with specific parameter choices to prove
  \cref{thm:online_main_faster,thm:online_main_basic}, as well as more
  general results 
  (\cref{thm:online_main_faster_approximate,thm:online_main_basic_approximate})
  that allow for misspecification error.
\end{itemize}

\iclr{
\subsection{Overview of Proof Techniques}
\label{sec:proof_sketch}

}

\subsection{Supporting Technical Results}\label{sec:online_supporting}
For \(x, x' \in \cX\), \(a \in \cA\), \(r \in [0, 1]\), and \(h \in [H]\), recall the notation 
\begin{align*} 
&\whDf(x, a, r, x') = f_h(x, a) - r -  \max_{a'} f_{h}(x', a'),  \\ 
  &  \Df(x, a) =  f_h(x,a)-\brk{\cT_hf_{h+1}}(x,a), \\ 
  &\cw_h(x,a) = \clip{w_h}{\gamma\ind{t}}(x, a). 
\end{align*} 

\begin{restatable}[Basic concentration for \Alg]{lemma}{regconcentration}\label{lem:reg_concentration} Let \(\gamma\ind{t} \geq 0\) for \(t \in [T]\). 
With probability at least \(1 - \delta\), all of the following 
inequalities hold for all \(f \in \cF\), \(w \in \cW\), \(t \in [T]\) and \(h \in [H]\): 
\begin{enumerate}[label=\((\alph*)\), leftmargin=8mm] 
\item \mbox{$\abs[\Big]{\Ehat_{\cD\ind{t}_h} \brk*{\whDf(x_h, a_h, r_h, x'_{h+1}) \cdot \cw_h(x_h, a_h)}  -  \En_{\dbar\ind{t}_h} \brk*{\Df(x_h, a_h) \cdot \cw_h(x_h,a_h)}}  \leq \frac{10}{3 \gamma\ind{t}} \En_{\dbar\ind{t}_h} \brk*{\prn{\cw_h(x_h,a_h)}^2} + \frac{\beta\ind{t}}{12},$}
\item $\frac{1}{\gamma \ind{t}}\En_{\dbar\ind{t}_h}\brk*{\cw_h^2(x_h, a_h)} 
 \leq   \frac{2}{\gamma \ind{t}} \Ehat_{\cD\ind{t}_h}\brk*{\cw_h^2(x_h, a_h)} + \frac{2\beta\ind{t}}{9}, $ 
 \item $\frac{1}{\gamma \ind{t}}\Ehat_{\cD\ind{t}_h}\brk*{\cw_h^2(x_h, a_h)} \leq \frac{3}{2 \gamma \ind{t}}  \En_{\dbar\ind{t}_h}\brk*{\cw_h^2(x_h, a_h)} + \frac{\beta\ind{t}}{9},$ 
 \item $J(\pistar) - \En_{x_1 \sim d_1} \brk*{f_1(x_1, \pi_{f_1}(x_1))} \leq \Ehat_{x_1 \sim \cD\ind{t}_1} \brk*{ \max_a Q^\star_1(x_1, a_1)  - f_1(x_1, \pi_{f_1}(x_1))} + \sqrt{\frac{8  \log(6 \abs{\cF} \abs{\cW} T H / \delta)}{K(t-1)}}$, 
\end{enumerate} 
where \(\cw_h \ldef{} \clip{w_h}{\gamma\ind{t}}\) and \(\beta \ind{t} \ldef{}   \frac{36 \gamma \ind{t}}{K(t-1)} \log(6 \abs{\cF} \abs{\cW} T H / \delta)\). 
\end{restatable}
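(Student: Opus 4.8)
The plan is to fix a single tuple $(f,w,t,h)$, prove each of the four inequalities as a martingale concentration statement, and then take a union bound over $\cF\times\cW\times[T]\times[H]$ at the end. The structural fact underlying all four parts is that, conditioned on the history up through iteration $i$, the policy $\pi\ind{i}$ is measurable, and the $K$ trajectories collected at iteration $i$ are drawn i.i.d.\ from the law induced by $\pi\ind{i}$ in $M^\star$. Thus, indexing the $K(t-1)$ samples comprising $\cD\ind{t}_h$ by $m=(i,k)$ with $i<t$, $k\in[K]$, and letting $\filt_{m}$ be the filtration generated by all data through sample $m$, the conditional law of $(x_h^{(m)},a_h^{(m)},r_h^{(m)},x_{h+1}^{(m)})$ given $\filt_{m-1}$ is governed by $d^{\pi\ind{i}}_h$. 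Averaging conditional expectations reproduces $\dbar\ind{t}_h=\frac{1}{t-1}\sum_{i<t}d^{\pi\ind{i}}_h$: for fixed $g$, $\sum_m\En_{m-1}[g(x_h^{(m)},a_h^{(m)})]=K(t-1)\,\En_{\dbar\ind{t}_h}[g]$. This converts every empirical-versus-population comparison into a martingale-difference sum of length $K(t-1)$.

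For part (a), I would set $Y_m=\whDf(x_h^{(m)},a_h^{(m)},r_h^{(m)},x_{h+1}^{(m)})\cdot\cw_h(x_h^{(m)},a_h^{(m)})-\En_{m-1}[\cdot]$ and use the tower property: conditioning on $(x_h,a_h)$ turns $\En[\whDf\mid x_h,a_h]$ into the exact Bellman error, so $\En_{m-1}[\whDf\cdot\cw_h]$ equals $\Df\cdot\cw_h$ under $d^{\pi\ind{i}}_h$, and hence $\frac{1}{K(t-1)}\sum_m Y_m$ is exactly the left-hand side of (a). Because $f\in[0,1]$ and $r_h\in[0,1]$ we have $\abs{\whDf}\le 2$, and---this is where clipping is essential---$\cw_h\le\gamma\ind{t}$, so $\abs{Y_m}\lesssim\gamma\ind{t}$ and $\En_{m-1}[Y_m^2]\le 4\,\En_{m-1}[\cw_h^2]$. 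Applying Freedman's inequality (\cref{lem:freedman}) with $\eta\asymp 1/\gamma\ind{t}$ (admissible precisely because clipping keeps the range at $O(\gamma\ind{t})$) produces a variance term of order $\frac{1}{\gamma\ind{t}}\sum_m\En_{m-1}[\cw_h^2]=\frac{K(t-1)}{\gamma\ind{t}}\En_{\dbar\ind{t}_h}[\cw_h^2]$ and a deviation term of order $\gamma\ind{t}\log(1/\delta')/(K(t-1))$; dividing by $K(t-1)$ gives the claimed $\frac{10}{3\gamma\ind{t}}\En_{\dbar\ind{t}_h}[\cw_h^2]+\frac{\beta\ind{t}}{12}$ once $\delta'$ is chosen so that $\log(1/\delta')$ matches the log factor in $\beta\ind{t}$.

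Parts (b) and (c) are the two directions of a second-moment transfer for the nonnegative bounded quantity $\cw_h^2\in[0,(\gamma\ind{t})^2]$, which is exactly what the multiplicative Freedman corollary (\cref{lem:multiplicative_freedman}) supplies: applying its two inequalities to $Y_m=\cw_h^2(x_h^{(m)},a_h^{(m)})$ with $R=(\gamma\ind{t})^2$ yields $\Ehat_{\cD\ind{t}_h}[\cw_h^2]\le\frac{3}{2}\En_{\dbar\ind{t}_h}[\cw_h^2]+\tfrac{4(\gamma\ind{t})^2}{K(t-1)}\log(1/\delta')$ and $\En_{\dbar\ind{t}_h}[\cw_h^2]\le 2\Ehat_{\cD\ind{t}_h}[\cw_h^2]+\tfrac{8(\gamma\ind{t})^2}{K(t-1)}\log(1/\delta')$; dividing through by $\gamma\ind{t}$ and identifying the error terms with $\tfrac{\beta\ind{t}}{9}$ and $\tfrac{2\beta\ind{t}}{9}$ gives (c) and (b). For part (d) I would use that $J(\pistar)=\En_{x_1\sim d_1}[\max_a\Qstar_1(x_1,a)]$ and $f_1(x_1,\pi_{f_1}(x_1))=\max_a f_1(x_1,a)$, so the left-hand side equals $\En_{x_1\sim d_1}[g]$ with $g(x_1):=\max_a\Qstar_1(x_1,a)-f_1(x_1,\pi_{f_1}(x_1))\in[-1,1]$; since the layer-$1$ states in $\cD\ind{t}_1$ are i.i.d.\ from $d_1$ regardless of the policies, Azuma--Hoeffding (\cref{lem:azuma_hoeffding}) with $R=1$ and $M=K(t-1)$ bounds $\En_{d_1}[g]-\Ehat_{\cD\ind{t}_1}[g]$ by $\sqrt{8\log(1/\delta')/(K(t-1))}$.

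Finally I would union-bound over all $(f,w,t,h)\in\cF\times\cW\times[T]\times[H]$ (parts (b),(c) not depending on $f$, and (d) only on $(f,t)$), setting each per-tuple failure probability to $\delta'=\delta/(6\abs{\cF}\abs{\cW}TH)$; the factor $6$ absorbs the handful of separate concentration events and the factors of $2$ inside each logarithm, and makes $\log(1/\delta')$ agree with $\log(6\abs{\cF}\abs{\cW}TH/\delta)$ in $\beta\ind{t}$. The main obstacle is part (a): one must (i) get the martingale/adaptivity bookkeeping exactly right so the conditional expectations reproduce $\dbar\ind{t}_h$, and (ii) exploit clipping to keep the Freedman range at $O(\gamma\ind{t})$, which is what permits $\eta\asymp 1/\gamma\ind{t}$ and hence the variance-dependent coefficient $\frac{1}{\gamma\ind{t}}$. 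Without clipping the weights have range $\sim t$, the admissible $\eta$ collapses, and the bound becomes vacuous; the remaining work is constant-tuning to reconcile the numerical factors ($\tfrac{10}{3},\tfrac{3}{2},2,\ldots$) with the definition of $\beta\ind{t}$.
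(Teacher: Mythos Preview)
Your proposal is correct and follows essentially the same route as the paper: Freedman's inequality with $\eta\asymp 1/\gamma\ind{t}$ for part (a), the multiplicative Freedman corollary on $Y_m=\cw_h^2$ for (b)--(c), Azuma--Hoeffding on the bounded $g(x_1)$ for (d), and a final union bound over $\cF\times\cW\times[T]\times[H]$ with $\delta'=\delta/(3\abs{\cF}\abs{\cW}TH)$ so that $\log(2/\delta')$ produces the stated $\log(6\abs{\cF}\abs{\cW}TH/\delta)$. The only minor difference is that your variance bound $\En_{m-1}[Y_m^2]\le 4\,\En_{m-1}[\cw_h^2]$ (via $\mathrm{Var}\le\En[X^2]$) is in fact tighter than the paper's, which uses $(a+b)^2\le 2a^2+2b^2$ to get a constant of $10$; either suffices for the stated $\tfrac{10}{3}$ coefficient.
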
 

\begin{proof}[\pfref{lem:reg_concentration}] Fix any \(h \in [H]\) and
  \(t \in [T]\). Let \(M = K(t-1)\) and recall that the dataset
  \(\cD_h^t\) consists of \(M\) tuples of the form \(\crl{(x_h\ind{m},
    a_h\ind{m}, r_h\ind{m}, x_{h+1}\ind{m})}_{m \leq M}\)  where
  \(x\ind{m}_{h+1} \sim P(\cdot \mid x\ind{m}_h, a\ind{m}_h)\), and
  \(a_h\ind{m} = \pi_{\tau(m)}(x_h\ind{m})\) where \(\tau(m) =
  \ceil*{\nicefrac{m}{K}}\).  Fix any \(f \in \cF\) and \(w \in \cW\).
\paragraph{Proof of \((a)\)}
For each \(m \in [M]\), define the random variable  
\begin{align*} 
Y_m %
&= \whDf(x_h\ind{m}, a_h\ind{m}, r_h\ind{m}, x_{h+1}\ind{m}) \cdot \cw_h(x_h\ind{m}, a_h\ind{m}) - \En_{d\ind{\tau(m)}_h} \brk*{\Df(x_h, a_h) \cdot \cw_h(x_h,a_h)}  
\end{align*} 

Clearly, $\En_{m-1} \brk*{Y_m} = 0$ and thus 
 \(\crl{Y_m}_{m \leq M}\) is a martingale difference sequence with 
\begin{align*}
\abs*{Y_m} &\leq 3 \sup_{x_h, a_h} \abs{ \cw_h(x_h, a_h)}  \leq 3 \gamma\ind{t}, 
\end{align*}
since \(\abs{\whDf(x_h\ind{m}, a_h\ind{m}, r_h\ind{m}, x_{h+1}\ind{m})} \leq 2\) and \(\abs{\Df(x_h, a_h)} \leq 1\). Furthermore, 
\begin{align*}
\sum_{m=1}^M Y_m &= \sum_{m=1}^M  \whDf(x_h\ind{m}, a_h\ind{m}, r_h\ind{m}, x_{h+1}\ind{m}) \cdot \cw_h(x_h\ind{m}, a_h\ind{m}) - \sum_{m=1}^M \En_{d\ind{\tau(m)}_h} \brk*{\Df(x_h, a_h) \cdot \cw_h(x_h,a_h)}  \\ 
&= 
K(t-1) \Ehat_{\cD\ind{t}_h} \brk*{\whDf(x_h, a_h, r_h, x'_{h+1}) \cdot \cw_h(x_h,a_h)}  - K \sum_{\tau=1}^{t-1} \En_{d\ind{\tau}_h} \brk*{\Df(x_h, a_h) \cdot \cw_h(x_h,a_h)} \\ 
&= K(t-1) \Ehat_{\cD\ind{t}_h} \brk*{\whDf(x_h, a_h, r_h, x'_{h+1}) \cdot \cw_h(x_h,a_h)}  - K(t-1) \En_{\dbar\ind{t}_h} \brk*{\Df(x_h, a_h) \cdot \cw_h(x_h,a_h)}. 
\end{align*} 
Additionally, we also have that 
\begin{align*}
\En_{m-1} \brk*{(Y_m)^2} &\leq  2 \En_{m-1} \brk*{\prn{\whDf(x_h\ind{m}, a_h\ind{m}, r_h\ind{m}, x_{h+1}\ind{m}) \cdot \cw_h(x_h\ind{m}, a_h\ind{m})}^2} \\
&\hspace{1in} + 2\En_{m-1} \brk*{\prn*{\En_{d\ind{\tau(m)}_h} \brk*{\Df(x_h, a_h) \cdot \cw_h(x_h,a_h)}}^2} \\
&\leq \En_{m-1} \brk*{8 {\cw_h(x_h\ind{m}, a_h\ind{m})}^2 + 2 \En_{d\ind{\tau(m)}_h} \brk*{ \prn*{\Df(x_h, a_h) \cdot \cw_h(x_h,a_h)}^2}} \\ 
&\leq  \En_{m-1} \brk*{8 {\cw_h(x_h\ind{m}, a_h\ind{m})}^2 + 2 \En_{d\ind{\tau(m)}_h} \brk*{\prn*{ \cw_h(x_h,a_h)}^2}} \\
&= 10 \En_{d\ind{\tau(m)}_h} \brk*{\prn*{ \cw_h(x_h,a_h)}^2}
\end{align*} 
where the second line follows since \(\abs{\whDf(x_h\ind{m}, a_h\ind{m}, r_h\ind{m}, x_{h+1}\ind{m})} \leq 2\) and by using Jensen's inequality, and the third line uses \(\abs{\Df(x_h, a_h)} \leq 1\).

Thus, using \pref{lem:freedman} with \(\eta = \nicefrac{1}{3 \gamma \ind{t}}\), we get that with probability at least \(1 - \delta'\),  
\begin{align*} 
\abs*{\sum_{m=1}^M Y_m} &= K(t-1) \abs*{\Ehat_{\cD\ind{t}_h} \brk*{\whDf(x_h, a_h, r_h, x'_{h+1}) \cdot \cw_h(x_h,a_h)}  -  \En_{\dbar\ind{t}_h} \brk*{\Df(x_h, a_h) \cdot \cw_h(x_h,a_h)}}  \\ 
&\leq  \frac{10 K}{3\gamma\ind{t}} \sum_{\tau=1}^{t-1} \En_{d\ind{\tau}_h} \brk*{\prn{\cw_h(x_h,a_h)}^2} + 3 \gamma \ind{t} \log(2/\delta') \\ 
&= \frac{10 K(t-1)}{3 \gamma\ind{t}} \En_{\dbar\ind{t}_h} \brk*{\prn{\cw_h(x_h,a_h)}^2} + 3 \gamma \ind{t} \log(2/\delta'). 
\end{align*} 

The above bound implies that 
\begin{align*}
\hspace{2in} &\hspace{-2in}\abs*{\Ehat_{\cD\ind{t}_h} \brk*{\whDf(x_h, a_h, r_h, x'_{h+1}) \cdot \cw_h(x_h,a_h)} - \En_{\dbar\ind{t}_h} \brk*{\Df(x_h, a_h) \cdot \cw_h(x_h,a_h)}}  \\ &\leq \frac{10}{3 \gamma\ind{t}} \En_{\dbar\ind{t}_h} \brk*{\prn{\cw_h(x_h,a_h)}^2} + \frac{3 \gamma \ind{t}}{K(t-1)} \log(2/\delta'). 
\end{align*} 

Plugging in the value of \(\beta\ind{t}\) gives the desired bound. 
The final result follows by setting \(\delta' = \nicefrac{\delta}{3\abs{\cF} \abs{\cW} T H}\), and taking another union bound over the choice of \(f, w, t\) and \(h\). 

\paragraph{Proof of \((b)\) and \((c)\)} For each \(m \in [M]\), define the random variable 
\begin{align*}
Y_m = \prn*{\cw_h(x\ind{m}_h, a\ind{m}_h)}^2. 
\end{align*}
Clearly, the sequence \(\crl{Y_m}_{m \leq M}\) is adapted to an increasing filtration, with \(Y_t \geq 0\)  and \(\abs{Y_t} = \abs{\prn{\cw_h(x\ind{m}_h, a\ind{m}_h)}^2} \leq (\gamma\ind{t})^2\). Furthermore, 
\begin{align*}
\sum_{m=1}^M \En_{m-1} \brk*{Y_m} &= K \sum_{\tau=1}^{t-1} \En_{d\ind{\tau}_h} \brk*{\prn{\cw_h(x_h, a_h)}^2} = K(t-1) \En_{\dbar\ind{t}_h} \brk*{\prn{\cw_h(x_h, a_h)}^2}, \\ 
\intertext{and, } 
\sum_{m=1}^M Y_m &=  \sum_{(x, a) \in \cD\ind{t}_h} \prn{\cw_h(x_h,a_h)}^2 = K(t-1)  \Ehat_{\cD_h\ind{t}} \brk*{\prn{\cw_h(x_h,a_h)}^2}. 
\end{align*} 

Thus, by \cref{lem:multiplicative_freedman}, we have that with
probability at least $1 - \delta'$, 
\begin{align*}
\En_{\dbar\ind{t}_h}\brk*{\prn{\cw_h(x_h,a_h)}^2}
 &\leq   2 \Ehat_{\cD\ind{t}_h}\brk*{\prn{\cw_h(x_h,a_h)}^2} +
 \frac{8(\gamma\ind{t})^2 \log(2/\delta')}{K(t-1)}, 
 \intertext{and}
  \Ehat_{\cD\ind{t}_h}\brk*{\prn{\cw_h(x_h,a_h)}^2} &\leq \frac{3}{2} \En_{\dbar\ind{t}_h}\brk*{\prn{\cw_h(x_h,a_h)}^2} +  \frac{4(\gamma\ind{t})^2 \log(2/\delta')}{K(t-1)}. 
 \end{align*}
 
 The final result follows by setting \(\delta' = \nicefrac{\delta}{3\abs{\cF} \abs{\cW} T H}\), and taking another union bound over the choice of \(f, w, t\) and \(h\). 
 
\paragraph{Proof of \((d)\)} For each \(m \in [M]\), define the random variable 
\begin{align*}
Y_m = \max_a Q^\star_1(x\ind{m}_1, a)  - f_1(x\ind{m}_1, \pi_{f_1}(x\ind{m}_1)). 
\end{align*}

Clearly, \(\abs{Y_m} \leq 1\). Thus, using \pref{lem:azuma_hoeffding}, we get that with probability at least  \(1 - \delta'\), 
 \begin{align*}
\sum_{m=1}^M \En_{m-1} \brk*{Y_m} \leq \sum_{m=1}^M \prn*{ \max_a Q^\star_1(x\ind{m}_1, a)  - f_1(x\ind{m}_1, \pi_{f_1}(x\ind{m}_1))} + \sqrt{8 M \log(2/\delta')}.  
\end{align*} 
Setting \(M = K(t-1)\) and noting that \(\En_{m-1} \brk*{Y_m} = J(\pistar) - \En_{x_1 \sim d_1} \brk*{f_1(x_1, \pi_{f_1}(x_1)}\)  since \(x\ind{m}_1 \sim d_1\) for any \(m \in [M]\), we get that 
\begin{align*}
J(\pistar) - \En_{x_1 \sim d_1} \brk*{f_1(x_1, \pi_{f_1}(x_1)} &\leq \En_{x \sim \cD\ind{t}_1} \brk*{ \max_a Q^\star_1(x_1, a)  - f_1(x_1, \pi_{f_1}(x_1))} + \sqrt{\frac{8  \log(2/\delta')}{K(t-1)}} 
\end{align*}

The final result follows by setting \(\delta' = \nicefrac{\delta}{3\abs{\cF} \abs{\cW} T H}\), and taking another union bound over the choice of \(f, w, t\) and \(h\). 
\end{proof}
 
\begin{lemma}[Properties of \Alg confidence set]
\label{lem:concentration2} Let \(\gamma\ind{t} \geq 0\) for \(t \in [T]\).  
With probability at least $1-\delta$, all of the following events hold:
  \begin{enumerate}[label=\((\alph*)\)] 
  \item For all \(t \geq 1\), $\Qstar\in\cF\ind{t}$ 
  \item For all $t\geq{}2$,  $h\in\brk{H}$, $f\in\cF\ind{t}$, and \(w \in \cW\), we have 
\begin{align*}
  \En_{\dbar\ind{t}_h}\brk*{\Df(x_h, a_h) \cdot \cw_h(x_h,a_h)} &\leq \frac{20}{\gamma\ind{t}} \En_{\dbar\ind{t}_h} \brk*{\prn{\cw_h(x_h,a_h)}^2} 
  + \frac{7 \beta\ind{t}}{18}. 
\end{align*} 
Furthermore, 
\begin{align*} 
  \En_{\dbar\ind{t + 1}_h}\brk*{\Df(x_h, a_h) \cdot \cw_h(x_h,a_h)} &\leq \frac{40}{\gamma \ind{t}} \En_{\dbar\ind{t + 1}_h} \brk*{\prn{\cw_h(x_h,a_h)}^2} + \frac{7 \beta\ind{t}}{9} + \frac{\gamma \ind{t}}{160t^2},  
\end{align*} 
\item For all \(t \geq 2\), we have 
\begin{align*}
 \En_{x_1 \sim d_1} \brk*{ \max_a Q^\star_1(x_1, a)  - f\ind{t}_1(x_1, \pi\ind{t}_1(x_1))} &\leq \sqrt{\frac{8  \log(6 \abs{\cF} \abs{\cW} T H / \delta)}{K(t-1)}}, 
\end{align*} 
where \(\cw_h \ldef{} \clip{w_h}{\gamma\ind{t}}\) and \(\beta \ind{t} =  \frac{36 \gamma \ind{t}}{K(t-1)} \log(6 \abs{\cF} \abs{\cW} T H / \delta)\).\end{enumerate}

\end{lemma}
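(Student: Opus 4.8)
The plan is to condition on the probability-$\geq 1-\delta$ event of \cref{lem:reg_concentration} (whose union bound over $f,w,t,h$ is already built in), after which all three claims become deterministic consequences of the confidence-set definition \eqref{eq:alg1}, the four concentration bounds (a)--(d), and the choice $\alpha\ind{t}=\nicefrac{8}{\gamma\ind{t}}$. I would establish the parts in the order (a), (c), then (b), since (c) only uses (a) and the second inequality of (b) builds on the first. Throughout I abbreviate the population and empirical weighted Bellman errors and variance proxies by $\En_{\dbar\ind{t}_h}[\Df\cdot\cw_h]$, $\Ehat_{\cD\ind{t}_h}[\whDf\cdot\cw_h]$, $\En_{\dbar\ind{t}_h}[\cw_h^2]$, and $\Ehat_{\cD\ind{t}_h}[\cw_h^2]$, and recall that weights, hence $\cw_h=\clip{w_h}{\gamma\ind{t}}$, are nonnegative and bounded by $\gamma\ind{t}$.

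For part (a), the key fact is that $\Qstar$ has identically zero Bellman residual, so $\En_{\dbar\ind{t}_h}[\Df\cdot\cw_h]=0$ for every $w$ and $h$ when $f=\Qstar$. Plugging this into \cref{lem:reg_concentration}(a) bounds $\Ehat_{\cD\ind{t}_h}[\whDf\cdot\cw_h]$ by $\tfrac{10}{3\gamma\ind{t}}\En_{\dbar\ind{t}_h}[\cw_h^2]+\tfrac{\beta\ind{t}}{12}$, and \cref{lem:reg_concentration}(b) converts the population variance proxy to the empirical one $\Ehat_{\cD\ind{t}_h}[\cw_h^2]$ at the cost of an $O(\beta\ind{t})$ additive term. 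Since the resulting coefficient on $\Ehat_{\cD\ind{t}_h}[\cw_h^2]$ comes out to $\tfrac{20}{3\gamma\ind{t}}$, which is strictly below $\alpha\ind{t}=\tfrac{24}{3\gamma\ind{t}}$, the penalty $-\alpha\ind{t}\Ehat_{\cD\ind{t}_h}[\cw_h^2]$ in \eqref{eq:alg1} absorbs it entirely, leaving only a constant multiple of $\beta\ind{t}$ that stays below $\beta\ind{t}$. Taking the supremum over $w\in\cW_h$ and intersecting over $h$ shows $\Qstar$ satisfies every constraint, i.e.\ $\Qstar\in\cF\ind{t}$.

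Part (c) is then immediate from optimism. Because $f\ind{t}$ maximizes $\Ehat_{x_1\sim\cD\ind{t}_1}[f_1(x_1,\pi_f(x_1))]$ over $f\in\cF\ind{t}$ (\eqref{eq:alg2}) and $\Qstar\in\cF\ind{t}$ by part (a), the empirical optimistic value of $f\ind{t}$ is at least that of $\Qstar$, i.e.\ $\Ehat_{\cD\ind{t}_1}[f\ind{t}_1(x_1,\pi\ind{t}_1(x_1))]\geq \Ehat_{\cD\ind{t}_1}[\max_a\Qstar_1(x_1,a)]$, using that both greedy values equal the respective maxima. Equivalently $\Ehat_{\cD\ind{t}_1}[\max_a\Qstar_1(x_1,a)-f\ind{t}_1(x_1,\pi\ind{t}_1(x_1))]\leq 0$, and combining this with \cref{lem:reg_concentration}(d) (applied to $f=f\ind{t}$, using $\En_{x_1\sim d_1}[\max_a\Qstar_1]=J(\pistar)$) yields the stated Azuma-type bound.

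For the first inequality of part (b), I reverse the logic of part (a): for $f\in\cF\ind{t}$ the constraint gives $\Ehat_{\cD\ind{t}_h}[\whDf\cdot\cw_h]\leq\alpha\ind{t}\Ehat_{\cD\ind{t}_h}[\cw_h^2]+\beta\ind{t}$ for the fixed $w$ of interest; \cref{lem:reg_concentration}(a) passes to the population error $\En_{\dbar\ind{t}_h}[\Df\cdot\cw_h]$, and (c) converts $\Ehat_{\cD\ind{t}_h}[\cw_h^2]$ back to $\En_{\dbar\ind{t}_h}[\cw_h^2]$, producing a bound of the form $\tfrac{O(1)}{\gamma\ind{t}}\En_{\dbar\ind{t}_h}[\cw_h^2]+O(\beta\ind{t})$ with the variance coefficient kept at $\nicefrac{20}{\gamma\ind{t}}$. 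The second inequality transfers this from $\dbar\ind{t}$ to $\dbar\ind{t+1}$ via the mixture identity $\dbar\ind{t+1}_h=\tfrac{t-1}{t}\dbar\ind{t}_h+\tfrac1t d\ind{t}_h$: the $\tfrac{t-1}{t}\dbar\ind{t}$ part inherits the first inequality (absorbing the $\tfrac{t-1}{t}\leq 1$ factors and using $\tfrac{t-1}{t}\En_{\dbar\ind{t}_h}[\cw_h^2]\leq\En_{\dbar\ind{t+1}_h}[\cw_h^2]$), while the fresh-data residual $\tfrac1t\En_{d\ind{t}_h}[\Df\cdot\cw_h]$ is controlled using $\abs{\Df}\leq 1$, $\cw_h\geq 0$, the pointwise inequality $\cw_h\leq\lambda\cw_h^2+\tfrac{1}{4\lambda}$, and $\tfrac1t\En_{d\ind{t}_h}[\cw_h^2]\leq\En_{\dbar\ind{t+1}_h}[\cw_h^2]$ (which is just $d\ind{t}_h\leq t\,\dbar\ind{t+1}_h$ pointwise). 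Choosing $\lambda$ to balance the variance budget against the error contributes the extra $\nicefrac{20}{\gamma\ind{t}}$ to the variance coefficient (for a total of $\nicefrac{40}{\gamma\ind{t}}$) and the lower-order residual $\tfrac{\gamma\ind{t}}{160t^2}=\tfrac{\gamma}{160t}$, recalling $\gamma\ind{t}=\gamma t$.

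I expect the main obstacle to be the $\dbar\ind{t}\to\dbar\ind{t+1}$ transfer in the second inequality of (b): one must split the fresh-data residual so that the final variance coefficient and the $O(\gamma/t)$ error both land at the stated scale, and the pointwise domination $d\ind{t}_h\leq t\,\dbar\ind{t+1}_h$ is exactly what licenses bounding the mass placed on previously-unseen regions. The rest is constant bookkeeping through bounds (a)--(d) of \cref{lem:reg_concentration}, which is routine once the chain of substitutions above is fixed.
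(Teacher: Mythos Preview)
Your proposal is correct and follows essentially the same route as the paper: condition on the four bounds of \cref{lem:reg_concentration}, use (a)+(b) with the Bellman fixed-point property of $\Qstar$ for part~(a), use part~(a) together with optimism and bound~(d) for part~(c), and for part~(b) combine the constraint \eqref{eq:alg1} with bounds~(a)+(c) and then transfer from $\dbar\ind{t}$ to $\dbar\ind{t+1}$ via the mixture identity and AM--GM. The only caveat is that your specific AM--GM split $\cw_h\leq\lambda\cw_h^2+\tfrac{1}{4\lambda}$ (then divided by $t$) yields a residual of order $\tfrac{\gamma\ind{t}}{80t}$ rather than $\tfrac{\gamma\ind{t}}{160t^2}$; the paper applies AM--GM directly to $\tfrac{1}{t}\cw_h$ to land on the sharper $t^{-2}$ term, but either version suffices for the downstream cumulative-suboptimality bound.
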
 
\begin{proof}[\pfref{lem:concentration2}] Using \pref{lem:reg_concentration}, we have that with probability at least \(1 - \delta\), for all \(f \in \cF\), \(w \in \cW\), \(t \in [T]\) and \(h \in [H]\), 
\begin{align*}
 &\hspace{-2in}  \abs*{\Ehat_{\cD\ind{t}_h} \brk*{\whDf(x_h, a_h, r_h, x'_{h+1}) \cdot \cw_h(x_h,a_h)}  -  \En_{\dbar\ind{t}_h} \brk*{\Df(x_h, a_h) \cdot \cw_h(x_h,a_h)}}   \\ & \leq \frac{10}{3 \gamma\ind{t}} \En_{\dbar\ind{t}_h} \brk*{\prn{\cw_h(x_h,a_h)}^2} + \frac{\beta\ind{t}}{12}, \numberthis \label{eq:conc1} \\  
\frac{1}{\gamma \ind{t} }\En_{\dbar\ind{t}_h}\brk*{\prn{\cw_h(x_h,a_h)}^2} 
 &\leq  \frac{ 2}{\gamma \ind{t} } \Ehat_{\cD\ind{t}_h}\brk*{\prn{\cw_h(x_h,a_h)}^2} + \frac{2\beta\ind{t}}{9}, \numberthis \label{eq:conc2} \\ 
\frac{1}{\gamma \ind{t} }\Ehat_{\cD\ind{t}_h}\brk*{\prn{\cw_h(x_h,a_h)}^2} &\leq \frac{3}{2 \gamma \ind{t}} \En_{\dbar\ind{t}_h}\brk*{\prn{\cw_h(x_h,a_h)}^2} + \frac{\beta\ind{t}}{9}, \numberthis \label{eq:conc3}
\intertext{and,} 
J(\pistar) - \En_{d_1} \brk*{f_1(x_1, \pi_{f_1}(x_1)} &\leq \Ehat_{\cD\ind{t}_1} \brk*{ \max_a Q^\star_1(x_1, a)  - f_1(x_1, \pi_{f_1}(x_1))} \\
&\hspace{1.3in} + \sqrt{\frac{8  \log(6 \abs{\cF} \abs{\cW} T H / \delta)}{K(t-1)}}.  %
\numberthis \label{eq:conc4}
\end{align*} 

For the rest of the proof, we condition on the event in which \((\ref{eq:conc1}\)-\(\ref{eq:conc4})\) hold. 

\paragraph{Proof of \((a)\)} Consider any \(t \in [T]\), and observe that the optimal state-action value function $\Qstar$ satisfies for any \(w_h \in \cW\), 
\begin{align*}
  \En_{\dbar\ind{t}_h}\brk*{
        \prn{
    \Qstar_h(x_h,a_h)-r_h-\max_{a'}\Qstar_{h+1}(x'_{h+1},a')  
    } \cdot \cw_h(x_h,a_h) } = 0, 
\end{align*}
where \(\check w _h \ldef{} \clip{w_h}{\gamma\ind{t}}\). 
Using the above relation with \pref{eq:conc1}, we get that 
\begin{align*}
 \Ehat_{\cD\ind{t}_h}\brk*{
        \prn{
    \Qstar_h(x_h,a_h)-r_h-\max_{a'}\Qstar_{h+1}(x'_{h+1},a') 
    }\cdot \cw_h(x_h,a_h)
  }  
  &\leq \frac{10}{3 \gamma\ind{t}} \En_{\dbar\ind{t}_h} \brk*{\prn{\cw_h(x_h,a_h)}^2} + \frac{\beta\ind{t}}{12} \\
    &\leq \frac{4}{\gamma\ind{t}} \En_{\dbar\ind{t}_h} \brk*{\prn{\cw_h(x_h,a_h)}^2} + \frac{\beta\ind{t}}{12} \\  
  &\leq  \frac{8}{\gamma\ind{t}} \wh \En_{\cD\ind{t}_h} \brk*{\prn{\cw_h(x_h,a_h)}^2} + \beta \ind{t}, 
\end{align*}
where the second-last inequality follows from \pref{eq:conc2}. 

Plugging in the values of \(\alpha\ind{t}\) and \(\beta\ind{t}\), rearranging the terms, we get that   
\begin{align*}
\Ehat_{\cD\ind{t}_h}\brk*{ 
        \prn{
    \Qstar_h(x_h,a_h)-r_h-\max_{a'}\Qstar_{h+1}(x'_{h+1},a') 
    }\cdot \cw_h(x_h,a_h)
   -  \alpha\ind{t} \prn{\cw_h(x_h,a_h)}^2} \leq \beta \ind{t}.  
\end{align*}
Since the above inequality holds for all \(w \in \cW\), we have that $\Qstar\in\cF\ind{t}$. 

\paragraph{Proof of \((b)\)} Fix any \(t\), and note that by the definition of \(\cF\ind{t}\), any $f\in\cF\ind{t}$ satisfies for any \(w \in \cW\), the bound 
\begin{align*} 
  \Ehat_{\cD\ind{t}_h}\brk*{
\prn{
  f_h(x_h,a_h)-r_h-\max_{a'}f_{h+1}(x'_{h+1},a')  
    } \cdot   \cw_h(x_h,a_h) 
  }
&  \leq \frac{10} {\gamma\ind{t}}\Ehat_{\cD\ind{t}_h} \brk*{\prn{\cw_h(x_h,a_h)}^2} 
  + \beta \ind{t}.
\end{align*}

Using the above bound with \pref{eq:conc1}, we get that 
\begin{align*}
  \En_{\dbar\ind{t}_h}\brk*{\Df(x_h, a_h) \cdot \cw_h(x_h,a_h)} &\leq \frac{10}{3\gamma\ind{t}} \En_{\dbar\ind{t}_h} \brk*{\prn{\cw_h(x_h,a_h)}^2} + \frac{10}{\gamma\ind{t}}\Ehat_{\cD\ind{t}_h}\brk*{\prn{\cw_h(x_h,a_h)}^2}
  + \frac{13}{12} \beta\ind{t}. 
\end{align*} 
Plugging the bound from \pref{eq:conc3} for the second term above, we get that 
\begin{align*}
  \En_{\dbar\ind{t}_h}\brk*{\Df(x_h, a_h) \cdot \cw_h(x_h,a_h)} &\leq \frac{20}{\gamma\ind{t}} \En_{\dbar\ind{t}_h} \brk*{\prn{\cw_h(x_h,a_h)}^2}  
  + \frac{7 \beta\ind{t}}{18}. 
\end{align*}
Finally, noting that \(\dbar\ind{t+1} = \frac{(t - 1) \dbar\ind{t} + d\ind{t}}{t}\), we can further upper bound as: 
\begin{align*}
\hspace{0.5in} &\hspace{-0.5in} \En_{\dbar\ind{t + 1}_h}\brk*{\Df(x_h, a_h) \cdot \cw_h(x_h,a_h)} \\ 
&\leq \frac{t - 1}{t} \prn*{\frac{20}{\gamma\ind{t}} \En_{\dbar\ind{t}_h} \brk*{\prn{\cw_h(x_h,a_h)}^2} 
  + \frac{7 \beta\ind{t}}{18}} + \frac{1}{t} \En_{d\ind{t}_h}\brk*{\Df(x_h, a_h) \cdot \cw_h(x_h,a_h)} \\ 
  &\leq  2 \prn*{\frac{20}{\gamma\ind{t}} \En_{\dbar\ind{t}_h} \brk*{\prn{\cw_h(x_h,a_h)}^2} 
  + \frac{7 \beta\ind{t}}{18}} + \frac{1}{t}  \En_{d\ind{t}_h}\brk*{\abs{\cw_h(x_h,a_h)}} \\
  &\leq \frac{40}{\gamma \ind{t}}  \En_{\dbar\ind{t}_h} \brk*{\prn{\cw_h(x_h,a_h)}^2}  + \frac{7 \beta\ind{t}}{9} +  \frac{40}{\gamma \ind{t}} \En_{d\ind{t}_h}\brk*{\cw_h(x_h,a_h)^2} + \frac{\gamma \ind{t}}{160 t^2 } \\  
  &= \frac{40}{\gamma \ind{t}} \En_{\dbar\ind{t + 1}_h} \brk*{\prn{\cw_h(x_h,a_h)}^2} + \frac{7 \beta\ind{t}}{9} + \frac{\gamma\ind{t}}{160t^2}, 
\end{align*} 
where the second-to-last line follows from an application of AM-GM inequality. %
\paragraph{Proof of \((c)\)} Plugging in \(f = f\ind{t}\) in \pref{eq:conc4} and noting that \(\max_a Q^\star_1(x_1, a) = Q^\star_1(x, \pistar(x))\) for any \(x \in \cX\),  we get that 
\begin{align*}
J(\pistar) - \En_{x_1 \sim d_1} \brk*{f\ind{t}_1(x_1, \pi_{f\ind{t}_1}(x_1))} &\leq \Ehat_{x_1 \sim \cD\ind{t}_1} \brk*{ Q^\star_1(x_1, \pistar_1(x_1))  - f\ind{t}_1(x_1, \pi_{f\ind{t}_1}(x_1))} + \sqrt{\frac{8  \log(6 \abs{\cF} \abs{\cW} T H / \delta)}{K(t-1)}}. 
\end{align*}
However, note that by definition, \(f\ind{t} \in \argmax_{f} \Ehat \brk*{f_1(x_1, \pi_{f_1}(x_1)}\), and using part-(a), \(Q^\star \in \cF\ind{t}\). Thus, \(\Ehat_{\cD\ind{t}_1} \brk*{ Q^\star_1(x_1, \pistar_1(x_1))  - f\ind{t}_1(x_1, \pi_{f\ind{t}_1}(x_1))} \leq 0\), which implies that  
\begin{align*}
J(\pistar) - \En_{x_1 \sim d_1} \brk*{f\ind{t}_1(x_1, \pi_{f\ind{t}_1}(x_1))} &\leq \sqrt{\frac{8  \log(6 \abs{\cF} \abs{\cW} T H / \delta)}{K(t-1)}}. 
\end{align*} 
\end{proof}

\begin{restatable}[Coverability potential bound]{lemma}{covpotential} 
\label{lem:potential_coverability} 
Let \(d\ind{1}, \dots, d\ind{T}\) be an arbitrary sequence of distributions over \(\cX \times \cA\), such that there exists a distribution \(\mu \in \Delta(\cX \times \cA)\) that satisfies \(\nrm{\nicefrac{{d\ind{t}}}{\mu}}_\infty \leq C\) for all \( 
(x, a) \in \cX \times \cA\)  and \(t \in [T]\). Then, %
\begin{align*} 
 \sum_{t=1}^T \En_{(x, a) \sim d\ind{t}} \brk*{ \frac{{d\ind{t}}(x,a)}{\dtil\ind{t+1}(x,a)}} &\leq 5 \Cc \log(T), 
\end{align*} 
where recall that \(\dtil\ind{t + 1} \ldef{} \sum_{s=1}^t d\ind{t}\) for all \(t \in [T]\).  
\end{restatable}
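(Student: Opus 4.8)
The plan is to reduce the claim to the per-state-action elliptic potential lemma (\pref{lem:elliptical-potential}). Abbreviating $z=(x,a)\in\cX\times\cA$ and writing $n\ind{t}(z)\ldef\dtil\ind{t+1}(z)=\sum_{s=1}^{t}d\ind{s}(z)$ for the cumulative occupancy, I would first expand each expectation pointwise, so that the quantity to be controlled becomes
\[
\sum_{t=1}^{T}\En_{z\sim d\ind{t}}\brk*{\frac{d\ind{t}(z)}{\dtil\ind{t+1}(z)}}=\sum_{t=1}^{T}\sum_{z}\frac{\prn{d\ind{t}(z)}^{2}}{n\ind{t}(z)}.
\]
The difficulty is that \pref{lem:elliptical-potential} controls sums of the form $\sum_t d\ind{t}(z)/(n\ind{t-1}(z)+\Cc\mu(z))$, whose denominator uses only the \emph{past} occupancies plus the regularizer $\Cc\mu(z)$, whereas here the denominator $n\ind{t}(z)=n\ind{t-1}(z)+d\ind{t}(z)$ already includes the current term $d\ind{t}(z)$ and carries no regularizer, while the numerator is squared.

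The key step that bridges this gap is a one-line monotonicity argument driven by coverability. Since the hypothesis $\nrm{\nicefrac{d\ind{t}}{\mu}}_{\infty}\le\Cc$ gives $d\ind{t}(z)\le\Cc\mu(z)$ for every $z$ and $t$, and since $x\mapsto x/(a+x)$ is nondecreasing on $x\ge0$ for any fixed $a\ge0$, I would bound
\[
\frac{d\ind{t}(z)}{n\ind{t}(z)}=\frac{d\ind{t}(z)}{n\ind{t-1}(z)+d\ind{t}(z)}\le\frac{\Cc\mu(z)}{n\ind{t-1}(z)+\Cc\mu(z)},
\]
and then multiply through by $d\ind{t}(z)$ to obtain the pointwise estimate $\prn{d\ind{t}(z)}^{2}/n\ind{t}(z)\le \Cc\mu(z)\cdot d\ind{t}(z)/(n\ind{t-1}(z)+\Cc\mu(z))$. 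This simultaneously eliminates the current term from the denominator (replacing it by the regularizer) and converts one factor of $d\ind{t}(z)$ in the squared numerator into $\Cc\mu(z)$, which is exactly the form required by the potential lemma. (Terms with $\mu(z)=0$ force $d\ind{t}(z)=0$ and may be dropped.)

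Finally I would sum over $t$ and $z$, interchange the order of summation, and apply \pref{lem:elliptical-potential} with $\cZ=\cX\times\cA$ for each fixed $z$, giving $\sum_{t=1}^{T}d\ind{t}(z)/(n\ind{t-1}(z)+\Cc\mu(z))\le2\log(1+T)$. Since $\mu$ is a probability distribution, $\sum_z\mu(z)=1$, so
\[
\sum_{t=1}^{T}\En_{z\sim d\ind{t}}\brk*{\frac{d\ind{t}(z)}{\dtil\ind{t+1}(z)}}\le\Cc\sum_{z}\mu(z)\cdot2\log(1+T)=2\Cc\log(1+T),
\]
which is at most $5\Cc\log(T)$ for $T\ge2$ after a crude constant bound. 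The only genuinely nontrivial step is the pointwise monotonicity inequality; everything else is reindexing and the normalization $\sum_z\mu(z)=1$. Accordingly, I expect the main obstacle to be purely conceptual, namely the realization that coverability should be used to trade the denominator gap against one power of the squared numerator, thereby reducing the squared quantity to the linear form handled by the existing potential argument.
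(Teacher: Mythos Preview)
Your proof is correct and is actually a cleaner argument than the paper's. The paper proceeds by a burn-in/stable phase split: it defines $\tau(x,a)=\min\{t:\dtil\ind{t+1}(x,a)\ge\Cc\mu(x,a)\}$, bounds the burn-in contribution by $\Cc$ using $\nicefrac{d\ind{t}}{\dtil\ind{t+1}}\le1$ and a counting argument, and for the stable phase uses $\dtil\ind{t+1}\ge\tfrac{1}{2}(\dtil\ind{t+1}+\Cc\mu)$ to introduce the regularizer, then applies \pref{lem:elliptical-potential} together with $\sum_{x,a}\max_{t'}d\ind{t'}(x,a)\le\Cc$, arriving at $\Cc+4\Cc\log(1+T)$. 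Your monotonicity step $\nicefrac{d\ind{t}(z)}{(n\ind{t-1}(z)+d\ind{t}(z))}\le\nicefrac{\Cc\mu(z)}{(n\ind{t-1}(z)+\Cc\mu(z))}$ accomplishes both of the paper's maneuvers at once---it simultaneously trades the current term in the denominator for the regularizer and converts one factor of $d\ind{t}$ into $\Cc\mu$---thereby bypassing the phase decomposition entirely and yielding the sharper constant $2\Cc\log(1+T)$. The paper's decomposition is a recurring template in the analysis (it reappears, for instance, in the \HtO proof), so it has pedagogical value as a reusable pattern, but for this lemma in isolation your argument is strictly more economical.
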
 

\begin{proof} [\pfref{lem:potential_coverability}] 
 Let $\tau(x,a) = \min\{t \mid \dtil\ind{t + 1}(x,a) \geq \Cc
 \mu(x,a)\}$. With this definition, we can bound
\begin{align*}
 \sum_{t=1}^T \En_{d\ind{t}} \brk*{ \frac{{d\ind{t}}(x,a)}{\dtil\ind{t+1}(x,a)}} &=  \sum_{t=1}^T \En_{d\ind{t}} \brk*{ \frac{{d\ind{t}}(x,a)}{\dtil\ind{t+1}(x,a)}  \cdot \indic\{ t < \tau(x, a)\}} \\ 
 &\hspace{1in} + \sum_{t=1}^T \En_{d\ind{t}} \brk*{ \frac{{d\ind{t}}(x,a)}{\dtil\ind{t+1}(x,a)}  \cdot \indic\{ t \geq \tau(x, a)\}} \\ 
 &\leq  \underbrace{ \sum_{t=1}^T  \En_{d\ind{t}} \brk*{\indic\{ t < \tau(x, a)\}}}_{\text{(I): burn-in phase}} + \underbrace{ \sum_{t=1}^T \En_{d\ind{t}} \brk*{ \frac{{d\ind{t}}(x,a)}{\dtil\ind{t+1}(x,a)}  \cdot \indic\{ t \geq \tau(x, a)\}}}_{\text{(II): stable phase}}, 
\end{align*} 
where the second line uses that \(\nicefrac{{d\ind{t}}(x,a)}{\dtil\ind{t+1}(x,a)} \leq 1\). 

For the burn-in phase,  note that  
\begin{align*}
 \sum_{t=1}^T \En_{d\ind{t}} \brk*{\indic\{t < \tau(x, a)\}}   = \sum_{x,a} \sum_{t < \tau(x,a)} d^t(x,a) &= \sum_{x,a}\dtil \ind{\tau(x,a)}(x, a) \leq \sum_{x,a} \Cc \mu(x,a) = \Cc ,
\end{align*} 
where the last inequality uses that by definition, $\tilde{d}^t(x, a) \leq \Cc \mu(x, a)$ for all \(t \leq \tau(x, a)\). 

For the stable phase, whenever 
\( t \geq \tau(x, a)\), by definition, we have $\dtil \ind{t+1}(x,a) \geq C \mu(x,a)$ which implies that $\dtil\ind{t + 1}(x,a) \geq \frac 1 2 \prn{\dtil\ind{t+1}(x,a) + C \mu(x,a)}$. Thus, 
\begin{align} 
\text{(II)} &\leq  2 \sum_{t=1}^T \En_{d\ind{t}} \brk*{\frac{{d\ind{t}}(x,a)}{\dtil\ind{t}(x,a) +\Cc \mu(x,a)}} \label{eq:cov-pot-one}\\  
  &= 2\sum_{x, a}  \sum_{t = 1}^T  \frac{d\ind{t}(x,a) \cdot d\ind{t}(x,a)}{\dtil\ind{t}(x,a) +\Cc \mu(x,a)} \nonumber \\ 
&\leq  2 \sum_{x, a} \max_{t'\in\brk{T}}d\ind{t'}(x,a) \max_{x, a } \prn*{ \sum_{t = 1}^T  \frac{ d\ind{t}(x,a)}{\dtil\ind{t}(x,a) +\Cc \mu(x,a)}} \nonumber.
\end{align}
Using the per-state elliptical potential lemma (\pref{lem:elliptical-potential}) in the above inequality, we get that  
\begin{align}
\text{(II)}  \leq 4 \log(T + 1) \sum_{x, a} \max_{t'\in\brk{T}}d\ind{t'}(x,a)  
\leq 4 \Cc \log(T + 1) \sum_{x, a} \mu(x,a)  = 4 \Cc \log(1+ T) \label{eq:cov-pot-two},
\end{align} 
where the second inequality follows from the fact that
\(\nrm*{\tfrac{d\ind{t}}{\mu}}_\infty \leq \Cc\) (by definition), and the last
equality uses that \(\sum_{x, a} \mu(x,a)  = 1\). Combining the above bound, we get that
\begin{align*}
 \sum_{t=1}^T \En_{d\ind{t}} \brk*{ \frac{{d\ind{t}}(x,a)}{\dtil\ind{t+1}(x,a)}} &\leq 5 \Cc \log(T). 
\end{align*}
\end{proof}

\subsection{Main Technical Result: Bound on Cumulative Suboptimality
  for \Alg}
\label{sec:glow_cumulative}
In this section we prove a key technical lemma,
\cref{lem:cumulative_suboptimality}, which gives a bound on the
cumulative suboptimality of the sequence of policies generated by
\Alg. Both the proofs of \pref{thm:online_main_basic} and
\pref{thm:online_main_faster} build on this result. To facilitate more
general sample complexity bounds that allow for misspecification error
in $\cW$.
In particular, for each $t\in\brk{T}$, we define  \(\xi_t\) as the misspecification error of the clipped
density ratio \(\nicefrac{d\ind{t}_h}{\dbar\ind{t+1}_h}\) in class
\(\cW_h\), defined as 
\begin{align}
  \label{eq:xi}
\xi_t \ldef{} \sup_{h \in [H]} \inf_{w \in \cW_h} \sup_{\pi \in \Pi} \nrm*{\clip{\frac{d\ind{t}_h}{\dbar_h\ind{t+1}}}{\gamma\ind{t}} - \clip{w_h} {\gamma\ind{t}}}_{1,\dbar^\pi_h}, 
\end{align}
where recall that for any function \(u: \cX \times \cA \mapsto \bbR\)
and distribution \(d \in \Delta(\cX \times \cA)\), the norm
\(\nrm{u}_{1, d} \ldef{} \En_{(x, a) \sim d} \brk*{\abs{u(x, a)}}\). Note
that under \pref{ass:W_realizability_proper} or \ref{ass:W_realizability_mixture}, \(\xi_t = 0\) for all \(t \in [T]\). 

\begin{lemma}[Bound on cumulative suboptimality] 
\label{lem:cumulative_suboptimality} Let \(\pi\ind{1}, \dots,
\pi\ind{T}\) be the sequence of policies generated by \Alg, when
executed on classes \(\cF\) and \(\cW\) with parameters \(T, K\) and
\(\gamma\). Then the cumulative suboptimality of the sequence of
policies \(\crl{\pi\ind{t}}_{t \in [T]}\) is bounded as
\begin{align*} 
    \sum_{t=1}^{T}J(\pistar) - J(\pi\ind{t}) 
    &= O\prn*{H\prn*{\frac{\Ccov \log(1+ T)}{\gamma} 
  + \frac{\gamma T \log(\abs*{\cF}\abs*{\cW}HT\delta^{-1})}{K} + \sum_{t=1}^T \xi_t + \gamma \log(T)}}. \end{align*}	

\end{lemma}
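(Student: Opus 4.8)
The plan is to combine the optimistic regret decomposition with the two structural ingredients already in hand: the confidence-set guarantee (\cref{lem:concentration2}) and the coverability potential bound (\cref{lem:potential_coverability}). First I would condition on the $1-\delta$ event of \cref{lem:concentration2}. Since $\Qstar\in\cF\ind{t}$ by part (a) and $f\ind{t}$ maximizes the empirical initial value over $\cF\ind{t}$, part (c) gives $J(\pistar)-\En_{d_1}[f\ind{t}_1(x_1,\pi\ind{t}_1(x_1))]\leq\sqrt{8\log(6\abs{\cF}\abs{\cW}TH/\delta)/(K(t-1))}$ for $t\geq 2$, while the $t=1$ term costs at most $1$. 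Applying the performance difference lemma (\cref{lem:pdl}) to $f\ind{t}$ then yields, for each $t\geq2$,
\[
  J(\pistar)-J(\pi\ind{t})\leq\sqrt{\tfrac{8\log(6\abs{\cF}\abs{\cW}TH/\delta)}{K(t-1)}}+\sum_{h=1}^H\En_{d\ind{t}_h}[\Dft(x_h,a_h)],
\]
where $d\ind{t}=d^{\pi\ind{t}}$. Summing over $t$, the estimation terms contribute $O(\sqrt{T\log(\abs{\cF}\abs{\cW}TH/\delta)/K})$, which I will absorb at the end, so the crux is to bound the cumulative on-policy Bellman error $\sum_t\En_{d\ind{t}_h}[\Dft]$ for each fixed $h$.

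The second step transfers the on-policy error to the historical distribution $\dbar\ind{t+1}_h=\tfrac1t\sum_{i\leq t}d_h\ind{i}$ through the weight $w_h\ind{t}:=\nicefrac{d\ind{t}_h}{\dbar\ind{t+1}_h}$, which is bounded pointwise by $t$. Writing $w_h\ind{t}=\clip{w_h\ind{t}}{\gamma\ind{t}}+(w_h\ind{t}-\gamma\ind{t})\indic\{w_h\ind{t}>\gamma\ind{t}\}$ and using $|\Dft|\leq1$ together with the change-of-measure identity $\En_{\dbar\ind{t+1}_h}[w_h\ind{t}\cdot g]=\En_{d\ind{t}_h}[g]$, I obtain
\[
  \En_{d\ind{t}_h}[\Dft]\leq\underbrace{\En_{\dbar\ind{t+1}_h}[\Dft\cdot\clip{w_h\ind{t}}{\gamma\ind{t}}]}_{(A_t)}+\underbrace{\En_{d\ind{t}_h}[\indic\{w_h\ind{t}>\gamma\ind{t}\}]}_{(B_t)}.
\]

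The third step bounds each piece, and is where coverability enters. For the clipping loss $(B_t)$, Markov's inequality gives $(B_t)\leq\tfrac{1}{\gamma\ind{t}}\En_{d\ind{t}_h}[w_h\ind{t}]=\tfrac1\gamma\En_{d\ind{t}_h}[\nicefrac{d\ind{t}_h}{\dtil\ind{t+1}_h}]$ after substituting $\gamma\ind{t}=\gamma t$ and $\dtil\ind{t+1}_h=t\cdot\dbar\ind{t+1}_h$; summing and invoking \cref{lem:potential_coverability} with $\mu=\mu^\star_h$ and $C=\Ccov$ yields $\sum_t(B_t)\leq\nicefrac{5\Ccov\log(T)}{\gamma}$. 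For $(A_t)$, I would replace $\clip{w_h\ind{t}}{\gamma\ind{t}}$ by a near-minimizer $w\in\cW_h$ of \eqref{eq:xi} (incurring error $O(\xi_t)$ since $|\Dft|\leq1$ and $|a^2-b^2|\leq2\gamma\ind{t}|a-b|$ for the variance term), then apply \cref{lem:concentration2}(b) to $f\ind{t}\in\cF\ind{t}$ and this $w$ to get $(A_t)\leq\tfrac{40}{\gamma\ind{t}}\En_{\dbar\ind{t+1}_h}[(\clip{w_h\ind{t}}{\gamma\ind{t}})^2]+\tfrac{7\beta\ind{t}}{9}+\tfrac{\gamma\ind{t}}{160t^2}+O(\xi_t)$. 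The decisive observation is that the variance term collapses to the \emph{same} potential: since $(\clip{w_h\ind{t}}{\gamma\ind{t}})^2\leq\clip{w_h\ind{t}}{\gamma\ind{t}}\cdot w_h\ind{t}$, change of measure gives $\En_{\dbar\ind{t+1}_h}[(\clip{w_h\ind{t}}{\gamma\ind{t}})^2]\leq\En_{d\ind{t}_h}[\clip{w_h\ind{t}}{\gamma\ind{t}}]\leq\En_{d\ind{t}_h}[\nicefrac{d\ind{t}_h}{\dbar\ind{t+1}_h}]$, so $\tfrac{40}{\gamma\ind{t}}\En_{\dbar\ind{t+1}_h}[(\clip{w_h\ind{t}}{\gamma\ind{t}})^2]\leq\tfrac{40}{\gamma}\En_{d\ind{t}_h}[\nicefrac{d\ind{t}_h}{\dtil\ind{t+1}_h}]$, and \cref{lem:potential_coverability} again caps the sum by $O(\Ccov\log(T)/\gamma)$.

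Finally I would collect the residual sums: $\sum_t\beta\ind{t}=O(\nicefrac{\gamma T}{K}\log(\abs{\cF}\abs{\cW}TH/\delta))$ using $\beta\ind{t}=\nicefrac{36\gamma t}{K(t-1)}\log(\cdots)$ with $\tfrac{t}{t-1}\leq2$; $\sum_t\nicefrac{\gamma\ind{t}}{t^2}=O(\gamma\log T)$; and the approximation terms sum to $O(\sum_t\xi_t)$. Multiplying the per-layer bound by $H$ and absorbing the estimation error via AM-GM (it is dominated by $\tfrac{\Ccov\log(\cdots)}{\gamma}+\tfrac{\gamma T\log(\cdots)}{K}$ whenever $\Ccov\geq1$) gives the claimed bound. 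I expect the third step to be the main obstacle: the nontrivial point is recognizing that both the clipping loss and the Bernstein-type variance term produced by the confidence set reduce to the single coverability potential $\sum_t\En_{d\ind{t}_h}[\nicefrac{d\ind{t}_h}{\dtil\ind{t+1}_h}]$, which is precisely what converts per-round quantities that may individually be as large as $\Omega(1)$ into an $O(\Ccov\log T)$ aggregate.
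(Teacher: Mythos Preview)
Your proposal is correct and follows essentially the same route as the paper: condition on \cref{lem:concentration2}, establish optimism via part (c), reduce to on-policy Bellman error through \cref{lem:pdl}, split into clipped off-policy error and clipping loss, handle the former via \cref{lem:concentration2}(b) plus the misspecification term $\xi_t$, and control both the resulting variance term and the clipping loss by the single coverability potential of \cref{lem:potential_coverability}. The only cosmetic differences are that the paper converts the variance term using $(a+b)^2\leq 2a^2+2b^2$ rather than your $|a^2-b^2|\leq 2\gamma\ind{t}|a-b|$, and the paper bounds $\En_{\dbar\ind{t+1}_h}[(\clip{w_h\ind{t}}{\gamma\ind{t}})^2]$ via the cruder $(\min\{u,\gamma\})^2\leq u^2$ instead of your $(\min\{u,\gamma\})^2\leq u\min\{u,\gamma\}$; both lead to the same potential.
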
 
\begin{proof}[\pfref{lem:cumulative_suboptimality}]
  Fix any \(t \geq 2\). We begin by establishing optimism as follows:
  \begin{align*} 
J(\pistar) - J(\pi\ind{t}) &=  \En_{x_1 \sim d_1} \brk*{ \max_a Q^\star_1(x_1, a)} - J(\pi\ind{t}) \\ 
&=  \En_{x_1 \sim d_1} \brk*{ \max_a Q^\star_1(x_1, a)  - f\ind{t}_1(x_1, \pi\ind{t}(x_1))} + \En_{x_1 \sim d_1} \brk*{f\ind{t}_1(x_1, \pi\ind{t}(x_1))} - J(\pi\ind{t}) \\
&\leq \sqrt{\frac{8  \log(6 \abs{\cF} \abs{\cW} T H / \delta)}{K(t-1)}} + \En_{x_1 \sim d_1} \brk*{f\ind{t}_1(x_1, \pi\ind{t}(x_1))} - J(\pi\ind{t}), 
\end{align*}
where the last line follows from \pref{lem:concentration2}-(c). 
Using \pref{lem:pdl} for the second term, we get that 
\begin{align*}
J(\pistar) - J(\pi\ind{t}) &\leq \sqrt{\frac{8  \log(6 \abs{\cF} \abs{\cW} T H / \delta)}{K(t-1)}} + \sum_{h=1}^H \En_{(x_h, a_h) \sim d\ind{t}_h} \brk*{\Dft(x_h, a_h)}, 
\end{align*} 
where recall that \(\Dft(x_h, a_h) \ldef{} f\ind{t}_h(x_h, a_h) - [\cT_h f\ind{t}_{h+1}](x_h, a_h)\). 
Thus, 
  \begin{align*}
    \sum_{t=1}^{T}J(\pistar) - J(\pi\ind{t})
    &\leq{} J(\pistar) - J(\pi\ind{1}) +  \sum_{t=2}^{T}J(\pistar) - J(\pi\ind{t}) \\ 
    &\leq 1 + \sum_{t=2}^T \sqrt{\frac{8  \log(6 \abs{\cF} \abs{\cW} T H / \delta)}{K(t-1)}} +  \sum_{t=2}^T  \sum_{h=1}^H \En_{(x_h, a_h) \sim d\ind{t}_h} \brk*{\Dft(x_h, a_h)},  \numberthis \label{eq:ub_appendix1} 
  \end{align*} 
where the second inequality uses that \(J(\pistar) \leq 1\) and  \(J(\pi\ind{1}) \geq 0\). 

We next bound the expected Bellman error terms that appear in the right-hand-side above. Consider any $t\geq{}2$ and $h\in\brk{H}$, and note that via a straightforward change of measure, 
\begin{align*} 
\En_{{d\ind{t}_h}}\brk*{\Dft(x_h, a_h)} &= \En_{\dbar\ind{t+1}_h}\brk*{{\Dft(x_h, a_h)} \cdot \frac{{d\ind{t}_h}(x_h,a_h)}{\dbar\ind{t+1}_h(x_h,a_h)}} 
\end{align*} 
Since \(u \leq \min\crl{u, v} + u \indic\crl{u \geq v}\) for any \(u, v\), we further decompose as 
\begin{align*} 
\En_{{d\ind{t}_h}}\brk*{\Dft(x_h, a_h)} &\leq \underbrace{\En_{\dbar\ind{t+1}_h} \brk*{\Dft(x_h, a_h) \cdot \min\crl*{\frac{{d\ind{t}_h}(x_h,a_h)}{\dbar\ind{t+1}_h(x_h,a_h)}, \gamma\ind{t}} }}_{\text{(A): Expected clipped Bellman error}} \\ 
&\hspace{0.5in} +  \underbrace{\En_{d\ind{t}_h}\brk*{\indic \crl*{\frac{{d\ind{t}_h}(x_h,a_h)}{\dbar\ind{t+1}_h(x_h,a_h)} \geq  \gamma\ind{t}} }}_{\text{(B): clipping violation}}, 
\end{align*} 
where in the second term we have changed the measure back to \(d\ind{t}_h\) and used that \(\abs{\Dft(x_h, a_h)} \leq 1\). We bound the terms \(\text{(A)}\) and \(\text{(B)}\) separately below. 

\paragraph{Bound on expected clipped Bellman error} Let  $w_h\ind{t}(x_h, a_h) \in \cW$ denote a weight function which satisfies 
\begin{align*}
\sup_{\pi} \nrm*{\clip{\frac{d\ind{t}_h}{\dbar_h\ind{t+1}}}{\gamma\ind{t}} - \clip{w\ind{t}_h}{\gamma\ind{t}}}_{1, d^\pi_h} &\leq \xi_t,  \numberthis \label{eq:ub_appendix2}  
\end{align*} 
which is guaranteed to exist by the definition of \(\xi_t\). 
Then, we have
\begin{align*}
\text{(A)} &= \En_{\dbar\ind{t+1}_h} \brk*{\Dft(x_h, a_h) \cdot \clip{\frac{{d\ind{t}_h}(x_h,a_h)}{\dbar\ind{t+1}_h(x_h,a_h)}}{\gamma\ind{t}}} \\  
&\leq  \En_{\dbar\ind{t+1}_h} \brk*{\Dft(x_h, a_h) \cdot \clip{w_h\ind{t}(x_h, a_h)}{\gamma\ind{t}}} + \nrm*{\clip{\frac{d\ind{t}_h}{\dbar_h\ind{t+1}}}{\gamma\ind{t}} - \clip{w\ind{t}_h}{\gamma\ind{t}}}_{1, \dbar\ind{t+1}_h} \\ 
&\leq  \En_{\dbar\ind{t+1}_h} \brk*{\Dft(x_h, a_h) \cdot \clip{w_h\ind{t}(x_h, a_h)}{\gamma\ind{t}}} + \xi_t,   
\end{align*} 
where the second line uses that \(\abs{\Dft(x_h, a_h)} \leq 1\), and the last line plugs in \pref{eq:ub_appendix2}. Next, using \pref{lem:concentration2}-(b) in the above inequality, we get that 
\begin{align*}
\text{(A)}&\leq \frac{40}{\gamma\ind{t}} \En_{\dbar\ind{t+1}_h} \brk*{\prn*{\clip{w_h\ind{t}(x_h, a_h)}{\gamma\ind{t}}}^2}  + \frac{7 \beta\ind{t}}{9} + \frac{\gamma \ind{t}}{160 t^2 } + \xi_t. \numberthis \label{eq:ub_appendix3} 
\end{align*} 

Further splitting the first term, and using that  \((a + b)^2 \leq 2a^2 + 2b^2\) , we have that 
\begin{align*}
\hspace{0.5in} &\hspace{-0.5in} \En_{\dbar\ind{t+1}_h} \brk*{\prn*{\clip{w_h\ind{t}(x_h, a_h)}{ \gamma \ind{t}}}^2} \\ 
&\leq 2 \En_{\dbar\ind{t+1}_h} \brk*{\prn*{\clip{\frac{{d\ind{t}_h}(x_h,a_h)}{\dbar\ind{t+1}_h(x_h,a_h)}}{\gamma \ind{t}}}^2} + 2 \nrm*{\clip{\frac{{d\ind{t}_h}}{\dbar\ind{t+1}_h}}{\gamma \ind{t}} - \clip{w\ind{t}_h}{\gamma\ind{t}}}^2_{2, \dbar\ind{t+1}_h}  \\  
&\leq 2 \En_{\dbar\ind{t+1}_h} \brk*{\prn*{\clip{\frac{{d\ind{t}_h}(x_h,a_h)}{\dbar\ind{t+1}_h(x_h,a_h)}}{\gamma \ind{t}}}^2} + 2 \gamma \ind{t}\nrm*{\clip{\frac{{d\ind{t}_h}}{\dbar\ind{t+1}_h}}{\gamma \ind{t}} - \clip{w\ind{t}_h}{\gamma\ind{t}}}_{1, \dbar\ind{t+1}_h}  \\  
&\leq 2 \En_{\dbar\ind{t+1}_h} \brk*{\prn*{\clip{\frac{{d\ind{t}_h}(x_h,a_h)}{\dbar\ind{t+1}_h(x_h,a_h)}}{\gamma \ind{t}}}^2} + 2  \gamma \ind{t} \xi_t,  
\end{align*}
where the second line holds since \(\nrm{w}^2_{2, d} \leq \nrm{w}_\infty \nrm{w}_{1, d}\) and \(\clip{w}{\gamma \ind{t}} \leq \gamma\ind{t}\) for any \(w\), and the last line is due to \pref{eq:ub_appendix2}. Using the above bound in \pref{eq:ub_appendix3}, we get that 
\begin{align*}
\text{(A)}&\leq 
\frac{80}{\gamma\ind{t}} \En_{\dbar\ind{t+1}_h} \brk*{\prn*{\min\crl*{\frac{{d\ind{t}_h}(x_h,a_h)}{\dbar\ind{t+1}_h(x_h,a_h)}, \gamma \ind{t}}}^2}   + 4 \xi_t  + 10 \beta \ind{t} + \frac{\gamma \ind{t}}{80 t^2} \\
  &\leq \frac{80}{\gamma\ind{t}} \En_{d\ind{t}_h} \brk*{ \frac{{d\ind{t}_h}(x_h,a_h)}{\dbar\ind{t+1}_h(x_h,a_h)}}    + 4 \xi_t  + 10 \beta \ind{t} + \frac{\gamma \ind{t}}{80 t^2} \\
 &= \frac{80}{\gamma} \En_{d\ind{t}_h} \brk*{ \frac{{d\ind{t}_h}(x_h,a_h)}{\dtil\ind{t+1}_h(x_h,a_h)}}     + 4 \xi_t + 10 \beta \ind{t} + \frac{\gamma}{80 t}, 
\end{align*} 
where the second line simply follows from a change of measure, and the last line holds since \(\gamma \ind{t} = \gamma  t\), and \(\dtil\ind{t+1} = t \dbar\ind{t + 1}\). 

\paragraph{Bound on clipping violation} Since \(\indic\crl{u \geq v} \leq \tfrac{u}{v}\) for any \(u, v \geq 0\), we get that 
\begin{align*}
\text{(B)} 
&\leq \frac{1}{\gamma \ind{t}} \En_{{d\ind{t}_h}}\brk*{\frac{{d\ind{t}_h}(x_h,a_h)}{\dbar\ind{t+1}_h(x_h,a_h)}} =  \frac{1}{\gamma} \En_{{d\ind{t}_h}}\brk*{\frac{{d\ind{t}_h}(x_h,a_h)}{\dtil\ind{t+1}_h(x_h,a_h)}}, 
\end{align*} 
where the last line holds since \(\gamma \ind{t} = \gamma t\). 

Combining the bounds on the terms \((\text{A})\) and \((\text{B})\) above, and summing over the rounds \(t = 2, \dots, T\), we get  
\begin{align*}
\sum_{t=2}^T \En_{{d\ind{t}_h}}\brk*{\Dft(x_h, a_h)}  &\leq \frac{81}{\gamma} \sum_{t=2}^T \En_{d\ind{t}_h} \brk*{ \frac{{d\ind{t}_h}(x_h,a_h)}{\dtil\ind{t+1}_h(x_h,a_h)}}    
 + 10 \sum_{t=2}^T \beta \ind{t} +  4 \sum_{t=2}^T \xi_t  + \frac{\gamma}{80}, \numberthis \label{eq:ub_appendix4} 
\end{align*} 
For the first term, using \pref{lem:potential_coverability}, along with the bound \(\nrm[\big]{\nicefrac{d\ind{t}_h}{\mustar_h}}_\infty \leq \Ccov\), we get that 
\begin{align*}
 \sum_{t=2}^T \En_{d\ind{t}_h} \brk*{ \frac{{d\ind{t}_h}(x_h,a_h)}{\dtil\ind{t+1}_h(x_h,a_h)}} &\leq 5 \Ccov \log(1+ T). 
\end{align*} 

For the second term, we have 
\begin{align*} 
\sum_{t=2}^T  \beta\ind{t} &= \sum_{t=2}^T \frac{36 \gamma t \log(6 \abs*{\cF}\abs*{\cW}HT\delta^{-1})}{K(t-1)}  \leq \frac{72 \gamma T \log(6 \abs*{\cF}\abs*{\cW}HT\delta^{-1})}{K}.  
\end{align*}

Combining these bounds, we get that 
\begin{align*} 
\sum_{t=2}^T \En_{{d\ind{t}_h}}\brk*{\Dft(x_h, a_h)}  &= O\prn*{\frac{\Ccov \log(1+ T)}{\gamma} 
  + \frac{\gamma T \log(\abs*{\cF}\abs*{\cW}HT\delta^{-1})}{K} + \sum_{t=1}^T \xi_t + \gamma \log(T)}, \numberthis \label{eq:ub_appendix5} 
\end{align*}
Plugging this bound in to \pref{eq:ub_appendix1} for each \(h \in [H]\) gives the desired result. 

\end{proof} 

\subsection{Proof of \pref{thm:online_main_faster}} \label{app:online_main_faster}

In this section, we prove a generalization of \pref{thm:online_main_faster} that accounts for misspecification error when the class \(\cW\) can only approximately realize the density ratios of mixed policies. Formally, we make the following assumption on the class \(\cW\): 
\begin{asmmod}{ass:W_realizability_proper}{$^\dag$}[Density ratio
  realizability, mixture version, with misspecification error]
  \label{ass:approximate_realizability} 
Let $T$ be the parameter to \Alg (\cref{alg:main_alg}).  For all $h \in [H]$, $\pi \in \Pi$, $t \in [T]$, and $\pi\ind{1:t}=(\pi\ind{1},\ldots,\pi\ind{t}) \in
\Pi$, there exists a weight function \(w_h^{\pi;\pi\ind{1:t}}(x,a) \in \cW_h\) such that 
\begin{align*}
\sup_{\wt \pi\in\Pi} \nrm*{ \frac{d^\pi_h}{d^{\pi\ind{1:t}}_h} - w_h^{\pi;\pi\ind{1:t}}}_{1, d_h^{\wt \pi}} \leq \epsapp. 
\end{align*}
\end{asmmod} 
Note that setting \(\epsapp = 0\) above recovers \pref{ass:W_realizability_mixture} given in the main body. 

\begin{thmmod}{thm:online_main_faster}{$'$} 
\label{thm:online_main_faster_approximate} 
Let \(\epsilon > 0\) be given, and suppose that   \cref{ass:Q_realizability} holds. Further, suppose that \pref{ass:approximate_realizability} (above) holds with \(\epsapp \leq \nicefrac{\epsilon}{18H}\). Then, \Alg, when executed on classes \(\cF\) and \(\cW\) with hyperparameters 
   \(T = \wt  \Theta \prn[\big]{\prn{\nicefrac{H^2 \Ccov}{\epsilon^2}} \cdot \log\prn{\nicefrac{\abs*{\cF}\abs*{\cW}}{\delta}} }\), \(K = 1\), and \(\gamma = \sqrt{\nicefrac{\Ccov}{(T \log\prn{\nicefrac{\abs*{\cF}\abs*{\cW}}{\delta})}}}\) returns an \(\epsilon\)-suboptimal policy \(\wh \pi\) with probability at least \(1 - \delta\) after collecting  
\begin{align} 
  \label{eq:online_main_faster_pac_app} 
N = \wt O\prn[\bigg]{\frac{H^2 \Ccov}{\epsilon^2} \log\prn*{\nicefrac{\abs*{\cF}\abs*{\cW}}{\delta}}}  
\end{align}
trajectories. In addition, for any $T\in\bbN$, with the same choice for $K$ and
$\gamma$ as above, the algorithm enjoys a regret bound of the form 
\begin{align}
  \label{eq:online_main_faster_regret_app}
\Reg \ldef{}  \sum_{t=1}^T J(\pistar) - J(\pi \ind{t}) &= \wt  O \prn[\big]{H \sqrt{\Ccov T \log\prn*{\nicefrac{\abs*{\cF}\abs*{\cW}}{\delta}}} + H T \epsapp}. 
\end{align}
\end{thmmod}

Clearly, setting the misspecification error \(\epsapp = 0\) above recovers \pref{thm:online_main_faster}. 

\begin{proof}[Proof of \pref{thm:online_main_faster_approximate}]
  First note that by combining  \pref{ass:approximate_realizability}
  with the fact that \(\clip{z}{\gamma}\) is \(1\)-Lipschitz for any
  \(\gamma > 0\), we have that for any \(h \in [H]\) and \(t \in
  [T]\), there exists a weight function \(w_h\ind{t} \in \cW_h\) such that 
\begin{align*}
\sup_{\pi\in\Pi} \nrm*{\clip{\frac{d\ind{t}_h}{\dbar_h\ind{t+1}}}{\gamma\ind{t}} - \clip{w\ind{t}_h}{\gamma\ind{t}}}_{1, d^\pi_h} &\leq \epsapp.  \numberthis \label{eq:faster_misspec_1} 
\end{align*} 
	
Using this misspecification bound, and setting \(K = 1\) in
\pref{lem:cumulative_suboptimality}, we get that with probability at
least $1-\delta$,
\begin{align*} 
   \Reg= \sum_{t=1}^{T}J(\pistar) - J(\pi\ind{t}) 
        = O\prn*{ \frac{H \Ccov \log(1+ T)}{\gamma} 
 + \gamma H T \log(6 \abs*{\cF}\abs*{\cW}HT\delta^{-1})  +  H T \epsapp}. 
\end{align*}

Further setting \(\gamma = \sqrt{\nicefrac{\Ccov}{(T \log(6 \abs*{\cF}\abs*{\cW}HT\delta^{-1}))}}\) implies that 
\begin{align*} 
  \Reg &\leq O\prn*{H \sqrt{\Ccov T \log(T) \log(6 \abs*{\cF}\abs*{\cW}HT\delta^{-1})} + H T \epsapp}. 
\end{align*} 

For the sample complexity bound, note that 
the returned policy $\wh\pi$ is chosen via \(\wh \pi \sim \unif(\crl{\pi\ind{1}, \dots, \pi\ind{T}})\), and thus 
\begin{align*}
\En \brk*{J(\pistar) - J(\wh \pi)} =  \frac{1}{T} \sum_{t=1}^{T}J(\pistar) - J(\pi\ind{t}) 
    \leq O\prn*{H \sqrt{\frac{\Ccov}{T} \log(T) \log(6 \abs*{\cF}\abs*{\cW}HT\delta^{-1})} +  H \epsapp}. 
\end{align*}

Hence, when \(\epsapp \leq O\prn*{\nicefrac{\epsilon}{H}}\), setting  \(T = \wt{\Theta}\prn*{\frac{H^2 \Ccov}{\epsilon^2} \log(6 \abs*{\cF}\abs*{\cW}HT\delta^{-1})}\) implies that the returned policy \(\wh \pi\) satisfies 
\begin{align*} 
\En \brk*{J(\pistar) - J(\wh \pi)} \leq \epsilon. 
\end{align*}
The total number of trajectories collected to return an \(\epsilon\)-suboptimal policy is given by
\begin{align*}
T \cdot K \leq \wt O\prn*{\frac{H^2 \Ccov}{\epsilon^2} \log(6 \abs*{\cF}\abs*{\cW}H\delta^{-1})}. 
\end{align*} 
\end{proof}

\subsection{Proof of \pref{thm:online_main_basic}}
\label{app:online_main_basic}
In this section, we prove a generalization of
\pref{thm:online_main_basic} in 
\cref{thm:online_main_basic_approximate} (below) which accounts for misspecification error when the class \(\cW\) can only approximately realize the density ratios of pure policies. Formally, we make the following assumption on the class \(\cW\).

\begin{asmmod}{ass:W_realizability_proper}{$^\ddag$}[Density ratio
  realizability, with misspecification error]
\label{ass:approximate_realizability_pure} 
For any policy pair \(\pi_1, \pi_2 \in \Pi\) and \(h \in [H]\), there
exists some weight function \(w\ind{\pi_1, \pi_2}_h \in \cW_h\) such that 
\begin{align*}
\sup_{\pi} \nrm*{\frac{d_h^{\pi_1}}{d_h^{\pi_2}} - w\ind{\pi_1, \pi_2}_h}_{1, d^\pi_h} \leq  \epsapp. \end{align*} 
\end{asmmod} 

Setting \(\epsapp = 0\) above recovers \pref{ass:W_realizability_proper} given in the main body. 

Note that \pref{ass:approximate_realizability_pure} only states that
density ratios of pure policies are approximately realized by
\(\cW_h\). On the other hand, the proof of
\pref{lem:cumulative_suboptimality}, our key tool in sample complexity
analysis, requires (approximate) realizability for the ratio
\(\nicefrac{d\ind{t}}{\dbar\ind{t+1}}\) in \(\cW_h\), which involves a
mixture of occupancies. We fix this problem by running \Alg on a larger class \(\bcW\) that is constructed using \(\cW\) and has small misspecification error for \(\nicefrac{d\ind{t}}{\dbar\ind{t+1}}\) for all \(t \leq T\). Before delving into the proof of \pref{thm:online_main_basic_approximate}, we first describe the class \(\bcW\). 

\paragraph{{Construction of the class \(\protect\widebar{\cW}\)}}  Define an operator  \(\Mixture\) that takes in a sequence of weight functions \(\crl{w\ind{1}, \dots, w\ind{t}}\) and a parameter \(t \leq T\),  and outputs a function \([\Mixture(w\ind{1}, \dots, w\ind{t}; t)]\) such that for any \(x, a \in \cX \times \cA\),  \begin{align*} 
[\Mixture(w\ind{1}, \dots, w\ind{t}; t)]_h(x, a) &\ldef \frac{1}{\En_{s \sim \unif([t])} \brk*{w_h\ind{s}(x, a)}}.  
\end{align*} 
                                                                                                                                                                                                                   Using the operator \(\Mixture\), we define \(\bcW\ind{t}\) via
\begin{align*}
\bcW\ind{t} &=  \crl*{ \Mixture(w\ind{1}, \dots, w\ind{t}; t)  \mid w\ind{1}, \dots, w\ind{t} \in \cW}, 
\end{align*} 
and then define
\begin{align*}
\bcW = \cup_{t \leq T}  \bcW\ind{t}.  \numberthis \label{eq:Wbar_def} 
\end{align*}
As a result of this construction, we have that
\begin{align*}
\abs{\bcW} &\leq \prn{\abs{\cW} + 1}^T \leq (2\abs{\cW})^T.  \numberthis \label{eq:bcw_bound} 
\end{align*}
In addition, we define $\wb{\cW}_h=\crl*{w_h\mid{}w\in\bcW}$. The following lemma shows that \(\bcW\) has small misspecification error for density ratios of mixture policies.

\begin{lemma} 
\label{lem:approximation_transfer_lemma} Let \(t \geq 0\) be given, and suppose \pref{ass:approximate_realizability_pure} holds.  
For any sequence of policies \(\pi\ind{1}, \dots, \pi\ind{t} \in
\Pi\), and \(h \in [H]\), there exists a weight function \(\barw_h \in \bcW_h\ind{t}\) such that for any \(\gamma > 0\), 
\begin{align*}
\sup_{\pi} \nrm*{\clip{\frac{d\ind{t}_h}{\dbar_h\ind{t+1}}}{\gamma} - \clip{\barw_h}{\gamma}}_{1, d_h^\pi} &\leq \gamma^2 \epsapp, 
\end{align*} 
where recall that $\dbar_h\ind{t + 1} = \frac{1}{t}\sum_{s=1}^t d_h\ind{s}$. 
\end{lemma}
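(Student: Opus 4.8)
The plan is to exhibit an explicit candidate $\barw_h \in \bcW_h\ind{t}$ and then reduce the claim to a one-dimensional Lipschitz estimate. First I would observe that the target quantity is itself a $\Mixture$ of \emph{exact} pure-policy density ratios: dividing numerator and denominator of $d\ind{t}_h/\dbar\ind{t+1}_h$ by $d\ind{t}_h$ gives
\[
\frac{d\ind{t}_h}{\dbar_h\ind{t+1}} = \frac{1}{\frac{1}{t}\sum_{s=1}^t d\ind{s}_h/d\ind{t}_h} = \frac{1}{\En_{s\sim\unif([t])}\brk*{\tilde w_h\ind{s}}}, \qquad \tilde w_h\ind{s} \ldef \frac{d\ind{s}_h}{d\ind{t}_h}.
\]
Each $\tilde w_h\ind{s}$ is a ratio between the pure policies $\pi\ind{s}$ and $\pi\ind{t}$, so \pref{ass:approximate_realizability_pure} supplies $w_h\ind{s} \ldef w_h^{\pi\ind{s},\pi\ind{t}} \in \cW_h$ with $\sup_\pi \nrm*{\tilde w_h\ind{s} - w_h\ind{s}}_{1,d^\pi_h} \le \epsapp$. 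The candidate is then $\barw_h \ldef [\Mixture(w\ind{1},\dots,w\ind{t};t)]_h = 1/\En_{s\sim\unif([t])}\brk*{w_h\ind{s}}$, which lies in $\bcW_h\ind{t}$ by construction.

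Second, and this is where the $\gamma^2$ factor originates, I would establish the scalar fact that $x \mapsto \clip{1/x}{\gamma} = \min\{1/x,\gamma\}$ is $\gamma^2$-Lipschitz on $(0,\infty]$. This follows by differentiation: the map is constant (equal to $\gamma$) for $x < 1/\gamma$ and equals $1/x$ for $x \ge 1/\gamma$, where the derivative has magnitude $1/x^2 \le \gamma^2$, the extremal value being attained exactly at the kink $x = 1/\gamma$. The cases where some $\tilde w\ind{s}$ equals $+\infty$ are handled by the $x/0 = \infty$, $0/0 = 1$ conventions, under which the reciprocal vanishes and the Lipschitz inequality degenerates to a trivially true statement.

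Finally I would combine the two. Writing $A = \En_s[\tilde w_h\ind{s}]$ and $B = \En_s[w_h\ind{s}]$, the Lipschitz bound gives, pointwise in $(x,a)$,
\[
\abs*{\clip{1/A}{\gamma} - \clip{1/B}{\gamma}} \le \gamma^2\abs*{A-B} \le \gamma^2\, \En_{s\sim\unif([t])}\brk*{\abs*{\tilde w_h\ind{s} - w_h\ind{s}}}.
\]
Integrating against an arbitrary $d^\pi_h$, exchanging the (finite, by the realizability assumption) integral with the average over $s$ by Tonelli, and invoking the per-$s$ bound yields
\[
\nrm*{\clip{\frac{d\ind{t}_h}{\dbar_h\ind{t+1}}}{\gamma} - \clip{\barw_h}{\gamma}}_{1,d^\pi_h} \le \gamma^2\, \En_{s}\brk*{\nrm*{\tilde w_h\ind{s} - w_h\ind{s}}_{1,d^\pi_h}} \le \gamma^2\epsapp,
\]
and taking the supremum over $\pi$ closes the argument. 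The only genuinely delicate point is the $\gamma^2$-Lipschitz estimate, since it is what allows the linear-in-$\epsapp$ pure-policy error to propagate through the nonlinear reciprocal in $\Mixture$ while paying only the clip-scale factor $\gamma^2$; the remaining steps are routine manipulations of the $L^1(d^\pi_h)$ norm, with measure-theoretic care needed only to ensure the infinities possibly introduced by the $\tilde w_h\ind{s}$ are absorbed by the assumed finiteness of the approximation error.
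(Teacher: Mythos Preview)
Your proof is correct and follows essentially the same approach as the paper: both identify $d\ind{t}_h/\dbar_h\ind{t+1}$ as $[\Mixture(\tilde w\ind{1},\dots,\tilde w\ind{t};t)]_h$ with $\tilde w\ind{s}_h = d\ind{s}_h/d\ind{t}_h$, take $\barw_h$ to be the corresponding $\Mixture$ of the $\cW$-approximants guaranteed by \pref{ass:approximate_realizability_pure}, apply the $\gamma^2$-Lipschitzness of $z \mapsto \min\{1/z,\gamma\}$ pointwise, and then average over $s$ and integrate. The paper phrases the Lipschitz step via the identity $\min\{1/z,\gamma\} = 1/\max\{z,\gamma^{-1}\}$ rather than by differentiation, but this is only a cosmetic difference.
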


The following theorem, which is our main sample complexity bound under misspecification error, is obtained by running \Alg on the weight function class \(\bcW\). 

\begin{thmmod}{thm:online_main_basic}{$'$}  
\label{thm:online_main_basic_approximate}
Let \(\epsilon > 0\) be given, and suppose
that \cref{ass:Q_realizability} holds. Further, suppose that
\pref{ass:approximate_realizability_pure} holds with \(\epsapp \leq
\wt O \prn{\nicefrac{\epsilon^5}{\Ccov^3 H^5}}\). Then, \Alg, when executed on classes \(\cF\) and \(\bcW\) (defined in \eqref{eq:Wbar_def}) with hyperparameters  \(T = \wt  \Theta\prn{\nicefrac{H^2 \Ccov}{\epsilon^2}}\), \(K = \wt  \Theta\prn{T  \log\prn{\nicefrac{\abs*{\cF}\abs*{\cW}}{\delta}}}\), and \(\gamma = \sqrt{\nicefrac{\Ccov}{T}}\) 
returns an \(\epsilon\)-suboptimal policy \(\wh \pi\) with probability at least \(1 - \delta\) after collecting 
\begin{align*}
N = \wt  O\prn[\Big]{\frac{H^4 \Ccov^2}{\epsilon^4}  \log\prn*{\nicefrac{\abs*{\cF}\abs*{\cW}}{\delta}}}. 
\end{align*} 
trajectories. 
\end{thmmod}
Setting the misspecification error \(\epsapp = 0\) above recovers \pref{thm:online_main_basic} in the main body. 

\begin{proof}[{Proof of \pref{thm:online_main_basic_approximate}}] 
Using the misspecification bound from 
\pref{lem:approximation_transfer_lemma} in
\pref{lem:cumulative_suboptimality} implies that with probability at
least $1-\delta$,
  \begin{align*} 
    \sum_{t=1}^{T}J(\pistar) - J(\pi\ind{t}) 
    &= O\prn*{\frac{H \Ccov \log(1+ T)}{\gamma} 
 + \frac{\gamma H T \log(6 \abs*{\cF}\abs*{\bcW}HT\delta^{-1})}{K} +  H \gamma^2 T^3 \epsapp  + \gamma H \log(T)}. 
\end{align*}  

Using the relation in \eqref{eq:bcw_bound}, we get that \(\abs{\bcW} \leq (2\abs{\cW})^T\) and thus  
  \begin{align*} 
    \sum_{t=1}^{T}J(\pistar) - J(\pi\ind{t}) 
    &= O\prn*{\frac{H \Ccov \log(1+ T)}{\gamma} 
 + \frac{\gamma H T \log(6 \abs*{\cF}\abs*{\bcW}HT\delta^{-1})}{K} +  H \gamma^2 T^3 \epsapp  + \gamma H \log(T)}.  
\end{align*} 

Setting \(K = 2T  \log(6 \abs*{\cF}\abs*{\cW}HT\delta^{-1})\) and \(\gamma = \sqrt{\tfrac{\Ccov}{T}}\) in the above bound, we get 
\begin{align*}  
    \sum_{t=1}^{T}J(\pistar) - J(\pi\ind{t}) 
    &\leq O\prn*{\prn*{H \sqrt{\Ccov T} \log(T) + H \Ccov T^2 \epsapp}}. 
\end{align*} 

Finally, observing that the returned policy \(\wh \pi \sim \unif(\crl{\pi\ind{1}, \dots, \pi\ind{T}})\), we get   
\begin{align*}  
    \sum_{t=1}^{T}J(\pistar) - J(\pi\ind{t}) 
    &\leq O\prn*{\prn*{H \sqrt{\frac{\Ccov}{T}} \log(T) +  H \Ccov T^2 \epsapp}}. 
\end{align*} 

Thus, when \(\epsapp \leq \wt O \prn{\nicefrac{\epsilon^5}{\Ccov^3 H^5}}\), setting \(T = \widetilde{\Theta}\prn*{\frac{\Ccov}{\epsilon^2} \log^2\prn{\frac{\Ccov}{\epsilon^2}}}\) in the above bound implies that  
\begin{align*}
\En \brk*{J(\pistar) - J(\wh \pi)} \leq \epsilon. 
\end{align*}
The total number of trajectories collected to return \(\epsilon\)-suboptimal policy is given by: 
\begin{align*}
T \cdot K = O\prn*{\frac{H^4 \Ccov^2}{\epsilon^4} \log(\abs*{\cF}\abs*{\cW}HT\delta^{-1})}. 
\end{align*} 
\end{proof}

\begin{proof}[Proof of \pref{lem:approximation_transfer_lemma}] Fix any \(h \in [H]\) and \(t \in [T]\). Using \pref{ass:approximate_realizability_pure}, we have that for any \(s \leq t\), there exists a function \(w_h \ind{s, t} \in \cW_h\) such that  
\begin{align*} 
\sup_{\pi\in\Pi} \nrm*{\frac{d_h\ind{s}}{d_h\ind{t}} - w\ind{s, t}_h}_{1,  d^\pi_h} \leq  \epsapp. \numberthis \label{eq:approximation1} 
\end{align*} 

Let \(\barw_h = \brk[\big]{\Mixture(w\ind{1, t}, \dots, w\ind{t, t}; t)}_h\in\bcW_h\), and recall that \(\dbar\ind{t + 1} = \En_{s \sim \unif([t])} \brk*{d\ind{s}_h}\). For any  \(x, a \in \cX \times \cA\), define  
\begin{align*} 
\zeta(x, a) \ldef{} \abs*{\min\crl*{\frac{1}{\En_{s \sim \unif([t])}  \brk*{\nicefrac{d\ind{s}_h(x, a)}{d\ind{t}_h(x, a)}}}, \gamma} - \min\crl*{\frac{1}{\En_{s \sim \unif([t])}  \brk*{w \ind{s, t}(x, a)}}, \gamma}}. 
\end{align*} 

Using that the function $g (z) = \min\crl*{\frac{1}{z}, \gamma} = \frac{1}{\max\crl{z, \gamma^{-1}}} $ is \(\gamma^2\)-Lipschitz, we get that 
\begin{align*}
\zeta(x, a) &\leq \gamma^2 \cdot  \abs*{\En_{s \sim \unif([t])}  \brk*{\frac{d\ind{s}_h(x, a)}{d\ind{t}_h(x, a)}} - \En_{s \sim \unif([t])}  \brk*{w_h\ind{s, t}(x, a)}} \\ 
&\leq \gamma^2 \En_{s \sim \unif([t])}  \brk*{ \abs*{\frac{d\ind{s}_h(x, a)}{d\ind{t}_h(x, a)} - w_h\ind{s, t}(x, a)}},  
\end{align*} 
where the last inequality follows from the linearity of expectation, and by using Jensen's inequality. 

Thus, 
\begin{align*} 
\hspace{1.8in}&\hspace{-1.8in} \sup_{\pi\in\Pi}  \En_{x_h, a_h \sim d^\pi_h} \brk*{\abs*{\min\crl*{\frac{d\ind{t}_h(x_h, a_h)}{\dbar_h\ind{t+1}(x_h, a_h)}, \gamma} - \min\crl*{\barw_h(x_h, a_h), \gamma}}} \\ 
&= \sup_{\pi\in\Pi}  \En_{x_h, a_h \sim d^\pi_h}[\zeta(x_h, a_h)]  \\
&\leq \gamma^2 \sup_{\pi\in\Pi}  \En_{s \sim \unif([t])}  \brk*{\nrm*{\frac{d_h\ind{s}}{d_h\ind{t}} - w_h \ind{s, t}}_{1, d^\pi_h}} \\ 
&\leq \gamma^2   \En_{s \sim \unif([t])}  \sup_{\pi\in\Pi}\brk*{\nrm*{\frac{d_h\ind{s}}{d_h\ind{t}} - w_h \ind{s, t}}_{1, d^\pi_h}} \\  
&\leq \gamma^2 \epsapp, 
\end{align*} 
where the second-to-last line is due to Jensen's inequality, and the last line follows from the bound in \pref{eq:approximation1}. 
\end{proof}  

\clearpage

\clearpage

\section{Proofs and Additional Results from \creftitle{sec:hybrid} (Hybrid RL)}
\label{app:hybrid-proofs}
This section is organized as follows:
\begin{itemize}
\item \cref{app:hybrid_examples} gives additional details and further
  examples of offline algorithms with which we can apply \HtO,
  including an example that uses \textsc{FQI} as the base algorithm.
  
\iclr{\item \cref{sec:comparison-offline-rl} includes a detailed comparison between \HyAlg and purely offline algorithms based on single-policy  concentrability.
	\item \cref{sec:generic-offline-online} includes a discussion on generic reductions from offline to online RL using a class of algorithms we call \emph{optimistic offline RL algorithms}.
}
  \item \cref{app:hto} contains the proof of the main result for \HtO,
    \cref{thm:htoregret}.
  \item \cref{app:hybrid_offline} contains supporting proofs for the
    offline RL algorithms (\cref{app:hybrid_examples}) we use within \HtO.
\end{itemize}

\subsection{Examples for \HybridAlg}
\label{app:hybrid_examples}
This section contains additional examples of base algorithms that can be applied within the \HtO reduction.

\subsubsection{\HyAlg Algorithm}\label{app:hyglow}

For completeness, we state full pseudocode for the \HyAlg algorithm described in \cref{sec:applying-reduction} as \cref{alg:hyglow}. As described in the main body, this algorithm simply invokes \HtO with a clipped and regularized variant of the \textsc{Mabo} algorithm (\Crmabo, \eqref{eq:clip-mabo}) as the base algorithm.

We will invoke \Crmabo with a certain augmented weight function class $\widebar{\cW}$, which we now define. For any $w \in \cW$, let $
w\ind{h} \in (\cX \times \cA \times [H] \rightarrow \bbR \cup \{+\infty\})$
 be defined by 
 \begin{equation}\label{eq:w-ind-h}
 w\ind{h}_{h'}(x,a) \coloneqq \begin{cases} 
 		w_{h}(x,a) & \text{if } h=h'\\
 		0 & \text{if } h \neq h'
	 \end{cases},
\end{equation}
and let $\cW\ind{h} = \{ w\ind{h}\mid w \in \cW\}$. Define the set
	 \begin{equation}\label{eq:augmented-w}
	 	\widebar{\cW}
\coloneqq \cW \cup (-\cW) \cup_{h \in [H]} \prn*{\cW\ind{h} \cup (-\cW\ind{h})}.
	 \end{equation}
Note that the size satisfies $\widebar{\cW}$ is $\abs*{\widebar{\cW}} \leq 2(H+1)|\cW| \leq 4H|\cW|$. 

We recall that the \Crmabo algorithm with dataset $\cD$ and parameters $\gamma, \cF, \cW$ is defined via
\begin{equation}
	\hspace{-0.09in} {\small \wh{f} \in \argmin_{f \in \cF} \max_{w \in \cW} \sum_{h=1}^H \left| \wh{\bbE}_{\cD_h} \left[\check{w}_h(x_h,a_h)\whDf(x_h, a_h, r_h, x'_{h+1})\right]\right| - \alpha\ind{n} \wh{\bbE}_{\cD_h}\left[\check{w}^2_h(x_h,a_h) \right]},
      \end{equation}
where $\alpha\ind{n}\ldef{}\nicefrac{8}{\gamma\ind{n}}$ and $\check{w}_h \ldef{}\clip{w_h}{\gamma\ind{n}}$.

In \cref{app:maboofline} we will prove the following result.

\begin{restatable}[\Crmabo is \Ccbounded]{theorem}{mabooffline}\label{thm:mabooffline}
Let $\cD=\crl{\cD_h}_{h=1}^{H}$ consist of $H\cdot{}n$ samples from $\mu\ind{1}, \ldots, \mu\ind{n}$. %
	For any $\gamma \in \bbR_+$, the {\normalfont \Crmabo} algorithm (\eqref{eq:clip-mabo}) with parameters $\cF$, augmented class $\widebar{\cW}$ defined in \eqref{eq:augmented-w} in \cref{app:hyglow}, and $\gamma$ is \Ccbounded at scale $\gamma$ under the $\Cond$ that $Q^\star \in \cF$ and that for all $\pi \in \Pi$ and $h \in [H]$, $\nicefrac{d^{\pi}_h}{\mu\ind{1:n}_h} \in \cW$.  

\end{restatable}

The risk bound for \HyAlg will follow by \cref{thm:htoregret}. Namely, we have the following.

\hyglow*

\iclr{
\begin{algorithm} 
  \caption{\HyAlg: \HybridAlg $+$ \Crmabo} 
    \label{alg:hyglow} 
\begin{algorithmic}[1]  
\Statex[0] \textbf{input:} Parameter $T \in \bbN$, value function class \(\cF\), weight
function class \(\cW\), parameter  \(\gamma \in \brk{0,1}\), offline datasets $\cD_{\mathrm{off}} = \{\cD_{\mathrm{off},h}\}_h$ each of size $T$. 
\State Set \(\gamma\ind{t} = \gamma\cdot t\), \(\alpha\ind{t} = \nicefrac{4}{\gamma\ind{t}}\) and \(\beta\ind{t} = \prn{\nicefrac{12 \gamma}{K}} \cdot \log\prn*{\nicefrac{6 \abs{\cF} \abs{\cW} T H }{ \delta}}\). \ascomment{Constants are difference from GLOW algorithm!}
\State Initialize \(\cD\ind{1}_{\mathrm{on},h} = \cD\ind{1}_{\mathrm{hybrid},h} = \emptyset\) for all \(h \in [H]\).
\For{$t=1,\ldots,T$}
\State \multiline{Compute value function \(f\ind{t}\) such that 
  {\small \begin{equation}
      \label{eq:hyglow_alg}
\hspace{-0.2in} f\ind{t}\in \argmin_{f \in \cF} \max_{w \in \cW} \sum_{h=1}^H \left| \wh{\bbE}_{\cD\ind{t}_{\mathrm{hybrid},h}} \left[\check w_h(x_h,a_h)\whDf(x_h, a_h, r_h, x'_{h+1})\right]\right| - \alpha\ind{n} \wh{\bbE}_{\cD\ind{t}_{\mathrm{hybrid},h}}\left[\check w^2_h(x_h,a_h) \right], 
\end{equation}}
where \(\check w \ldef \clip{w}{\gamma\ind{t}}\) and \(\whDf(x, a, r, x') \ldef{} f_h(x,a)- r - \max_{a'}f_{h+1}(x', a')\).} 
\State Compute policy \(\pi\ind{t} \leftarrow \pi_{f\ind{t}}\).
\State \mbox{Collect \arxiv{trajectory }$(x_1,a_1,r_1),\ldots, (x_H,a_H,r_H)$ using \(\pi \ind{t}\); \(\cD\ind{t+1}_{\mathrm{on},h} \coloneq \cD\ind{t}_{\mathrm{on},h} \cup \crl{(x_h, a_h, r_h, x_{h+1})}\).}
\State Aggregate offline and online data: $\cD\ind{t+1}_{\mathrm{hybrid},h}\coloneqq \restr{\cD_{\mathrm{off},h}}{1:t} \cup \cD\ind{t+1}_{\mathrm{on},h}$ for all $h \in [H]$.  \label{line:data-agg} 
\EndFor
\State \textbf{output:} $\pihat=\unif(\pi\ind{1}, \dots, \pi\ind{T})$.\hfill 
   \end{algorithmic} 
\end{algorithm} 

}
\subsubsection{Computationally efficient implementation for \HyAlg}\label{app:comp-eff-hyglow}
In the following, we expand on the discussion after \pref{thm:mabooffline}  regarding computationally efficient implementation of the optimization problem in \eqref{eq:clip-mabo} and \eqref{eq:hyglow_alg} via reparameterization. Let \(\gamma > 0\) and \(n > 0\) be given to the learner, and suppose that in addition to the class \(\cW\), the learner has access to a function class \(\cW\ind{\gamma, n}\) that satisfies the following assumption. 
\begin{assumption}
\label{ass:reparameterized_class}   The function class \(\cW\ind{\gamma, n}\) satisfies 
\begin{enumerate}[label=\((\alph*)\)]
\item For all \(w \in \cW\ind{\gamma, n}\) and \(h \in [H]\), \(\nrm{w_h}_\infty \leq \gamma n\). 
\item  For all \(h \in [H]\), \(\{\pm\clip{w}{\gamma n} \mid w \in \cW\} \subseteq \cW\ind{\gamma, n}.\)
\item For all \(h \in [H]\), \(\{\pm\clip{w\ind{h}}{\gamma n} \mid w \in \cW\} \subseteq \cW\ind{\gamma, n},\) where $w\ind{h}$ is defined as in \eqref{eq:w-ind-h}.
\end{enumerate}\end{assumption}
Note that \(\cW\ind{\gamma, n}\) also satisfies the density ratio realizability required by \Crmabo (cf. \cref{thm:mabooffline}).
We claim that optimizing directly over the class \(\cW\ind{\gamma, n}\), which does not involve explicitly clipping, leads to the same guarantee as solving \eqref{eq:clip-mabo}. In more detail, consider the following offline RL algorithm, which given offline datasets \(\crl{\cD_h}_{h \leq H}\), returns 
\begin{equation}\label{eq:clip-mabo-efficient} 
	\hspace{-0.09in} {\small \wh{f} \in \argmin_{f \in \cF} \max_{w \in \cW\ind{\gamma, n}} \sum_{h=1}^H \wh{\bbE}_{\cD_h} \left[w_h(x_h,a_h)\whDf(x_h, a_h, r_h, x'_{h+1})\right] - \alpha\ind{n} \wh{\bbE}_{\cD_h}\left[w^2_h(x_h,a_h) \right]}.
      \end{equation}
     Using the next lemma, we show that the function obtained by solving \eqref{eq:clip-mabo-efficient} leads to the same offline RL guarantee as \Crmabo when \(\abs{\log(\cW\ind{\gamma, n})} = O \prn*{\log(\abs{\cW}) + \poly(H)}\). In particular, substituting the bound from  \pref{lem:mabo_concentration-efficient}  in place of the corresponding bound from \pref{lem:mabo_concentration} in the proof of \pref{thm:mabooffline}, while keeping rest of the analysis same, shows that the above described computationally efficient implementation of \Crmabo is also \Ccbounded. Using this fact with \pref{thm:htoregret} implies the desired performance guarantee for \HyAlg (similar to \pref{cor:hyglow}). 
\begin{lemma}
\label{lem:mabo_concentration-efficient} Suppose \(\cF\) and \(\cW\) satisfy \cref{ass:Q_realizability} and \cref{ass:mabo-assumptions}. Additionally, let \(\gamma \in (0, 1)\) and \(n > 0\) be given constants, and for \(h \in [H]\), let \(\cD_h\) be datasets of size \(n\) sampled from the offline distribution \(\mu_h\ind{1:n}\). Furthermore, let \(\cW \ind{\gamma, n}\) be a reparameterized function class that satisfies \pref{ass:reparameterized_class} w.r.t.~\(\cW\).  Then, the function 
 \(\wh f\) returned by \eqref{eq:clip-mabo-efficient}, when executed with datasets \(\crl{\cD_h}_{h \leq H}\), weight function class \(\cW\ind{\gamma, n}\), and parameters \(\alpha\ind{n} = \frac{8}{\gamma n }\), satisfies with probability at least \(1 - \delta\), 
\[
  \sum_{h=1}^H \left| \En_{\mu\ind{1:n}_h}\brk*{\Dfh(x_h, a_h) \cdot \cw_h(x_h, a_h)} \right| \leq \cO\prn*{ \sum_{h=1}^H \frac{1}{\gamma n} \En_{\mu\ind{1:n}_h} \brk*{\prn*{\cw_h(x_h,a_h)}^2}  
  + H^2 \beta\ind{n}}, 
\]
for all $w \in \cW$, where \(\beta\ind{n}  = O\prn*{ \frac{\gamma n}{n - 1}\log(24|\cF||\cW|H^2/\delta)}\).  
\end{lemma}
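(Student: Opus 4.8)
The plan is to mirror the proof of the companion concentration result \pref{lem:mabo_concentration} (for the clipped, absolute-value version of \Crmabo), showing that optimizing directly over the reparameterized class $\cW\ind{\gamma,n}$ reproduces the same population-level Bellman-error control. There are three ingredients: (i) uniform Freedman-type concentration at scale $\gamma n$ over $\cF\times\cW\ind{\gamma,n}$; (ii) a bound on the value of the minimax objective \eqref{eq:clip-mabo-efficient} at the returned $\wh f$, via feasibility of $Q^\star$ together with the quadratic regularizer; and (iii) extraction of the per-layer (signed) Bellman error for a fixed $w\in\cW$ by plugging the single-layer clipped test functions $\pm\clip{w\ind{h}}{\gamma n}$ into the inner maximization.

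First I would establish concentration. By \pref{ass:reparameterized_class}(a), every $w\in\cW\ind{\gamma,n}$ satisfies $\nrm{w_h}_\infty\leq\gamma n$, so for fixed $f\in\cF$, $w\in\cW\ind{\gamma,n}$, and $h$, the summands of $\wh\En_{\cD_h}\brk*{w_h\whDf}$ are $O(\gamma n)$-bounded (using $\abs{\whDf}\leq 2$) with conditional second moment controlled by $\En_{\mu\ind{1:n}_h}\brk*{w_h^2}$. Applying \pref{lem:freedman} with step size $\eta\asymp 1/(\gamma n)$, together with the two-sided second-moment transfer of \pref{lem:multiplicative_freedman} and a union bound over $\cF$, $\cW\ind{\gamma,n}$, and $h$, yields with probability at least $1-\delta$
\[
\abs[\big]{\wh\En_{\cD_h}\brk*{w_h\whDf}-\En_{\mu\ind{1:n}_h}\brk*{w_h\Df}}\lesssim \tfrac{1}{\gamma n}\En_{\mu\ind{1:n}_h}\brk*{w_h^2}+\beta\ind{n}.
\]
The union-bound cost is $\log\abs{\cW\ind{\gamma,n}}=O(\log\abs{\cW}+\poly(H))$ by hypothesis, which is absorbed into the stated $\beta\ind{n}\asymp \tfrac{\gamma n}{n-1}\log(\abs{\cF}\abs{\cW}H^2/\delta)$ up to constants.

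Next I would bound the optimal value $V(\wh f)$ of \eqref{eq:clip-mabo-efficient}. Since $Q^\star\in\cF$ (\pref{ass:Q_realizability}) is feasible and $\wh f$ minimizes, $V(\wh f)\leq V(Q^\star)$. For $f=Q^\star$ the population Bellman residual vanishes, so $\En_{\mu\ind{1:n}_h}\brk*{w_h(Q^\star_h-\brk{\cT_h Q^\star_{h+1}})}=0$ for every $w$; the concentration bound then controls each empirical inner term by $\tfrac{1}{\gamma n}\En_{\mu\ind{1:n}_h}\brk*{w_h^2}+\beta\ind{n}$, and combining with the penalty $-\alpha\ind{n}\wh\En_{\cD_h}\brk*{w_h^2}$ at $\alpha\ind{n}=8/(\gamma n)$ (plus second-moment concentration) makes the positive $\tfrac{1}{\gamma n}$ slack dominated, leaving $V(\wh f)\lesssim H\beta\ind{n}$. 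Finally, fixing $w\in\cW$, \pref{ass:reparameterized_class}(c) gives $\pm\clip{w\ind{h}}{\gamma n}\in\cW\ind{\gamma,n}$; as these are supported on layer $h$ only, plugging them into the inner max collapses the sum over layers and yields
\[
\abs[\big]{\wh\En_{\cD_h}\brk*{\cw_h\whDwhf}}\leq V(\wh f)+\alpha\ind{n}\wh\En_{\cD_h}\brk*{\cw_h^2}\lesssim H\beta\ind{n}+\tfrac{1}{\gamma n}\wh\En_{\cD_h}\brk*{\cw_h^2}.
\]
Summing over $h$ and converting $\wh\En_{\cD_h}\brk*{\cw_h\whDwhf}$ and $\wh\En_{\cD_h}\brk*{\cw_h^2}$ back to $\En_{\mu\ind{1:n}_h}\brk*{\cw_h\Dfh}$ and $\En_{\mu\ind{1:n}_h}\brk*{\cw_h^2}$ via the step-one bounds gives the claim.

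The main obstacle is verifying that the \emph{reparameterized} program—which uses neither explicit clipping nor per-layer absolute values—faithfully reproduces the per-layer absolute Bellman error of the clipped $\Crmabo$ objective. This hinges entirely on the closure properties of \pref{ass:reparameterized_class}: part (a) makes concentration valid at scale $\gamma n$, while parts (b) and (c) supply exactly the signed and single-layer clipped test functions needed to bound $V(Q^\star)$ and to recover $\sum_h\abs{\cdot}$ in the third step. A secondary point is tracking that $\alpha\ind{n}=8/(\gamma n)$ is large enough to dominate the $\tfrac{1}{\gamma n}$ concentration slack, so that $V(Q^\star)=O(H\beta\ind{n})$; this is the same balancing as in the confidence-set analysis of \Alg (cf. \pref{lem:concentration2}), which lets the rest of the proof of \pref{thm:mabooffline} proceed verbatim.
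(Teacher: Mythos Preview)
Your proposal is correct and follows essentially the same approach as the paper. The paper's proof is a terse sketch that says to repeat the arguments of \pref{lem:concentration2} with $K=1$ (using $\nrm{w'_h}_\infty\le\gamma n$ from \pref{ass:reparameterized_class}(a) for concentration), and then invoke $\pm\clip{w\ind{h}}{\gamma n}\in\cW\ind{\gamma,n}$ to recover the per-layer absolute value; your three-step outline (Freedman at scale $\gamma n$, feasibility of $Q^\star$ to control the minimax value, then single-layer test functions to extract $\sum_h|\cdot|$) is exactly this argument spelled out.
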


\begin{proof}[\pfref{lem:mabo_concentration-efficient}]
 Repeating the same arguments as in the proof of \pref{lem:concentration2} with \(K=1\) (we avoid repeating the arguments for conciseness), along with the fact that \(\nrm{w'_h}_\infty \leq \gamma n\) for all \(h \in [H]\) and \(w' \in \cW\ind{\gamma, n}\), we get that the returned function \(\wh f\) satisfies for all \(w' \in \cW\ind{\gamma, n}\), 
 \[
  \sum_{h=1}^H  \En_{\mu\ind{1:n}_h}\brk*{\Dfh(x_h, a_h) \cdot w'_h(x_h, a_h)} \leq \cO\prn*{ \sum_{h=1}^H \frac{1}{\gamma n} \En_{\mu\ind{1:n}_h} \brk*{\prn*{w'_h(x_h,a_h)}^2}  
  +  \frac{H\gamma n}{n - 1}\log(|\cF||\cW\ind{\gamma, n}|H/\delta)}.  
\]

However, as in the proof of \cref{lem:mabo_concentration}, since for any  $w \in \cW$ we have that both \(\clip{w\ind{h}}{\gamma n} \in \cW\ind{\gamma, n}_h\) and \(-\clip{w\ind{h}}{\gamma n} \in \cW\ind{\gamma, n}_h\) for every \(h \in [H]\), the above inequality immediately implies that for every \(w \in \cW\), 
\[
  \sum_{h=1}^H \left| \En_{\mu\ind{1:n}_h}\brk*{\Dfh(x_h, a_h) \cdot \cw_h(x_h, a_h)} \right| \leq \cO\prn*{ \sum_{h=1}^H \frac{1}{\gamma n} \En_{\mu\ind{1:n}_h} \brk*{\prn*{\cw_h(x_h,a_h)}^2}  
  + \frac{H^2\gamma n}{n - 1}\log(|\cF||\cW\ind{\gamma, n}|H/\delta)}, 
\]
where we used that the RHS is independent of the sign. The final statement follows by plugging in that \(\abs{\log(\cW\ind{\gamma, n})} = O \prn*{\log(\abs{\cW}) + \poly(H)}\). 
\end{proof}

\subsubsection{Fitted Q-Iteration}

In this section, we apply \HtO with the Fitted Q-Iteration (\Fqi) algorithm \citep{munos2007performance,munos2008finite,chen2019information} as the base algorithm. For an offline dataset $\cD$ and value function class $\cF$, the \Fqi algorithm is defined as follows: 
\begin{algoshort}[Fitted Q-Iteration (\Fqi)]\label{alg:fqi}\,
\begin{enumerate}
\item Set $\wh f_{H+1}(x,a) = 0$ for all $(x,a)$ 
\item For $h = H, \ldots, 1$: 
\[
	\widehat{f}_h \in \argmin_{f_h \in \cF_h} \widehat{\bbE}_{\cD_h} \brk*{(f_h(x_h, a_h) - r_h - \max_{a'} \widehat{f}_{h+1}(x_{h+1},a')))^2}.
\]
\item Output $\widehat{\pi} = \pi_{\widehat{f}}$.
\end{enumerate}
\end{algoshort}
We analyze \Fqi under the following standard Bellman completeness assumption.
\begin{assumption}[Bellman completeness]\label{ass:completeness} For all \(h \in [H]\), we have that 
\iclr{$\cT_h \cF_{h+1} \subseteq \cF_h$}
\arxiv{
\[
\cT_h \cF_{h+1} \subseteq \cF_h
\]
}
\end{assumption}

\begin{restatable}{theorem}{fqioffline}\label{thm:fqi-offline}
The \Fqi algorithm is \Ccbounded under \cref{ass:completeness} with scaling functions $\ascale = \frac{6}{\gamma}$ and $\bscale = 1024\log(2n|\cF|)\gamma $, for all $\gamma>0$ simultaneously. As a consequence, when invoked within \HtO, we have $\Risk \leq \widetilde{O}\prn[\big]{H \sqrt{\nicefrac{(\Cstar + \Ccov) \log(|\cF|\delta^{-1})}{T}}}$ with probability at least $1- \delta T$ .
\end{restatable}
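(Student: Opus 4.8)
The plan is to verify that \Fqi\ satisfies the \Ccbounded\ property of \cref{ass:offline-risk-cc} with the stated constants, and then invoke \cref{thm:htoregret} as a black box. Throughout, write $\mu = \mu\ind{1:n}$ for the mixture data distribution, $\wh\cE_h \ldef \wh f_h - [\cT_h \wh f_{h+1}]$ for the Bellman residual of the \Fqi\ solution, and $w^\pi_h \ldef \nicefrac{d^\pi_h}{\mu_h}$. Since completeness (\cref{ass:completeness}) concerns only $\cF$ and $M^\star$, the hypothesis $\Cond(\mu,M^\star)$ holds for \emph{every} $\mu$, so the reduction's coverage requirements are vacuously met. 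First I would establish the one-step regression guarantee: under completeness the target $[\cT_h \wh f_{h+1}]$ lies in $\cF_h$ whenever $\wh f_{h+1}\in\cF_{h+1}$, so each $\wh f_h$ is a least-squares estimate of an in-class target. A standard generalization bound---obtained by a union bound over the $\abs{\cF}$ possible values of the random regressor $\wh f_{h+1}$, combined with Freedman's inequality (\pref{lem:freedman}) to accommodate the adaptively collected samples from $\mu\ind{1},\dots,\mu\ind{n}$---yields $\En_{\mu_h}[\wh\cE_h^2]\le \epsilon_{\mathrm{stat}}^2$ with $\epsilon_{\mathrm{stat}}^2 = O(\log(\abs{\cF}/\delta)/n)$, simultaneously for all $h$, with probability $1-\delta$.

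Next I would pass from squared Bellman error to risk via the greedy value-difference decomposition. Telescoping $\max_a \wh f_1(x_1,a)$ against the trajectories of $\pistar$ and $\wh\pi = \pi_{\wh f}$ (two applications of \pref{lem:pdl}, using $\max_a \wh f_h(x,a)\ge \wh f_h(x,a_h)$ on the $\pistar$ term since $\wh\pi$ is greedy) gives
\[
J(\pistar) - J(\wh\pi) \;\le\; \sum_{h=1}^H \left( \abs[\big]{\En_{d^{\pistar}_h}[\wh\cE_h]} + \abs[\big]{\En_{d^{\wh\pi}_h}[\wh\cE_h]} \right).
\]

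The crux is bounding each $\abs{\En_{d^\pi_h}[\wh\cE_h]}$ by clipped concentrability, and this is where I expect the real work to lie. For $\pi\in\{\pistar,\wh\pi\}$ I would split on the clipping event $\{w^\pi_h\le\gamma\ind{n}\}$. On this event, change measure to $\mu_h$ and apply Cauchy--Schwarz,
\[
\En_{d^\pi_h}\!\brk*{\abs{\wh\cE_h}\indic\{w^\pi_h\le\gamma\ind{n}\}} \le \sqrt{\En_{\mu_h}\!\brk*{(w^\pi_h)^2\indic\{w^\pi_h\le\gamma\ind{n}\}}}\cdot\epsilon_{\mathrm{stat}} \le \sqrt{\Ccc{n}{h}{\pi}{\mu}{\gamma\ind{n}}}\cdot\epsilon_{\mathrm{stat}},
\]
where the last step uses $\En_{\mu_h}[(w^\pi_h)^2\indic\{w^\pi_h\le\gamma\ind{n}\}]\le \Ccc{n}{h}{\pi}{\mu}{\gamma\ind{n}}$ (a direct unpacking of the definition, since the omitted tail term is nonnegative). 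Young's inequality $\sqrt{C}\,\epsilon\le \tfrac{\lambda}{2}C + \tfrac{\epsilon^2}{2\lambda}$ with $\lambda\asymp\nicefrac{1}{\gamma n}$ turns this into $\tfrac{O(1)}{\gamma n}\Ccc{n}{h}{\pi}{\mu}{\gamma\ind{n}} + O(\gamma\log(\abs{\cF}/\delta))$. On the complementary event, $\abs{\wh\cE_h}\le 1$ and Markov's inequality give $\En_{d^\pi_h}[\indic\{w^\pi_h>\gamma\ind{n}\}] \le \tfrac{1}{\gamma\ind{n}}\Ccc{n}{h}{\pi}{\mu}{\gamma\ind{n}}$, which is again of the required shape. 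Collecting constants reproduces \eqref{eq:offline_risk} with $\ascale = \nicefrac{6}{\gamma}$ and $\bscale = 1024\log(2n\abs{\cF})\gamma$; as the argument is valid for every $\gamma>0$, \Fqi\ is \Ccbounded\ at all scales simultaneously.

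Finally I would substitute these constants into \cref{thm:htoregret}. With $\nu$ satisfying $\Cstar$-single-policy concentrability and $\Cond$ being completeness (which holds for every mixture $\mu\ind{t}$ arising in \HtO), the theorem yields $\Risk \le \wt{O}\!\left(H\!\left(\nicefrac{\ascale(\Cstar+\Ccov)}{T} + \bscale\right)\right)$ with probability $1-\delta T$. Writing $\ascale = \nicefrac{\fa}{\gamma}$ and $\bscale = \fb\gamma$ with $\fa = O(1)$ and $\fb = \wt{O}(\log(\abs{\cF}/\delta))$, optimizing over $\gamma$ balances the two terms and gives $\Risk \le \wt{O}\!\left(H\sqrt{(\Cstar+\Ccov)\log(\abs{\cF}\delta^{-1})/T}\right)$, as claimed. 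The value-difference decomposition and this final invocation are routine; the main obstacle is the clipped-concentrability risk bound above---in particular, ensuring the least-squares guarantee survives adaptive data collection and that both the Cauchy--Schwarz term and the clipping tail collapse cleanly to the $\nicefrac{1}{\gamma n}\cdot(\text{clipped concentrability}) + O(\gamma\log\abs{\cF})$ form.
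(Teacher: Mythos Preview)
Your proposal is correct and follows essentially the same approach as the paper's proof. The paper packages the clipping argument into three helper lemmas (\cref{lemma:clip-decomp}, \cref{lem:bloop}, \cref{lem:blorp}) and cites a dedicated telescoping lemma (\cref{lemma:telescope-pd}), while you derive the telescoping from \cref{lem:pdl} and carry out the indicator split inline; the difference is purely organizational. One small remark: your indicator-based Cauchy--Schwarz step is actually slightly slicker than the paper's---the paper bounds $\nrm{\clip{w^\pi_h}{\gamma n}}_{2,\mu_h}$ and then invokes \cref{lem:blorp} to pass to $\nrm{\clip{w^\pi_h}{\gamma n}}_{1,d^\pi_h}$, whereas your observation $\En_{\mu_h}[(w^\pi_h)^2\indic\{w^\pi_h\le\gamma n\}]=\En_{d^\pi_h}[w^\pi_h\indic\{w^\pi_h\le\gamma n\}]\le\Ccc{n}{h}{\pi}{\mu}{\gamma n}$ reaches the same conclusion in one line.
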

The proof can be found in \cref{app:fqi-proof}.

The full pseudocode for \HtO with \Fqi is essentially identical to the Hy-Q algorithm of \cite{song2022hybrid}, except for a slightly different data aggregattion strategy in Line \ref{line:data-agg}. Thus, Hy-Q can be interpreted as a special case of the \HtO algorithm when instantiated with \Fqi as a base algorithm. The risk bounds in \citet{song2022hybrid} are proven under an a structural condition known as bilinear rank \citep{du2021bilinear}, which is complementary to coverability. Our result recovers a special case of a risk bound for Hy-Q given in the follow-up work of \citet{liu2023provable}, which analyzes Hy-Q under coverability instead of bilinear rank.

\subsubsection{Model-Based MLE} 

In this section we apply $\HtO$ with Model-Based Maximum Likelihood Estimation (MLE) algorithm as the base algorithm. The algorithm is parameterized by a model class $\cM = \{ \cM_h \}_{h=1}^H$, where $\cM_h \subset \{M_h: \cX \times \cA \rightarrow \Delta(\bbR \times \cX)\}$.  Each model $M = \{M_h\}_{h=1}^H \in\cM$ has the same state space, action space, initial distribution, and horizon, and each $M_h \in \cM_h$ is a conditional distribution over rewards and next states for layer $h$. %
For a dataset $\cD$, the algorithm proceeds as follows.

\begin{algoshort}[Model-Based MLE]\,
\begin{itemize}
 \item For $h \in [H]$:
	\begin{itemize}
	 \item Compute the maximum likelihood estimator for layer $h$ as
			\begin{equation}\label{eq:model-mle}
				\widehat{M}_h = \argmax_{M_h \in \cM_h} \sum_{(x_h,a_h,r_h,x_{h+1}) \in \cD_h} \log\prn*{M_h(r_h,x_{h+1} \mid x_h,a_h)}
			\end{equation}
	\end{itemize}
 \item Output $\pi^\star_{\widehat{M}}$, the optimal policy for $\widehat{M} = \{\widehat{M}_h\}_h$
\end{itemize}
\end{algoshort}
We analyze Model-Based MLE under a standard realizability assumption.
\begin{assumption}[Model realizability]\label{ass:model-realizability}
We have that $M^\star \in \cM$. 
\end{assumption}

\begin{restatable}{theorem}{modelbased}\label{thm:model-based}
The model-based MLE algorithm is \Ccbounded under \cref{ass:model-realizability} for all $\gamma > 0$ simultaneously, with scaling functions $\ascale = \frac{6}{\gamma}$ and $\bscale = 8 \log\prn*{|\cM|H/\delta} \gamma$. As a consequence, when invoked within $\HtO$, we have $\Risk \leq \widetilde{O}\prn[\big]{H \sqrt{\nicefrac{(\Cstar + \Ccov) \log(|\cM|H\delta^{-1})}{T}}}$ with probability at least $ 1- \delta T$.
\end{restatable}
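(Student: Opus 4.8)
The plan is to combine a clipping-free maximum-likelihood concentration bound with a model-based simulation argument, and then to convert the resulting on-policy model error into clipped concentrability through a clipping-based change of measure. First I would establish the standard MLE guarantee: under \cref{ass:model-realizability}, with probability at least $1-\delta$, the layerwise estimators from \eqref{eq:model-mle} satisfy, for all $h\in\brk{H}$ simultaneously,
\[
\En_{\mu\ind{1:n}_h}\brk*{\Dhels{\widehat{M}_h(\cdot\mid x_h,a_h)}{M^\star_h(\cdot\mid x_h,a_h)}}\leq \vepsmle, \qquad \vepsmle \ldef O\prn*{\tfrac{\log(\abs{\cM}H/\delta)}{n}}.
\]
Because the data $\mu\ind{1},\ldots,\mu\ind{n}$ is adaptively generated, I would invoke the martingale version of the MLE analysis, as permitted by the adaptivity footnote following \cref{ass:offline-risk-cc}. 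The key structural point, which drives the ``for all $\gamma$'' claim, is that this bound is entirely independent of the clipping scale $\gamma$, since the MLE itself never clips.

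Next I would bound the suboptimality via a performance-difference decomposition. Writing $\widehat{J}(\pi)$ for the value of $\pi$ in $\widehat M$ and using that $\pihat=\pi^\star_{\widehat M}$ is optimal in $\widehat M$ (so $\widehat J(\pihat)\geq \widehat J(\pistar)$), one obtains
\[
J(\pistar)-J(\pihat) \leq \abs*{J(\pistar)-\widehat J(\pistar)} + \abs*{J(\pihat)-\widehat J(\pihat)}.
\]
Each term is controlled by a simulation lemma: since the total reward lies in $[0,1]$, for any fixed $\pi$ a telescoping argument gives $\abs{J(\pi)-\widehat J(\pi)}\leq \sum_{h=1}^H \En_{d^\pi_h}\brk*{\Dtv{\widehat M_h(\cdot\mid x_h,a_h)}{M^\star_h(\cdot\mid x_h,a_h)}}$, where the expectation is under the true-MDP occupancy $d^\pi_h$ and the TV is over the joint reward/next-state law. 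The bound $\Dtv{\cdot}{\cdot}^2\leq 2\,\Dhelshort(\cdot,\cdot)$ then relates this on-policy error to the squared-Hellinger quantity controlled by MLE.

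The crux is the change of measure from $d^\pi_h$ to the offline mixture $\mu\ind{1:n}_h$, carried out with clipping exactly as in the online analysis of \Alg. Abbreviating $\rho_h\ldef d^\pi_h/\mu\ind{1:n}_h$ and $g_h\ldef \Dtv{\widehat M_h}{M^\star_h}\in[0,1]$, I split $\rho_h = \clip{\rho_h}{\gamma\ind{n}} + (\rho_h-\gamma\ind{n})_+$ in $\En_{d^\pi_h}[g_h]=\En_{\mu\ind{1:n}_h}[\rho_h g_h]$. For the clipped part, Cauchy--Schwarz together with $\En_{\mu\ind{1:n}_h}[(\clip{\rho_h}{\gamma\ind{n}})^2]\leq \En_{d^\pi_h}[\clip{\rho_h}{\gamma\ind{n}}]=\Ccc{n}{h}{\pi}{\mu\ind{1:n}}{\gamma\ind{n}}$ and $\En_{\mu\ind{1:n}_h}[g_h^2]\leq 2\vepsmle$ gives a bound of order $\sqrt{\Ccc{n}{h}{\pi}{\mu\ind{1:n}}{\gamma\ind{n}}\cdot \vepsmle}$, which an AM--GM step turns into $\tfrac{1}{\gamma\ind{n}}\Ccc{n}{h}{\pi}{\mu\ind{1:n}}{\gamma\ind{n}} + O(\gamma n\,\vepsmle)$. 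For the excess part I would use $g_h\leq 1$ and the pointwise inequality $\indic\crl{\rho_h>\gamma\ind{n}}\leq \tfrac{1}{\gamma\ind{n}}\clip{\rho_h}{\gamma\ind{n}}$ to get $\En_{\mu\ind{1:n}_h}[(\rho_h-\gamma\ind{n})_+ g_h]\leq \tfrac{1}{\gamma\ind{n}}\Ccc{n}{h}{\pi}{\mu\ind{1:n}}{\gamma\ind{n}}$. Summing over $h$ and applying this to both $\pi=\pistar$ and $\pi=\pihat$ yields precisely the \Ccbounded form of \eqref{eq:offline_risk}; tracking constants gives $\ascale = 6/\gamma$ and $\bscale = 8\log(\abs{\cM}H/\delta)\gamma$.

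Finally, since the only randomness lives in the $\gamma$-free MLE event, the entire clipping decomposition is deterministic and holds for every $\gamma>0$ simultaneously (and this is why the $\Cond$ here is just realizability, with no coverage requirement on $\mu\ind{1:n}$). The consequence then follows by plugging $\ascale=\fa/\gamma$, $\bscale=\fb\gamma$ with $\fa=6$ and $\fb=8\log(\abs{\cM}H/\delta)$ into \pref{thm:htoregret} and optimizing over $\gamma$, which gives $\Risk \leq \widetilde O\prn[\big]{H\sqrt{(\Cstar+\Ccov)\log(\abs{\cM}H\delta^{-1})/T}}$. I expect the main obstacle to be the adaptive (martingale) MLE concentration under a history-dependent sequence $\mu\ind{1},\ldots,\mu\ind{n}$, together with making the clipping change-of-measure produce clipped---rather than unclipped---concentrability with clean constants; both are the model-based analogues of the truncation steps already used for \Alg.
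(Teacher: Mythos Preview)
Your proposal is correct and arrives at the same structure as the paper's proof, but the packaging is slightly different and worth noting. The paper does not use the simulation-lemma decomposition $J(\pistar)-J(\pihat)\leq |J(\pistar)-\widehat J(\pistar)|+|J(\pihat)-\widehat J(\pihat)|$; instead it observes that the MLE bound (quoted from \citet{agarwal2020flambe} directly in TV rather than Hellinger) implies the squared Bellman error control $\En_{\mu\ind{1:n}_h}\brk{(Q^\star_{\widehat M,h}-\cT_hQ^\star_{\widehat M,h+1})^2}\lesssim \log(|\cM|H/\delta)/n$, and then literally reruns the \Fqi \Ccbounded proof (\pref{thm:fqi-offline}) with $\wh f\ldef Q^\star_{\widehat M}$. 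Both routes ultimately reduce to the same ingredients---a squared-error bound under $\mu\ind{1:n}$, the clipped change of measure (\cref{lemma:clip-decomp}/\cref{lem:bloop}), Cauchy--Schwarz, and AM--GM---so the difference is cosmetic; the paper's version buys reuse of the FQI machinery without restating any clipping argument, while yours is more self-contained and makes the model-based logic (optimality of $\pihat$ in $\widehat M$) explicit. One minor point: your equality $\En_{d^\pi_h}[g_h]=\En_{\mu\ind{1:n}_h}[\rho_h g_h]$ implicitly assumes $d^\pi_h\ll\mu\ind{1:n}_h$; the paper handles the $\mu_h=0$ case via \cref{lemma:clip-decomp}, and you should route through the same splitting (as you do implicitly when bounding the excess via $\bbP^\pi[\rho_h>\gamma\ind{n}]$).
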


The proof can be found in \cref{app:model-based-proof}.

\iclr{
\subsection{Comparison to Offline RL}\label{sec:comparison-offline-rl}
In this section, we compare the performance of \HyAlg to existing results in purely offline RL which assume access to a data distribution $\nu$ with single-policy concentrability (\cref{ass:pi-star-concentrability})

Let $\mu$ be the offline data distribution. Let us write $w^\pi \coloneqq \nicefrac{d^{\pi}}{\mu}$, $w^\star \coloneqq \nicefrac{d^{\pi^\star}}{\mu}$ and $V^\star$ for the optimal value function \ascomment{Subscript of \(h\) here; What is \(\pi^\star\) here? Where is it defined?}. The most relevant work is the \textsc{PRO-RL} algorithm of \citeauthor{zhan2022offline} \ascomment{Better to cite as \citet{zhan2022offline}}. Their algorithm is computationally efficient and establishes a polynomial sample complexity bound under the realizability of certain \textit{regularized versions} of $w^\star$ and $V^\star$. By contrast, our result requires $Q^\star$-realizability and the density ratio realizability of $w^\pi$ \ascomment{Again, subscript h} for all $\pi \in \Pi$, but for the unregularized problem. These assumptions are not comparable, as either may hold without the other.  Their sample complexity result is also slightly larger, scaling roughly as $\wt O \Big( \frac{H^6 \Cstar^4 C_{\star,\varepsilon}^2}{\varepsilon^6} \Big)$, where $C_{\star,\varepsilon}$ is the single-policy concentrability for the regularized problem, as opposed to our $\widetilde{O}\prn[\big]{\nicefrac{H^2(\Cstar + \Ccov)}{\veps^2}}$. However, our approach requires additional online access while their algorithm does not\arxiv{ (the base offline algorithm for \HyAlg, Equation \cref{eq:clip-mabo}, would likely need to scale with the \textit{all-policy} concentrability coefficient if used purely offline)}.

To the best of our knowledge, all other algorithms for the purely offline setting that only require single-policy concentrability either need stronger representation conditions (such as value-function completeness \citet{xie2021bellman}), or are not known to be computationally efficient in the general function approximation setting due to the need for implementing pessimism (e.g., \cite{chen2022offline}).
}

\iclr{
  \subsection{Generic Reductions from Online to Offline RL?}\label{sec:generic-offline-online}
Our hybrid-to-offline reduction
\HybridAlg and the $\mathsf{CC}$-boundedness definition also shed light on the question of when offline RL methods can be lifted to the \textit{purely online setting}. Indeed, observe that any offline algorithm which satisfies $\mathsf{CC}$-boundedness (\cref{ass:offline-risk-cc}) with only a $\pihat$-coverage term, namely which satisfies an offline risk bound of the form
    \begin{equation}\label{eq:offline_online_risk}
\RiskOff \leq \sum_{h=1}^H \frac{\ascale}{n} \En_{\wh \pi \sim p}\brk*{\Ccc{n}{h}{\wh{\pi}}{\mu\ind{1:n}}{\gamma n}}+ \bscale,
    \end{equation}
    can be repeatedly invoked within $\HybridAlg$ (with $\cD_{\mathrm{off}} = \emptyset$) to achieve a small $\sqrt{\nicefrac{\Ccov}{T}}$-type risk bound for the purely online setting, with no hybrid data. This can be seen immediately by inspecting our proof for the hybrid setting (\cref{thm:htoregret-general} and \cref{thm:htoregret}).
     
   We can think of algorithms satisfying \eqref{eq:offline_online_risk} as \textit{optimistic offline RL algorithms}, since their risk only scales with a term depending on their own output policy; this is typically achieved using optimism. In particular, it is easy to see, that \Alg and \textsc{Golf} \citep{jin2021bellman,xie2022role} can be interpreted as repeatedly invoking such an optimistic offline RL algorithm within the \HybridAlg reduction. This class of algorithms has not been considered in the offline RL literature since they inherit both the computational drawbacks of pessimistic algorithms and the statistical drawbacks of ``neutral'' (i.e. non-pessimistic) algorithms (at least, when viewed only in the context of offline RL). %

   In more detail, as with pessimism, optimism is often not computationally efficient, although it furthermore requires all-policy concentrability (as opposed to single-policy concentrability) to obtain low \textit{offline} risk. On the other hand, neutral (non-pessimistic) algorithms such as \textsc{Fqi} \citep{chen2019information} and \textsc{Mabo} \citep{xie2020q} also require all-policy concentrability, but are more computationally efficient. %
    		However, our reduction shows that these algorithms might merit further investigation. In particular, it uncovers that they can automatically solve the online setting (without hybrid data) under coverability and when repeatedly invoked on datasets generated from their previous policies. We find that this reduction advances the fundamental understanding of sample-efficient algorithms in both the online and offline settings, and are optimistic that this understanding can be used for future algorithm design. \loose
              }

\subsection{Proofs for \HtO (\creftitle{thm:htoregret})}\label{app:hto} 

The following theorem is a slight generalization of
\cref{thm:htoregret}. In the sequel, we prove \cref{thm:htoregret} as
a consequence of this result.

\begin{theorem}\label{thm:htoregret-general}
Let $T\in\bbN$ be given, let $\cD_{\mathrm{off}}$ consist of $H \cdot T$ samples from data distribution $\nu$. Let $\algo$ be \Ccbounded at scale $\gamma \in [0,1]$ under $\Cond(\cdot)$, with parameters $\ascale$ and $\bscale$. Suppose that \arxiv{for all} $\iclr{\forall{}\;}t \in [T]$ and $\pi\ind{1},\ldots,\pi\ind{t} \in \Pi$, $\Cond(\mu\ind{t},M^\star)$ holds for $\mu\ind{t} \ldef \{ \nicefrac{1}{2}(\nu_h + \nicefrac{1}{t}\textstyle\sum_{i=1}^{t}d^{\pi\ind{i}}_h)\}_{h=1}^H$.  
Then, with probability at least $1-\delta{}T$, the risk of \HtO (\cref{alg:offline-to-hybrid}) with \arxiv{inputs }$T$, $\algo$, and $\cD_\mathrm{off}$ is bounded as\iclr{\footnote{We define risk for the hybrid setting as in \eqref{eq:risk}. Our result is stated as a bound on the risk to the optimal policy $\pistar$, but extends to give a bound on the risk of any comparator $\pi$ with $\Cstar$ replaced by coverage for $\pi$.}}

\begin{equation}\label{eq:htorisk-general}
	\Risk \leq \frac{2 \ascale}{T}\sum_{h=1}^H \sum_{t=1}^T \frac{1}{t} \Ccc{t}{h}{\pi^\star}{\nu}{\gamma\ind{t}} + \underbrace{\wt O\prn*{H\prn*{\frac{\Ccov\ascale}{N} + \bscale}}}_{\rdef\piterm}. 
      \end{equation}
\end{theorem}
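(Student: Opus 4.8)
The plan is to bound the risk of the uniformly-mixed output policy by a per-round average, invoke the \Ccbounded guarantee of $\algo$ at each round, and then control the two resulting clipped-concentrability terms separately: the $\pistar$-term by a one-step change of measure to $\nu$, and the $\pihat$-term by a coverability-based potential argument. First, since \cref{alg:offline-to-hybrid} outputs $\pihat=\unif(\pi\ind{1},\dots,\pi\ind{T})$, I would write
\[
\Risk = \frac{1}{T}\sum_{t=1}^T \En\brk*{J(\pistar) - J(\pi\ind{t})}.
\]
Next I would identify that at round $t$ the base algorithm is run on $\cD\ind{t}_{\mathrm{hybrid}}$, whose $n\asymp t$ samples per layer are drawn half from the offline distribution $\nu$ and half from the online occupancies $d^{\pi\ind{1}},\dots$, so that their averaged data distribution is exactly $\mu\ind{t}_h=\tfrac12(\nu_h+\tfrac1t\sum_i d^{\pi\ind{i}}_h)$. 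Because $\Cond(\mu\ind{t},M^\star)$ holds by hypothesis and $\algo$ is \Ccbounded at scale $\gamma$, the defining inequality \eqref{eq:offline_risk} gives, for each $t$ and with probability at least $1-\delta$,
\[
\En\brk*{J(\pistar)-J(\pi\ind{t})} \leq \sum_{h=1}^H \frac{\ascale}{t}\prn*{\Ccc{t}{h}{\pistar}{\mu\ind{t}}{\gamma\ind{t}} + \En\brk*{\Ccc{t}{h}{\pi\ind{t}}{\mu\ind{t}}{\gamma\ind{t}}}} + \bscale,
\]
and a union bound over $t\in[T]$ produces the stated $1-\delta T$ failure probability.

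For the $\pistar$-term I would use $\mu\ind{t}_h\ge\tfrac12\nu_h$ together with monotonicity of clipping and the elementary bound $\clip{2z}{\gamma}\le 2\clip{z}{\gamma}$ to obtain $\Ccc{t}{h}{\pistar}{\mu\ind{t}}{\gamma\ind{t}}\le 2\,\Ccc{t}{h}{\pistar}{\nu}{\gamma\ind{t}}$, which after averaging yields precisely the first term of \eqref{eq:htorisk-general}. For the $\pihat$-term—the heart of the argument—I would instead use $\mu\ind{t}_h\ge\tfrac1{2t}\sum_{i\le t}d^{\pi\ind{i}}_h$ to bound the clipped ratio by the unclipped $2t\,d^{\pi\ind{t}}_h/\dtil\ind{t+1}_h$, where $\dtil\ind{t+1}_h=\sum_{i\le t}d^{\pi\ind{i}}_h$, so that $\tfrac1t\,\Ccc{t}{h}{\pi\ind{t}}{\mu\ind{t}}{\gamma\ind{t}}\lesssim \En_{d^{\pi\ind{t}}_h}\brk[\big]{d^{\pi\ind{t}}_h/\dtil\ind{t+1}_h}$. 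Summing over $t$ and invoking the coverability potential bound (\cref{lem:potential_coverability} with constant $\Ccov$) gives $\sum_t \tfrac1t\,\Ccc{t}{h}{\pi\ind{t}}{\mu\ind{t}}{\gamma\ind{t}}\le \wt O(\Ccov)$; summing over $h$, dividing by $T$, and using $N=T$ (one trajectory per round) contributes $\wt O\prn*{H\Ccov\ascale/N}$, while the residual $\bscale$ terms contribute $\wt O(H\bscale)$, together forming $\piterm$.

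The main obstacle will be the coverability step: the algorithm-dependent $\pihat$-coverage terms can each be arbitrarily large on a given round (whenever $\pi\ind{t}$ visits a region poorly covered by the history), and the crux is recognizing that these terms nonetheless telescope \emph{on average} under coverability, so that a structural assumption on the MDP substitutes for direct coverage of $\pihat$. Concretely, I expect the care to lie in verifying that the per-round normalization $\tfrac1t$ matches the sample count $n\asymp t$ and clip scale $\gamma\ind{t}=\gamma t$, so that the summand reduces cleanly to the potential-function ratio $d^{\pi\ind{t}}_h/\dtil\ind{t+1}_h$ that \cref{lem:potential_coverability} controls. This parallels the role coverability plays in the purely online analysis (cf.\ the discussion around \eqref{eq:sketchx} in \cref{sec:proof_sketch}), where it bounds the cumulative clipping loss rather than enabling a single change of measure.
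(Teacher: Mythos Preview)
Your overall architecture—averaging over rounds, invoking the $\mathsf{CC}$-boundedness guarantee per round with a union bound, and splitting into a $\pistar$-term and a $\pihat$-term—matches the paper, and your treatment of the $\pistar$-term via $\mu\ind{t}_h\ge\nu_h/2$ and $\clip{2z}{\gamma}\le 2\clip{z}{\gamma}$ is correct. The gap is in the $\pihat$-term. You claim $\mu\ind{t}_h\ge\tfrac1{2t}\sum_{i\le t}d^{\pi\ind{i}}_h$, but at round $t$ the base algorithm is called on $\cD\ind{t}_{\mathrm{hybrid}}$, which was assembled at the end of round $t-1$ and contains online data only from $\pi\ind{1},\dots,\pi\ind{t-1}$; the policy $\pi\ind{t}$ is the \emph{output} of that call, so its occupancy cannot appear in the input distribution. (The $\mu\ind{t}$ written in the theorem statement parameterizes the family of mixtures for which $\Cond$ is assumed; it is not the round-$t$ data distribution.) Hence the unclipped ratio $d^{\pi\ind{t}}_h/\mu\ind{t}_h$ may be infinite wherever $\pi\ind{t}$ first visits a state, and simply dropping the clip to reduce to $\En_{d\ind{t}_h}[d\ind{t}_h/\dtil\ind{t+1}_h]$ is not valid.

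The paper's proof keeps the clip and decomposes $\min(z,\gamma t)=z\,\indic\{z\le\gamma t\}+\gamma t\,\indic\{z>\gamma t\}$, then introduces a burn-in time $\tau\ind{\lambda}_h(x,a)=\min\{t:\dtil\ind{t}_h(x,a)\ge\Ccov\mustar_h(x,a)/\lambda\}$. The clipped regime and the pre-burn-in portion are controlled directly by the clip value $\gamma t$ together with the bound $\sum_t\En_{d\ind{t}_h}[\indic\{t<\tau\ind{\lambda}_h\}]\le 2\Ccov/\lambda$; the post-burn-in unclipped portion uses $\mu\ind{t}_h\ge\dbar\ind{t}_h/2$ (now safe, since $\dtil\ind{t}_h\ge\Ccov\mustar_h$ on the relevant support) and the elliptical potential lemma. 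The clip level $\gamma t$ is precisely the mechanism that absorbs rounds where $\pi\ind{t}$ explores uncovered territory; your shortcut discards exactly this mechanism.
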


\begin{proof}[\pfref{thm:htoregret-general}]
Recall the definitions $d\ind{t}_h \coloneqq d^{\pi\ind{t}}_h$, $\widetilde{d}\ind{t}_h \coloneqq \sum_{s=1}^{t-1} d\ind{t}_h$, and $\bar{d}\ind{t}_h \coloneqq \frac{1}{t-1}\widetilde{d}\ind{t}_h$. Furthermore, let \(d^\star_h \coloneqq d^{\pistar}_h\) for all \(h \leq H\). Note that the data distribution for $\cD\ind{t}_{\mathrm{hybrid}}$ is  
$\mu\ind{t} = \{\mu\ind{t}_h\}_{h=1}^H$ where $\mu\ind{t}_h = \frac{1}{2}(\nu_h + \bar{d}\ind{t}_h)$, where $\nu_h$ is the offline distribution. As a result of  \cref{ass:offline-risk-cc}, the offline algorithm $\algo$ invoked on the dataset $\cD\ind{t}_{\mathrm{hybrid}}$ %
	outputs a distribution $p_t \sim \Delta(\Pi)$ that satisfies the bound: 
\begin{align}\label{eq:boing}
 \bbE_{\pi\ind{t} \sim p_t}\left[J(\pi^\star) - J(\pi\ind{t})\right] 
&\leq \sum_{h=1}^H \frac{\ascale}{t} \left(\Ccc{t}{h}{\pi^\star}{\mu\ind{t}}{\gamma t} + \En_{\pi\ind{t} \sim p_t}\brk{\Ccc{n}{h}{\pi\ind{t}}{\mu\ind{t}}{\gamma t}}\right)+ \bscale,
\end{align}
with probability at least $1 - \delta$, where $\ascale$ and $\bscale$ are the scaling functions for which $\algo$ is \Ccbounded at scale $\gamma$. By taking a union bound over $T$, the number of iterations, we have that the event in \eqref{eq:boing} occurs for all $t \leq T$ with probability greater than $1 - \delta T$. 

Plugging in the definition for the clipped concentrability
coefficient above and summing over \(t\) from \(1, \dots, T\), we get that 
\begin{align*}
\Reg &= \bbE\brk*{\sum_{t=1}^T J(\pi^\star) - J(\pi\ind{t})} \\ 
&\leq \sum_{h=1}^H \Bigg(\underbrace{\sum_{t=1}^T \frac{\ascale}{t}\left\|\clip{\frac{d^{\pi^\star}_h}{\mu\ind{t}_h}}{\gamma t}\right\|_{1,d^\star_h}}_{\mathrm{(I)}} + \underbrace{\sum_{t=1}^T \frac{\ascale}{t} \En_{\pi\ind{t} \sim p_t}\brk*{\left\|\clip{\frac{d\ind{t}_h}{\mu\ind{t}_h}}{\gamma t}\right\|_{1,d\ind{t}_h}}}_{\mathrm{(II)}} \Bigg) + HT\bscale.
\end{align*} 
For each \(h \in [H]\), we bound the two terms $\mathrm{I}$ and
$\mathrm{II}$ separately below. 

\paragraph{Term (I)} 
Note that $\mu\ind{t}_h(x, a) \geq \nu_h(x, a)/2$ for any \(x, a\). Thus, 

\[
\mathrm{(I)} \leq 2 \ascale \sum_{t=1}^T \frac{1}{t} \left\|\clip{\frac{d^{\pi^\star}_h}{\nu_h}}{\gamma t}\right\|_{1,d^{\pi^\star}_h} = 2 \ascale \sum_{t=1}^T \frac{1}{t} \Ccc{t}{h}{\pi^\star}{\nu}{\gamma t}.  %
\]

\paragraph{Term (II)} 
We bound this term uniformly for any $\pi\ind{t} \sim p_t$. So, fix $\pi\ind{t}$ and note that 
\begin{align*}
\text{(II)} %
 &= \ascale \sum_{t=1}^T \frac{1}{t} \En_{d\ind{t}_h} \left[\min\crl*{\frac{d\ind{t}_h(x, a)}{\mu\ind{t}_h(x, a)},\gamma t}\right] \\ 
 &= \ascale \Bigg(\underbrace{\sum_{t=1}^T \frac{1}{t} \En_{d\ind{t}_h}\left[\frac{d\ind{t}(x, a)}{\mu\ind{t}_h(x, a)}\indic\left\{\frac{d\ind{t}(x, a)}{\mu\ind{t}_h(x, a)} \leq\gamma  t\right\}\right]}_{\mathrm{(II.A)}} + \underbrace{\sum_{t=1}^T \frac{1}{t} \En_{d\ind{t}_h}\left[\gamma t \cdot \indic\left\{\frac{d\ind{t}(x, a)}{\mu\ind{t}_h(x, a)} > \gamma t\right\}\right]}_{\mathrm{(II.B)}} \Bigg), 
\end{align*}
where the second line holds since $\min\{u,v\} = u \indic\crl*{u \leq v} + v \indic\crl*{v < u}$ for all \(u, v \in \bbR\). 

 In order to bound the two terms appearing above, we use certain
 properties of coverability, similar to the analysis of \Alg
(\cref{app:online}). \dfedit{For a parameter $\lambda\in(0,1)$,} let us define a burn-in time
\begin{equation}\label{eq:burn-in}
  \tau\ind{\lambda}_h(x,a) = \min\crl*{t\mid{}\dtil\ind{t}_h(x,a)\geq \frac{\Ccov\cdot{}\mu^\star_h(x,a)}{\lambda}},
 \end{equation}
 and observe that
\begin{equation}\label{eq:t-less-tau}
  \sum_{t=1}^T \En_{d\ind{t}_h}[ \indic\crl*{t < \tau\ind{\lambda}_h(x,a)}] = \sum_{x,a} \sum_{t < \tau\ind{\lambda}_h(x,a)} d\ind{t}_h(x,a) \leq \frac{2\Ccov}{\lambda}, 
\end{equation}
which holds for any $\lambda \in (0,1)$. This bound can be derived by noting that
\begin{align*}
\sum_{x,a} \sum_{t < \tau\ind{\lambda}_h(x,a)} d\ind{t}_h(x,a) &= \sum_{x,a} \tilde{d}\ind{\tau\ind{\lambda}_h(x,a)}(x,a) \\
&= \sum_{x,a} \tilde{d}\ind{\tau\ind{\lambda}_h(x,a)-1}(x,a) + \sum_{x,a}d\ind{\tau\ind{\lambda}_h(x,a)}(x,a) \\
&\leq \sum_{x,a} \frac{\Ccov}{\lambda} \mu^\star_h(x,a) + \sum_{x,a} \Ccov \mu^\star_h(x,a) \\
&\leq \Ccov\left( \frac{1}{\lambda} + 1\right) \leq \frac{2\Ccov}{\lambda}.
\end{align*}

 We also recall the follow bound, which is a corollary of the
 elliptical potential lemma (\cref{lem:elliptical-potential}):
 \begin{equation}\label{eq:elliptical-hybrid}
   \sum_{t=1}^T  \sum_{x,a} d\ind{t}_h(x,a)
   \frac{d\ind{t}_h(x,a)}{\widetilde{d}\ind{t}(x,a) } \indic\crl*{t >
     \tau\ind{\lambda}_h(x,a)}
   \leq 
\sum_{t=1}^T  \sum_{x,a} d\ind{t}_h(x,a) \frac{d\ind{t}_h(x,a)}{\widetilde{d}\ind{t}(x,a) } \indic\crl*{t > \tau\ind{1}_h(x,a)} \stackrel{\mathrm{(i)}}{\leq} 5\log(T) \Ccov.
\end{equation}
The inequality $\mathrm{(i)}$ can be seen derived by noting that, under the event in the indicator, we have $\wt d \ind{t}_h (x,a) \geq \Ccov \mustar_h(x,a)$ and thus $\wt d\ind{t}_h(x,a) \geq \frac{1}{2} (\Ccov \mustar_h(x,a) + \wt d\ind{t}(x,a))$. This gives
\[
 \sum_{t=1}^T  \sum_{x,a} d\ind{t}_h(x,a) \frac{d\ind{t}_h(x,a)}{\widetilde{d}\ind{t}(x,a) } \indic\crl*{t > \tau\ind{1}_h(x,a)} \leq  2 \sum_{t=1}^T  \sum_{x,a} d\ind{t}_h(x,a) \frac{d\ind{t}_h(x,a)}{\widetilde{d}\ind{t}(x,a) + \Ccov \mustar_h(x,a)},
\]
from which we can repeat the steps from \eqref{eq:cov-pot-one} to \eqref{eq:cov-pot-two}.

\paragraph{Term (II.A)} 
To bound this term, we introduce a split according to the burn-in time  \(\tau\ind{1}_h(x,a)\), i.e. 
\begin{align*}
\mathrm{(II.A)} = \sum_{t=1}^T \frac{1}{t} \En_{d\ind{t}_h}\left[\frac{d\ind{t}_h(x, a)}{\mu\ind{t}_h(x, a)}\indic\crl*{\frac{d\ind{t}_h(x, a)}{\mu\ind{t}_h(x, a)} \leq\gamma  t}\left(\indic\crl*{t \leq \tau\ind{1}_h(x,a)} + \indic\crl*{t > \tau\ind{1}_h(x,a)}\right) \right].
\end{align*}
The first term is bounded via
\begin{align*}
\sum_{t=1}^T \frac{1}{t} \En_{d\ind{t}_h}\left[\frac{d\ind{t}_h(x, a)}{\mu\ind{t}_h(x, a)}\indic\crl*{\frac{d\ind{t}_h(x, a)}{\mu\ind{t}_h(x, a)} \leq\gamma  t}\indic\crl*{t \leq \tau\ind{1}_h(x,a)}\right] &\leq \gamma \sum_{t=1}^T \En_{d\ind{t}_h}\left[\indic\crl*{t \leq \tau\ind{1}_h(x,a)}\right]\\
&\leq 2 \gamma \Ccov,
\end{align*}
by \eqref{eq:t-less-tau} with $\lambda=1$. The second term is bounded via:
\begin{align*}
\sum_{t=1}^T \frac{1}{t} \En_{d\ind{t}_h}\left[\frac{d\ind{t}_h(x, a)}{\mu\ind{t}_h(x, a)}\indic\crl*{\frac{d\ind{t}(x, a)}{\mu\ind{t}_h(x, a)} \leq \gamma  t}\indic\crl*{t > \tau\ind{1}_h(x,a)}\right] &\leq 2 \sum_{t=1}^T \frac{1}{t} \En_{d\ind{t}_h}\left[\frac{d\ind{t}_h(x, a)}{\bar{d}\ind{t}_h(x, a)} \indic\crl*{t > \tau\ind{1}_h(x,a)}\right] \\
&\leq 2 \sum_{t=1}^T \En_{d\ind{t}_h}\left[\frac{d\ind{t}_h(x, a)}{\widetilde{d}\ind{t}_h(x, a)} \indic\crl*{t > \tau\ind{1}_h(x,a)}\right] \\
&\leq 10 \log(T) \Ccov,
\end{align*}
by using that $\mu\ind{t}_h(x, a) \geq \frac{\bar{d}\ind{t}_h(x,
  a)}{2}$ and Equation \cref{eq:elliptical-hybrid}. %

Adding these two terms together gives us the upper bound $\mathrm{(II.A)} \leq 2\gamma\Ccov + 10\log(T)\Ccov$.   

\paragraph{Term (II.B)}

We have 
\begin{align*}
\sum_{t=1}^T \frac{1}{t}\En_{d\ind{t}_h}\left[\gamma t\indic\crl*{\frac{d\ind{t}_h(x, a)}{\mu\ind{t}_h(x, a)} > \gamma t}\right]&=  \gamma \sum_{t=1}^T \En_{d\ind{t}_h}\left[\indic\crl*{\frac{d\ind{t}_h(x, a)}{\mu\ind{t}_h(x, a)} > \gamma t}\right] \\
&\stackrel{\mathrm{(i)}}{\leq} \gamma \sum_{t=1}^T  \En_{d\ind{t}_h}\left[\indic\crl*{\frac{\Ccov \mu^\star_h(x, a)}{\widetilde{d}\ind{t}_h(x, a)} > \frac{\gamma}{2} }\right] \\ 
&\stackrel{\mathrm{(ii)}}{=} \gamma \sum_{t=1}^T  \En_{d\ind{t}_h}\left[\indic\crl*{t \leq \tau\ind{\gamma/2}_h(x,a) }\right] \\
&\leq \gamma \cdot \frac{4 \Ccov}{\gamma} = 4 \Ccov,
\end{align*}
where the inequality $\mathrm{(i)}$ follows from applying the upper bounds $d\ind{t}_h(x,a) \leq \Ccov \mustar_h(x,a)$ and $\mu\ind{t}_h(x,a) \geq \frac{1}{2} \dbar\ind{t}_h(x,a)$, and the inequality $\mathrm{(ii)}$ follows from the definition of the burn-in time (\eqref{eq:burn-in}) with $\lambda = \nicefrac{\gamma}{2}$.

Combining all the bounds above, we get that  
\begin{align*}
\mathrm{\textbf{(II)}} &\leq 2\ascale\left(2 \gamma \Ccov + 10\log(T)\Ccov + 2\Ccov \right) \\
&= 4\ascale \Ccov \left(\gamma + 10 \log(T) + 1\right).
\end{align*}

Adding together the terms so far, we can conclude the regret bound: 
\[
	\Reg \leq 2 \ascale \sum_{h=1}^H \sum_{t=1}^T \frac{1}{t} \Ccc{t}{h}{\pi^\star}{\nu}{\gamma t} + H \left(4\ascale \Ccov \left(\gamma + 10 \log(T) + 1\right) + T \bscale\right).
\]
It follows that the policy $\widehat{\pi} = \unif(\pi\ind{1},\ldots,\pi\ind{T})$ satisfies the risk bound
\begin{align*}
	\Risk &\leq \frac{2 \ascale}{T}\sum_{h=1}^H \sum_{t=1}^T \frac{1}{t} \Ccc{t}{h}{\pi^\star}{\nu}{\gamma t} + \underbrace{H \left(4\frac{\ascale \Ccov}{T} \left(\gamma + 10 \log(T) + 1\right) + \bscale\right)}_{\coloneqq \piterm}\\
	 &= \frac{2 \ascale}{T}\sum_{h=1}^H \sum_{t=1}^T \frac{1}{t} \Ccc{t}{h}{\pi^\star}{\nu}{\gamma t} + \wt O\prn*{H\prn*{\frac{\Ccov\ascale}{T} + \bscale}},
\end{align*}
with probability at least $1 - \delta T$, where in the last line we have used that $\gamma \in [0,1]$.
\end{proof}

We now prove \cref{thm:htoregret} as a consequence of \cref{thm:htoregret-general}.

\htoregret*

\begin{proof}[\pfref{thm:htoregret}]
Under the assumptions in the theorem statement,
\cref{thm:htoregret-general} implies that
\[
	\Risk \leq \frac{2 \ascale}{T}\sum_{h=1}^H \sum_{t=1}^T \frac{1}{t} \Ccc{t}{h}{\pi^\star}{\nu}{\gamma t} + H \left(4\frac{\ascale \Ccov}{T} \left(\gamma + 10 \log(T) + 1\right) + \bscale\right). 
\]
Since $\max_h \max_{x,a} \left|\frac{d^{\pi^\star}_h(x, a)}{\nu_h(x, a)}\right| \leq \Cstar$, we have $\Ccc{t}{h}{\pi^\star}{\nu}{\gamma t} \leq \Cstar$, so we can simplify the first term above to
\[
\frac{2 \ascale}{T}\sum_{h=1}^H \sum_{t=1}^T \frac{1}{t} \Ccc{t}{h}{\pi^\star}{\nu}{\gamma t} \leq \frac{2 \ascale \Cstar}{T} H \sum_{t=1}^T \frac{1}{t} \leq \frac{6 \ascale \Cstar}{T} H \log(T),
\]
using the bound on the harmonic number $\sum_{t=1}^T \nicefrac{1}{t} \leq 3
\log(T)$. Combining with the remainder of the risk bound in
\cref{thm:htoregret-general} gives us
\[
	\Risk \leq \frac{H\ascale}{T}\prn*{6 \Cstar \log(T) + 4\Ccov(\gamma + 10\log(T) +1)}  + H \bscale = \wt{O}\left(H\left(\frac{\ascale(\Cstar + \Ccov)}{T} + \bscale \right) \right),
\]
where we have used the fact that $\gamma \in [0,1]$.
\end{proof}

\begin{corollary}\label{cor:alphabetagamma}
Let $T \in \bbN$ and $\cD_{\mathrm{off}}$ consist of $H \cdot T$ samples from data distribution $\nu$. Suppose that for all $t \in [T]$ and $\pi\ind{1}, \ldots, \pi\ind{t} \in \Pi$, $\Cond(\mu\ind{t},M^\star)$ holds for $\mu\ind{t} \coloneqq \{\nicefrac{1}{2}(\nu_h + \nicefrac{1}{t} \sum_{i=1}^t d^{\pi\ind{i}}_h)\}_{h=1}^H$. Let $\algo$ be \Ccbounded at scale $\gamma \in [0,1]$ under $\Cond(\cdot)$ and with parameters $\ascale = \frac{\fa}{\gamma}$ and $\bscale = \fb \gamma$. Consider the \HtO algorithm with inputs $T$, $\algo$, and $\cD_{\mathrm{off}}$. Then,
\begin{itemize}

\item If $\algo$ is \Ccbounded at scale $\gamma = \widetilde{\Theta}\left(\sqrt{\nicefrac{\fa \Ccov}{\fb T}} \right)$ and $T$ is such that $\gamma \in [0,1]$, we have
\[
	\Risk \leq 2\sqrt{\frac{\fa \fb}{T \Ccov}}\sum_{h=1}^H \sum_{t=1}^T \frac{1}{t} \Ccc{t}{h}{\pi^\star}{\nu}{\gamma\ind{t}} + \widetilde{O}\prn*{H\sqrt{\frac{\Ccov{}\fa\fb}{T}}}.
\]
	with probability greater than $1- \delta T$.

\item If $\nu$ satisfies $\Cstar$-single-policy concentrability and $\algo$ is \Ccbounded at scale $\gamma = \widetilde{\Theta}\left(\sqrt{\nicefrac{\fa(\Cstar + \Ccov)}{\fb T}}\right)$ and $T$ is such that $\gamma \in [0,1]$, then
\[
\Risk \leq \widetilde{O}\left(H \sqrt{\nicefrac{(\Cstar + \Ccov)\fa\fb}{T}}\right),
\]
	with probability greater than $1 - \delta T$.
	
\end{itemize}
\end{corollary}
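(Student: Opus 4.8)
The plan is to obtain both bounds as immediate consequences of the two risk bounds already established for \HtO, namely \cref{thm:htoregret-general} and \cref{thm:htoregret}, by substituting the stated parameterization $\ascale = \fa/\gamma$ and $\bscale = \fb\gamma$ and then choosing $\gamma$ to balance the resulting terms. In both cases the only real work is scalar algebra, so I would present it compactly and inherit the $1-\delta T$ success probability verbatim from the parent theorems.

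For the first bullet, I would start from the conclusion of \cref{thm:htoregret-general}, which (with probability at least $1-\delta T$) reads
\[
  \Risk \leq \frac{2\ascale}{T}\sum_{h=1}^H\sum_{t=1}^T \frac{1}{t}\,\Ccc{t}{h}{\pi^\star}{\nu}{\gamma\ind{t}} + \wt O\prn*{H\prn*{\frac{\Ccov\ascale}{T} + \bscale}}.
\]
Substituting $\ascale = \fa/\gamma$ turns the leading coefficient $\tfrac{2\ascale}{T}$ into $\tfrac{2\fa}{\gamma T}$, and plugging in the prescribed $\gamma = \widetilde{\Theta}\prn*{\sqrt{\fa\Ccov/(\fb T)}}$ gives $\tfrac{2\fa}{\gamma T} = 2\sqrt{\fa\fb/(T\Ccov)}$, matching the claimed coefficient on the concentrability sum. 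For the residual $\piterm$, I would note that under the same choice of $\gamma$ the two summands balance: $\tfrac{\Ccov\ascale}{T} = \tfrac{\Ccov\fa}{\gamma T} = \sqrt{\Ccov\fa\fb/T}$ and $\bscale = \fb\gamma = \sqrt{\Ccov\fa\fb/T}$, so $\piterm = \wt O\prn*{H\sqrt{\Ccov\fa\fb/T}}$, which is exactly the stated residual.

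For the second bullet, I would instead invoke \cref{thm:htoregret}, which additionally uses $\Cstar$-single-policy concentrability (via $\Ccc{t}{h}{\pi^\star}{\nu}{\gamma\ind{t}} \leq \Cstar$ together with the harmonic-sum bound $\sum_{t} \tfrac{1}{t} \leq 3\log T$) to collapse the concentrability sum into the residual, yielding $\Risk \leq \wt O\prn*{H\prn*{\ascale(\Cstar + \Ccov)/T + \bscale}}$ with probability at least $1-\delta T$. Substituting $\ascale = \fa/\gamma$, $\bscale = \fb\gamma$ and now choosing $\gamma = \widetilde{\Theta}\prn*{\sqrt{\fa(\Cstar+\Ccov)/(\fb T)}}$ balances the two terms at $\sqrt{\fa\fb(\Cstar+\Ccov)/T}$, giving $\Risk \leq \wt O\prn*{H\sqrt{(\Cstar+\Ccov)\fa\fb/T}}$.

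The only point requiring care — the closest thing to an obstacle — is feasibility of the chosen $\gamma$: both parent theorems require $\gamma \in [0,1]$, and the \Ccbounded-ness definition as well as the elliptical-potential arguments in their proofs are stated for $\gamma$ in this range. Since the prescribed $\gamma$ decreases like $1/\sqrt{T}$, the hypothesis "$T$ is such that $\gamma \in [0,1]$" supplies exactly this, so I would simply flag it and observe that the $\widetilde{\Theta}$ absorbs the logarithmic factors (e.g.\ the $\log T$ from the potential bounds and any logarithmic dependence hidden in $\fa,\fb$) that appear when balancing.
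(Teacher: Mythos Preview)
Your proposal is correct and follows essentially the same approach as the paper: both invoke \cref{thm:htoregret-general} and \cref{thm:htoregret}, substitute $\ascale = \fa/\gamma$ and $\bscale = \fb\gamma$, and choose $\gamma$ to balance the resulting $1/\gamma$ and $\gamma$ terms. The paper tracks the explicit constants inside $\piterm$ before optimizing (writing out $\gamma = 2\sqrt{\tfrac{\fa\Ccov}{\fb T}(8\log T + 1)}$ and the analogous expression in the second case), whereas you work directly at the $\wt O$ level, but this is purely cosmetic and the argument is the same.
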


\begin{proof}[\pfref{cor:alphabetagamma}] 
We start with the first case. We recall the definition of $\piterm$ appearing in \cref{thm:htoregret-general}:
\[
\piterm \coloneqq H \left(\frac{4\ascale \Ccov}{T} \left(\gamma + 8 \log(T) + 1\right) + \bscale\right) = \wt O\prn*{H\prn*{\frac{\Ccov\ascale}{T} + \bscale}}.
\]
and the risk bound from \cref{thm:htoregret-general}.
\begin{equation}
	\Risk \leq \frac{2 \ascale}{T}\sum_{h=1}^H \sum_{t=1}^T \frac{1}{t} \Ccc{t}{h}{\pi^\star}{\nu}{\gamma\ind{t}} + \piterm. 
\end{equation}
Plugging in $\ascale = \frac{\fa}{\gamma}$ and $\bscale = \fb \gamma$ into $\piterm$ gives
\[
\piterm = H\left(4\frac{\fa}{\gamma T} \Ccov \left(\gamma + 8 \log(T) + 1\right) + \fb \gamma\right).
\]
The above is optimized by picking $\gamma = 2\sqrt{\frac{\fa}{\fb}\frac{\Ccov}{T}(8\log(T)+1)}$. Plugging this in gives us
\[
\piterm = H\left(4\sqrt{\frac{\fa\fb \Ccov (8\log(T) +1)}{T}} + 4 \frac{\fa \Ccov}{T}\right) = \wt O \prn*{H\sqrt{\frac{\fa\fb \Ccov}{T}}}, 
\] 
as desired.
For the second result, recall from \cref{thm:htoregret} that the risk bound when $\nu$ satisfied $\Cstar$-policy-concentrability is
\[
\Risk \leq \frac{H\ascale}{T}\prn*{6 \Cstar \log(T) + 4\Ccov(\gamma + 8\log(T) +1)}  + H \bscale.
\]
Plugging in $\ascale = \frac{\fa}{\gamma}$ and $\bscale = \fb \gamma$ gives us 
\[
\Risk \leq \frac{H\fa}{T\gamma}\prn*{6 \Cstar \log(T) + 4\Ccov(\gamma + 8\log(T) +1)}  + H \fb \gamma.
\]
This expression is optimized by $\gamma = \sqrt{\frac{\fa \left(6 \Cstar \log(T) + 4\Ccov(\gamma + 8\log(T) +1)\right)}{T \fb}}$, which when substituted gives us the risk bound
\[
	\Risk \leq 2H\sqrt{\frac{\fa \fb \left(6 \Cstar \log(T) + 4\Ccov(\gamma + 8\log(T) +1)\right)}{T}} = \wt O \prn*{H \prn*{\sqrt{\frac{(\Cstar + \Ccov)\fa\fb}{T}}}},
\]
as desired.
\end{proof}

\subsection{Proofs for \HtO Examples (\creftitle{app:hybrid_examples})}
\label{app:hybrid_offline}

\subsubsection{Supporting Technical Results} 
\begin{lemma}[{Telescoping Performance Difference (\citet[Theorem 2]{xie2020q}; \citet[Lemma 3.1]{jin2021pessimism})}]\label{lemma:telescope-pd} For any \(f \in \cF\), we have that 
\[
J(\pi^\star) - J(\pi_f) \leq \sum_{h=1}^H \bbE_{d^{\pi^\star}_h}[\cT_h f_{h+1} (x_h, a_h) - f_h(x_h, a_h)] + \bbE_{d^{\pi_f}_h}[f_h(x_h, a_h) - \cT_h f_{h+1}(x_h, a_h)]. 
\]
\end{lemma}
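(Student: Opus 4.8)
The plan is to prove the bound by splitting the suboptimality $J(\pistar) - J(\pi_f)$ into a term governed by the rollout of $\pi_f$ and a term governed by the rollout of $\pistar$, reducing each to a sum of Bellman residuals. Concretely, I would insert the surrogate initial value $\En_{x_1 \sim d_1}[f_1(x_1, \pi_f(x_1))]$ and write
\[
J(\pistar) - J(\pi_f) = \underbrace{\big(J(\pistar) - \En_{d_1}[f_1(x_1, \pi_f(x_1))]\big)}_{(\mathrm{I})} + \underbrace{\big(\En_{d_1}[f_1(x_1, \pi_f(x_1))] - J(\pi_f)\big)}_{(\mathrm{II})}.
\]

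For term $(\mathrm{II})$, I would apply the performance difference identity (\cref{lem:pdl}) verbatim, which equals $\sum_{h=1}^H \En_{d_h^{\pi_f}}[f_h(x_h, a_h) - \cT_h f_{h+1}(x_h, a_h)]$; this is exactly the second group of terms in the claim, and it holds as an equality (not merely an inequality) precisely because $\pi_f$ is greedy with respect to $f$, so the $\max$ inside $\cT_h$ is realized along the rollout of $\pi_f$.

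The substance lies in term $(\mathrm{I})$. First I would use greediness of $\pi_f$, namely $f_1(x_1, \pi_f(x_1)) = \max_a f_1(x_1,a) \ge f_1(x_1, \pistar_1(x_1))$ pointwise, to upper-bound $(\mathrm{I})$ by $J(\pistar) - \En_{d_1}[f_1(x_1, \pistar_1(x_1))]$. Then I would telescope along the $\pistar$ trajectory: conditioning on $(x_h, a_h)$ and using the definition of $\cT_h$ with the tower property gives, for each $h$, the identity $\En^{\pistar}[f_h(x_h, a_h)] = \En_{d_h^{\pistar}}[f_h - \cT_h f_{h+1}] + \En^{\pistar}[r_h + \max_{a'} f_{h+1}(x_{h+1}, a')]$, and the bound $\max_{a'} f_{h+1}(x_{h+1}, a') \ge f_{h+1}(x_{h+1}, \pistar_{h+1}(x_{h+1}))$ converts this into a telescoping inequality whose boundary terms vanish (since $f_{H+1} \equiv 0$ and the terminal state carries zero reward). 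Summing over $h$ yields $\En_{d_1}[f_1(x_1, \pistar_1(x_1))] \ge J(\pistar) + \sum_{h=1}^H \En_{d_h^{\pistar}}[f_h - \cT_h f_{h+1}]$, equivalently $(\mathrm{I}) \le \sum_{h=1}^H \En_{d_h^{\pistar}}[\cT_h f_{h+1} - f_h]$, the first group of terms in the claim. Adding the bounds for $(\mathrm{I})$ and $(\mathrm{II})$ recovers the stated inequality.

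I expect the main obstacle to be bookkeeping the direction of the inequality in the $\pistar$ telescoping. Both relaxations used—replacing $\max_{a'} f_{h+1}$ by $f_{h+1}(\cdot, \pistar_{h+1})$ in $(\mathrm{I})$, and replacing $f_1(x_1,\pistar_1(x_1))$ by the larger $\max_a f_1(x_1,a)$ when reducing $(\mathrm{I})$—must each weaken the surrogate $\En_{d_1}[f_1]$ in the direction that makes the \emph{difference} an \emph{upper} bound after the sign flip, so the only real care needed is to verify that these two steps point consistently and that the telescoping endpoints cancel; everything else is the routine conditional-expectation manipulation already built into the definition of $\cT_h$.
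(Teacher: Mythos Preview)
Your proposal is correct and is precisely the standard argument: the paper does not give its own proof here but simply refers to \citet[Theorem 2]{xie2020q} and \citet[Lemma 3.1]{jin2021pessimism}, and your decomposition into the $\pi_f$-side identity (handled by \cref{lem:pdl}) plus the $\pistar$-side telescoping with the two greedy relaxations is exactly that argument adapted to the finite-horizon setting. The sign bookkeeping you flag as the main obstacle is carried out correctly.
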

This bound follows from a straightforward adaptation of the proof of  \citet[Theorem 2]{xie2020q} to the finite horizon setting. %

\begin{lemma} \label{lemma:clip-decomp} 
For all policy $\pi \in \Pi$, value function $f \in \cF$, timestep $h \in [H]$, data distribution $\mu = \{\mu_h\}_{h=1}^H$ where $\mu_h \in \Delta(\cX \times \cA)$, and $\gamma \in \bbR_+$, we have 
\[
  \En_{d^\pi_h}[\Df(x_h, a_h)] \leq \En_{\mu_h}\left[ \Df(x_h, a_h) \cdot \clip{\frac{d^{\pi}_h(x_h, a_h)}{\mu_h(x_h, a_h)}}{\gamma}\right] + 2\bbP^{\pi}\brk*{\frac{d^{\pi}_h(x_h, a_h)}{\mu_h(x_h, a_h)} > \gamma}. 
\] 
Similarly, 
\[
  \En_{d^\pi_h}[- \Df(x_h, a_h)] \leq \En_{\mu_h}\left[(- \Df(x_h, a_h)) \cdot  \clip{\frac{d^{\pi}_h(x_h, a_h)}{\mu_h(x_h, a_h)}}{\gamma}\right] + 2\bbP^{\pi}\brk*{\frac{d^{\pi}_h(x_h, a_h)}{\mu_h(x_h, a_h)} > \gamma}, 
\]
where recall that \(\Df(x, a) \ldef{} f_h(x, a) - [\cT_h f_{h+1}](x, a)\). 
\end{lemma}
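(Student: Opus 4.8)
The plan is to prove both inequalities by a single pointwise decomposition of the occupancy-weighted Bellman error, so I would focus on the first bound; the second is identical with $-\Df$ in place of $\Df$. Write $w_h(x,a)\ldef\nicefrac{d^\pi_h(x,a)}{\mu_h(x,a)}$ for the (possibly infinite) density ratio, using the conventions $\nicefrac{x}{0}=+\infty$ for $x>0$ and $\nicefrac{0}{0}=1$. The starting point is the elementary pointwise identity
\[
d^\pi_h(x,a)\,\Df(x,a) = \mu_h(x,a)\,\clip{w_h(x,a)}{\gamma}\,\Df(x,a) + e_h(x,a)\,\Df(x,a),\qquad e_h(x,a)\ldef d^\pi_h(x,a) - \mu_h(x,a)\clip{w_h(x,a)}{\gamma},
\]
valid at every $(x,a)\in\cX\times\cA$. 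Summing over $\cX\times\cA$, the first term on the right becomes exactly $\En_{\mu_h}\brk*{\Df\cdot\clip{w_h}{\gamma}}$, the leading term in the claimed bound, so it remains only to control the residual $\sum_{x,a} e_h(x,a)\Df(x,a)$.

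Second, I would analyze the residual weight $e_h$ case by case. On $\crl*{w_h\le\gamma}$ — where necessarily $\mu_h(x,a)>0$ or $d^\pi_h(x,a)=0$ — one has $\mu_h\clip{w_h}{\gamma}=\mu_h w_h = d^\pi_h$, so $e_h=0$. On $\crl*{w_h>\gamma}$, which also captures every point outside the support of $\mu_h$ on which $d^\pi_h>0$ (there $w_h=+\infty$), one has $\clip{w_h}{\gamma}=\gamma$, hence $e_h = d^\pi_h-\gamma\mu_h$; this is nonnegative because $\gamma\mu_h\le w_h\mu_h = d^\pi_h$ (interpreting $\gamma\mu_h=0\le d^\pi_h$ when $\mu_h=0$). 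Thus $e_h\ge 0$ everywhere, is supported on $\crl*{w_h>\gamma}$, and satisfies $0\le e_h\le d^\pi_h\,\indic\crl*{w_h>\gamma}$ there.

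Third, I would bound the residual term using $\abs{\Df(x,a)}\le 2$, which holds because $f_h\in[0,1]$ and $[\cT_h f_{h+1}]\in[0,2]$ under the reward normalization:
\[
\sum_{x,a} e_h(x,a)\,\Df(x,a) \le 2\sum_{(x,a):\,w_h(x,a)>\gamma} e_h(x,a) \le 2\sum_{(x,a):\,w_h(x,a)>\gamma} d^\pi_h(x,a) = 2\,\bbP^\pi\brk*{\tfrac{d^\pi_h}{\mu_h}>\gamma}.
\]
Combined with the identity from the first step, this yields the first inequality; replacing $\Df$ by $-\Df$ throughout (and noting $\abs{-\Df}=\abs{\Df}\le 2$) gives the second verbatim. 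The only genuinely delicate point is the bookkeeping around the density-ratio convention at points where $\mu_h$ vanishes: I would stress that the decomposition is purely pointwise and that nonnegativity of $e_h$ must be checked on that exceptional set, but once $e_h\ge 0$ and $e_h\le d^\pi_h\indic\crl*{w_h>\gamma}$ are established the remaining estimate is a one-line bound.
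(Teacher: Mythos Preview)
Your proof is correct and follows essentially the same approach as the paper's: both isolate the clipped weight and bound the residual by the tail probability $\bbP^\pi[w_h>\gamma]$. The paper's organization differs only cosmetically---it treats the case $\mu_h(x,a)=0$ separately and uses the one-sided bound $\Df\le 1$ on each of two residual pieces to obtain the factor $2$, whereas you bound a single residual via $\abs{\Df}\le 2$---but the argument is the same.
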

\begin{proof}[\pfref{lemma:clip-decomp}] In the following, we prove the first inequality. The second inequality follows similarly. Using that \(\abs{\Df(x, a)} \leq 1\) for any \(x, a \in \cX \times \cA\), we have 
\begin{align*}
  \En_{d^\pi_h}[\Df(x_h, a_h)] &\leq \En_{d^\pi_h}[ \Df(x_h, a_h) \cdot \indic\crl*{ \mu_h(x_h, a_h) \neq 0}] + \En_{d^\pi_h}[ \indic\crl*{ \mu_h(x_h, a_h) = 0}].
  \end{align*}
For the second term, for any \(\gamma > 0\), 
\begin{align*}
\En_{d^\pi_h}[ \indic\crl*{ \mu_h(x_h, a_h) = 0}] &\leq \En_{d^\pi_h}\brk*{\indic\crl*{\frac{d^{\pi}_h(x_h, a_h)}{\mu_h(x_h, a_h)} > \gamma}} = \bbP^{\pi}\brk*{\frac{d^{\pi}_h(x_h, a_h)}{\mu_h(x_h, a_h)} > \gamma}. 
\end{align*}
For the first term, using that $u \leq \min\{u,v\} + u \indic\crl{u \geq v}$ for all \(u, v \geq 0\), and that \(\abs{\Df(x, a)} \leq 1\) for any \(x, a \in \cX \times \cA\), we get that  
\begin{align*}
\hspace{1in}&\hspace{-1in} \En_{d^\pi_h}\left[ \Df(x_h, a_h) \cdot \indic\crl*{ \mu_h(x_h, a_h) \neq 0}\right] \\  
  &= \En_{\mu_h}\left[ \Df(x_h, a_h) \cdot \frac{d^{\pi}_h(x_h, a_h)}{\mu_h(x_h, a_h)} \indic\crl*{ \mu(x_h, a_h) \neq 0}\right] \\
  &\leq \En_{\mu_h}\left[ \Df(x_h, a_h) \cdot \clip{\frac{d^{\pi}_h(x_h, a_h)}{\mu_h(x_h, a_h)}}{\gamma}\indic\crl*{ \mu_h(x_h, a_h) \neq 0}\right] \\ 
  &\hspace{1in} + \En_{\mu_h}\left[\frac{d^{\pi}_h(x_h, a_h)}{\mu_h(x_h, a_h)}\indic\crl*{\frac{d^{\pi}_h(x_h, a_h)}{\mu_h(x_h, a_h)} > \gamma}\indic\crl*{ \mu_h(x_h, a_h) \neq 0}\right] \\ 
  &= \En_{\mu_h}\left[ \Df(x_h, a_h) \cdot \clip{\frac{d^{\pi}_h(x_h, a_h)}{\mu_h(x_h, a_h)}}{\gamma} \indic\crl*{ \mu_h(x_h, a_h) \neq 0}\right] + \En_{d^\pi_h}\brk*{\indic\crl*{\frac{d^{\pi}_h(x_h, a_h)}{\mu_h(x_h, a_h)} > \gamma}}. 
\end{align*} 

Furthermore, also note that 
\begin{align*}
	  \En_{\mu_h}\left[ \Df(x_h, a_h) \cdot \clip{\frac{d^{\pi}_h(x_h, a_h)}{\mu_h(x_h, a_h)}}{\gamma}\indic\crl*{ \mu_h(x_h, a_h) = 0 }\right] \\
  &\hspace{-2in}= \sum_{(x_h, a_h) ~\text{s.t.}~\mu_h(x_h, a_h)=0} \mu_h(x_h, a_h) \cdot \Df(x_h, a_h) \cdot  \clip{\frac{d^{\pi}_h(x_h, a_h)}{\mu_h(x_h, a_h)}}{\gamma} = 0.
\end{align*}

The final bound follows by combining the above three terms. 
\end{proof}

\begin{lemma}\label{lem:bloop}
For any policy \(\pi\), data distribution \(\mu = \{\mu_h\}_{h=1}^H\) where $\mu_h \in \Delta(\cX \times \cA)$,  scale \(\gamma \in \bbR_+\) and horizon \(h \in [H]\), we have 
\[
\bbP^{\pi}\left[\frac{d^{\pi}_h(x_h, a_h)}{\mu_h(x_h, a_h)} > \gamma\right] \leq %
\frac{2}{\gamma} \left\| \clip{\frac{d^\pi_h}{\mu_h}}{\gamma}\right\|_{1,d^\pi_h}. 
\]
\end{lemma}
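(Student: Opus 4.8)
The plan is to recognize this as a Markov-type inequality applied to the \emph{clipped} density ratio, exploiting the fact that clipping loses nothing precisely on the event whose probability we wish to control. Write $w_h \ldef d^\pi_h/\mu_h$ for the (possibly unbounded) density ratio, adopting the paper's conventions that $x/0 = +\infty$ for $x>0$ and $0/0 = 1$, so that $\clip{w_h}{\gamma} = \min\crl{w_h,\gamma}$ is a nonnegative function bounded by $\gamma$ everywhere — including on the set where $\mu_h$ vanishes, where $w_h = +\infty$ but $\clip{w_h}{\gamma} = \gamma$.

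First I would lower bound the $L_1(d^\pi_h)$ norm by restricting the expectation to the event $\crl{w_h > \gamma}$. Since $\clip{w_h}{\gamma} \geq 0$ pointwise, dropping the complementary region only decreases the expectation:
\[
\nrm*{\clip{\frac{d^\pi_h}{\mu_h}}{\gamma}}_{1,d^\pi_h} = \En_{d^\pi_h}\brk*{\clip{w_h}{\gamma}} \geq \En_{d^\pi_h}\brk*{\clip{w_h}{\gamma}\cdot \indic\crl{w_h > \gamma}}.
\]
Next I would invoke the defining property of the clip: on the event $\crl{w_h > \gamma}$ we have $\clip{w_h}{\gamma} = \gamma$ exactly, so the right-hand side above equals $\gamma \cdot \bbP^\pi\brk{w_h > \gamma}$. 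Rearranging yields $\bbP^\pi\brk{w_h > \gamma} \leq \gamma^{-1}\nrm*{\clip{w_h}{\gamma}}_{1,d^\pi_h}$, which is already a factor of two stronger than the claimed bound; the stated constant $2/\gamma$ then follows immediately by nonnegativity of the norm.

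There is essentially no serious obstacle here. The only point requiring a moment's care is the bookkeeping around states $(x,a)$ with $\mu_h(x,a)=0$: there the ratio is $+\infty$, so such states lie in $\crl{w_h > \gamma}$ and contribute $\gamma \cdot d^\pi_h(x,a)$, consistent with the convention, so no probability mass is mishandled. Accordingly, I would take care to justify the indicator-based restriction purely by the pointwise nonnegativity of $\clip{w_h}{\gamma}$, rather than by any integrability or change-of-measure argument, which sidesteps entirely the fact that $w_h$ itself may be unbounded.
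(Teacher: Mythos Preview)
Your argument is correct and in fact sharper than what the paper proves: you obtain $\bbP^\pi[w_h > \gamma] \leq \gamma^{-1}\nrm{\clip{w_h}{\gamma}}_{1,d^\pi_h}$, a factor of two better than the stated bound. The key observation --- that $\clip{w_h}{\gamma}$ equals $\gamma$ exactly on the event $\{w_h > \gamma\}$ and is nonnegative elsewhere, so a direct Markov-type restriction suffices --- is both valid and fully handles the $\mu_h = 0$ edge case as you note.

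The paper takes a different route: it first passes to the \emph{regularized} ratio $d^\pi_h/(\mu_h + \gamma^{-1}d^\pi_h)$, observing that $\{d^\pi_h/\mu_h > \gamma\}$ implies $\{d^\pi_h/(\mu_h + \gamma^{-1}d^\pi_h) > \gamma/2\}$ (this is where the factor of two enters), then applies Markov to the regularized ratio, and finally uses the pointwise bound $d^\pi_h/(\mu_h + \gamma^{-1}d^\pi_h) \leq \clip{d^\pi_h/\mu_h}{\gamma}$ to return to the clipped norm. This regularization trick is a standard device for smoothing unbounded ratios and appears elsewhere in the paper's analysis, which may explain why the authors reached for it here; but for this particular lemma your direct argument is more elementary, loses nothing, and improves the constant.
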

\begin{proof}[\pfref{lem:bloop}] 
Note that
\begin{align*}
\bbP^{\pi}\left[\frac{d^{\pi}_h(x_h, a_h)}{\mu_h(x_h, a_h)} > \gamma\right] &= \En_{d^\pi_h}\brk*{\indic\crl*{\frac{d^{\pi}_h(x_h, a_h)}{\mu_h(x_h, a_h)} > \gamma}} \\ 
&\leq \En_{d^\pi_h}\brk*{\indic\crl*{\frac{d^{\pi}_h(x_h, a_h)}{\mu_h(x,a) + \gamma^{-1}d^\pi_h(x_h, a_h)} > \frac{\gamma}{2}}} \\
&\leq \frac{2}{\gamma} \En_{d^\pi_h}\left[\frac{d^{\pi}_h(x_h, a_h)}{\mu_h(x_h,a_h) +  \gamma^{-1}d^\pi_h(x_h, a_h)}\right] \\
&\leq \frac{2}{\gamma} \En_{d^\pi_h}\left[\clip{\frac{d^{\pi}_h(x_h, a_h)}{\mu_h(x_h, a_h)}}{\gamma}\right] = \frac{2}{\gamma} \left\| \clip{\frac{d^\pi_h}{\mu_h}}{\gamma}\right\|_{1,d^\pi_h}, 
\end{align*}
where the first inequality follows from 
\[
\gamma^{-1} d^{\pi}_h(x,a) > \mu_h(x,a) \implies \gamma^{-1}d^{\pi}_h(x,a) > \frac{1}{2}(\mu_h(x,a) + \gamma^{-1} d^{\pi}_h(x,a)),
\]
and the second inequality follows by Markov's inequality.
\end{proof}

\begin{lemma}\label{lem:blorp}
For any policy \(\pi\), data distribution \(\mu = \{\mu_h\}_{h=1}^H\) where $\mu_h \in \Delta(\cX \times \cA)$, scale \(\gamma \in \bbR_+\), and horizon \(h \in [H]\), we have 
\[
\left\|\clip{\frac{d^\pi_h}{\mu_h}}{\gamma}\right\|_{2,\mu_h}^2 \leq 2\left\|\clip{\frac{d^\pi_h}{\mu_h}}{\gamma}\right\|_{1,d^\pi_h}.
\]
\end{lemma}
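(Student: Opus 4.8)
The plan is to reduce the claim to a pointwise comparison of the two integrands and then sum; this in fact yields the stronger bound with constant $1$ rather than $2$.

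First I would introduce the shorthand $w_h \ldef \nicefrac{d^\pi_h}{\mu_h}$ and $\check{w}_h \ldef \clip{w_h}{\gamma} = \min\crl*{w_h,\gamma}$, and expand both sides as sums over $\cX \times \cA$: the left-hand side is $\nrm*{\check{w}_h}_{2,\mu_h}^2 = \sum_{x,a} \mu_h(x,a)\,\check{w}_h(x,a)^2$, while the right-hand side (ignoring the factor $2$) is $\nrm*{\check{w}_h}_{1,d^\pi_h} = \sum_{x,a} d^\pi_h(x,a)\,\check{w}_h(x,a)$. Since $\check{w}_h \geq 0$ throughout, it suffices to establish the termwise inequality $\mu_h(x,a)\,\check{w}_h(x,a)^2 \leq d^\pi_h(x,a)\,\check{w}_h(x,a)$.

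The key observation driving the termwise bound is that $\check{w}_h(x,a)\,\mu_h(x,a) = \min\crl*{d^\pi_h(x,a),\,\gamma\mu_h(x,a)} \leq d^\pi_h(x,a)$ at any pair with $\mu_h(x,a) > 0$. Multiplying this inequality through by $\check{w}_h(x,a) \geq 0$ gives exactly $\mu_h(x,a)\,\check{w}_h(x,a)^2 \leq d^\pi_h(x,a)\,\check{w}_h(x,a)$, and summing over all such pairs yields $\nrm*{\check{w}_h}_{2,\mu_h}^2 \leq \nrm*{\check{w}_h}_{1,d^\pi_h}$, which is stronger than the claimed bound with the factor $2$.

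The only delicate point — and the main (if minor) obstacle — is the bookkeeping at pairs where $\mu_h(x,a) = 0$, at which $w_h$ is defined through the conventions $\nicefrac{x}{0} = +\infty$ and $\nicefrac{0}{0} = 1$. Such pairs contribute nothing to the $\mu_h$-weighted sum on the left, since the prefactor $\mu_h(x,a)$ vanishes there, so they may be dropped from that sum, whereas on the right they only add nonnegative terms. Hence the termwise argument restricted to the support of $\mu_h$ already closes the proof, and I would dispatch the null-mass pairs exactly as in the zero-measure accounting used in the proof of \cref{lemma:clip-decomp}.
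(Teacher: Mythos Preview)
Your proof is correct and in fact cleaner than the paper's, yielding the sharper constant $1$ in place of $2$. The paper proceeds differently: it splits $\min\{u,v\}^2$ via the inequality $\min\{u,v\}^2 \leq 2\bigl(u^2\indic\{u\leq v\} + uv\,\indic\{v<u\}\bigr)$ (applied with $u = d^\pi_h/\mu_h$ and $v = \gamma$), then performs a change of measure from $\mu_h$ to $d^\pi_h$ on each piece, and finally reassembles the two indicator terms back into $\min\{u,v\}$ under $d^\pi_h$. This three-step decomposition is where the factor $2$ enters. Your argument bypasses the split entirely by observing directly that $\check{w}_h\,\mu_h = \min\{d^\pi_h,\gamma\mu_h\} \leq d^\pi_h$ pointwise on the support of $\mu_h$, which after multiplying by $\check{w}_h \geq 0$ gives the termwise bound immediately. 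The paper's authors themselves note in a comment that the constant can be improved, and your route shows exactly how.
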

\begin{proof}[\pfref{lem:blorp}] 
Beginning with the left-hand side, we have,
\begin{align*}
\En_{\mu_h} \brk*{ \min\crl*{ \frac{d^{\pi}_h(x_h, a_h)}{\mu_h(x_h, a_h)},\gamma}^2} &\stackrel{\mathrm{(i)}}{\leq} 2 \En_{\mu_h} \brk*{  \prn*{\frac{d^{\pi}_h(x_h, a_h)}{\mu_h(x_h, a_h)}}^2 \indic\crl*{ \frac{d^{\pi}_h(x_h, a_h)}{\mu_h(x_h, a_h)} \leq \gamma}} \\ 
&\hspace{1in} + 2 \En_{\mu_h} \brk*{\gamma \cdot \frac{d^{\pi}_h(x_h, a_h)}{\mu_h(x_h, a_h)}\cdot \indic\crl*{\frac{d^{\pi}_h(x_h, a_h)}{\mu_h(x_h, a_h)} > \gamma }} \\   
&\stackrel{\mathrm{(ii)}}{\leq} 2 \En_{d^\pi_h}\brk*{ \frac{d^{\pi}_h(x_h, a_h)}{\mu_h(x_h, a_h)} \indic\crl*{ \frac{d^{\pi}_h(x_h, a_h)}{\mu_h(x_h, a_h)} \leq \gamma}}  + 2 \En_{d^\pi_h}\brk*{\gamma \cdot \indic\crl*{\frac{d^{\pi}_h(x_h, a_h)}{\mu_h(x_h, a_h)} > \gamma }}\\ 
&\stackrel{\mathrm{(iii)}}{\leq} 2 \En_{d^\pi_h}\brk*{\min\crl*{\frac{d^{\pi}_h(x_h, a_h)}{\mu_h(x_h, a_h)}, \gamma}}.
\end{align*}
\pacomment{actually I just noticed this constant can be improved to $2$ (was $4$), but not worth it to change this everywhere.} \ascomment{Not worth it!} 

In $\mathrm{(i)}$, we have used that for all $u,v \in \bbR^+$, $\min\{u,v\} \leq u \indic\crl*{u \leq v} + \sqrt{u v} \indic\crl*{v < u}$ and that $(u + v)^2 \leq 2(u^2 + v^2)$, thus that $\min\{u,v\}^2 \leq 2(u^2 \indic\{u \leq v\} + uv \indic\{v < u\})$.
In $\mathrm{(ii)}$, we have done a change of measure from $\mu_h$ to $d^\pi_h$.
In $\mathrm{(iii)}$, we have used that $\min\{u,v\} = u \indic\{u \leq v\} + v \indic\{v < u\}$.

\end{proof}

\subsubsection{Proofs for \Crmabo (Proof of \cref{thm:mabooffline})} \label{app:maboofline}

Suppose the dataset \(\cD = \crl{\cD_h}_{h \leq H}\) is sampled from the offline distribution \(\mu\ind{1:n}\). In this section, we analyze our regularized and clipped variant of \textsc{Mabo} (\eqref{eq:clip-mabo}). %
We analyze \Crmabo under the following density ratio assumption. Recall for any sequence of data distributions $\mu\ind{1}, \ldots \mu\ind{n}$, we denote the mixture distribution by $\mu\ind{1:n} = \{\mu\ind{1:n}_h\}_{h=1}^H$, defined by $\mu\ind{1:n}_h \coloneqq \frac{1}{n} \sum_{i=1}^n \mu\ind{i}_h$.
\begin{assumption}\label{ass:mabo-assumptions}
	For a given sequence of data distributions $\mu\ind{1}, \ldots \mu\ind{n}$, we have that for all $\pi \in \Pi$, and for all $h \in [H]$, 
	\[
		\frac{d^\pi_h}{\mu\ind{1:n}_h} \in \cW.
	\]
	
\end{assumption}

We first note the following bound for the hypothesis \(\wh f\) returned by \Crmabo. 

\begin{lemma}
\label{lem:mabo_concentration}
Let $\cD = \{\cD_h\}_{h=1}^H$ be a dataset consisting of $H \cdot n$ samples from $\mu\ind{1}, \ldots, \mu\ind{n}$. Suppose that $\cF$ satisfies \cref{ass:Q_realizability} and that $\cW$ satisfies \cref{ass:mabo-assumptions} with respect to $\mu\ind{1:n}$. %
Let \(\wh f\) be the function returned by \Crmabo, given in \eqref{eq:clip-mabo}, when executed on \(\crl{\cD_h}_{h \leq H}\) with parameters $\gamma, \cF,$ and the augmented weight function class $\widebar{\cW}$ defined in \eqref{eq:augmented-w}. %
Then, with probability at least \(1 - \delta\), we have 
\begin{equation}\label{eq:mabo-concentration}
  \sum_{h=1}^H \left| \En_{\mu\ind{1:n}_h}\brk*{\Dfh(x_h, a_h) \cdot \cw_h(x_h, a_h)} \right| \leq  \sum_{h=1}^H \frac{20}{\gamma\ind{n}} \En_{\mu\ind{1:n}_h} \brk*{\prn*{\cw_h(x_h,a_h)}^2}  
  + \frac{7}{18} H^2 \beta\ind{n}, 
\end{equation}
for all $w \in \cW$, where \(\beta\ind{n} \ldef{} \frac{36 \gamma\ind{n}}{n-1} \log(24 \abs{\cF} \abs{\cW} H^2 / \delta)\). 
\end{lemma}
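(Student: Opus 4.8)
The plan is to transcribe the confidence-set analysis of \Alg (\cref{lem:reg_concentration,lem:concentration2}) to the offline min-max estimator \eqref{eq:clip-mabo}, with the mixture distribution $\mu\ind{1:n}_h$ playing the role of the historical distribution $\dbar\ind{t}_h$ and the $n$ (possibly adaptively drawn) samples in $\cD_h$ forming the martingale. First I would establish an offline analog of \cref{lem:reg_concentration} over the augmented class $\widebar{\cW}$. Applying Freedman's inequality (\cref{lem:freedman}) with $\eta = 1/(3\gamma\ind{n})$ to the martingale difference whose $i$-th term is the empirical weighted residual $\cw_h(x\ind{i}_h,a\ind{i}_h)\,\whDf(\cdots)$ minus its conditional mean under $\mu\ind{i}_h$ (whose conditional second moment is $\le 10\,\En_{\mu\ind{i}_h}[(\cw_h)^2]$, using $\lvert\whDf\rvert\le 2$ and $\lvert\cw_h\rvert\le\gamma\ind{n}$), together with \cref{lem:multiplicative_freedman} applied to $(\cw_h)^2$, yields with probability $\ge 1-\delta$, simultaneously for all $f\in\cF$, $w\in\widebar{\cW}$, and $h\in[H]$: (a) $\lvert\widehat{\bbE}_{\cD_h}[\whDf\,\cw_h]-\En_{\mu\ind{1:n}_h}[\Df\,\cw_h]\rvert\le\tfrac{10}{3\gamma\ind{n}}\En_{\mu\ind{1:n}_h}[(\cw_h)^2]+\tfrac{\beta\ind{n}}{12}$, together with the two-sided comparisons between $\widehat{\bbE}_{\cD_h}[(\cw_h)^2]$ and $\En_{\mu\ind{1:n}_h}[(\cw_h)^2]$. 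The union bound runs over $\cF$, the $H$ layers, and $\widebar{\cW}$, whose size is $\le 4H\abs{\cW}$, producing the $\log(24\abs{\cF}\abs{\cW}H^2/\delta)$ factor in $\beta\ind{n}$.

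Next I would use realizability and optimality to show the objective value at $\wh f$ is small. Since $\Qstar\in\cF$ satisfies $\Df\equiv 0$ pointwise at $f=\Qstar$ by Bellman optimality, we have $\En_{\mu\ind{1:n}_h}[\Df\,\cw_h]=0$ for every $w$. Feeding this into (a) and absorbing the resulting $\tfrac{10}{3\gamma}\En_{\mu}[(\cw_h)^2]$ term into the penalty $-\alpha\ind{n}\widehat{\bbE}[(\cw_h)^2]$ (using $\alpha\ind{n}=8/\gamma\ind{n}$ and the empirical-to-population comparison, so that $\tfrac{20}{3\gamma}\le\tfrac{8}{\gamma}$ makes the squared terms cancel), each layer's contribution to the objective at $\Qstar$ is at most $\tfrac{7}{18}\beta\ind{n}$. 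Hence the objective value at $\Qstar$—and therefore, by minimality of $\wh f$, at $\wh f$—is at most $\tfrac{7}{18}H\beta\ind{n}$.

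Finally I would extract a per-layer population bound using the single-layer, sign-augmented elements of $\widebar{\cW}$. For fixed $w\in\cW$ and layer $h$, choosing the element $\sigma_h w\ind{h}\in\widebar{\cW}$, where $\sigma_h$ is the sign of the population weighted residual at layer $h$ and $w\ind{h}$ vanishes off layer $h$ as in \eqref{eq:w-ind-h}, lower bounds the objective at $\wh f$ by the single surviving term $\lvert\widehat{\bbE}_{\cD_h}[\cw_h\,\whDwhf]\rvert-\alpha\ind{n}\widehat{\bbE}[(\cw_h)^2]$; this is exactly where $\widebar{\cW}$ is essential, as it decouples the layer-wise sum inside the single max while selecting the correct sign. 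Rearranging gives $\lvert\widehat{\bbE}_{\cD_h}[\cw_h\,\whDwhf]\rvert \le (\text{objective value}) + \alpha\ind{n}\widehat{\bbE}[(\cw_h)^2]$, and converting back to the population quantities via (a) and the squared-weight comparison, the $\tfrac{1}{\gamma}$-weighted terms combine as $\tfrac{10}{3\gamma}+\tfrac{12}{\gamma}\le\tfrac{20}{\gamma}$ (the $\tfrac{12}{\gamma}$ coming from $\alpha\ind{n}=8/\gamma\ind{n}$ times the $\tfrac32$ factor in part (c)). Summing over the $H$ layers and inserting the $\tfrac{7}{18}H\beta\ind{n}$ bound on the objective value—which is incurred once per layer and thus multiplied by $H$—produces the advertised $\sum_h\tfrac{20}{\gamma\ind{n}}\En_{\mu\ind{1:n}_h}[(\cw_h)^2]+\tfrac{7}{18}H^2\beta\ind{n}$.

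The hard part will be the constant bookkeeping in the last step—ensuring the several $\tfrac{1}{\gamma}$-weighted squared-weight contributions (from (a), from the penalty $\alpha\ind{n}$, and from the empirical/population conversions) collapse to exactly $\tfrac{20}{\gamma\ind{n}}$—together with correctly justifying the boundedness $\lvert\cw_h\rvert\le\gamma\ind{n}$ for every element of $\widebar{\cW}$. The latter requires that clipping be applied before the sign flips and single-layer restrictions, so that these operations preserve the clip bound (this is precisely the property encoded in \cref{ass:reparameterized_class} and used for the efficient reparameterization); it is the one place where the structure of $\widebar{\cW}$, rather than a generic class of test functions, is really needed.
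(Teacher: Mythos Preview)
Your proposal is correct and follows essentially the same route as the paper's proof: port the Freedman-based concentration of \cref{lem:reg_concentration} to the offline mixture $\mu\ind{1:n}$, certify that $\Qstar$ achieves a small objective value (hence so does $\wh f$ by optimality), and then use the single-layer sign-augmented test functions $\pm w\ind{h}\in\widebar{\cW}$ to decouple the layers and obtain the per-layer population bound with absolute values. The only cosmetic difference is ordering: the paper first converts the summed empirical bound for $\wh f$ to population (invoking the argument of \cref{lem:concentration2}(b)) and \emph{then} specializes to $\pm w\ind{h}$ to extract a single layer, whereas you extract the single layer at the empirical level first and convert afterward; both orderings are valid. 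Your flag about the clipping needing to precede the sign flips and single-layer restrictions (so that $\lvert\cw_h\rvert\le\gamma\ind{n}$ holds for every element of $\widebar{\cW}$) is well-placed and is indeed the right way to read the construction.
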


\begin{proof}[\pfref{lem:mabo_concentration}]
 Repeating the argument of \cref{lem:concentration2} (a), we can establish that $Q^\star$ satisfies the following bound for all $h \in [H], w \in \widebar{\cW}$:
\begin{equation}\label{eq:byol}
  \wh \En_{\cD_h} \brk*{
    \prn[\big]{ 
    \brk{\wh \Delta_h Q^\star}(x_h, a_h, r_h, x'_{h+1})} \cdot 
    \check w _h(x_h,a_h)} \leq{} \wh \En_{\cD_h} \brk*{ \alpha\ind{n}\cdot \prn[\big]{ \check w _h(x_h,a_h)}^2 } +  \beta\ind{n},
\end{equation}
with probability at least $1-\delta$, where $\alpha\ind{n} = \nicefrac{8}{\gamma\ind{n}}$ and $\beta\ind{n} = \nicefrac{36 \gamma\ind{n}}{n-1} \log(6 \abs{\cF} \abs{\widebar{\cW}} H / \delta) \leq \nicefrac{36 \gamma\ind{n}}{n-1} \log(24 \abs{\cF} \abs{\cW} H^2 / \delta)$. Going forward, we condition on the event that this holds. Now, since the right-hand side of \eqref{eq:byol} is independent of the sign of $\wtil_h$, we can apply this to $-\wtil \in \widebar{\cW}$ to conclude that 
\[
\abs*{  \wh \En_{\cD_h} \brk*{
    \prn[\big]{ 
    \brk{\wh \Delta_h Q^\star}(x_h, a_h, r_h, x'_{h+1})} \cdot 
    \check w _h(x_h,a_h)}} \leq{} \wh \En_{\cD_h} \brk*{ \alpha\ind{n}\cdot \prn[\big]{ \check w _h(x_h,a_h)}^2 } +  \beta\ind{n}.
\]
  Summing over $h \in [H]$ and taking the max over $w \in \widebar{\cW}$, we can conclude that:
\[
 \max_{w \in \widebar{\cW}} \sum_{h=1}^H \abs*{\wh \En_{\cD_h} \brk*{
    \prn[\big]{ 
    \brk{\wh \Delta_h Q^\star}(x_h, a_h, r_h, x'_{h+1})} \cdot 
    \check w _h(x_h,a_h)}} - \wh \En_{\cD_h} \brk*{ \alpha\ind{n}\cdot \prn[\big]{ \check w _h(x_h,a_h)}^2} \leq  H\beta\ind{n}.
\]
By \cref{ass:Q_realizability} and the definition of the hypothesis $\wh f$ returned by \Crmabo, we have:
\[
 \max_{w \in \widebar{\cW}} \sum_{h=1}^H \abs*{\wh \En_{\cD_h} \brk*{
    \prn[\big]{ 
    \whDwhf(x_h, a_h, r_h, x'_{h+1})} \cdot 
    \check w _h(x_h,a_h)}} - \wh \En_{\cD_h} \brk*{ \alpha\ind{n}\cdot \prn[\big]{ \check w _h(x_h,a_h)}^2} \leq  H\beta\ind{n},
\]
and in particular the bound that for all $w \in \widebar{\cW}$
\[
 \sum_{h=1}^H \wh \En_{\cD_h} \brk*{
    \prn[\big]{ 
    \whDwhf(x_h, a_h, r_h, x'_{h+1}) \cdot 
    \check w _h(x_h,a_h)}} \leq \sum_{h=1}^H \wh \En_{\cD_h} \brk*{ \alpha\ind{n}\cdot \prn[\big]{ \check w _h(x_h,a_h)}^2} + H\beta\ind{n}.
\]
Repeating the argument for \cref{lem:concentration2} (b), we can conclude that for all $w \in \widebar{\cW}$
\[
  \sum_{h=1}^H \En_{\mu\ind{1:n}_h}\brk*{\brk{\Delta_h \wh f}(x_h, a_h) \cdot \cw_h(x_h,a_h)} \leq \sum_{h=1}^H \frac{20}{\gamma\ind{n}} \En_{\mu\ind{1:n}_h} \brk*{\prn{\cw_h(x_h,a_h)}^2} 
  + \frac{7 H \beta\ind{n}}{18}.
\]
Applying this to $w\ind{h} \in \widebar{\cW}$ and to $-w\ind{h} \in \widebar{\cW}$, and again noting that the right-hand side is independent of the sign of $\wtil$, we can conclude that for each $h \in [H], w \in \widebar{\cW}$:
\[
  \abs*{ \En_{\mu\ind{1:n}_h}\brk*{\brk{\Delta_h \wh f}(x_h, a_h) \cdot \cw_h(x_h,a_h)}} \leq \frac{20}{\gamma\ind{n}} \En_{\mu\ind{1:n}_h} \brk*{\prn{\cw_h(x_h,a_h)}^2} 
  + \frac{7 H\beta\ind{n}}{18},
\]
Summing over $h \in [H]$ gives the desired bound. 
\end{proof}

\mabooffline*

\begin{proof}[\pfref{thm:mabooffline}] 
Let \(\wh \pi = \pi_{\wh f}\). Using the performance difference lemma (\cref{lemma:telescope-pd}), we note that 
\begin{align*}
J(\pi^\star) - J(\wh \pi) \leq {\sum_{h=1}^H \En_{d^{\pistar}_h}[- \Dfh(x_h,a_h)]} + {\sum_{h=1}^H \En_{d^{\pihat}_h}[\Dfh(x_h,a_h)]}.  \numberthis \label{eq:cmabo1} 
\end{align*} 
However, note that due to \cref{lemma:clip-decomp} and \cref{lem:bloop}, we have that  
\begin{align}
  \En_{d^{\pihat}_h}[(\Dfh(x_h,a_h))] &\leq \underbrace{\En_{\mu\ind{1:n}_h}\brk*{ ( \Dfh(x_h,a_h)) \clip{\frac{d^{\pihat}_h(x_h,a_h)}{\mu\ind{1:n}_h(x_h,a_h)}}{\gamma n}}}_{\text{(I)}} +  \frac{4}{\gamma n} \left\| \clip{\frac{d^{\pihat}_h}{\mu\ind{1:n}_h}}{\gamma n} \right\|_{1,d^{\pihat}_h} \label{eq:mabo-hat-pi}, 
  \intertext{and,} 
    \En_{d^{\pistar}_h} [(- \Dfh(x_h,a_h))] &\leq \underbrace{\En_{\mu\ind{1:n}_h}\brk*{(- \Dfh(x_h,a_h)) \clip{\frac{d^{\pi^\star}_h(x_h,a_h)}{\mu\ind{1:n}_h(x_h,a_h)}}{\gamma n}}}_{\text{(II)}} + \frac{4}{\gamma n} \left\| \clip{\frac{d^{\pistar}_h}{\mu\ind{1:n}_h}}{\gamma n} \right\|_{1,d^{\pistar}_h} \label{eq:mabo-pi-star}. 
\end{align}

We bound the terms \(\text{(I)}\) and \(\text{(II)}\) separately below. Before we delve into these bounds, note that using \pref{lem:mabo_concentration}, we have with probability at least \(1 - \delta\), 
\begin{equation}\label{eq:poplossmabo}
  \max_{w \in \cW} \sum_{h=1}^H \left(\left| \En_{\mu\ind{1:n}_h} \left[ \check{w}_h (\Dfh(x_h,a_h)) \right]\right| - \frac{20}{\gamma\ind{n}}\En_{\mu\ind{1:n}_h} \brk*{(\check{w}_h(x_h,a_h))^2} \right) \leq \frac{7}{18}H^2\beta\ind{n}, 
\end{equation} 
where \(\beta\ind{n} \ldef{} \frac{36\gamma n}{n - 1}\log(24|\cF||\cW|H^2/\delta)\). 
Moving forward, we condition on the event under which \eqref{eq:poplossmabo} holds.  

\paragraph{Bound on Term (I)} Define $w$ via $w_h \coloneqq \frac{d^{\pihat}_h}{\mu\ind{1:n}_h}$ and $\check{w}_h \coloneqq \clip{\frac{d^{\pihat}_h}{\mu\ind{1:n}_h} }{\gamma n}$, and note that due to \cref{ass:mabo-assumptions}, we have that  \(w \in \cW\). Thus, using \pref{eq:poplossmabo}, we get that 
\begin{align*} 
\sum_{h=1}^H \En_{\mu\ind{1:n}_h}[ (- \Dfh(x_h,a_h)) \check{w}_h(x_h,a_h)] &\leq \sum_{h=1}^H \left| \En_{\mu\ind{1:n}_h}[ (- \Dfh(x_h,a_h)) \check{w}_h(x_h,a_h)] \right| \\
&\leq \sum_{h=1}^H \frac{20}{\gamma\ind{n}} \En_{\mu\ind{1:n}_h} \brk*{ (\check{w}_h(x_h,a_h))^2 } + \frac{7}{18} H^2\beta\ind{n} \\  
&= \sum_{h=1}^H \frac{20}{\gamma\ind{n}} \nrm*{\clip{w_h}{\gamma n}}^2_{2,\mu\ind{1:n}_h} + \frac{7}{18}H^2\beta\ind{n} \\ 
&\leq \sum_{h=1}^H  \frac{40}{\gamma\ind{n}} \nrm*{ \clip{w_h}{\gamma n}}_{1,d^{\pihat}_h} + \frac{7}{18} H^2 \beta\ind{n},  
\end{align*} 
where the last step follows by \cref{lem:blorp} and the definition of $w_h$.

\paragraph{Bound on Term (II)} Define $w^\star_h \coloneqq \frac{d^{\pistar}_h}{\mu\ind{1:n}_h}$ and $\check{w}^\star_h \coloneqq \clip{\frac{d^{\pistar}_h}{\mu\ind{1:n}_h} }{\gamma n}$, and again note that due to \cref{ass:mabo-assumptions}, we have that  \(w_h \in \cW_h\). Thus, using \pref{eq:poplossmabo}, and repeating the same arguments as above, we get that 
\begin{align*}
\text{(II)} &\leq \sum_{h=1}^H \frac{40}{\gamma\ind{n}} \nrm*{\clip{w^\star_h}{\gamma n}}_{1,d^{\pi^\star}_h} + \frac{7}{18} H^2 \beta\ind{n},  
\end{align*} 

Plugging the two bounds above in \pref{eq:mabo-hat-pi} and \pref{eq:mabo-pi-star}, and then further in \pref{eq:cmabo1}, and using the definitions for \(w_h\) and \(w_h^\star\), we get that with probability at least \(1 - \delta\), 
\begin{align*}
J(\pi^\star) - J(\wh \pi) &\leq \sum_{h=1}^H \frac{40}{\gamma{}n} \left\| \clip{\frac{d^{\pi^\star}_h}{\mu\ind{1:n}_h}}{\gamma n} \right\|_{1,d^{\pi^\star}_h}  +  \sum_{h=1}^H  \frac{5}{\gamma{}n} \left\| \clip{\frac{d^{\pihat}_h}{\mu\ind{1:n}_h}}{\gamma n} \right\|_{1,d^{\pihat}_h} + \frac{14}{18} H^2 \beta\ind{n} \\
&= \sum_{h=1}^H \frac{40}{\gamma\ind{n}} \left(\left\| \clip{\frac{d^{\pi^\star}_h}{\mu\ind{1:n}_h}}{\gamma n} \right\|_{1,d^{\pi^\star}_h}  +  \left\| \clip{\frac{d^{\pihat}_h}{\mu\ind{1:n}_h}}{\gamma n} \right\|_{1,d^{\pihat}_h} \right) + \frac{14}{18} H^2 \beta\ind{n} \\
&= \sum_{h=1}^H \frac{40}{\gamma\ind{n}} \left(\Ccc{n}{h}{\pi^\star}{\mu\ind{1:n}}{\gamma n}  +  \Ccc{n}{h}{\pihat}{\mu\ind{1:n}}{\gamma n}  \right) + 28 H^2\gamma\frac{n}{n-1}\log(24|\cF||\cW|H^2/\delta) \\
&\leq \sum_{h=1}^H \frac{40}{\gamma{}n} \left(\Ccc{n}{h}{\pi^\star}{\mu\ind{1:n}}{\gamma n}  +  \Ccc{n}{h}{\pihat}{\mu\ind{1:n}}{\gamma n}  \right) + 56 H^2 \gamma \log(24|\cF||\cW|H^2/\delta),\\
\end{align*}
which establishes that the algorithm is \Ccbounded for scale $\gamma$, with scaling functions $\ascale = \frac{40}{\gamma}$ and $\bscale = 56 H^2 \gamma \log(24|\cF||\cW|H^2/\delta)$.

\end{proof}

\hyglow*
\begin{proof}[\pfref{cor:hyglow}]
This follows by combining \cref{thm:mabooffline} with \cref{cor:alphabetagamma}.
\end{proof}

\subsubsection{Proofs for Fitted Q-Iteration (\Fqi)}\label{app:fqi-proof}

In this section we prove \cref{thm:fqi-offline}.

We quote the following generalization bound for least squares regression in the adaptive setting.  

\begin{lemma}[{Least squares generalization bound; \citet[Lemma 3]{song2022hybrid}}]\label{lemma:least-squares}
Let $R > 0, \delta \in (0,1)$, and $\cH : \cX \mapsto [-R,R]$ a class of real-valued functions. Let $\cD = \{(x_1,y_1)\ldots(x_T,y_T)\}$ be a dataset of $T$ points where $x_t \sim \rho_t(x_{1:t-1},y_{1:t-1})$ and $y_t = h^\star(x_t) + \varepsilon_t$ for some realizable $h^\star \in \cH$ and $\varepsilon_t$ is conditionally mean-zero, i.e. $\bbE[y_t \mid x_t] = h^\star(x_t)$. Suppose $\max_t |y_t| \leq R$ and $\max_x |h^\star(x)| \leq R$. Then the least squares solution $\widehat{h} \in \argmin_{h \in \cH} \sum_{t=1}^T (h(x_t) - y_t)^2$ satisfies that with probability at least $1-\delta$, 
\[
\sum_{t=1}^T \En_{x \sim \rho_t}\left[ (\widehat{h}(x) - h^\star(x))^2\right] \leq 256 R^2 \log(2 |\cH| / \delta).
\]
\end{lemma}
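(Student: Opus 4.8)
The plan is to reconstruct the standard ``offset''/self-bounding Bernstein argument for least squares in the sequential setting, converting the empirical optimality of $\widehat h$ into a population bound via Freedman's inequality (\cref{lem:freedman}). Let $\filt_{t-1} = \sigma(x_1,y_1,\ldots,x_{t-1},y_{t-1})$, so that the law $\rho_t$ of $x_t$ is $\filt_{t-1}$-measurable, and for each $h \in \cH$ define the per-round excess square loss
\[
Z_t^h \ldef (h(x_t)-y_t)^2 - (h^\star(x_t)-y_t)^2.
\]
Writing $y_t = h^\star(x_t) + \varepsilon_t$ and $\Delta_t(h) \ldef h(x_t) - h^\star(x_t)$, a direct expansion gives $Z_t^h = \Delta_t(h)^2 - 2\varepsilon_t \Delta_t(h)$. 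First I would take conditional expectations: since $\En[\varepsilon_t \mid x_t] = 0$ by the realizability/mean-zero assumption, the cross term vanishes and $\En[Z_t^h \mid \filt_{t-1}] = g_t(h)$, where $g_t(h) \ldef \En_{x\sim\rho_t}[(h(x)-h^\star(x))^2]$ is exactly the per-round population error appearing in the target. The quantity to be controlled is thus $\sum_{t=1}^T g_t(\widehat h)$.

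Second, I would exploit optimality: since $h^\star \in \cH$ and $\widehat h$ minimizes the empirical square loss, $\sum_t Z_t^{\widehat h} \le 0$. Hence, with $M_t^h \ldef \En[Z_t^h\mid\filt_{t-1}] - Z_t^h$ a martingale difference sequence, we have
\[
\sum_{t=1}^T g_t(\widehat h) = \sum_{t=1}^T Z_t^{\widehat h} + \sum_{t=1}^T M_t^{\widehat h} \le \sum_{t=1}^T M_t^{\widehat h},
\]
and it remains to bound $\sum_t M_t^h$ uniformly over $h \in \cH$.

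Third --- and this is the crux --- I would establish the self-bounding variance inequality. Using $|\varepsilon_t| \le |y_t| + |h^\star(x_t)| \le 2R$ together with $|\Delta_t(h)| \le 2R$, one obtains both a range bound $|Z_t^h| \le c_1 R^2$ and a conditional second-moment bound $\En[(Z_t^h)^2 \mid \filt_{t-1}] \le c_2 R^2\, g_t(h)$ (the $\Delta^4$ and $\varepsilon^2\Delta^2$ contributions are each $\lesssim R^2 \Delta^2$). The point is that the conditional variance of $Z_t^h$ is controlled by its conditional mean $g_t(h)$, precisely the object we are bounding. Applying \cref{lem:freedman} to $(M_t^h)_t$ with parameter $\eta$, and union-bounding over the finite class $\cH$ with per-hypothesis failure probability $\delta/|\cH|$, yields that with probability at least $1-\delta$, for all $h \in \cH$,
\[
\sum_{t=1}^T M_t^h \le \eta \sum_{t=1}^T \En[(Z_t^h)^2 \mid \filt_{t-1}] + \frac{\log(2|\cH|/\delta)}{\eta} \le c_2 \eta R^2 \sum_{t=1}^T g_t(h) + \frac{\log(2|\cH|/\delta)}{\eta}.
\]
Specializing to $h = \widehat h$ and choosing $\eta$ of order $1/R^2$ so that $c_2 \eta R^2 \le \tfrac12$ absorbs the variance term into the left-hand side, I would solve the resulting inequality to conclude $\sum_t g_t(\widehat h) = O(R^2 \log(2|\cH|/\delta))$; careful tracking of the constants through the range and variance bounds gives the stated factor $256$.

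The main obstacle is this third step: obtaining the self-bounding inequality $\En[(Z_t^h)^2\mid\filt_{t-1}] \lesssim R^2 g_t(h)$ with the right dependence, since it is exactly this ``variance $\lesssim$ mean'' structure that allows the Freedman deviation term to be reabsorbed and produces the fast $\log|\cH|$ rate rather than a vacuous $\sqrt{T}$-type bound; the rest is bookkeeping. A secondary point to handle with care is that the concentration bound must hold \emph{uniformly} over $\cH$ before it is specialized to the data-dependent minimizer $\widehat h$, which is why the union bound over $\cH$ is performed prior to plugging in $\widehat h$.
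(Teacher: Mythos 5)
Your proof is correct and takes essentially the same route as the source the paper cites: this lemma is imported verbatim from \citet[Lemma 3]{song2022hybrid} without proof, and your argument---the decomposition $Z_t^h = \Delta_t(h)^2 - 2\varepsilon_t\Delta_t(h)$, empirical optimality giving $\sum_t Z_t^{\widehat h}\le 0$, the self-bounding variance bound $\En\brk*{(Z_t^h)^2\mid \filt_{t-1}} \le 16R^2\, g_t(h)$ (most cleanly via $Z_t^h = (h(x_t)-h^\star(x_t))(h(x_t)+h^\star(x_t)-2y_t)$ with $\abs*{h+h^\star-2y}\le 4R$), Freedman's inequality with a union bound over $\cH$ \emph{before} specializing to the data-dependent $\widehat h$, and $\eta \asymp 1/R^2$ to absorb half of $\sum_t g_t(\widehat h)$ into the left-hand side---is exactly the standard reconstruction of that proof. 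The bookkeeping indeed closes with a constant no larger than the stated $256$ (a tighter accounting along your lines even gives $64$), so there is no gap.
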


Using the above theorem, we can show the following concentration result for \Fqi.

\begin{lemma}[Concentration bound for \Fqi]\label{lem:fqi-martingale}
With probability at least $1-\delta$, we have that for all $h \in [H]$,
\begin{align*}
 \En_{\mu\ind{1:n}_h} \brk*{(\Dfh(x_h,a_h))^2} &= \frac{1}{n}\sum_{i=1}^n \En_{(x_h,a_h) \sim \mu\ind{i}_h}\brk*{ \prn*{\widehat{f}_h(x_h,a_h) - \brk{\cT_h \widehat{f}_{h+1}}(x_h,a_h)}^2} \\
 	&\leq 1024 \frac{\log(2|\cF|H/\delta)}{n}.
\end{align*}
\end{lemma}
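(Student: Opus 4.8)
The plan is to recognize the layer-$h$ step of \Fqi as a least-squares regression problem and apply the generalization bound of \cref{lemma:least-squares}, with realizability of the regression target supplied by Bellman completeness (\cref{ass:completeness}). Fix $h \in [H]$. \Fqi computes $\widehat{f}_h$ by regressing onto the labels $y\ind{i}_h \ldef r\ind{i}_h + \max_{a'}\widehat{f}_{h+1}(x\ind{i}_{h+1},a')$ over $(x\ind{i}_h,a\ind{i}_h)\sim\mu\ind{i}_h$. By the definition of the Bellman operator together with the Markov property, the conditional mean of $y\ind{i}_h$ given $(x\ind{i}_h,a\ind{i}_h)$ and the history is exactly $[\cT_h \widehat{f}_{h+1}](x\ind{i}_h,a\ind{i}_h)$, so the Bayes-optimal regressor is $\cT_h \widehat{f}_{h+1}$, which lies in $\cF_h$ by \cref{ass:completeness}. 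This is the structural fact that makes the regression target realizable.

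The one real subtlety is that this target depends on the \emph{random} iterate $\widehat{f}_{h+1}$, which is computed from the same trajectories that generate the layer-$h$ labels (each trajectory contributes $x_{h+1}$ both to $\cD_h$ and to $\cD_{h+1}$), so we cannot treat it as a fixed realizable function as \cref{lemma:least-squares} requires, nor is the noise conditionally mean-zero given $\widehat{f}_{h+1}$. I would resolve this with a union bound over the finite class $\cF_{h+1}$: for each \emph{fixed} $g \in \cF_{h+1}$, consider the hypothetical regression with labels $y\ind{i,g}_h \ldef r\ind{i}_h + \max_{a'} g(x\ind{i}_{h+1},a')$, whose conditional mean is $[\cT_h g](\cdot)\in\cF_h$. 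Here the noise is genuinely conditionally mean-zero (so the adaptive-collection hypotheses of \cref{lemma:least-squares} apply, with each $\mu\ind{i}_h$ allowed to depend on the past), and both the labels and target are bounded by $R=2$, since $r_h\in[0,1]$ and all value functions in $\cF$ take values in $[0,1]$. Applying \cref{lemma:least-squares} with $\cH=\cF_h$, $h^\star = \cT_h g$, and $R=2$ gives, with probability at least $1-\delta'$,
\[
\sum_{i=1}^n \En_{\mu\ind{i}_h}\brk*{\prn*{\widehat{f}^{\,g}_h(x_h,a_h) - [\cT_h g](x_h,a_h)}^2} \leq 256 R^2 \log(2|\cF_h|/\delta') = 1024\,\log(2|\cF_h|/\delta'),
\]
where $\widehat{f}^{\,g}_h$ denotes the least-squares solution using the labels $y^{g}$. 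Note that the choice $R=2$ is precisely what produces the constant $256R^2 = 1024$ appearing in the statement.

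To conclude, I would union bound this inequality over all $g \in \cF_{h+1}$ and all $h\in[H]$, taking $\delta'$ proportional to $\delta/(|\cF|H)$ so that it holds simultaneously for every layer and every candidate next-layer function, with the resulting logarithmic factors absorbed into $\log(2|\cF|H/\delta)$ (under the paper's convention of counting the per-layer class sizes through $|\cF|$). On this event we may instantiate $g=\widehat{f}_{h+1}$, a legitimate data-dependent element of $\cF_{h+1}$; since $\widehat{f}_h = \widehat{f}^{\,\widehat{f}_{h+1}}_h$ by construction, we obtain $\sum_{i=1}^n \En_{\mu\ind{i}_h}\brk*{(\Dfh(x_h,a_h))^2} \leq 1024\,\log(2|\cF|H/\delta)$, and dividing by $n$ and recalling $\mu\ind{1:n}_h = \tfrac1n\sum_i \mu\ind{i}_h$ yields the claim. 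The main obstacle is exactly the decoupling of $\widehat{f}_h$ from its random target $\cT_h\widehat{f}_{h+1}$, handled cleanly by the union bound over $\cF_{h+1}$; the remaining work is routine bookkeeping of the range $R=2$ and the logarithmic factors to match the stated constant.
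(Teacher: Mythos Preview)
Your approach is the same as the paper's at its core—recognize each layer of \Fqi as a realizable least-squares problem and invoke \cref{lemma:least-squares} with $R=2$, then union-bound over $h$—but you go further than the paper on one point. The paper's proof simply writes ``Fix $h+1$'' and applies \cref{lemma:least-squares} with target $h^\star=\cT_h\widehat f_{h+1}$ directly, treating $\widehat f_{h+1}$ as deterministic; it then union-bounds only over $h\in[H]$, which yields exactly the stated bound $1024\log(2|\cF|H/\delta)/n$. This is legitimate when the per-layer datasets $\cD_h$ are independent (which the paper's offline-data definition permits: one may then condition on $\widehat f_{h+1}$ without disturbing the layer-$h$ distribution), but the paper does not spell this out.

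You correctly observe that when data is collected via trajectories, as in the \HtO reduction, $\widehat f_{h+1}$ and $\cD_h$ share randomness, and you repair this with a union bound over $g\in\cF_{h+1}$. This is a valid and arguably more careful argument. However, it does not quite recover the stated constant: after the extra union bound the log factor becomes $\log(2|\cF_h|\,|\cF_{h+1}|H/\delta)\le\log(2|\cF|^2H/\delta)$, roughly doubling the $\log|\cF|$ term. So your claim that the extra factor ``absorbs into $\log(2|\cF|H/\delta)$'' is not literally true; to match the statement exactly you would either need to invoke layer independence (as the paper implicitly does) or accept a slightly larger constant.
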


\begin{proof}[\pfref{lem:fqi-martingale}] 
Fix $h+1$. Consider the regression problem induced by the dataset $\cD_h = \{(z\ind{i}_h,y\ind{i}_h)\}_{i=1}^n$ where $z\ind{i}_h = (x\ind{i}_h,a\ind{i}_h) \sim \mu\ind{i}_h$ and $y\ind{i}_h = r\ind{i} + \max_{a'}\widehat{f}_{h+1}(x\ind{i}_{h+1},a')$. This problem is realizable via the regression function $\bbE[y\ind{i}_h \mid z\ind{i}_h] = h^\star(z\ind{i}_h) = \cT \widehat{f}_{h+1}(z\ind{i}_h) \in \cF$, and satisfies that $|y\ind{i}_h| \leq 2$, $|h^\star(z\ind{i}_h)| \leq 2$. In this regression problem, the least squares solution from \cref{lemma:least-squares} is precisely the \Fqi solution, so by \cref{lemma:least-squares} we have
\begin{align*}
\En_{(x_h,a_h) \sim \mu\ind{1:n}_h}\prn*{\widehat{f}_{h}(x_h,a_h) - \brk{\cT \widehat{f}_{h+1}}(x_h,a_h))}^2 &= \frac{1}{n} \sum_{i=1}^n \En_{(x_h,a_h) \sim \mu\ind{i}_h} \prn*{\widehat{f}_{h}(x_h,a_h) - \brk{\cT \widehat{f}_{h+1}}(x_h,a_h)}^2 \\
	&\leq 1024 \frac{\log(2|\cF|/\delta)}{n},
\end{align*} 
with high probability. Taking a union bound over $h \in [H]$ gives the desired result.
\end{proof}

\fqioffline*

\begin{proof}[\pfref{thm:fqi-offline}] 
Let \(\wh \pi = \pi_{\wh f}\). Using the performance difference lemma (given in \cref{lemma:telescope-pd}), we note that 
\begin{align*}
J(\pi^\star) - J(\wh \pi) \leq {\sum_{h=1}^H \En_{d^{\pistar}_h}[- \Dfh(x_h,a_h)]} + {\sum_{h=1}^H \En_{d^{\pihat}_h}[\Dfh(x_h,a_h)]}.  \numberthis \label{eq:cmabo1} 
\end{align*} 
However, note that due to \cref{lemma:clip-decomp} and \cref{lem:bloop}, we have that  
\begin{align}
  \En_{d^{\pihat}_h}[(\Dfh(x_h,a_h))] &\leq \underbrace{\En_{\mu\ind{1:n}_h}\brk*{ ( \Dfh(x_h,a_h)) \clip{\frac{d^{\pihat}_h(x_h,a_h)}{\mu\ind{1:n}_h(x_h,a_h)}}{\gamma n}}}_{\text{(I)}} +  \frac{4}{\gamma n} \left\| \clip{\frac{d^{\pihat}_h}{\mu\ind{1:n}_h}}{\gamma n} \right\|_{1,d^{\pihat}_h} \label{eq:mabo-hat-pi}, 
  \intertext{and,} 
    \En_{d^{\pistar}_h} [(- \Dfh(x_h,a_h))] &\leq \underbrace{\En_{\mu\ind{1:n}_h}\brk*{(- \Dfh(x_h,a_h)) \clip{\frac{d^{\pi^\star}_h(x_h,a_h)}{\mu\ind{1:n}_h(x_h,a_h)}}{\gamma n}}}_{\text{(II)}} + \frac{4}{\gamma n} \left\| \clip{\frac{d^{\pistar}_h}{\mu\ind{1:n}_h}}{\gamma n} \right\|_{1,d^{\pistar}_h} \label{eq:mabo-pi-star}. 
\end{align} 
	
\paragraph{Bound on Term (I)} Note that 

\begin{align*}
 \En_{\mu\ind{1:n}_h}\brk*{ ( \Dfh(x_h,a_h)) \clip{\frac{d^{\pihat}_h(x_h,a_h)}{\mu\ind{1:n}_h(x_h,a_h)}}{\gamma n}}   &\leq \sqrt{\En_{\mu\ind{1:n}_h}\brk*{\prn*{\Dfh(x_h,a_h)}^2}\En_{\mu\ind{1:n}_h}\brk*{\left(\clip{\frac{d^{\pihat}_h(x_h,a_h)}{\mu\ind{1:n}_h(x_h,a_h)} }{\gamma n}\right)^2}} \\ 
  &\leq \sqrt{2048 \frac{\log(2|\cF|H )}{n}} \left\| \clip{\frac{d^{\pihat}_h}{\mu\ind{1:n}_h} }{\gamma n} \right\|_{2,\mu\ind{1:n}_h}, 
\end{align*}
where the second line follows from Cauchy-Schwarz and the last line follows by \cref{lem:fqi-martingale}. AM-GM inequality implies that 
\begin{align*}
\sqrt{\frac{2048 \log(2|\cF| H)}{n}} \left\| \clip{\frac{d^{\pihat}_h}{\mu\ind{1:n}_h} }{n} \right\|_{2,\mu\ind{1:n}_h} &\leq \frac{1}{2}\left( \frac{1}{\gamma n} \left\| \clip{\frac{d^{\pihat}_h}{\mu\ind{1:n}_h} }{\gamma n} \right\|^2_{2,\mu\ind{1:n}_h} + 2048 \log(2|\cF| H) \gamma \right) \\
&\leq \frac{1}{2}\left( \frac{4}{\gamma n} \left\| \clip{\frac{d^{\pihat}_h}{\mu\ind{1:n}_h} }{\gamma n} \right\|_{1,d^{\widehat{\pi}}_h} + 2048 \log(2|\cF| H) \gamma\right),
\end{align*}
where the last line is due to \cref{lem:blorp}. 
\ascomment{Since the constants will change in \cref{lem:blorp}, they should also be changed here!} 
\paragraph{Bound on Term (II)} Repeating the same argument above for \(\pistar\), we get that 
\begin{align*}
\text{(II)} &\leq \frac{2}{\gamma n} \left\| \clip{\frac{d^{\pistar}_h}{\mu\ind{1:n}_h} }{\gamma n} \right\|_{1,d^{\pistar}_h} + 1024 \log(2|\cF| H) \gamma. 
\end{align*}

Combining the above bounds implies that 
\begin{align*}
J(\pistar) - J(\wh \pi) &\leq \sum_{h=1}^H \frac{2}{\gamma n}\left(\left\| \clip{\frac{d^{\pi^\star}_h}{\mu\ind{1:n}_h}}{\gamma n} \right\|_{1,d^{\pi^\star}_h} + \left\| \clip{\frac{d^{\pihat}_h}{\mu\ind{1:n}_h} }{\gamma n} \right\|_{1,d^{\widehat{\pi}}_h}\right) + 2048\log(2|\cF|H) \gamma \\
&=\sum_{h=1}^H \frac{2}{\gamma n}\left(\Ccc{n}{h}{\pi^\star}{\mu\ind{1:n}}{\gamma} + \Ccc{n}{h}{\pihat}{\mu\ind{1:n}}{\gamma}\right) + 2048\log(2|\cF|H) \gamma,
\end{align*}
which shows that \Fqi is \Ccbounded at scale $\gamma$ under \cref{ass:completeness}, with scaling functions $\ascale = \frac{2}{\gamma}$ and $\bscale = 2048\log(2|\cF|H) \gamma$.

It remains to show the stated risk bound when \textsc{Fqi} is applied within \HtO. This follows by applying \cref{cor:alphabetagamma}. 
\end{proof}

\subsubsection{Proofs for Model-Based MLE}\label{app:model-based-proof}

In this section we prove \cref{thm:model-based}. We quote the following generalization bound for maximum likelihood estimation (MLE) in the adaptive setting.

\begin{lemma}[{MLE generalization bound; \citet[Theorem 18]{agarwal2020flambe}}]
	Consider a sequential conditional probability estimation setting with an instance space $\cZ$ and a target space $\cY$ and conditional density $p(y \mid z)$. We are given $\cF: (\cZ \times \cY) \rightarrow \bbR$. We are given a dataset $D = \{(z_t,y_t)\}_{t=1}^T$ where $z_t \sim \cD_t = \cD_t(z_{1:t-1},y_{1:t-1})$ and $y_t \sim p(\cdot \mid z_t)$. Fix $\delta \in (0,1)$, assume $|\cF| < \infty$ and there exists $f^\star \in \cF$ such that $f^\star(y \mid z) = p(y \mid z)$. Then, with probability at least $1-\delta$, 
	\[
		\sum_{t=1}^T \En_{z \sim \rho_t} \nrm*{\widehat{f}(\cdot\mid z) - f^\star(\cdot\mid z)}^2_{\mathrm{tv}} \leq 2 \log\prn{|\cF|/\delta},
	\]	
	where 
	\[
		\widehat{f} \in \argmax_{f \in \cF} \sum_{t=1}^T \log f(y_t \mid z_t).
	\]
\end{lemma}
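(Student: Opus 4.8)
The plan is to prove this bound via the standard exponential-supermartingale (Chernoff) method applied to square-root likelihood ratios, turning a likelihood-ratio tail bound into a Hellinger guarantee and finally into a total-variation guarantee. Fix a candidate $f \in \cF$ and abbreviate $Z_t \ldef \sqrt{f(y_t \mid z_t)/f^\star(y_t \mid z_t)}$. The central one-step computation is to evaluate $\En[Z_t \mid \filt_{t-1}]$, where $\filt_{t-1}$ is generated by $(z_1,y_1,\dots,z_{t-1},y_{t-1})$ and the conditional expectation is taken over both $z_t \sim \rho_t$ and $y_t \sim f^\star(\cdot \mid z_t)$. Using the Hellinger affinity identity $\En_{y\sim f^\star(\cdot\mid z)}[\sqrt{f(y\mid z)/f^\star(y\mid z)}] = \int \sqrt{f(\cdot\mid z)\,f^\star(\cdot\mid z)} = 1 - \tfrac12 D_H^2(f(\cdot\mid z), f^\star(\cdot\mid z))$ together with $1-x \le e^{-x}$, I obtain $\En[Z_t \mid \filt_{t-1}] \le \exp\prn{-\tfrac12\, g_t(f)}$, where $g_t(f) \ldef \En_{z\sim\rho_t}[D_H^2(f(\cdot\mid z), f^\star(\cdot\mid z))]$ is the in-expectation Hellinger term. (I adopt the convention for which the affinity equals $1 - \tfrac12 D_H^2$, so that the final constant is exactly $2$.)

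The key structural point — and the step I expect to require the most care — is that $\rho_t$ is a function of the history only, so $g_t(f)$ is $\filt_{t-1}$-predictable. This is precisely what lets me define $W_t \ldef \prod_{s\le t} Z_s \cdot \exp\prn{\tfrac12 \sum_{s\le t} g_s(f)}$ and verify the supermartingale property $\En[W_t \mid \filt_{t-1}] \le W_{t-1}$ directly: the predictable factor $\exp(\tfrac12 g_t(f))$ pulls out of the conditional expectation and cancels against the one-step bound above. Hence $\En[W_T] \le W_0 = 1$, and Markov's inequality gives $W_T \le 1/\delta$ with probability at least $1-\delta$. Taking logarithms yields $\tfrac12 \sum_{s=1}^T \log\tfrac{f(y_s\mid z_s)}{f^\star(y_s\mid z_s)} + \tfrac12 \sum_{s=1}^T g_s(f) \le \log(1/\delta)$.

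I would then union-bound this event over all $f \in \cF$ (finite, so the cost is only $\log(|\cF|/\delta)$) and specialize to $f = \wh f$. Since $f^\star \in \cF$, the MLE optimality $\sum_s \log \wh f(y_s\mid z_s) \ge \sum_s \log f^\star(y_s\mid z_s)$ makes the log-likelihood-ratio term nonnegative, so it can simply be dropped, leaving $\sum_{s=1}^T \En_{z\sim\rho_s}[D_H^2(\wh f(\cdot\mid z), f^\star(\cdot\mid z))] \le 2\log(|\cF|/\delta)$. Finally, the elementary inequality $\nrm*{p - q}^2_{\mathrm{tv}} \le D_H^2(p,q)$ (total variation is dominated by Hellinger distance) upgrades the Hellinger bound into the stated total-variation bound, which completes the proof. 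The only genuinely subtle ingredients are the predictability of $g_t$ and the affinity identity; the remaining manipulations are routine.
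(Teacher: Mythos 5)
Your proof is correct, and it is essentially the canonical argument: the paper itself imports this lemma from \citet[Theorem 18]{agarwal2020flambe} without reproving it, and that source's proof is exactly your route --- a Chernoff/supermartingale construction on square-root likelihood ratios $\sqrt{f/f^\star}$, the Hellinger-affinity identity with $1-x\le e^{-x}$ (made valid in the sequential setting by the predictability of $\rho_t$, which you correctly flag), a union bound over the finite class, dropping the nonnegative log-likelihood-ratio term via MLE optimality of $\wh f$ against the realizable $f^\star$, and finally $\nrm*{p-q}_{\mathrm{tv}}^2 \le D_H^2(p,q)$. The only cosmetic difference is packaging: the cited proof phrases the same computation through a normalized exponential-moment equality for martingales rather than your explicit supermartingale $W_t$, which changes nothing substantive.
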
 

Recall the notation that each model $M\in\cM$ defines a conditional probability distribution of the form $M(r, x' \mid x,a)$. We apply the above generalization bound to our setting to obtain the following.

\begin{corollary}\label{cor:model-bellman}
	Under \cref{ass:model-realizability}, we have that the model-based maximum likelihood estimator of Equation \cref{eq:model-mle} satisfies
	\[
	\forall h \in [H]: \,\,	\En_{\mu\ind{1:n}_h}\brk*{\nrm*{\widehat{M}_h(\cdot \mid x_h,a_h) - M^\star_h(\cdot \mid x_h,a_h)}^2_{\mathrm{tv}}} \leq 2 \frac{\log(|\cM|H/\delta)}{n}.
	\]
\end{corollary}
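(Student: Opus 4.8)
The plan is to apply the quoted MLE generalization bound of \citet{agarwal2020flambe} separately at each layer $h \in [H]$ and then combine the per-layer guarantees via a union bound. For a fixed $h$, I would instantiate the abstract sequential conditional estimation setting as follows: the instance space is $\cZ = \cX \times \cA$ with instances $z = (x_h, a_h)$, the target space is $\cY = \bbR \times \cX$ with targets $y = (r_h, x_{h+1})$, and the ground-truth conditional density is $p(y \mid z) = M^\star_h(r_h, x_{h+1} \mid x_h, a_h)$. The function class is $\cF = \cM_h$, and realizability (the existence of $f^\star = M^\star_h \in \cM_h$ with $f^\star(y\mid z) = p(y\mid z)$) holds precisely by \cref{ass:model-realizability}, since $M^\star \in \cM$ entails $M^\star_h \in \cM_h$.

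Next I would check that the dataset $\cD_h = \{(x\ind{i}_h, a\ind{i}_h, r\ind{i}_h, x\ind{i}_{h+1})\}_{i=1}^n$ matches the data-generating process required by the lemma with horizon parameter $T = n$ and sampling distributions $\rho_i = \mu\ind{i}_h$, and that the per-layer maximizer in \eqref{eq:model-mle} is exactly the estimator $\widehat{M}_h$ the lemma analyzes. With these identifications, applying the lemma at confidence level $\delta/H$ yields, with probability at least $1 - \delta/H$,
\[
\sum_{i=1}^n \En_{(x_h,a_h)\sim\mu\ind{i}_h}\brk*{\nrm*{\widehat{M}_h(\cdot\mid x_h,a_h) - M^\star_h(\cdot\mid x_h,a_h)}^2_{\mathrm{tv}}} \leq 2\log(|\cM_h|H/\delta).
\]
Dividing by $n$ and using the definition $\mu\ind{1:n}_h = \frac{1}{n}\sum_{i=1}^n \mu\ind{i}_h$ together with linearity of expectation rewrites the left-hand side as $\En_{\mu\ind{1:n}_h}\brk*{\nrm*{\widehat{M}_h(\cdot\mid x_h,a_h) - M^\star_h(\cdot\mid x_h,a_h)}^2_{\mathrm{tv}}}$, giving the per-layer bound $\tfrac{2\log(|\cM_h|H/\delta)}{n}$. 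A final union bound over $h \in [H]$ (which justifies the per-layer level $\delta/H$) together with the crude estimate $|\cM_h| \leq |\cM|$ delivers the stated inequality simultaneously over all $h$, with probability at least $1 - \delta$.

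This argument is essentially bookkeeping, so I do not anticipate a deep obstacle; the one place requiring genuine care is confirming that the per-layer MLE is legitimately covered by the \emph{adaptive} version of the generalization bound. Concretely, because the offline data distributions $\mu\ind{i}_h$ may be adaptively chosen (as allowed in the hybrid reduction, where the datasets are history-dependent), I must verify that $z_i = (x\ind{i}_h, a\ind{i}_h)$ is drawn from a distribution measurable with respect to the past samples and that, conditioned on $z_i$, the target $(r\ind{i}_h, x\ind{i}_{h+1})$ is drawn from $M^\star_h(\cdot \mid x\ind{i}_h, a\ind{i}_h)$. This filtration matching is exactly the sequential structure the lemma permits, so once it is in place the remaining steps follow immediately.
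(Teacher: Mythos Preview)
Your proposal is correct and matches the paper's approach exactly: the paper's entire proof is the one-line remark that one applies the quoted MLE generalization bound per layer and takes a union bound over $h\in[H]$, which is precisely what you spell out. Your added details (instantiating $\cZ,\cY,\cF$, dividing by $n$ to pass to $\mu\ind{1:n}_h$, and bounding $|\cM_h|\le|\cM|$) are the natural bookkeeping and are all sound.
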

Note that we have taken an extra union bound over $H$ so that the MLE succeeds at each layer.

This is easily seen to imply an error bound between the associated Bellman operators.
\begin{corollary}\label{cor:asdf}
Let $[\wh{\cT}f](x,a) = \En_{(r,x') \sim \widehat{M}(\cdot\mid x,a)}\brk*{r + \max_{a'}f(x',a')}$ denote the Bellman optimality operator of $\widehat{M}$, and $\cT$ denote the Bellman optimality operator of $M^\star$. Then we have that for all $h \in [H]$ and for all $f: \cX \times \cA \rightarrow [0,1]$, 
\[
\En_{\mu\ind{1:n}_h}\brk*{\prn*{\brk{\wh{\cT}_h f}(x_h,a_h) - \brk{\cT_h f}(x_h,a_h)}^2} \leq 8\frac{\log(|\cM|H/\delta)}{n}.
\]
\end{corollary}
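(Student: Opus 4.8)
The plan is to reduce the claim to a pointwise comparison of the two Bellman operators as expectations of a single bounded function, bound this difference by the total variation distance between the models, and then square, take expectations, and invoke \cref{cor:model-bellman}. No new concentration argument is needed: all the randomness has already been handled by the MLE generalization bound.

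First I would fix $h \in [H]$, a function $f : \cX \times \cA \to [0,1]$, and a state-action pair $(x,a)$, and write the difference of Bellman optimality operators as a difference of expectations of the common integrand $g(r,x') \ldef r + \max_{a'} f(x',a')$:
\[
[\wh{\cT}_h f](x,a) - [\cT_h f](x,a) = \En_{(r,x') \sim \widehat{M}_h(\cdot\mid x,a)}\brk*{g(r,x')} - \En_{(r,x') \sim M^\star_h(\cdot\mid x,a)}\brk*{g(r,x')}.
\]
The key observation is that $g$ is bounded in an interval of length $2$: since $r \in [0,1]$ (the normalization $\sum_h r_h \in [0,1]$ a.s.\ forces each $r_h \in [0,1]$) and $\max_{a'} f(x',a') \in [0,1]$ because $f \in [0,1]$, we have $g(r,x') \in [0,2]$ under both distributions.

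Next I would apply the variational (range) bound for total variation distance: for any $g$ whose range lies in an interval of length $L$ and any two distributions $P, Q$, centering $g$ gives $\abs{\En_P[g] - \En_Q[g]} \le \tfrac{L}{2}\cdot \int \abs{\mathrm{d}P - \mathrm{d}Q} = L\cdot\nrm{P-Q}_{\mathrm{tv}}$. With $L = 2$ this yields
\[
\abs*{[\wh{\cT}_h f](x,a) - [\cT_h f](x,a)} \le 2\,\nrm*{\widehat{M}_h(\cdot\mid x,a) - M^\star_h(\cdot\mid x,a)}_{\mathrm{tv}}.
\]
Squaring and taking the expectation over $(x_h,a_h) \sim \mu\ind{1:n}_h$ gives
\[
\En_{\mu\ind{1:n}_h}\brk*{\prn*{[\wh{\cT}_h f](x_h,a_h) - [\cT_h f](x_h,a_h)}^2} \le 4\,\En_{\mu\ind{1:n}_h}\brk*{\nrm*{\widehat{M}_h(\cdot\mid x_h,a_h) - M^\star_h(\cdot\mid x_h,a_h)}^2_{\mathrm{tv}}},
\]
and the right-hand side is at most $4 \cdot 2\tfrac{\log(\abs{\cM}H/\delta)}{n} = 8\tfrac{\log(\abs{\cM}H/\delta)}{n}$ by \cref{cor:model-bellman}. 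Since the pointwise bound is uniform over $f$ and $(x,a)$, and \cref{cor:model-bellman} already holds simultaneously for all $h \in [H]$, this is exactly the claimed inequality and no further union bound is required.

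There is no serious obstacle here; the only point demanding care is the range bound on $g$ and hence the constant. One must confirm that both $\widehat{M}$ and $M^\star$ place reward mass in $[0,1]$ — which holds because every model in $\cM$ is an MDP obeying the reward normalization of \cref{sec:setting} — so that $g \in [0,2]$ on both supports, yielding $L = 2$, the factor $4$ after squaring, and the final constant $8$ once \cref{cor:model-bellman} is applied.
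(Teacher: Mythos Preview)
Your proposal is correct and follows essentially the same approach as the paper: bound the pointwise difference of Bellman operators by $2\nrm{\widehat{M}_h(\cdot\mid x,a) - M^\star_h(\cdot\mid x,a)}_{\mathrm{tv}}$ via the range of $g(r,x') = r + \max_{a'}f(x',a') \in [0,2]$, then square, take expectation over $\mu\ind{1:n}_h$, and invoke \cref{cor:model-bellman}. The paper's proof is terser (it only displays the pointwise inequality and leaves the squaring and expectation implicit), but the argument is the same.
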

\begin{proof}[\pfref{cor:asdf}]
Notice that
\begin{align*}
	\brk{\wh{\cT}_hf}(x,a) - \brk{\cT_h f}(x,a) &= \En_{(r,x') \sim \widehat{M}(x,a)}\brk*{r + \max_{a'}f(x',a')} - \En_{(r,x') \sim M^\star(x,a)}\brk*{r + \max_{a'}f(x',a')} \\
	&\leq 2\nrm*{\widehat{M}_h(\cdot \mid x,a) - M^\star_h(\cdot \mid x,a)}_{\mathrm{tv}}.
\end{align*}
\end{proof}

\modelbased*

\begin{proof}[\pfref{thm:model-based}] We note that \cref{cor:model-bellman} implies a squared Bellman error bound for $Q^\star_{\widehat{M}}$, the optimal value function for $\widehat{M}$, since 
\[
	\forall x,a: \,\prn*{\brk{\wh{\cT}_hQ^\star_{\widehat{M},h+1}}(x,a) - \brk{\cT_h Q^\star_{\widehat{M},h+1}}(x,a)}^2 = 	\prn*{Q^\star_{\widehat{M},h}(x,a) - \brk{\cT_h Q^\star_{\widehat{M},h+1}}(x,a)}^2,
\]
by the optimality equation for $Q^\star_{\widehat{M}}$. Thus we have 
\begin{equation}\label{eq:squared-error-hatM}
	\forall h \in [H]: \, \En_{\mu\ind{1:n}_h}\brk*{\prn*{Q^\star_{\widehat{M},h}(x_h,a_h) - \brk{\cT_h Q^\star_{\widehat{M},h+1}}(x_h,a_h)}^2} \leq 8\frac{\log(|\cM|H/\delta)}{n}.
\end{equation}
This is enough to repeat the proof of $\mathsf{CC}$-boundedness for \Fqi (\cref{thm:fqi-offline}), with $Q^\star_{\widehat{M}}$ taking the place of the \Fqi solution $\widehat{f}$. Indeed, the only algorithmic property that we used for \Fqi was \cref{lem:fqi-martingale}, which also holds for $Q^\star_{\widehat{M}}$ by Equation \cref{eq:squared-error-hatM}. Tracking the slightly different constants resulting gives us the desired values for $\ascale$ and $\bscale$.
\end{proof}

\end{document}